%%%%%%%% ICML 2024 EXAMPLE LATEX SUBMISSION FILE %%%%%%%%%%%%%%%%%

\documentclass{article}

% Recommended, but optional, packages for figures and better typesetting:
\usepackage{microtype}
\usepackage{graphicx}
\usepackage{booktabs} % for professional tables
\usepackage{caption,subcaption}
\usepackage{float}

\usepackage[loading]{tracefnt}
\usepackage{etoolbox}

\newtoggle{update}
\togglefalse{update}

\newcommand*{\update}[2]{\iftoggle{update}{\textcolor{red}{#1}\footnote{#2}}{#1}}

% hyperref makes hyperlinks in the resulting PDF.
% If your build breaks (sometimes temporarily if a hyperlink spans a page)
% please comment out the following usepackage line and replace
% \usepackage{icml2024} with \usepackage[nohyperref]{icml2024} above.
\usepackage{hyperref}
% If you use BibTeX in apalike style, activate the following line:
\bibliographystyle{apalike}

% Attempt to make hyperref and algorithmic work together better:

% Use the following line for the initial blind version submitted for review:
\usepackage[accepted]{icml2024}

% If accepted, instead use the following line for the camera-ready submission:
% \usepackage[accepted]{icml2024}

% For theorems and such
\usepackage{amsmath}
\usepackage{amssymb}
\usepackage{mathtools}
\usepackage{amsthm}

% if you use cleveref..
\usepackage[capitalize,noabbrev]{cleveref}

\newcommand*{\N}{\mathbb{N}}

\newcommand*{\R}{\mathbb{R}}

\newcommand*{\cD}{\mathcal{D}}

\newcommand*{\cN}{\mathcal{N}}

\newcommand*{\bd}{d} % I think we never use \bd.

\newcommand*{\fnn}{f_{\mathrm{nn}}}
\newcommand*{\unn}{u_{\mathrm{nn}}}
\newcommand*{\fntk}{f_{\mathrm{ntk}}}
\newcommand*{\untk}{u_{\mathrm{ntk}}}

\newcommand*{\E}{\mathop{\mathbb{E}}}
\newcommand*{\GP}{\mathcal{GP}}
\newcommand*{\defeq}{\coloneqq}
\newcommand*{\ind}[1]{\mathbb{I}_{[{#1}]}}

\newcommand*{\WO}[1]{W_{0}^{(#1)}}
\newcommand*{\WA}[1]{W_{\mathcal{A}}^{(#1)}}

\newcommand*{\go}[1]{g_{0}^{(#1)}}
\newcommand*{\ga}[1]{g_{\mathcal{A}}^{(#1)}}

\newcommand*{\ho}[1]{h_{0}^{(#1)}}
\newcommand*{\ha}[1]{h_{\mathcal{A}}^{(#1)}}

\newcommand*{\Var}{\mathrm{Var}}
\newcommand*{\Cov}{\mathrm{Cov}}

\DeclareMathOperator*{\argmin}{argmin}

%%%%%%%%%%%%%%%%%%%%%%%%%%%%%%%%
% THEOREMS
%%%%%%%%%%%%%%%%%%%%%%%%%%%%%%%%
\theoremstyle{plain}
\newtheorem{theorem}{Theorem}[section]

\newtheorem{lemma}[theorem]{Lemma}
\newtheorem{corollary}[theorem]{Corollary}
\theoremstyle{definition}
\newtheorem{definition}[theorem]{Definition}
\newtheorem{assumption}[theorem]{Assumption}
\theoremstyle{remark}
\newtheorem{remark}[theorem]{Remark}

% Todonotes is useful during development; simply uncomment the next line
%    and comment out the line below the next line to turn off comments
%\usepackage[disable,textsize=tiny]{todonotes}
\usepackage[disable,textsize=tiny]{todonotes}

% The \icmltitle you define below is probably too long as a header.
% Therefore, a short form for the running title is supplied here:
\icmltitlerunning{An Infinite-Width Analysis on the Jacobian-Regularised Training of a Neural Network}

\begin{document}

\twocolumn[
\icmltitle{An Infinite-Width Analysis on the Jacobian-Regularised Training of a Neural Network}

% It is OKAY to include author information, even for blind
% submissions: the style file will automatically remove it for you
% unless you've provided the [accepted] option to the icml2024
% package.

% List of affiliations: The first argument should be a (short)
% identifier you will use later to specify author affiliations
% Academic affiliations should list Department, University, City, Region, Country
% Industry affiliations should list Company, City, Region, Country

% You can specify symbols, otherwise they are numbered in order.
% Ideally, you should not use this facility. Affiliations will be numbered
% in order of appearance and this is the preferred way.
\icmlsetsymbol{equal}{*}

\begin{icmlauthorlist}
  \icmlauthor{Taeyoung Kim}{sockaist}
  \icmlauthor{Hongseok Yang}{sockaist}
  \end{icmlauthorlist}
  
  \icmlaffiliation{sockaist}{School of Computing, KAIST, Daejeon, South Korea}
  %\icmlaffiliation{sch}{School of ZZZ, Institute of WWW, Location, Country}

\icmlcorrespondingauthor{Hongseok Yang}{hongseok.yang@kaist.ac.kr}
%\icmlcorrespondingauthor{Firstname2 Lastname2}{first2.last2@www.uk}

% You may provide any keywords that you
% find helpful for describing your paper; these are used to populate
% the "keywords" metadata in the PDF but will not be shown in the document
\icmlkeywords{Jacobian Regularisation, Neural Tangent Kernel}

\vskip 0.3in
]

% this must go after the closing bracket ] following \twocolumn[ ...

% This command actually creates the footnote in the first column
% listing the affiliations and the copyright notice.
% The command takes one argument, which is text to display at the start of the footnote.
% The \icmlEqualContribution command is standard text for equal contribution.
% Remove it (just {}) if you do not need this facility.

\printAffiliationsAndNotice{}  % leave blank if no need to mention equal contribution
% \printAffiliationsAndNotice{\icmlEqualContribution} % otherwise use the standard text.

\begin{abstract}
    The recent theoretical analysis of deep neural networks in their infinite-width limits has deepened our understanding of initialisation, feature learning, and training of those networks, and brought new practical techniques for finding appropriate hyperparameters, learning network weights, and performing inference. In this paper, we broaden this line of research by showing that this infinite-width analysis can be extended to the Jacobian of a deep neural network. We show that a multilayer perceptron (MLP) and its Jacobian at initialisation jointly converge to a Gaussian process (GP) as the widths of the MLP's hidden layers go to infinity and characterise this GP. We also prove that in the infinite-width limit, the evolution of the MLP under the so-called robust training (i.e., training with a regulariser on the Jacobian) is described by a linear first-order ordinary differential equation that is determined by a variant of the Neural Tangent Kernel. We experimentally show the relevance of our theoretical claims to wide finite networks, and empirically analyse the properties of kernel regression solution to obtain an insight into Jacobian regularisation.
\end{abstract}

\section{Introduction}

The recent theoretical analysis of deep neural networks in their infinite-width limits has substantially deepened our understanding of initialisation~\cite{Neal1996,lee2018deep,Matthews2018,Yang2019TensorPI,Favaro2020,LeeYYL22}, feature learning~\cite{Yang21FeatureLearning}, hyperparameter turning~\cite{Yang22Transfer}, and training of those networks~\cite{Jacot2018,Lee19NTKLinear,Yang21NTK}. 
Seemingly impossible questions, such as whether the gradient descent achieves the zero training error and where it converges eventually, are answered~\cite{Jacot2018}, and new techniques for scaling hyperparameters~\cite{Yang21FeatureLearning} or finding appropriate hyperparameters~\cite{Yang22Transfer} have been developed.
Also, the tools used in the analysis, such as Neural Tangent Kernel, turn out to be useful for analysing pruning and other optimisations for deep neural networks~\cite{Liu20pruningNTK}. 

Our goal is to extend this infinite-width analysis to the Jacobian of a deep neural network. 
By Jacobian, we mean the input-output Jacobian of a network's output. This Jacobian has information about the smoothness of the network's output and has been used to measure the robustness of a trained network against noise \cite{Peck2017} or to learn a network that achieves high accuracy on a test dataset even when examples in the dataset are corrupted with noise. 
The Jacobian of a neural network also features in the work on network testing, verification, and adversarial attack \cite{Goodfellow2014,Zhang2019,Wang2021}.

We show that at initialisation, a multilayer perceptron (MLP) and its Jacobian jointly converge to a zero-mean Gaussian process (GP) as the widths of the MLP's hidden layers go to infinity, and characterise this GP by describing its kernel inductively. 
Our result can be used to compute analytically the posterior of the MLP (viewed as a random function) conditioned on not just usual input-output pairs $(x^{(1)},y^{(1)}),\ldots,(x^{(N)},y^{(N)})$ in the training set, but also derivatives at some inputs, in the infinite-width setting.

We also analyse the training dynamics of an MLP under the so-called \emph{robust-training} objective, which contains a regulariser on the Jacobians of the MLP at each training input, in addition to the standard loss over input-output pairs in the training set.
As in the case of the standard training without the Jacobian regulariser, we show that the training can be characterised by a kernel that is defined in terms of the derivatives of the MLP output and its Jacobian with respect to the MLP parameters. 
We call this kernel Jacobian Neural Tangent Kernel (JNTK) due to its similarity with the standard Neural Tangent Kernel (NTK) and show that it becomes independent of the MLP parameters at initialisation, as the widths of the MLP go to infinity. 
Thus, the JNTK at initialisation is deterministic in the infinite-width limit. 
Then, we show the JNTK of the infinitely-wide MLP stays constant during the robust training. We identify a linear first-order ordinary differential equation (ODE) that characterises the evolution of this infinitely-wide MLP during robust training and describe the analytic solution of the ODE. 
We experimentally confirm that the conclusions of our theoretical analysis apply to finite MLPs with large but realistic widths. We also include empirical analysis of the ODE solution obtained and empirically check when our assumption is satisfied.

The rest of the paper is organised as follows. In Section~\ref{sec:setup}, we describe the MLPs and robust-learning objective used in the paper. 
We then present our theoretical results for initialisation in Section~\ref{sec:JacobianNNGP} and for training dynamics in Section~\ref{sec:JacobianNTK}, and describe preliminary experimental observations in Section~\ref{sec:experiments}. 
% We also explain initialisation at Section 4. I think we should rewrite.
We conclude our paper with related works in Section~\ref{sec:conclusion}.
The missing proofs and additional experiments can be found in the supplementary material.

\section{Setup}
\label{sec:setup}
We use the following convention and notation. 
For any $n \in \N$, $[n]$ means the set $\{1,\ldots,n\}$. 
We write $x^{(1:N)}$ to denote the set $\{x^{(1)}, \ldots, x^{(N)}\}$.
We count the indices of vectors and matrices from $0$, not $1$.
For instance, for $\alpha,\beta \in [d_0]$, a vector $x \in \mathbb{R}^{d_0}$, we write $x_{\alpha-1}$ for the $\alpha$-th component of $x$, and for the matrix $M$ of size $d_0 \times d_0$, we write $M_{\alpha-1,\beta-1}$ for the $(\alpha,\beta)$-th entry of $x$. 
Note that we often use matrix of size $(1 + d_0) \times (1 + d_0)$, where $M_{\alpha,\beta}$ denotes the $(\alpha+1,\beta+1)$-th entry of $M$ for $\alpha,\beta \in [d_0]$.

We use multilayer perceptrons (MLPs) $f:\R^{d_0} \to \R^{d_{L+1}}$, which are defined as follows: 
for all inputs $x \in \R^{d_0}$ and all layers $l = 2,\ldots,L$, 
\begin{align*}
  h^{(1)}(x) \in \R^{d_1},  
  &
  h^{(1)}(x) \defeq \phi( W^{(1)}x),
  \\
  h^{(l)}(x) \in \R^{d_l},
  &
  h^{(l)}(x) \defeq \phi\bigg( \frac{1}{\sqrt{d_{l-1}}} W^{(l)} h^{(l-1)}(x) \bigg),
  \\
  f(x) \in \R^{d_{L+1}},
  &
  f(x) \defeq \frac{\kappa}{\sqrt{d_L}} W^{(L+1)} h^{(L)}(x).
\end{align*}
Here the weights $W^{(l)} \in \R^{d_{l} \times d_{l-1}}$ are initialised with independent samples from $\cN(0,1)$. 
Note that due to this randomness of $W^{(l)}$, the MLP $f$ is a random function at initialisation.
Following \cite{Arora2019OnEC}, we introduce a fixed non-trainable $\kappa > 0$, which controls the scale of the output of $f$
and thus the randomness (or variance) of $f$ at initialisation. Suppressing the initial randomness is known to improve the performance 
of a neural network in practice~\cite{Hu2020Simple}, and simplify the theoretical analysis of the network.

To simplify the presentation, we do not include the bias terms in our definition. We also assume that
$d \defeq d_1 = \ldots = d_L$ and 
$d_{L+1} = 1$.
We also include some assumptions on the activation function $\phi$ as
\begin{assumption}[Activation Assumption]
  \label{assm:activation-assumption}
  (1) All of $\phi$ and its first and second derivatives $\dot{\phi}$ and $\ddot{\phi}$ are Lipschitz continuous with the Lipschitz coefficient $M_1$, $M_2$, and $M_3$, respectively.
  %\\
  (2) The activation $\phi$ is normalised such that $\E_{z \sim \cN(0, 1)} \left[ \phi(z)^2 \right] = 1$.
\end{assumption}
We denote the set of all trainable parameters by 
$\theta \defeq \{W_{ij}^{(l)}\}_{l,i,j}$,
and write $\theta_t$ for $t \ge 0$ to refer to the parameters trained by the gradient flow until time $t$.
We sometimes make the dependency on the width $d$ and the parameters $\theta$ explicit in the MLP, and write $f_d$, or $f_{d,\theta}$. 

Throughout the paper, we study the training of an MLP $f_d$ that regularises the Jacobian of $f_d$.
Formally, this means that the training optimises the following \emph{robust-training} objective for a given training set $\cD = \{(x^{(1)}, y^{(1)}),\ldots,(x^{(N)},y^{(N)})\}$:
\begin{align}
  \mathcal{L}(\theta) \defeq \frac{1}{2N}\sum_{i = 1}^N \Big( &(f_{d,\theta}(x^{(i)}) -y^{(i)})^2 \nonumber
  \\
  + &\lambda \sum_{\alpha \in [d_0]} \big( J(f_{d,\theta})(x^{(i)})_{\alpha-1} \big)^2\Big)
  \label{eqn:objective-for-robust-training}
\end{align}
where $J(f) : \R^{d_0} \to \R^{d_0}$ is the Jacobian of $f$ defined as $J(f)(x)_{\alpha-1} = \frac{\partial f(x)}{\partial x_{\alpha-1}}$, and $\lambda \in (0, 1]$ is a hyperparameter determining the importance of the Jacobian regulariser. 
By including the Jacobian regulariser, this objective encourages $f_{d,\theta}$ to change little at each training input $x^{(i)}$ when the input is perturbed, so that when a new input $x$ is close to the input $x^{(i)}$ of a training example, the learnt predictor tends to return an output close to $y^{(i)}$ of the example. 
This encouragement has often been used to learn a robust predictor in the literature~\cite{hoffman2019}.

To establish our results, we need assumptions for the training dataset $\cD$ and a future input $x^* \in \R^{d_0}$.
\begin{assumption}[Dataset Assumption]
 \label{assm:dataset-assumption}
 (1) The inputs have the unit norm: $\|x^{(i)}\| = 1$ for all $i \in [N]$, and $\|x^*\|= 1$.
 (2) The outputs are bounded: $|y^{(i)}| \le 1$ for all $i \in [N]$.
\end{assumption}

\update{This dataset assumption is used in our theoretical analysis, but the use of unit norm or unit bound is not
essential. We can weaken the assumption to the boundedness of input norms and outpus by a fixed constant $C$, 
without invalidating the theoretical results in the paper.}{Added new sentence.}

\section{Infinite-width Limit at Initialisation}
\label{sec:JacobianNNGP}

We start by describing our results on the infinite-width limit of the Jacobian of an MLP $f_{d,\theta}$ at \emph{initialisation}. 
Recall that due to the random initialisation of the parameters $\theta$, both the MLP and its Jacobian are random functions at initialisation. 
It is well-known that as $d$ tends to infinity, $f_{d,\theta}$ at initialisation converges to a zero-mean Gaussian process \cite{lee2018deep,Matthews2018}. 
The kernel of this GP is commonly called NNGP kernel and has an inductive characterisation over the depth of the MLP. 
What concerns us here is to answer whether these results extend to the Jacobian of $f_{d,\theta}$. 
Does the Jacobian also converge to a GP? 
If so, how is the limiting GP of the Jacobian related to the limiting GP of the network output? 
Also, in that case, can we characterise the kernel of the limiting GP of the Jacobian?

The next theorem summarises our answers to these questions.
\begin{theorem}[GP Convergence at Initialisation]
  \label{thm:JacobianNNGPInitialisation}
  Suppose that Assumptions~\update{\ref{assm:activation-assumption} and \ref{assm:dataset-assumption}}{Added activation assumption.} holds.
  As $d$ goes to infinity, the function 
 $x \longmapsto ( f_d(x), J( f_d)(x)_0, \ldots, J( f_d)(x)_{d_0-1})^\intercal$
   from $\R^{d_0}$ to $\R^{d_0+1}$ converges weakly in finite marginal \footnote{A random function $f$ converges weakly in finite marginal, if every finite evaluation $\{f(x^{(i)})\}_{i=1}^N$ converges weakly for any $N < \infty$.} to a zero-mean GP with the kernel $\kappa^2 \Sigma^{(L)}$, 
    which we call \emph{Jacobian NNGP kernel}, where $\Sigma^{(L)} : \R^{d_0} \times \R^{d_0} \to \R^{(d_0+1)\times (d_0 + 1)}$
  is defined inductively as follows: for all inputs $x,x' \in \R^{d_0}$, layers $l \in [L]$, and indices  $\alpha,\beta \in [d_0]$, 
  \begin{align*}
  &\begin{aligned}
  \Sigma^{(0)}(x,x')_{00} \defeq \langle x, x'\rangle,
  &\quad 
  \Sigma^{(0)}(x,x')_{0\beta} \defeq x_{\beta-1},   
  \\
  \Sigma^{(0)}(x,x')_{\alpha0} \defeq x_{\alpha-1}',
  &\quad 
  \Sigma^{(0)}(x, x')_{\alpha\beta} \defeq \ind{\alpha = \beta},
  \end{aligned} 
  \\
  &\Sigma^{(l)}(x,x')_{00} \defeq \E_{g}[\phi(g(x)_0) \cdot \phi(g(x')_0)],
  \\
  &\Sigma^{(l)}(x, x')_{\alpha\beta} \defeq \E_{g}[\dot{\phi}(g(x)_0) g(x)_\alpha \cdot\dot{\phi}(g(x')_0)g(x')_\beta],
  \\
  &\Sigma^{(l)}(x,x')_{0\beta} \defeq \E_{g}[\phi(g(x)_0) \cdot \dot{\phi}(g(x')_0) g(x')_\beta],
  \\
  &\Sigma^{(l)}(x,x')_{\alpha0} \defeq \E_{g}[\dot{\phi}(g(x)_0) g(x)_\alpha \cdot \phi(g(x')_0)],
  \end{align*}
  where $\ind{-}$ is the indicator function, the subscript ${[-]}_\alpha$ denotes the $(\alpha+1)$-th component of a vector, and the random variable $g$ in the expectations is a zero-mean GP with the kernel $\Sigma^{(l-1)}$.
\end{theorem}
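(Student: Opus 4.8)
The plan is to run the standard layerwise induction behind NNGP convergence \cite{lee2018deep,Matthews2018,Yang2019TensorPI}, but carried out on an \emph{augmented} per-neuron state that records a neuron's pre-activation together with its $d_0$ input-derivatives, so that the Jacobian is handled on the same footing as the output rather than obtained by differentiating a limit object. Write $g^{(l)}(x)$ for the pre-activation of layer $l$, so that $h^{(l)}(x)=\phi(g^{(l)}(x))$, $g^{(1)}(x)=W^{(1)}x$, $g^{(l)}(x)=\tfrac{1}{\sqrt{d}}W^{(l)}h^{(l-1)}(x)$ for $2\le l\le L$, and $f_d(x)=\tfrac{\kappa}{\sqrt{d}}W^{(L+1)}h^{(L)}(x)$; for each neuron $i$ put
\[
  \hat g^{(l)}(x)_i \defeq \big(g^{(l)}(x)_i,\ \partial_0 g^{(l)}(x)_i,\ \ldots,\ \partial_{d_0-1}g^{(l)}(x)_i\big)\in\R^{1+d_0},
  \qquad \partial_\alpha\defeq\tfrac{\partial}{\partial x_\alpha}.
\]
By Assumption~\ref{assm:activation-assumption} the map $f_d$ is $C^2$, and the chain rule gives $\partial_\alpha g^{(l)}(x)_i=\tfrac{1}{\sqrt{d}}\sum_j W^{(l)}_{ij}\,\dot\phi(g^{(l-1)}(x)_j)\,\partial_\alpha g^{(l-1)}(x)_j$, so $\hat g^{(l)}$ is the image under $W^{(l)}$ of the per-neuron vectors $\big(\phi(g^{(l-1)}(x)_j),\,\dot\phi(g^{(l-1)}(x)_j)\partial_\alpha g^{(l-1)}(x)_j\big)_{\alpha,j}$, which are a fixed, polynomially bounded, a.e.\ continuous function of $\hat g^{(l-1)}$. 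One then checks that the theorem's step $\Sigma^{(l-1)}\mapsto\Sigma^{(l)}$ is exactly ``the covariance of the centred Gaussian obtained by pushing $g\sim\GP(0,\Sigma^{(l-1)})$ through this map,'' where the $d_0$ non-value coordinates of $g$ play the role of $\partial_0,\ldots,\partial_{d_0-1}$; the base case is the one-line computation $\Cov(g^{(1)}(x)_i,g^{(1)}(x')_i)=\langle x,x'\rangle$, $\partial_\alpha g^{(1)}(x)_i=W^{(1)}_{i\alpha}$, which reproduces $\Sigma^{(0)}$.

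Fix the finitely many inputs of interest and call them $x^{(1:N)}$ (they may include $x^*$). The induction hypothesis at layer $l\in[L]$ is that, jointly over $x^{(1:N)}$, the empirical distribution of $\{\hat g^{(l)}(x^{(1:N)})_i\}_{i\in[d]}$ converges in probability --- in a mode strong enough to control the averages $\tfrac1d\sum_j \Psi(\cdot)$ below, e.g.\ with convergence of moments up to fourth order --- to the centred Gaussian whose covariance is $\Sigma^{(l-1)}$ restricted to $x^{(1:N)}$. The base case $l=1$ is immediate, since $\{\hat g^{(1)}(x^{(1:N)})_i\}_i$ is \emph{exactly} an i.i.d.\ centred Gaussian family with that covariance $\Sigma^{(0)}$. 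For the step $l\to l+1$, condition on layer $l$: because $W^{(l+1)}$ has i.i.d.\ $\cN(0,1)$ entries, $\{\hat g^{(l+1)}(x^{(1:N)})_i\}_i$ is \emph{conditionally} an exact i.i.d.-across-$i$ centred Gaussian family whose conditional covariance is the empirical average $\tfrac{1}{d}\sum_j \Psi\big(\hat g^{(l)}(x^{(1:N)})_j\big)$, with $\Psi$ the explicit matrix-valued map assembled from outer products of $\phi(g_j)$ and $\dot\phi(g_j)\partial_\alpha g_j$; by the induction hypothesis and a law-of-large-numbers argument this average converges in probability to $\E_{g\sim\GP(0,\Sigma^{(l-1)})}\big[\Psi(g(x^{(1:N)}))\big]=\Sigma^{(l)}(x^{(1:N)},x^{(1:N)})$, which both yields the hypothesis at $l+1$ and shows each $\Sigma^{(l)}$ is PSD, so that the recursion and its defining expectations are well posed. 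A final instance of the same conditional-Gaussian-plus-LLN step at the output layer --- now with the factor $\kappa$ and a single output neuron --- shows that the $N(1+d_0)$-vector $\big(f_d(x^{(n)}),J(f_d)(x^{(n)})_0,\ldots,J(f_d)(x^{(n)})_{d_0-1}\big)_{n\in[N]}$ converges in distribution to a centred Gaussian with covariance $\kappa^2\Sigma^{(L)}$, which is the claim.

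The main obstacle is making the law-of-large-numbers propagation rigorous in the \emph{simultaneous} (not sequential) width limit: the layer-$l$ states feeding $\tfrac1d\sum_j\Psi(\hat g^{(l)}_j)$ are only \emph{asymptotically} i.i.d.\ Gaussian, so one cannot simply quote a classical LLN and must instead decouple the randomness of $W^{(l+1)}$ from that of the earlier layers and control the non-Gaussian remainder --- either through the exchangeability/coupling arguments of \cite{Matthews2018}, or by observing that the layer map $\hat g^{(l)}\mapsto\hat g^{(l+1)}$ (whose single matrix $W^{(l+1)}$ acts at once on $\phi(g)$ and on $\dot\phi(g)\odot\partial_\alpha g$, i.e.\ with weight sharing) is a tensor program and invoking the master theorem of \cite{Yang2019TensorPI} after checking its regularity hypotheses. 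The remaining points --- that $f_d$ really is $C^2$, so $J(f_d)$ exists and the chain-rule recursion is legitimate; that every expectation defining $\Sigma^{(l)}$ is finite; and that the newly added derivative coordinates (of linear growth, and multiplied against the Lipschitz $\dot\phi$) meet the moment and continuity side-conditions the LLN step needs --- are routine consequences of Assumption~\ref{assm:activation-assumption}, and are exactly where the Lipschitz constants $M_1,M_2,M_3$ and the normalisation $\E_{z\sim\cN(0,1)}[\phi(z)^2]=1$ are used.
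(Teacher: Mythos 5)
Your proposal is correct and takes essentially the same route as the paper: you track the augmented per-neuron state (pre-activation plus its $d_0$ input-derivatives), set up the layer-wise conditionally-Gaussian-plus-LLN induction that the tensor-program Master theorem formalises, and explicitly name the Yang tensor-program framework (and its regularity hypotheses) as the mechanism that makes the simultaneous-width-limit LLN step rigorous — which is exactly what the paper does in Appendix~\ref{sec:ProofJacobianNNGPInitialisation}, where the forward and Jacobian computations are encoded as a single tensor program with shared matrices and Corollary~\ref{cor:MasterTheoremCLT} together with Theorem~\ref{thm:MasterTheoremLLN} yields the Gaussian limit with covariance $\kappa^2\Sigma^{(L)}$.
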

The $(1,1)$-th entry of the kernel $\kappa^2\Sigma^{(L)}(x,x')$, denoted by $\kappa^2\Sigma^{(L)}(x,x')_{00}$, specifies the covariance between the outputs of the limiting MLP  at $x$ and $x'$. It is precisely the standard NNPG kernel. 
The other entries describe how the components of the Jacobian of the limiting MLP are related between themselves and also with the MLP output. 

We prove this theorem using the tensor-program framework~\cite{Yang2019TensorPI}.
Our proof translates our convergence question into the one on a tensor program and uses the so-called Master theorem for tensor programs to derive an answer to the question. 
See the supplementary material for the details.

The entries of the Jacobian NNGP kernel can be derived from the $(1,1)$-th entry $\kappa^2\Sigma^{(L)}(x,x')_{00}$ via differentiation:
\begin{theorem}
  \update{Suppose that Assumption~\ref{assm:activation-assumption} holds.}{Add the activation assumption.}
  \label{thm:JacobianNNGPCorrespondence}
  For the kernel $\Sigma^{(L)}$ in Theorem~\ref{thm:JacobianNNGPInitialisation}, the following equalities hold for all $\alpha, \beta \in [d_0]$:
  \begin{align*}
    &\Sigma^{(L)}(x,x')_{0\beta} = \frac{\partial \Sigma^{(L)}(x,x')_{00}}{\partial x'_{\beta-1}},
    \\
    &\Sigma^{(L)}(x,x')_{\alpha0} = \frac{\partial \Sigma^{(L)}(x,x')_{00}}{\partial x_{\alpha-1}},
    \\
    &\Sigma^{(L)}(x,x')_{\alpha\beta} =
    \frac{\partial^2 \Sigma^{(L)}(x,x')_{00}}{\partial x_{\alpha-1}\partial x'_{\beta-1}}.
  \end{align*}
\end{theorem}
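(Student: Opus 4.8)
The plan is to prove all three identities simultaneously for every layer $l \in \{0,1,\ldots,L\}$ by induction on $l$, abbreviating the scalar NNGP kernel as $K^{(l)}(x,x') \defeq \Sigma^{(l)}(x,x')_{00}$. The base case $l=0$ is a one-line computation from the definitions in Theorem~\ref{thm:JacobianNNGPInitialisation}: since $K^{(0)}(x,x') = \langle x, x'\rangle$, we get $\partial K^{(0)}(x,x')/\partial x'_{\beta-1} = x_{\beta-1} = \Sigma^{(0)}(x,x')_{0\beta}$, $\partial K^{(0)}(x,x')/\partial x_{\alpha-1} = x'_{\alpha-1} = \Sigma^{(0)}(x,x')_{\alpha0}$, and $\partial^2 K^{(0)}(x,x')/\partial x_{\alpha-1}\partial x'_{\beta-1} = \ind{\alpha=\beta} = \Sigma^{(0)}(x,x')_{\alpha\beta}$.

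For the inductive step, assume the three identities at layer $l-1$, and also (as part of the induction) that $K^{(l-1)}$ is twice continuously differentiable --- a smoothness fact that follows from Assumption~\ref{assm:activation-assumption} together with the differentiation-under-the-integral estimates described below. The crux is a reinterpretation of the random variable $g$ in the layer-$l$ expectations. Let $G \sim \GP(0, K^{(l-1)})$, and for $\alpha \in [d_0]$ let $\partial_{\alpha-1}G$ denote the mean-square partial derivative of $G$ in its $(\alpha-1)$-th input coordinate (which exists since $K^{(l-1)} \in C^2$). Then the $(d_0+1)$-dimensional zero-mean Gaussian process $(G, \partial_0 G, \ldots, \partial_{d_0-1}G)$ has the same finite-dimensional distributions as $g$: both are zero-mean Gaussian, and the covariance entries of $(G, \partial_0 G, \ldots, \partial_{d_0-1}G)$ are obtained by differentiating $K^{(l-1)}$ at most once in each of $x$ and $x'$, which by the inductive hypothesis equal $\Sigma^{(l-1)}(x,x')_{00}$, $\Sigma^{(l-1)}(x,x')_{0\beta}$, $\Sigma^{(l-1)}(x,x')_{\alpha0}$, $\Sigma^{(l-1)}(x,x')_{\alpha\beta}$ --- exactly the entries of the kernel $\Sigma^{(l-1)}$ defining the law of $g$.

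Substituting this identification into the definitions in Theorem~\ref{thm:JacobianNNGPInitialisation} (replace $g(x)_0$ by $G(x)$ and $g(x)_\alpha$ by $(\partial_{\alpha-1}G)(x)$) turns the layer-$l$ kernel into $K^{(l)}(x,x') = \E[\phi(G(x))\phi(G(x'))]$ and the off-diagonal entries into $\Sigma^{(l)}(x,x')_{0\beta} = \E[\phi(G(x))\dot\phi(G(x'))(\partial_{\beta-1}G)(x')]$ and $\Sigma^{(l)}(x,x')_{\alpha\beta} = \E[\dot\phi(G(x))(\partial_{\alpha-1}G)(x)\cdot\dot\phi(G(x'))(\partial_{\beta-1}G)(x')]$, with $\Sigma^{(l)}(x,x')_{\alpha0}$ the mirror image of the former. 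Now differentiate $K^{(l)}$ under the expectation: the chain rule and mean-square differentiability of $G$ give $\partial K^{(l)}(x,x')/\partial x'_{\beta-1} = \E[\phi(G(x))\dot\phi(G(x'))(\partial_{\beta-1}G)(x')] = \Sigma^{(l)}(x,x')_{0\beta}$, and differentiating once more in $x_{\alpha-1}$ --- which leaves the $x'$-dependent factors untouched --- gives $\partial^2 K^{(l)}(x,x')/\partial x_{\alpha-1}\partial x'_{\beta-1} = \Sigma^{(l)}(x,x')_{\alpha\beta}$; the $\alpha0$ identity is symmetric. Setting $l = L$ proves the theorem.

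The only nontrivial point is the interchange of $\partial/\partial x'_{\beta-1}$ with $\E$, i.e.\ passing to the limit in the difference quotients $h^{-1}(\phi(G(x'+he_{\beta-1})) - \phi(G(x')))$. Writing this by the mean value theorem as $\dot\phi(\xi_h)\cdot h^{-1}(G(x'+he_{\beta-1})-G(x'))$, one checks that $h^{-1}(G(x'+he_{\beta-1})-G(x'))$ converges in $L^2$ to $(\partial_{\beta-1}G)(x')$ (hence is bounded in every $L^p$, being Gaussian), that $\dot\phi(\xi_h)\to\dot\phi(G(x'))$ in probability by continuity of $\dot\phi$, and that the Lipschitz bounds in Assumption~\ref{assm:activation-assumption} force linear growth of $\phi$ and $\dot\phi$; a Cauchy--Schwarz argument using finiteness of all Gaussian moments then shows the relevant products are uniformly integrable, so dominated convergence justifies the interchange. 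The same estimates show $K^{(l)} \in C^2$, closing the smoothness bookkeeping. An alternative that sidesteps mean-square derivatives is to write $\Sigma^{(l)}(x,x')_{00}$ as a fixed $C^2$ function of the $2 \times 2$ covariance matrix of $(G(x), G(x'))$, differentiate through that finite-dimensional dependence using the inductive hypothesis for the first derivatives of $\Sigma^{(l-1)}_{00}$, and reconcile the result with the GP-expectation formulas via Gaussian integration by parts.
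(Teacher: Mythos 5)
Your proposal is correct in substance, but it reaches the result by a genuinely different route than the paper. The paper's proof also inducts on the layer with the same base case, but its inductive step rests on a standalone exchange lemma (Theorem~\ref{thm:exchange-theorem}): it reparameterises the relevant $3$-dimensional Gaussian vector as a linear image of i.i.d.\ standard normals whose coefficients carry all the $x'$-dependence, differentiates those deterministic coefficients under a fixed Gaussian measure, and then verifies by covariance matching that the resulting derivative variable $H'(u)$, together with the original coordinates, is distributed as $\cN(0,\Gamma)$ --- so the derivative of the expectation is again a Gaussian expectation with the extra coordinate $z_3$ inserted. Applying this lemma twice (once with $\varphi(z_0,z_2)=\phi(z_0)$, once with $\varphi(z_0,z_2)=\dot\phi(z_0)z_2$ after a permutation) gives the three identities. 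You instead identify the law of $g\sim\GP(0,\Sigma^{(l-1)})$ with $(G,\partial_0 G,\ldots,\partial_{d_0-1}G)$ for the scalar GP $G$ with kernel $\Sigma^{(l-1)}_{00}$ --- which is exactly where the inductive hypothesis enters --- and then differentiate the layer-$l$ expectations directly in the input, justifying the interchange by mean-square differentiability of $G$, a difference-quotient/uniform-integrability argument (note it is Vitali's theorem rather than dominated convergence that you are invoking), and linear growth of $\phi,\dot\phi$. What each approach buys: the paper's argument stays entirely finite-dimensional (only $3$--$4$ dimensional Gaussian algebra, no stochastic calculus on the process), at the cost of the somewhat fiddly $a_i,b_i$ bookkeeping of the reparameterisation; your argument is conceptually cleaner --- it makes transparent that the off-diagonal entries of the Jacobian NNGP kernel \emph{are} the covariances of $G$ and its derivative process --- but it requires importing mean-square-derivative facts for GPs and carrying the extra smoothness claim ($\Sigma^{(l)}_{00}$ has continuous mixed partials) as a strengthened induction hypothesis, which you correctly flag and sketch but would need to spell out to make the induction airtight.
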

Thus, the standard result on GPs (Section 9.4 of \cite{GPbook}) implies that the realisation of the limiting GP defines a differentiable function in the first component of its output with probability one, and the Jacobian of this function is also a GP with a kernel induced from $\Sigma^{(L+1)}(x,x')_{00}$ via differentiation.

\section{Jacobian Neural Tangent Kernel}
\label{sec:JacobianNTK}

We next analyse the training dynamics of an MLP under the robust-training objective in \eqref{eqn:objective-for-robust-training}, which regularises the Jacobian of the MLP. 
As in the case of the standard objective without the Jacobian regulariser, the training dynamics of a \emph{finite} MLP is described by a kernel induced by the MLP. 
The next definition describes this kernel.
\begin{definition}[Finite Jacobian NTK]
The \emph{finite Jacobian Neural Tangent Kernel (finite JNTK)} of a finite MLP $f_{d}$ with parameters $\theta$ is a function
$\Theta_{d,\theta} : \R^{d_0} \times \R^{d_0} \to \R^{(d_0+1)\times (d_0+1)}$
defined as follows: for all $x, x' \in \R^{d_0}$ and $\alpha,\beta \in [d_0]$, 
  \begin{align*}
    &\Theta_{d,\theta}(x, x')_{00} \defeq 
    \left\langle \frac{\partial f_d(x)}{\partial \theta},\, \frac{\partial f_d(x')}{\partial \theta}\right\rangle,
    \\
    &\Theta_{d,\theta}(x, x')_{\alpha\beta} \defeq 
    \left\langle \frac{\partial J(f_d)(x)_{\alpha-1}}{\partial \theta},\, \frac{\partial J(f_d)(x')_{\beta-1}}{\partial \theta }\right\rangle,
    \\
    &\Theta_{d,\theta}(x, x')_{0\beta} \defeq 
    \left\langle \frac{\partial f_d(x)}{\partial \theta},\, \frac{\partial J(f_d)(x')_{\beta-1}}{\partial \theta}\right\rangle,
    \\
    &\Theta_{d,\theta}(x, x')_{\alpha0} \defeq 
    \left\langle\frac{\partial J(f_d)(x)_{\alpha-1}}{\partial \theta},\, \frac{\partial f_d(x')}{\partial \theta}\right\rangle.
  \end{align*}
\end{definition}
The finite JNTK includes the standard NTK of an MLP in the $(1,1)$-th entry, as expected. 
In addition, it computes the relationship between the gradients of the $\alpha$ and $\beta$-th components of the Jacobian of the MLP, and also between the gradient of a component of the Jacobian and the gradient of the MLP itself. 

The next lemma shows that the finite JNTK determines the robust training with the objective in \eqref{eqn:objective-for-robust-training}.
\begin{lemma} \label{lem:JNTKdynamics}
Assume that the parameters of the MLP $f_d$ evolve by the continuous version of the gradient update formalised by the ODE $\frac{d \theta_t}{dt} = - \frac{\partial \mathcal{L}(\theta_t)}{\partial \theta_t}$. 
Then for $\alpha \in [d_0]$,
\begin{align}
  &\frac{d f_{d,\theta_t}(x)}{dt} \nonumber
  \\
  & \ {} = {-} \frac{1}{N}\sum_{i = 1}^N \left\langle 
\Psi(f_{d,\theta_t},x^{(i)},y^{(i)}), \Theta_{d,\theta_t}(x,x^{(i)})_0 
\right\rangle, \label{eqn:kernel-gradient-descent}
  \\
  &\frac{d J(f_{d,\theta_t})(x)_{\alpha-1}}{dt} \nonumber
  \\
  & \ {} = {-} \frac{1}{N}\sum_{i = 1}^N \left\langle 
\Psi(f_{d,\theta_t},x^{(i)},y^{(i)}),
\Theta_{d,\theta_t}(x,x^{(i)})_{\alpha-1}\right\rangle, \label{eqn:kernel-gradient-descent2}
\end{align}
where $\Theta_{d,\theta_t}(x,x^{(i)})_0$ is the $(d_0+1)$-dimensional vector $(\Theta_{d,\theta_t}(x,x^{(i)})_{00},\ldots,\Theta_{d,\theta_t}(x,x^{(i)})_{0d_0})^\intercal$ obtained from the first row of the kernel output $\Theta_{d,\theta_t}(x,x^{(i)})$, $\Theta_{d,\theta_t}(x,x^{(i)})_\alpha$ is defined similarly but from the $(\alpha+1)$-th row as $(\Theta_{d,\theta_t}(x,x^{(i)})_{\alpha0},\ldots,\Theta_{d,\theta_t}(x,x^{(i)})_{\alpha d_0})^\intercal$,
and $\Psi(f_{d,\theta_t},x^{(i)}, y^{(i)})$ is the below vector in $\mathbb{R}^{d_0+1}$ constructed as: 
\begin{align*}
  \bigg(&f_{d,\theta_t}(x^{(i)}) - y^{(i)},
  \\
  &J(f_{d,\theta_t})(x^{(i)})_{0}, 
  \ldots, J(f_{d,\theta_t})(x^{(i)})_{d_0-1}\bigg)^\intercal. 
\end{align*}
\end{lemma}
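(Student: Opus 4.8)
The plan is to obtain both identities by a direct chain-rule computation along the gradient flow, using the explicit form of the robust-training loss in \eqref{eqn:objective-for-robust-training} and then reading off the resulting inner products as entries of the finite JNTK. The starting point is the differentiation rule along the trajectory: for any scalar quantity $\Phi(\theta)$ that is $C^1$ in the parameters,
\[
\frac{d\,\Phi(\theta_t)}{dt}
= \Big\langle \frac{\partial\Phi(\theta_t)}{\partial\theta_t},\, \frac{d\theta_t}{dt}\Big\rangle
= -\Big\langle \frac{\partial\Phi(\theta_t)}{\partial\theta_t},\, \frac{\partial\mathcal{L}(\theta_t)}{\partial\theta_t}\Big\rangle ,
\]
which is legitimate because $t\mapsto\theta_t$ is $C^1$ (it solves the stated ODE) and, by Assumption~\ref{assm:activation-assumption}, $(x,\theta)\mapsto f_{d,\theta}(x)$ is jointly $C^2$, so the relevant $\Phi$'s are $C^1$ in $\theta$. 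I would instantiate this with $\Phi(\theta)=f_{d,\theta}(x)$ and with $\Phi(\theta)=J(f_{d,\theta})(x)_{\alpha-1}=\partial f_{d,\theta}(x)/\partial x_{\alpha-1}$; in the second case $\Phi$ is still $C^1$ in $\theta$, and commuting the mixed partials gives $\partial\Phi/\partial\theta=\partial\big(\partial f_{d,\theta}(x)/\partial\theta\big)/\partial x_{\alpha-1}$.

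Next I would differentiate \eqref{eqn:objective-for-robust-training} term by term (a finite sum of $C^1$ functions of $\theta$), obtaining
\begin{align*}
\frac{\partial\mathcal{L}(\theta)}{\partial\theta}
= \frac{1}{N}\sum_{i=1}^N\bigg(&\big(f_{d,\theta}(x^{(i)})-y^{(i)}\big)\,\frac{\partial f_{d,\theta}(x^{(i)})}{\partial\theta}
\\
&{}+ \lambda\sum_{\beta\in[d_0]} J(f_{d,\theta})(x^{(i)})_{\beta-1}\,\frac{\partial J(f_{d,\theta})(x^{(i)})_{\beta-1}}{\partial\theta}\bigg),
\end{align*}
substitute into the chain-rule identity, and expand the inner products. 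With $\Phi(\theta)=f_{d,\theta}(x)$ each term is either $\langle\partial f_{d,\theta_t}(x)/\partial\theta,\,\partial f_{d,\theta_t}(x^{(i)})/\partial\theta\rangle=\Theta_{d,\theta_t}(x,x^{(i)})_{00}$ or $\langle\partial f_{d,\theta_t}(x)/\partial\theta,\,\partial J(f_{d,\theta_t})(x^{(i)})_{\beta-1}/\partial\theta\rangle=\Theta_{d,\theta_t}(x,x^{(i)})_{0\beta}$ by the definition of the finite JNTK; regrouping these into the first row $\Theta_{d,\theta_t}(x,x^{(i)})_0$ and the vector $\Psi(f_{d,\theta_t},x^{(i)},y^{(i)})$ (with the regulariser weight $\lambda$ carried on the Jacobian components) yields \eqref{eqn:kernel-gradient-descent}. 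Running the identical computation with $\Phi(\theta)=J(f_{d,\theta_t})(x)_{\alpha-1}$ turns the inner products into $\Theta_{d,\theta_t}(x,x^{(i)})_{\alpha0}$ and $\Theta_{d,\theta_t}(x,x^{(i)})_{\alpha\beta}$, i.e. the $(\alpha+1)$-th row $\Theta_{d,\theta_t}(x,x^{(i)})_\alpha$, giving \eqref{eqn:kernel-gradient-descent2}.

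I do not expect a deep obstacle: the whole argument is the chain rule together with the product rule for the quadratic loss. The only points that genuinely need care are the interchange of the parameter-gradient $\partial/\partial\theta$ with the input-derivative $\partial/\partial x_{\alpha-1}$ defining the Jacobian, and the differentiability of $f_{d,\theta_t}(x)$ and of its Jacobian along the gradient-flow trajectory; both follow from the joint $C^2$-smoothness of the MLP guaranteed by Assumption~\ref{assm:activation-assumption}, so the proof ultimately comes down to careful index bookkeeping.
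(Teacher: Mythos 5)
Your proposal is correct and is exactly the chain-rule-plus-definition-of-$\Theta_{d,\theta_t}$ argument that the paper relies on (the paper never writes this proof out explicitly). One small but real point in your favour: your regrouping places the regularisation weight $\lambda$ on the Jacobian entries of $\Psi$, which is indeed what differentiating \eqref{eqn:objective-for-robust-training} produces, whereas the printed definition of $\Psi$ in the lemma omits that factor; the paper's later use of the dynamics through the matrix $\Lambda$ (with $\Lambda_{ii}=\sqrt{\lambda}$) confirms that your $\lambda$-weighted version is the intended one.
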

{\color{black} We note that a similar result holds for gradient descent as well; see Appendix L of the supplementary material for details.
\todo[author=HY,inline]{Please double-check the above sentence. I edited it and also moved it from somewhere in the middle of the next paragraph to here.}
}
Each summand in the ODEs in \eqref{eqn:kernel-gradient-descent} and \eqref{eqn:kernel-gradient-descent2} is the inner product of two vectors of size $d_0+1$, where only the second vector depends on the input $x$ and this dependency is via the kernel $\Theta_{d,\theta_t}$. 
A similar form appears in the analysis of the standard training via the usual NTK. 
There it is also proved that if a loss function has a subgradient \cite{Yilan2021}, as the widths of the MLP involved increase, the equation simplifies greatly: the kernel used in the input-dependent vector stops depending on the initial values of the parameters (i.e., $\theta_0$) and stays constant over time $t$, and so, the ODE on the MLP becomes a simple linear ODE that can be solved analytically. 
We prove that a similar simplification is possible for the robust training with Jacobian regulariser in the infinite-width limit. 

First, we show that the finite JNTK at initialisation becomes deterministic (i.e., it does not depend on the initial parameter values $\theta_0$) in the infinite-width limit.
\begin{theorem}[Convergence of Finite JNTK at Initialisation]
\label{thm:JacobianNTKInitialisation}
Suppose that Assumptions~\update{\ref{assm:activation-assumption} and \ref{assm:dataset-assumption}}{Added activation assumption.} holds.
As $d$ goes to infinity, $\Theta_{d,\theta_0}$ almost surely converges to a function $\kappa^2\Theta$ in finite marginal, called \emph{limiting JNTK}, that does not depend on $\theta_0$. 
Here $\Theta$ is a (deterministic) function of type $\R^{d_0} \times \R^{d_0} \to \R^{(d_0+1) \times (d_0+1)}$, and has the following form: for $l \in [L]$, $x,x' \in \R^{d_0}$, and $\alpha,\beta \in [d_0]$, 
  \begin{align*}
    \Theta(x, x')_{00} &\defeq \sum_{l=0}^{L} \big( \Sigma^{(l)}(x, x')_{00} \times \Delta^{(l)}(x,x')\big), 
    \\
    \Theta(x, x')_{0\beta} &\defeq \frac{\partial \Theta(x, x')_{00}}{\partial x_\beta'},
    \\
    \Theta(x, x')_{\alpha 0} &\defeq \frac{\partial \Theta(x, x')_{00}}{\partial x_\alpha},
    \\
    \Theta(x, x')_{\alpha \beta} &\defeq \frac{\partial^2 \Theta(x, x')_{00}}{\partial x_{\alpha}\partial x_\beta'},
  \end{align*}
  where $\Sigma^{(l)}$ is the kernel defined inductively in Theorem~\ref{thm:JacobianNNGPInitialisation},
  and $\Delta^{(l)}$ is defined with a zero-mean GP $g^{(u)}$ with the kernel $\Sigma^{(u)}(x,x')_{00}$ as follows:
  $\Delta^{(l)}(x,x') \defeq \prod_{u=l}^{L-1} \E_{g^{(u)}} [ \dot{\phi}(g^{(u)}(x)) \dot{\phi}(g^{(u)}(x'))]$.
  The explicit form of JNTK can be found in the supplementary material.
\end{theorem}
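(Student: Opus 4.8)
The plan is to deduce this the same way Theorem~\ref{thm:JacobianNNGPInitialisation} is obtained --- by reducing to the Master Theorem for tensor programs --- but applied to a program that computes, in addition to the forward pass, its input-derivatives and the backpropagated gradients appearing in the finite JNTK. First I would augment the forward recursion of $f_d$ with the ``tangent'' vectors obtained by differentiating it in the input: writing $\tilde h^{(l)}(x)$ for the pre-activation at layer $l$ (with the paper's layer-$1$ convention), I set $\tilde h^{(l)}_\alpha(x) \defeq \partial \tilde h^{(l)}(x)/\partial x_{\alpha-1}$ and $h^{(l)}_\alpha(x) \defeq \dot\phi(\tilde h^{(l)}(x)) \cdot \tilde h^{(l)}_\alpha(x)$ (coordinatewise product), so that $\tilde h^{(l)}_\alpha(x) = \frac{1}{\sqrt d} W^{(l)} h^{(l-1)}_\alpha(x)$ and $J(f_d)(x)_{\alpha-1} = \frac{\kappa}{\sqrt d} W^{(L+1)} h^{(L)}_\alpha(x)$. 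Backpropagation through this augmented network yields the gradient vectors $b^{(l)}(x) \defeq \partial f_d(x)/\partial \tilde h^{(l)}(x)$ and $c^{(l)}_\alpha(x) \defeq \partial J(f_d)(x)_{\alpha-1}/\partial \tilde h^{(l)}(x)$ together with the analogous gradients with respect to the $\tilde h^{(l)}_\alpha$; forming $c^{(l)}_\alpha$ brings in $\ddot\phi$ because one must differentiate the factor $\dot\phi(\tilde h^{(l)})$ in $h^{(l)}_\alpha$, but $\phi$, $\dot\phi$, $\ddot\phi$ and all products thereof meet the polynomial-growth (pseudo-Lipschitz) hypotheses of the Master Theorem thanks to Assumption~\ref{assm:activation-assumption}. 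As in the standard NTK computation, each of the four entry-types of $\Theta_{d,\theta_0}(x,x')$ is, up to the overall $\kappa^2$ coming from the last layer, a sum over layers of products of two width-$d$ averages of the form $\frac1d\sum_i v_i w_i$ with $v,w$ program vectors: for the $(0,0)$ entry the factors are $\frac1d\langle h^{(l-1)}(x), h^{(l-1)}(x')\rangle$ and $\frac1d\langle b^{(l)}(x), b^{(l)}(x')\rangle$, and for the $\alpha\beta$, $0\beta$, $\alpha0$ entries some of these $h,b$ are replaced by $h_\beta,c_\beta$ (and symmetrically $h_\alpha,c_\alpha$), according to which of $f_d$ or $J(f_d)_{\alpha-1}$ each of the two gradients comes from.

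Next I would invoke the almost-sure version of the Master Theorem: every such width-average converges almost surely to $\E[Z^v Z^w]$, where $Z^v, Z^w$ are the limiting random variables the program assigns to $v,w$ (jointly Gaussian on the ``Gaussian'' part of the program, deterministic on the readout part). Since a finite marginal of $\Theta_{d,\theta_0}$ involves only finitely many inputs and hence finitely many such scalar averages, they all converge jointly almost surely, and the limit is a deterministic matrix-valued function $\kappa^2\Theta$ independent of $\theta_0$; this is exactly the claimed convergence in finite marginal. The explicit form of the $(0,0)$ entry is the familiar NTK recursion: by Theorem~\ref{thm:JacobianNNGPInitialisation}, $\frac1d\langle h^{(l-1)}(x), h^{(l-1)}(x')\rangle \to \Sigma^{(l-1)}(x,x')_{00}$, while the backprop averages $\frac1d\langle b^{(l)}(x), b^{(l)}(x')\rangle$ telescope --- each layer contributing a factor $\E_{g^{(u)}}[\dot\phi(g^{(u)}(x))\,\dot\phi(g^{(u)}(x'))]$ --- into $\Delta^{(l-1)}(x,x')$; summing over layers gives $\Theta(x,x')_{00} = \sum_{l=0}^L \Sigma^{(l)}(x,x')_{00}\,\Delta^{(l)}(x,x')$.

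It remains to verify that the deterministic limits of the off-diagonal entries equal the stated input-derivatives of $\Theta(x,x')_{00}$. At finite width the corresponding identities hold exactly, because $J(f_d)(x)_{\alpha-1} = \partial f_d(x)/\partial x_{\alpha-1}$ and $\partial/\partial x_{\alpha-1}$ commutes with $\partial/\partial\theta$, so that for instance $\Theta_{d,\theta_0}(x,x')_{\alpha\beta} = \partial_{x_\alpha}\partial_{x'_\beta}\,\Theta_{d,\theta_0}(x,x')_{00}$; the point is to show the infinite-width limits respect the same relations. This is a deterministic calculus computation in the spirit of Theorem~\ref{thm:JacobianNNGPCorrespondence}: having computed $\lim_d \Theta_{d,\theta_0}(x,x')_{\alpha\beta}$ from the tensor program --- the tangent vectors $h^{(l)}_\alpha$ and $c^{(l)}_\alpha$ producing, in the limit, the input-derivatives of the expectations attached to $h^{(l)}$ and $b^{(l)}$ --- one differentiates $\sum_{l}\Sigma^{(l)}(x,x')_{00}\,\Delta^{(l)}(x,x')$ under the expectation sign, which is legitimate because the Gaussian fields $g^{(u)}$ and kernels $\Sigma^{(u)}(\cdot,\cdot)_{00}$ depend smoothly on their inputs and Assumption~\ref{assm:activation-assumption} provides an integrable dominating function, and the product rule matches the two resulting terms, layer by layer, with the two channels through which $W^{(l)}$ enters $J(f_d)(x')_{\beta-1}$.

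The main obstacle is the commutation of the infinite-width limit with the input-derivative operator --- i.e.\ ensuring that the limiting kernel built from $\partial J(f_d)/\partial\theta$ really is the input-derivative of the one built from $\partial f_d/\partial\theta$. The clean way to handle it, which is what the plan above does, is to put the input-derivatives \emph{symbolically} into the tensor program from the outset, so that the Master Theorem is applied once to a program that already contains every needed vector, rather than differentiating a limit after the fact; the cost is a lengthy but routine bookkeeping of the augmented forward and backward recursions and the verification that each nonlinearity ($\phi$, $\dot\phi$, $\ddot\phi$ and their products) is admissible for the Master Theorem, which is precisely where Assumption~\ref{assm:activation-assumption} enters. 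A secondary, purely technical point is that one wants almost-sure rather than merely in-probability convergence, which causes no difficulty because the Master Theorem has an almost-sure version and only finitely many scalar averages are at stake.
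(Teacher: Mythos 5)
Your proposal is correct and follows essentially the same route as the paper: the paper likewise augments the tensor program of Theorem~\ref{thm:JacobianNNGPInitialisation} with the input-derivative (tangent) vectors and the backpropagated gradients of both $f_d$ and $J(f_d)_{\alpha-1}$ (which is where $\ddot{\phi}$ enters), decomposes each entry of $\Theta_{d,\theta_0}$ layer-wise into products of forward and backward inner products, and applies the Master theorem to each normalised average, solving the resulting recursions for the limiting expectations. The only cosmetic difference is that the paper obtains the off-diagonal entries by explicitly solving these recursions (yielding the $\Gamma^{(u)}$-formulas in the appendix), whereas you recover them by matching against derivatives of $\Theta(x,x')_{00}$ via differentiation under the expectation, which is the same exchange-theorem argument the paper uses for the NNGP kernel.
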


Next, we show that the finite JNTK stays constant during training in the infinite-width limit, 
under the following assumption:
\begin{assumption}[Full Rank of the Limiting JNTK] 
\label{assm:NTK-smallest-eigenvalue}
  The minimum eigenvalue of the $N(d_0 + 1) \times N(d_0 + 1)$ matrix $\Theta(x^{(1:N)}, x^{(1:N)})$ is greater than $0$.
\end{assumption}
Note that the assumption is stated with $\Theta$, which is independent of the random initialisation of parameters $\theta$ and, thus, deterministic. In practice, we can test this assumption computationally, as we did in our experiments shown later.

The following result is a counterpart of the standard result on the change of the NTK \cite{Jacot2018,Lee19NTKLinear} during training without our Jacobian regulariser. 
It lets us solve the ODEs \eqref{eqn:kernel-gradient-descent} and \eqref{eqn:kernel-gradient-descent2} and derive their analytic solutions in the infinite-width limit. 
\begin{theorem}[Constancy of Finite JNTK during Training]
  \label{thm:JacobianNTKTraining}
  Suppose that Assumptions~\update{\ref{assm:activation-assumption}, \ref{assm:dataset-assumption}, and \ref{assm:NTK-smallest-eigenvalue}}{Added activation assumption.} hold. 
        Let $\theta_t$ be the parameter of the MLP at time $t$, trained with gradient flow.
  %{\color{black} or with gradient descent under a learning rate $\eta$}. 
  Assume that a dataset $\cD = (x^{(1:N)},y^{(1:N)})$ is used for training.
  Then, there exists a function $F$ 
        %{\color{black} and $H$}, 
  not depending on the width and the parameters of the MLP, such that for all $\epsilon, \delta \in (0,1)$, 
  if the MLP width satisfies
  $d \ge F(x^{(1:N)}, L, \epsilon, \delta)$,
%  {\color{black} and the learning rate $\eta$ satisfies 
%  $\eta \le H(\lambda, x^{(1:N)}, L, \epsilon)$, }
  then with probability at least $1-\delta$ over the randomness of the MLP initialisation, the finite JNTK stays near the corresponding limiting JNTK during training: for all $t \ge 0$ 
        %{\color{black} or $t = k \eta$ for $\eta \in \Z_{\ge 0}$}
  and all $i,j \in [N]$,
  \[
    \left\| (1/\kappa^2)\Theta_{d,\theta_t}(x^{(i)}, x^{(j)}) -\Theta(x^{(i)}, x^{(j)})\right\|_F \le \epsilon.
  \]
\end{theorem}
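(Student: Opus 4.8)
The plan is to prove \Cref{thm:JacobianNTKTraining} by the continuity (bootstrap) argument that is standard for the ordinary NTK~\cite{Jacot2018,Lee19NTKLinear}, applied to the \emph{augmented} feature map $\theta\mapsto(v_\theta(x),w^1_\theta(x),\dots,w^{d_0}_\theta(x))$, where $v_\theta(x)\defeq\partial f_d(x)/\partial\theta$ and $w^\alpha_\theta(x)\defeq\partial J(f_d)(x)_{\alpha-1}/\partial\theta$, so that the finite JNTK is the Gram matrix of these vectors. The only genuinely new ingredient, relative to the plain-NTK argument, is that $w^\alpha_\theta$ is a \emph{second-order} object in the network --- a mixed input/parameter derivative --- which forces the layerwise estimates to also track the input-derivative signals of the forward and the backward pass.

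\textbf{Step 1 (events at initialisation).} Let $\lambda_0>0$ be the minimum eigenvalue of \Cref{assm:NTK-smallest-eigenvalue}, and fix an internal tolerance $\bar\epsilon\defeq\min\{\epsilon,\,c_0\kappa^2\lambda_0/N\}$ for a small absolute constant $c_0$; replacing $\epsilon$ by $\bar\epsilon$ only strengthens the conclusion. By \Cref{thm:JacobianNTKInitialisation} and \Cref{assm:activation-assumption}, there is an event $E_1$ of probability $\ge1-\delta/2$ on which $\|(1/\kappa^2)\Theta_{d,\theta_0}(x^{(i)},x^{(j)})-\Theta(x^{(i)},x^{(j)})\|_F\le\bar\epsilon/2$ for all $i,j\in[N]$, once $d$ exceeds a threshold depending only on $x^{(1:N)},L,\bar\epsilon,\delta$; by the choice of $\bar\epsilon$ this also gives $\lambda_{\min}\big(\Theta_{d,\theta_0}(x^{(1:N)},x^{(1:N)})\big)\ge\tfrac12\kappa^2\lambda_0$. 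On a further event $E_2$ of probability $\ge1-\delta/2$ we have, with constants $C_0,C_1$ depending only on $L$ (and logarithmically on $N/\delta$): (i) $\|v_{\theta_0}(x^{(i)})\|,\|w^\alpha_{\theta_0}(x^{(i)})\|\le C_0$ for all $i,\alpha$, and $\|\Psi_0\|\le C_0\sqrt N$ for $\Psi_0\defeq\Psi(f_{d,\theta_0},x^{(1:N)},y^{(1:N)})$ (using \Cref{assm:dataset-assumption}); and (ii) for every fixed $R$ the feature map is \emph{locally Lipschitz} in $\theta$ on $B_R\defeq\{\theta:\|\theta-\theta_0\|\le R\}$: uniformly over $\theta,\tilde\theta\in B_R$, all $i\in[N]$, and all $\alpha\in[d_0]$,
\[
\|v_\theta(x^{(i)})-v_{\tilde\theta}(x^{(i)})\|\le\frac{C_1 R}{\sqrt d},\qquad
\|w^\alpha_\theta(x^{(i)})-w^\alpha_{\tilde\theta}(x^{(i)})\|\le\frac{C_1 R}{\sqrt d}.
\]
Since each entry of $\Theta_{d,\theta}(x^{(i)},x^{(j)})$ is an inner product of two such feature vectors, which on $B_R$ have norm $\le C_0+C_1 R/\sqrt d$, (i)--(ii) and Cauchy--Schwarz show that $\theta\mapsto\Theta_{d,\theta}(x^{(i)},x^{(j)})$ is $(C_2 R/\sqrt d)$-Lipschitz on $B_R$ for a constant $C_2=C_2(L)$.

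\textbf{Step 2 (bootstrap).} Work on $E_1\cap E_2$ (probability $\ge1-\delta$). Put $R_\infty\defeq 2C_0^2\sqrt{d_0+1}/(\kappa^2\lambda_0)$, $R\defeq 2R_\infty$, and $T\defeq\sup\{t\ge0:\theta_s\in B_R\text{ for all }s\le t\}>0$. For $t<T$, Step 1 gives $\|(1/\kappa^2)\Theta_{d,\theta_t}(x^{(i)},x^{(j)})-\Theta(x^{(i)},x^{(j)})\|_F\le\bar\epsilon/2+C_2 R/\sqrt d\le\bar\epsilon$ as soon as $d\ge(2C_2 R/\bar\epsilon)^2$, hence $\lambda_{\min}\big(\Theta_{d,\theta_t}(x^{(1:N)},x^{(1:N)})\big)\ge\tfrac12\kappa^2\lambda_0$. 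Stacking \eqref{eqn:kernel-gradient-descent}--\eqref{eqn:kernel-gradient-descent2} over $i\in[N]$ shows the residual $\Psi_t\defeq\Psi(f_{d,\theta_t},x^{(1:N)},y^{(1:N)})\in\R^{N(d_0+1)}$ obeys the linear ODE $\dot\Psi_t=-\tfrac1N\Theta_{d,\theta_t}(x^{(1:N)},x^{(1:N)})\Psi_t$ with a symmetric positive-definite coefficient matrix of minimum eigenvalue $\ge\tfrac12\kappa^2\lambda_0$, so $\frac{d}{dt}\|\Psi_t\|^2\le-\tfrac{\kappa^2\lambda_0}{N}\|\Psi_t\|^2$ and $\|\Psi_t\|\le e^{-\kappa^2\lambda_0 t/(2N)}\|\Psi_0\|$. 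From the form of $\mathcal L$ and (i), $\|\dot\theta_t\|=\|\partial\mathcal L(\theta_t)/\partial\theta\|\le\tfrac{C_0\sqrt{d_0+1}}{\sqrt N}\|\Psi_t\|$ (triangle inequality, $\lambda\le1$, Cauchy--Schwarz over the $d_0+1$ coordinates and the $N$ examples), so $\|\theta_t-\theta_0\|\le\int_0^t\|\dot\theta_s\|\,ds\le\tfrac{2C_0\sqrt{d_0+1}\,\|\Psi_0\|}{\kappa^2\lambda_0\sqrt N}\le R_\infty<R$ strictly, for every $t<T$. By continuity of $t\mapsto\theta_t$ this forces $T=\infty$, so the Step-1 kernel bound holds for all $t\ge0$; since the whole argument was run with $\bar\epsilon\le\epsilon$, the displayed bound of the theorem follows. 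The lower bounds on $d$ used above --- from $E_1$, from $E_2$ at radius $R=2R_\infty$, and $d\ge(2C_2 R/\bar\epsilon)^2$ --- depend only on $x^{(1:N)},L,\epsilon,\delta$, and their maximum defines $F(x^{(1:N)},L,\epsilon,\delta)$.

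\textbf{Step 3 (the boundedness/Lipschitz lemma --- the main obstacle).} Everything above rests on the $O(1)$ bounds (i) and, above all, the $O(R/\sqrt d)$ perturbation bounds (ii) for $v_\theta$ and $w^\alpha_\theta$ uniformly on $B_R$, with probability $\ge1-\delta/2$. For $v_\theta$ these are exactly the standard NTK estimates, proved by induction on layers controlling the forward signals $h^{(l)}(x)$ and the backward signals $b^{(l)}(x)\defeq\partial f_d(x)/\partial h^{(l)}(x)$. The new work concerns $w^\alpha_\theta(x)=\partial_\theta\partial_{x_{\alpha-1}}f_d(x)$: applying the chain and product rules through $J(f_d)(x)_{\alpha-1}$ expands it into a finite sum of products of $h^{(l)}(x)$, $b^{(l)}(x)$, the \emph{input-derivative signals} $\partial h^{(l)}(x)/\partial x_{\alpha-1}$ and $\partial b^{(l)}(x)/\partial x_{\alpha-1}$, and factors $\phi,\dot\phi,\ddot\phi$ evaluated at preactivations. \Cref{assm:activation-assumption} --- Lipschitzness of $\phi,\dot\phi,\ddot\phi$ plus the normalisation --- is precisely what makes each such factor boundable and perturbable in $\theta$. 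The plan is then: (a) by a layerwise induction using sub-Gaussian/Bernstein concentration for the Gaussian weight matrices (as in the tensor-program estimates underlying \Cref{thm:JacobianNTKInitialisation}), establish high-probability $O(1)$ bounds for all four families of signals at $\theta_0$ and their $\theta$-stability on $B_R$ --- the crux being that $W^{(l)}/\sqrt d$ has operator norm $O(1)$ on $B_R$ while its increment has operator norm $O(R/\sqrt d)$, so that products and compositions of $O(1)$-bounded, $O(R/\sqrt d)$-stable quantities stay $O(R/\sqrt d)$-stable; (b) assemble these term-by-term into (i)--(ii) for $v_\theta$ and $w^\alpha_\theta$; (c) deduce the $\Theta_{d,\theta}$-Lipschitzness used in Step 1. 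This is a structurally identical but strictly larger induction than for the plain NTK, with constants growing polynomially in $L$ and an extra $\log(N/\delta)$ from the union bound over training inputs; it absorbs essentially all the technical effort and is the main obstacle, though it needs no conceptual device beyond careful second-order signal propagation.
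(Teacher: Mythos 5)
Your proposal follows essentially the same strategy as the paper's proof: convergence of the finite JNTK at initialisation (Theorem~\ref{thm:JacobianNTKInitialisation}), a perturbation bound showing the JNTK moves by $O(1/\sqrt d)$ while the parameters stay in an $O(1)$ ball, exponential decay of the residual from the minimum-eigenvalue assumption, an $O(1)$ bound on the total parameter movement, and a bootstrap closing the loop (the paper runs this as a contradiction argument over Lemmas~\ref{lem:linearconvergence} and \ref{lem:WeightStayStill}, culminating in Theorem~\ref{thm:JNTKCloseToInit2}).

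The one place where your bookkeeping genuinely diverges from the paper's, and where you should be careful, is Step 3. The paper does \emph{not} get by with the global $\ell_2$ radius alone: it tracks per-layer operator, $\ell_1$ and $\ell_\infty$ norms of the weight increments and proves entrywise (sup-norm) bounds on the forward, backward, and input-derivative signals, precisely because the backward pass of the Jacobian produces Hadamard products of three signals (e.g.\ $\ddot\phi(g^{(l)})\odot\partial g^{(l)}/\partial x_{\alpha-1}\odot\partial f/\partial h^{(l)}$), and bounding such products by $\ell_2\times\ell_2$ loses a factor $\sqrt d$ --- this is stated explicitly after Corollary~\ref{lem:supnormbound}, and it already bites at initialisation: your claim $\|w^\alpha_{\theta_0}(x^{(i)})\|\le C_0$ requires the sup-norm bounds of Corollary~\ref{lem:supnormbound} together with the conditioning/stochastic-dominance device of Lemma~\ref{lem:stocdomofcondition} and sub-Weibull concentration, not just operator-norm control of the Gaussian matrices. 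So the slogan ``products of $O(1)$-bounded, $O(R/\sqrt d)$-stable quantities stay $O(R/\sqrt d)$-stable'' is literally false ($h^{(l)}$ and $\partial h^{(l)}/\partial x_{\alpha-1}$ are $O(\sqrt d)$ in $\ell_2$); the correct accounting mixes sup-norm bounds \emph{at initialisation} with $\ell_2$ bounds on the perturbations. That said, your lighter bookkeeping is salvageable, and this is the interesting trade-off: because your radius $R$ is $O(1)$ in $d$, every first-order perturbation term can pair an initialisation sup-norm with an $\ell_2$ perturbation bound (no entrywise control of the \emph{perturbed} signals is needed), and the only places the paper invokes the $\ell_1/\ell_\infty$ increment norms are second-order cross terms such as \eqref{eqn:eqn29} in Lemma~\ref{lem:perturbjacobback} (perturbation $\times$ perturbation), which under your normalisation are $O(R^2/\sqrt d)$ --- the right order up to a factor $R$ that your width threshold absorbs. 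The paper needs the finer norms because it states its perturbation theorem for all $\omega$ up to $(\log d)^{-2L}$, i.e.\ movements as large as $\omega\sqrt d\gg1$, where the crude quadratic bound fails. Two cosmetic points: the residual ODE should carry the $\Lambda$-weighting coming from the regularisation coefficient, so the decay rate is $\kappa^2\lambda\lambda_0/(2N)$ rather than $\kappa^2\lambda_0/(2N)$; and your tolerance $\bar\epsilon\le c_0\kappa^2\lambda_0/N$ has a spurious $\kappa^2$ (the comparison is between $1/\kappa^2$-scaled kernels), which is harmless but should read $c_0\lambda_0/N$.
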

{\color{black} We note that a similar theorem holds for gradient descent; see Appendix L of the supplementary material for details.
\todo[author=HY,inline]{I added the above sentence. I also removed Taeyoung's addition to gradient descent in the statement of Theorem 4.5, which I found confusing and not entirely consistent. I think that it is better to make a separate statement on gradient descent}}
We prove Theorem~\ref{thm:JacobianNTKTraining} in the supplementary material. 
The structure of the proof is similar to that of the standard results. 
We first show that if the parameters at time $t$ stay close to their initialisation, then the finite JNTK also does. 
Then, we prove that the gradient flow does not move the parameters far away from their initialisation. 
Unlike in the proof of the standard results, when we upper-bound the movement of the parameter matrix $W^{(l)}_t$ at each layer $l$ and time $t$ from its initialisation $W^{(l)}_0$, we consider all of $1$, $2$, and $\infty$ matrix norms; the standard proof considers only the $2$ norm. 
This more precise bookkeeping is needed because of the Jacobian regulariser in our setting. 
Now, Theorem~\ref{thm:JacobianNTKTraining} makes the ODEs in \eqref{eqn:kernel-gradient-descent} and \eqref{eqn:kernel-gradient-descent2} the linear first-order ODEs, which allow us to derive their analytic solutions at $t = \infty$ for a given initial condition.

We describe the solution of the ODE in \eqref{eqn:kernel-gradient-descent} at time $t = \infty$.
Recall that $\lambda$ is the coefficient of the gradient regulariser in the robust-training objective (Equation~\eqref{eqn:objective-for-robust-training}).
Let $\Theta_\lambda : \R^{d_0} \times \R^{d_0} \to \R^{(d_0+1) \times (d_0+1)}$ be the following function: for all $x,x' \in \R^{d_0}$, $\Theta_{\lambda}(x, x') \defeq \Lambda\Theta(x,x')\Lambda$, where $\Lambda$ is the ${(d_0+1) \times (d_0+1)}$ diagonal matrix with $\Lambda_{00} = 1$ and $\Lambda_{ii} = \sqrt{\lambda}$ for all $i \neq 0$.
Using $\Theta_\lambda$, we define key players in our theorem on the ODE solution. The first is the matrix
$\Theta_{\lambda}(x^{(1:N)}, x^{(1:N)}) \in \R^{N(d_0+1) \times N(d_0+1)}$
that is constructed by including every $\Theta_\lambda(x^{(i)},x^{(j)}) \in \R^{(d_0+1)\times(d_0+1)}$ as the $(i,j)$-th submatrix. The next players are
the vectors $\mathbf{y}^{(1:N)} \in \R^{N(d_0+1)}$ and $\Theta_\lambda(x,x^{(1:N)})_0 \in \R^{N(d_0+1)}$ defined by stacking the below vectors: for all $i \in [N]$ and $x \in \R^{d_0}$,
\begin{align*}
  \mathbf{y}^{(i)} &\defeq (y_i, 0, \ldots, 0)^\intercal \in  \mathbb{R}^{1+d_0},
  \\
  \Theta_\lambda(x, x^{(i)})_0 &\defeq (\Theta_\lambda(x^*, x^{(i)})_{00}, \ldots,\Theta_\lambda(x^*, x^{(i)})_{0d_0})^\intercal.
\end{align*}
The last player is the following real-valued function $\fntk$ on $\R^{d_0}$, which solves the infinite-width version of the ODE in \eqref{eqn:kernel-gradient-descent}  at time $t = \infty$: for $x \in \R^{d_0}$, 
\begin{align}
  \fntk(x) =
  {\Theta_{\lambda}(x, x^{(1:N)})_0^\intercal \Theta_\lambda(x^{(1:N)}, x^{(1:N)})^{-1} \mathbf{y}^{(1:N)}. \label{eqn:JNTKSolution}}
\end{align}

Using what we have defined, we present our result on the solution of the ODE in \eqref{eqn:kernel-gradient-descent} at time $t = \infty$.
\begin{theorem}[MLPs Learnt by Robust Training] \label{thm:trainingdynamics}
  Assume that Assumptions~\ref{assm:activation-assumption}, \ref{assm:dataset-assumption} and \ref{assm:NTK-smallest-eigenvalue} hold.
  Then, there exist functions $F$ and $G$ such that 
  for all $\epsilon, \delta > 0$ and $x^{*} \in \R^{d_0}$ with $\|x^*\| = 1$, and also for all $d$ and $\kappa$, if
  $d \ge F(x^{(1:N)}, x^*, L, \delta, \epsilon)$ and 
  $\kappa \le G(x^{(1:N)}, x^*, L, \delta, \epsilon)$ holds,
  the following statement holds with probability at least $1-\delta$:
  if $f_{d,\theta_\infty}(x^*)$ is the solution of the ODE in \eqref{eqn:kernel-gradient-descent} at time $t=\infty$, we have $|f_{d,\theta_\infty}(x^*) - \fntk(x^*)| \le \epsilon$ and the training loss converges to $0$ exponentially with respect to $t$.
\end{theorem}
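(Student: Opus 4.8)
The plan is to turn the infinite-time behaviour of the ODEs in \eqref{eqn:kernel-gradient-descent} and \eqref{eqn:kernel-gradient-descent2} into a linear ODE with an almost-constant coefficient matrix, and then read off the solution. First I would restrict those ODEs to the training inputs $x^{(1:N)}$ and stack the residuals and Jacobian components into $\Psi_t \defeq (\Psi(f_{d,\theta_t},x^{(i)},y^{(i)}))_{i \in [N]} \in \R^{N(d_0+1)}$. By Lemma~\ref{lem:JNTKdynamics}, differentiating the objective \eqref{eqn:objective-for-robust-training} closes the system as $\frac{d}{dt}\Psi_t = -\frac{1}{N}\,\widehat{\Theta}_{d,\theta_t}\,(I_N \otimes \Lambda^2)\,\Psi_t$, where $\widehat{\Theta}_{d,\theta_t}$ is the $N(d_0+1)\times N(d_0+1)$ block matrix with blocks $\Theta_{d,\theta_t}(x^{(i)},x^{(j)})$, and the factor $\Lambda^2 = \mathrm{diag}(1,\lambda,\ldots,\lambda)$ records the weight $\lambda$ of the Jacobian regulariser. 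Conjugating by $I_N\otimes\Lambda$, i.e. setting $\widetilde{\Psi}_t \defeq (I_N\otimes\Lambda)\Psi_t$, gives the symmetric form $\frac{d}{dt}\widetilde{\Psi}_t = -\frac{1}{N}\,\widehat{\Theta}_{\lambda;d,\theta_t}\,\widetilde{\Psi}_t$ with $\widehat{\Theta}_{\lambda;d,\theta_t}$ the block matrix of $\Theta_{\lambda;d,\theta_t}(x^{(i)},x^{(j)}) \defeq \Lambda\,\Theta_{d,\theta_t}(x^{(i)},x^{(j)})\,\Lambda$. Observe also that the training loss is exactly $\mathcal{L}(\theta_t) = \frac{1}{2N}\|\widetilde{\Psi}_t\|^2$.

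Second I would invoke Theorem~\ref{thm:JacobianNTKTraining}, in the form extended so that the query point $x^*$ is treated as an extra datum (which is why $F$ is allowed to depend on $x^*$): for $d$ large enough, with probability at least $1-\delta$, all of $\|(1/\kappa^2)\Theta_{d,\theta_t}(x^{(i)},x^{(j)}) - \Theta(x^{(i)},x^{(j)})\|_F$ and $\|(1/\kappa^2)\Theta_{d,\theta_t}(x^*,x^{(i)}) - \Theta(x^*,x^{(i)})\|_F$ are below any prescribed $\epsilon'$, uniformly in $t \ge 0$. Together with Assumption~\ref{assm:NTK-smallest-eigenvalue} and the invertibility of $\Lambda$ (recall $\lambda\in(0,1]$, so $\|\Lambda\|_{\mathrm{op}}=1$), this forces $\widehat{\Theta}_{\lambda;d,\theta_t} \succeq \tfrac{\kappa^2}{2}\,\lambda_{\min}(\widehat{\Theta}_\lambda)\,I$ for all $t$ once $\epsilon'$ is small, where $\widehat{\Theta}_\lambda \defeq \Theta_\lambda(x^{(1:N)},x^{(1:N)})$. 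Hence $\frac{d}{dt}\|\widetilde{\Psi}_t\|^2 = -\frac{2}{N}\langle \widetilde{\Psi}_t, \widehat{\Theta}_{\lambda;d,\theta_t}\widetilde{\Psi}_t\rangle \le -\frac{\kappa^2 \lambda_{\min}(\widehat{\Theta}_\lambda)}{N}\|\widetilde{\Psi}_t\|^2$, so $\|\widetilde{\Psi}_t\|$ and therefore $\mathcal{L}(\theta_t)$ decay exponentially — this is the second claim — and in particular $\int_0^\infty\|\widetilde{\Psi}_t\|\,dt < \infty$.

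Third, for the value at $x^*$ I would integrate the scalar ODE $\frac{d}{dt}f_{d,\theta_t}(x^*) = -\frac{1}{N}\langle \Theta_{\lambda;d,\theta_t}(x^*,x^{(1:N)})_0,\, \widetilde{\Psi}_t\rangle$ (same computation as before), obtaining $f_{d,\theta_\infty}(x^*) = f_{d,\theta_0}(x^*) - \frac{1}{N}\int_0^\infty \langle \Theta_{\lambda;d,\theta_t}(x^*,x^{(1:N)})_0, \widetilde{\Psi}_t\rangle\,dt$. I then replace $\Theta_{\lambda;d,\theta_t}(x^*,x^{(1:N)})_0$ by its limit $\kappa^2\Theta_\lambda(x^*,x^{(1:N)})_0$ and $\widetilde{\Psi}_t$ by the constant-coefficient solution $e^{-\frac{\kappa^2}{N}\widehat{\Theta}_\lambda t}\widetilde{\Psi}_0$; both replacement errors are controlled by Theorem~\ref{thm:JacobianNTKTraining} plus a Grönwall comparison, and are integrable against the exponentially small $\|\widetilde{\Psi}_t\|$, so the total error is $O(\epsilon')$. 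The main term equals $-\frac{\kappa^2}{N}\Theta_\lambda(x^*,x^{(1:N)})_0^\intercal\big(\int_0^\infty e^{-\frac{\kappa^2}{N}\widehat{\Theta}_\lambda t}dt\big)\widetilde{\Psi}_0 = -\Theta_\lambda(x^*,x^{(1:N)})_0^\intercal \widehat{\Theta}_\lambda^{-1}\widetilde{\Psi}_0$. Finally I would take $\kappa$ small: by Theorem~\ref{thm:JacobianNNGPInitialisation} the quantities $f_{d,\theta_0}(x^*)$ and $J(f_{d,\theta_0})(x^{(i)})_\alpha$ are $O(\kappa)$ with probability at least $1-\delta$, so $\widetilde{\Psi}_0 = (I_N\otimes\Lambda)\Psi_0 = -\mathbf{y}^{(1:N)} + O(\kappa)$ (using $\Lambda\mathbf{y}^{(i)} = \mathbf{y}^{(i)}$) and $f_{d,\theta_0}(x^*) = O(\kappa)$; plugging in yields $f_{d,\theta_\infty}(x^*) = \Theta_\lambda(x^*,x^{(1:N)})_0^\intercal \widehat{\Theta}_\lambda^{-1}\mathbf{y}^{(1:N)} + O(\kappa) + O(\epsilon') = \fntk(x^*) + O(\kappa) + O(\epsilon')$, and choosing $F$ and $G$ so that $\epsilon'$ and $\kappa$ are small enough gives the bound $\epsilon$.

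The main obstacle is the third step: controlling the integral over the infinite time horizon. The kernel $\Theta_{\lambda;d,\theta_t}$ is only close to — not equal to — its limit, and a priori the mismatch could accumulate over $t\in[0,\infty)$; the resolution is that $\widetilde{\Psi}_t$ decays exponentially at a rate bounded below via Assumption~\ref{assm:NTK-smallest-eigenvalue}, so the accumulated error is at most the uniform kernel error times the finite quantity $\int_0^\infty\|\widetilde{\Psi}_t\|\,dt$. Making this rigorous needs a two-sided Grönwall comparison of the true $\widetilde{\Psi}_t$ with the idealised $e^{-\frac{\kappa^2}{N}\widehat{\Theta}_\lambda t}\widetilde{\Psi}_0$, with careful tracking of how constants depend on $x^{(1:N)}$, $x^*$, $L$, and $\lambda_{\min}(\widehat{\Theta}_\lambda)$, and it is also where the extension of Theorem~\ref{thm:JacobianNTKTraining} to the query input $x^*$ is used. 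A secondary point is showing the gradient flow is globally defined and $\theta_\infty \defeq \lim_{t\to\infty}\theta_t$ exists, which follows from the bound on parameter movement inside the proof of Theorem~\ref{thm:JacobianNTKTraining} together with the exponential decay of the loss.
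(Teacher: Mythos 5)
Your proposal is correct and follows essentially the same route as the paper's proof: uniform-in-time constancy of the $\Lambda$-conjugated finite JNTK (Theorem~\ref{thm:JacobianNTKTraining}, applied also to pairs involving $x^*$) combined with Assumption~\ref{assm:NTK-smallest-eigenvalue} gives the eigenvalue lower bound and hence exponential loss decay, small $\kappa$ makes the initial outputs and Jacobians negligible, and a Gr\"onwall-type comparison of the true residual dynamics with the constant-kernel linear dynamics is integrated over $[0,\infty)$ to recover the kernel-regression formula \eqref{eqn:JNTKSolution}. The only cosmetic difference is that you compare directly against the matrix-exponential solution $e^{-\frac{\kappa^2}{N}\Theta_\lambda(x^{(1:N)},x^{(1:N)})t}$ applied to the network's initial residual, whereas the paper introduces an auxiliary zero-initialised kernel regressor $\fntk(x;\vartheta_t)$ and splits the error integral at a finite time $t_0$; these are equivalent bookkeeping devices for the same estimate.
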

Again, we note that a similar theorem holds for gradient descent; see Appendix L of the supplementary material for details.
This theorem shows that in the infinite-width limit, the fully-trained MLP converges to $\fntk$, which has the form of the standard kernel regressor.

For the implicit functions $F$ and $G$ appearing in the statement of the theorems, they are employed to quantify limiting arguments. 
The function $F$ makes the width $d$ of the network large enough to give infinite-width limit behaviour. 
We do not have an explicit formula for $F$ due to the dependency of our proof on the Tensor-Program framework, which only states convergence without rate.
\update{But if we ignore this dependency and estimate $F$, we have $F(x^{(1:N)},x^*,L,\delta,\eta) = O(N^2 (\log N)^{12L})$, and the resulting lower bound on $d$ coincides with the standard result on the required network width from the standard NTK theory, up to the logarithm factor. 
The function $G$ makes the scaling $\kappa$ of a network small enough that we can control the initial randomness of the network.}{Added additional sentence regarding width requirement.}
We do have the explicit representation of $G$, which can be found in the statement of 
Theorem~\ref{thm:JNTKCloseToInit2} in the supplementary material, 

\section{Experiments}
\label{sec:experiments}
We experimentally checked our results using finite MLPs with varying widths and depths and simple datasets, analysed the robustness of the kernel regression solution obtained, and empirically investigated the validity of Assumption~\ref{assm:NTK-smallest-eigenvalue}. 
In this section, we report our findings from those experiments. We repeated the experiments 10 times and plotted their results with 95\% \update{bootstrap confidence intervals}{Add that we used confidence interval with Bootstrapping.}.

\subsection{Validation of Main Results}
We tested our three results: (i) an MLP and its Jacobian jointly converge to a GP (Theorem~\ref{thm:JacobianNNGPInitialisation});
(ii) the finite JNTK of an MLP converges to a deterministic kernel at initialisation (Theorem~\ref{thm:JacobianNTKInitialisation}); (iii) 
the finite JNTK of an MLP stays constant throughout robust training with high probability, which grows to $1$ as the MLP gets wider (Theorem~\ref{thm:JacobianNTKTraining}). 

For the first two convergence results (Theorems~\ref{thm:JacobianNNGPInitialisation} and~\ref{thm:JacobianNTKInitialisation}), we used the simple synthetic dataset in $\R^4$ of size 256, which approximates the optimal $\epsilon$-net of the $S^3$ where $y^{(i)}$ is chosen arbitrarily since it does not matter for these results. 
In these tests, we tried MLPs with all the width-depth combinations in $\{2^i\}_{i=6}^{13} \times \{1, 2, 3\}$. % This is [64, ..., 4096]
This dataset is constructed to cover the possible angles between two inputs, and the standard basis.
For the third result regarding the evolution of a JNTK during robust training (Theorem~\ref{thm:JacobianNTKTraining}), we used
the Algerian forest fire dataset \cite{Abid2020} from the UCI Machine Learning repository, which contains $224$ data points with input dimension $11$.
We fixed the depth of the network as 11 which was the smallest depth that satisfies Assumption~\ref{assm:NTK-smallest-eigenvalue} with GeLU activation, for varying width $\{2^i\}_{i=6}^{13}$.
In all of these tests, we set $\kappa$ to $0.1$, the value that matters only for the third test, and made the training of the test achieve reasonable performance. 
\update{In the experiments, we used the normalised versions of the activation functions (namely, GeLU and erf) via
the multiplication of appropriate constants, in order to satisfy Assumption~\ref{assm:activation-assumption}.}{Add that we normalised the activations.}
More details on the experiments can be found in the supplementary material.

\begin{figure}
  \centering
  \includegraphics[width=1.0\linewidth]{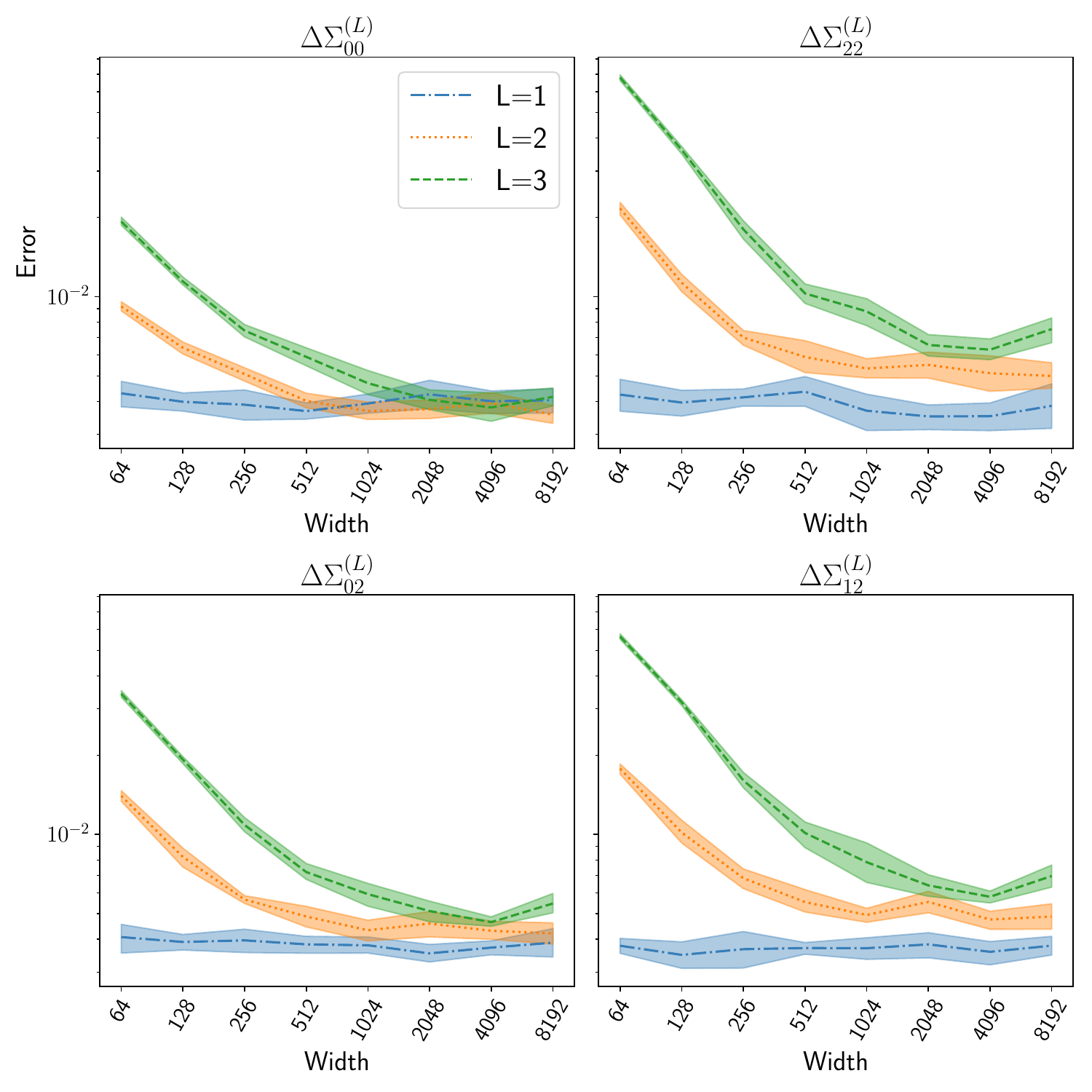}
  
  \caption{$\max$-norm distance between the Jacobian NNGP kernel and the estimate of its finite counterpart. The $x$-axis represents the width of an MLP, and the $y$-axis the $\max$-norm distance.} % I think we can reduce explanations here, since it is also in the text.
  \label{fig:NNGPConvergence}
\end{figure}

For the validation of Theorem~\ref{thm:JacobianNNGPInitialisation}, we focused on the convergence to $\Sigma^{(L)}$, the Jacobian NNGP kernel scaled down by $\kappa^2$. 
This does not directly validate the theorem since it does not verify the Gaussianity of the output. 
But this is an effective proxy and the convergence to $\Sigma^{(L)}$ is critical in the proof of Theorem~\ref{thm:JacobianNTKInitialisation}.
We estimated the finite-width counterpart of the Jacobian NNGP kernel of an MLP $f_{d,\theta}$ written simply as $f$ below: for all $i,j \in [N]$,
\[ 
  \widehat{\Sigma}^{(L)}(x^{(i)}, x^{(j)}) = \widehat{\mathrm{Cov}} \left( \begin{matrix}
    f(x^{(i)})
    \\
    J(f)(x^{(i)})
  \end{matrix}, \begin{matrix}
    f(x^{(j)})
    \\
    J(f)(x^{(j)})
  \end{matrix} \right)
\]
where $\widehat{\mathrm{Cov}}$ means a Monte-Carlo estimate of covariance with one million samples.
Then, we measured the $\max$-norm distance between the Gram matrix of the $(\alpha,\beta)$-th component of the Jacobian NNGP kernel and that of its finite estimated counterpart with $1/\kappa^2$ scaling, for all $\alpha,\beta \in \{0,1,2,3,4\}$. 
We denote the measured quantity by $\Delta \Sigma_{\alpha\beta}^{(L)} $, which is formally defined as shown below:
\begin{align*}
  \max_{i,j \in [64]} \left|\frac{1}{\kappa^2}\widehat{\Sigma}^{(L)}(x^{(i)}, x^{(j)})_{\alpha\beta} - \Sigma^{(L)}(x^{(i)}, x^{(j)})_{\alpha\beta} \right|.
\end{align*}

Figure~\ref{fig:NNGPConvergence} visualises $\Delta \Sigma_{\alpha\beta}^{(L)}$ for four choices of $(\alpha,\beta)$ representing four types of covariance: between the outputs $(00)$, between the Jacobians with respect to a fixed input coordinate $(22)$, between the output and the Jacobian $(02)$, and between the Jacobians with respect to different input coordinates $(12)$. 
\update{Note that as the width of the network increases,
$\Delta\Sigma_{\alpha\beta}^{(L)}$ decreases as predicted by our convergence result. 
In the figure, however, $\Delta\Sigma_{\alpha\beta}^{(L)}$ stays slightly above $0$.
We think that this is due to the non-negligible Monte-Carlo error in the computation of $\widehat{\Sigma}^{(L)}$,
which will decrease if we increase the number of samples.}{Add more description why error does not go to zero.} 

\begin{figure}
  \centering
  \includegraphics[width=1.0\linewidth]{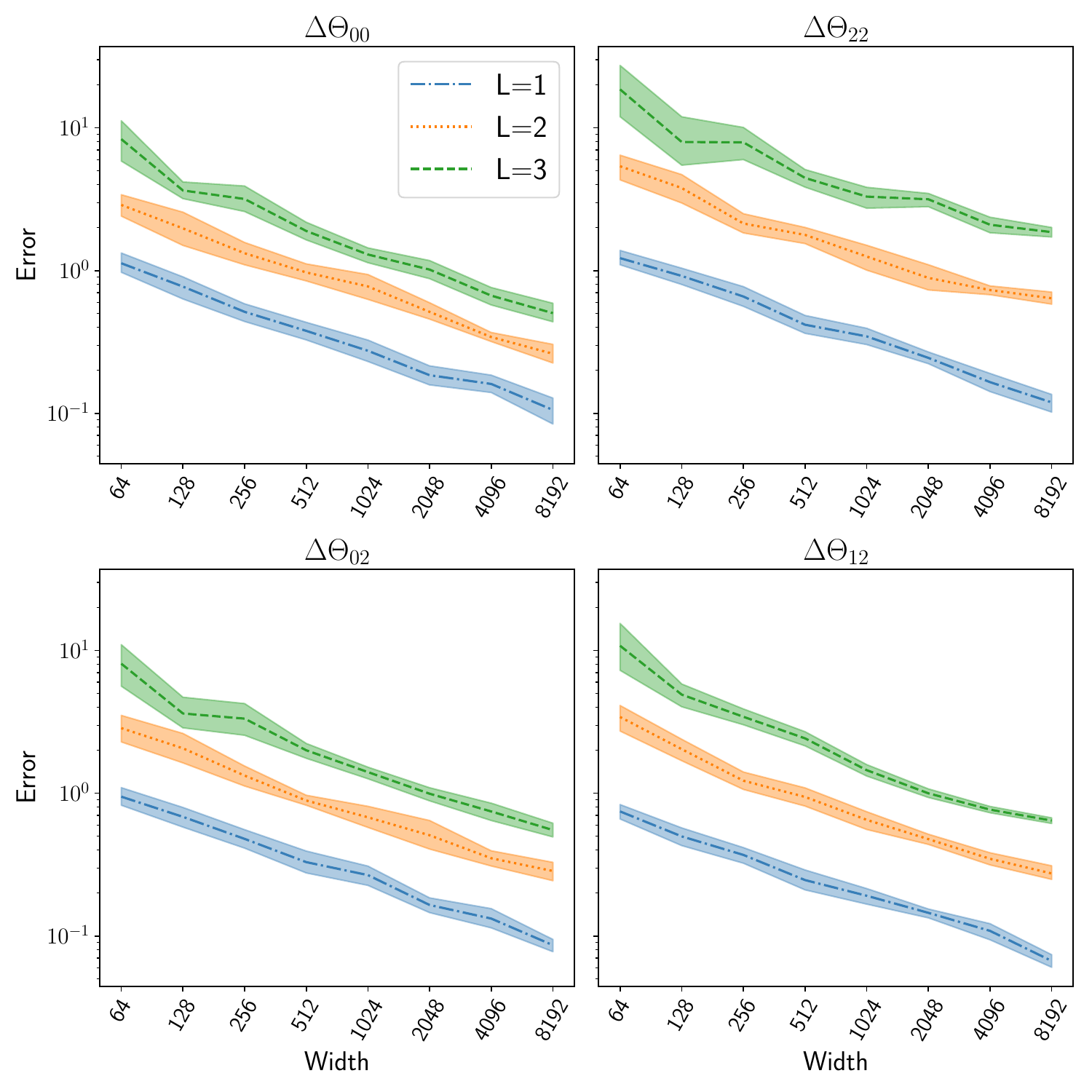}
  
  \caption{$\max$-norm distance between the $(1/\kappa^2)$-scaled finite JNTK $(1/\kappa^2)\Theta_{d,\theta_0}$ and the similarly-scaled limiting JNTK $\Theta$ at initialisation. The $x$-axis represents the width of an MLP, and the $y$-axis the $\max$-norm distance.}
  \label{fig:NTKConvergence}
\end{figure}

The validation of the next Theorem~\ref{thm:JacobianNTKInitialisation} similarly focused on the convergence to $\Theta$, the limiting JNTK scaled down by $\kappa^2$. 
We computed the $\max$-norm distance between the Gram matrix of the $(\alpha,\beta)$-th component of the $(1/\kappa^2)$-scaled finite JNTK and that of the limiting JNTK for $\alpha,\beta \in \{0,1,2,3,4\}$: 
\begin{align*}
  \max_{i,j \in [64]} \left|\frac{1}{\kappa^2}\Theta_{d,\theta_0}(x^{(i)}, x^{(j)})_{\alpha\beta} - \Theta(x^{(i)}, x^{(j)})_{\alpha\beta} \right|.
\end{align*}
Figure~\ref{fig:NTKConvergence} shows the results of these computations.
We can see that as the network width increases, the $(1/\kappa^2)$-scaled finite NTK converges to $\Theta$ at initialisation, which confirms Theorem~\ref{thm:JacobianNTKInitialisation}. 
Note that $\Delta \Theta_{22}$ is much larger than $\Delta \Theta_{00}$'s. 
This is because while both the former and the latter are defined as sums of terms, the former has four times more summands than the latter.

Finally, we checked the claim of Theorem~\ref{thm:JacobianNTKTraining} that the JNTK stays constant during robust training.
We used a binary classification dataset but regarded it as a regression dataset with targets in $\{+1,-1\}$, and trained the network under the regression loss with Jacobian regularisation coefficient $\lambda = 0.01$. 
We mimicked the gradient flow in the theorem with gradient descent with learning rate $1$, and measured the following $\max$-norm distance $\Delta \Theta_{\alpha\beta,t}$ at the training steps $t = 2^0,2^1,\ldots,2^{11}$:
for $\alpha,\beta \in \{0, 1, \ldots, 11\}$,
\[\max_{i,j \in [N]} \left|\frac{1}{\kappa^2}\Theta_{d,\theta_t}(x^{(i)}, x^{(j)})_{\alpha\beta} - \Theta(x^{(i)}, x^{(j)})_{\alpha\beta} \right|.\]

\begin{figure}
  \centering
  \includegraphics[width=1.01\linewidth]{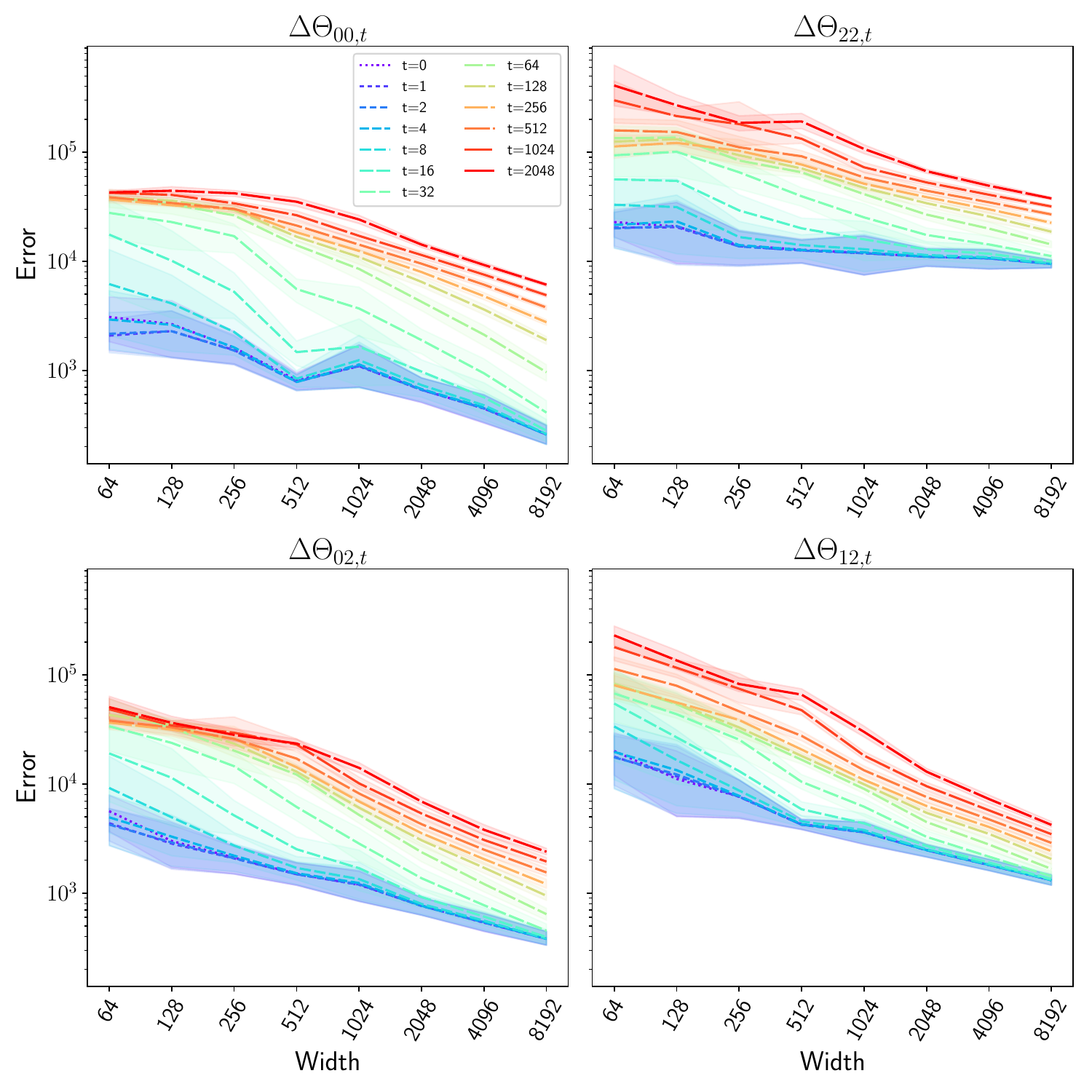}
  
  \caption{$\max$-norm distance between the $(1/\kappa^2)$-scaled finite JNTK $(1/\kappa^2)\Theta_{d,\theta_t}$ at the training step $t$, and the similarly-scaled limiting JNTK $\Theta$ at initialisation. The x-axis represents the width of an MLP, and the y-axis represents $\max$-norm distance.}
  \label{fig:NTKTrainConvergence}
\end{figure}

Figure~\ref{fig:NTKTrainConvergence} shows the measured $\max$-norm distance for the four cases of $(\alpha,\beta) \in \{(2,2),(0,2),(0,0),(1,2)\}$.
\update{Note that as the network gets wider, the distance gets smaller at all training steps $t$ checked, which
is consistent with the prediction of Theorem~\ref{thm:JacobianNTKTraining}.
While the distance at the largest width in the figure is still above 
the smallest eigenvalue of the Gram matrix of the limiting NTK and does not directly validate 
Theorem~\ref{thm:JacobianNTKTraining}, its decreasing tendency along the width suggests
that better empirical justifications of the theorem would be obtained in wider networks (which we 
could not try due to limitations of our computational resources).}{Add more description why error does not go to zero.}
Note that as the training step $t$ increases, the distance also increases, but the increment becomes smaller, which indicates that the JNTK changes less as the training progresses.

\subsection{Analysis of Kernel Regression}

\begin{figure}
  \centering
  \includegraphics[width=1.01\linewidth]{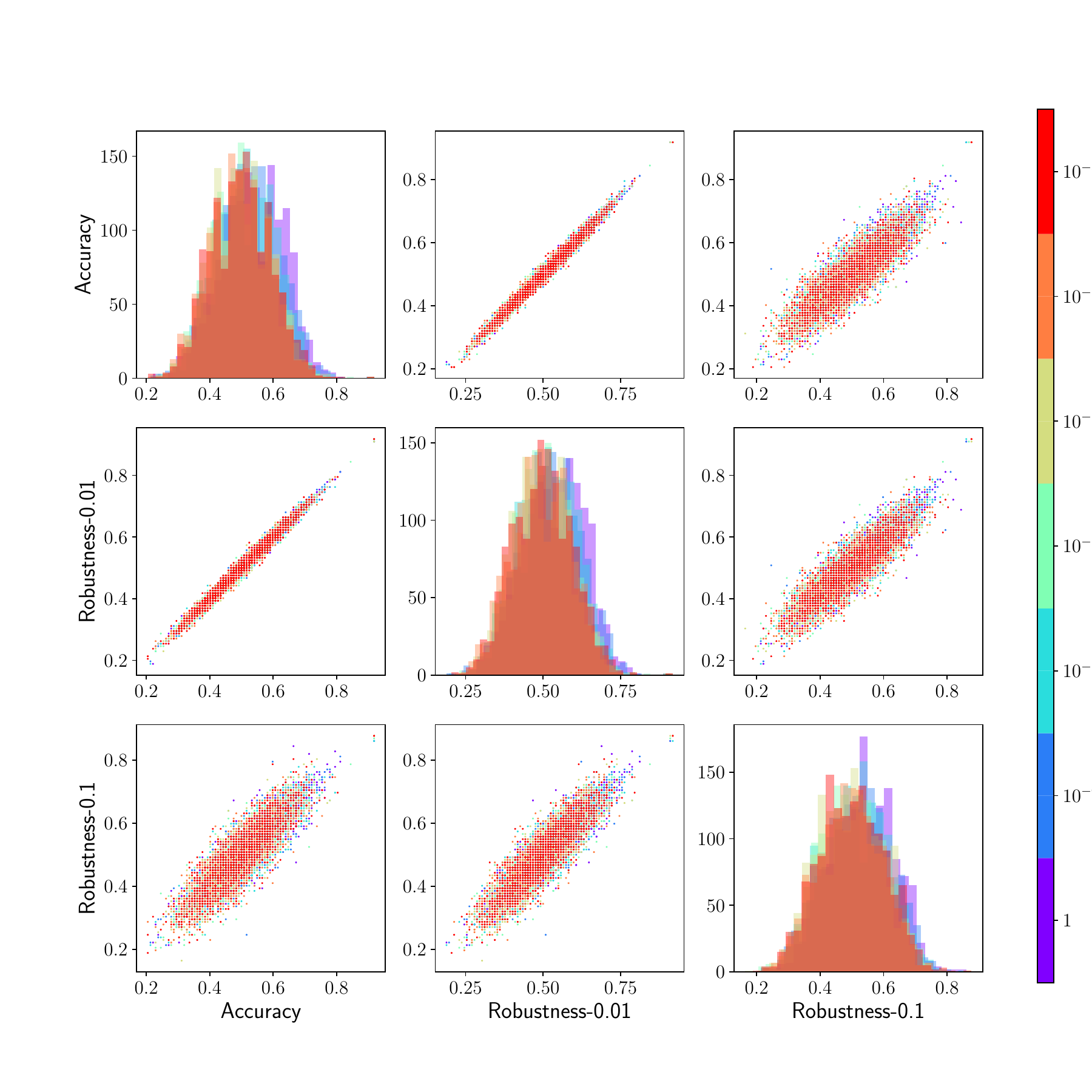}
  
  \caption{Pairplot of test accuracy, test accuracy after 0.01 perturbation, test accuracy after 0.1 perturbation of eigenfeatures of \textit{robust training}. The colours denote the coefficient of Jacobian regularisation.}
  \label{fig:regressionanalysis}
\end{figure}

As stated in Theorem~\ref{thm:trainingdynamics}, in the regime of large width and small learning rate, the neural network under robust training converges to the solution $\fntk$ of a kernel-regression problem, as the training time goes to infinity.  Following \cite{tsilivis2022what}, we analysed the precision and robustness of this kernel-regression solution by looking at the eigenvalues and eigenvectors of the matrix $\Theta_\lambda(x^{(1:N)}, x^{(1:N)})$ in the solution. 
We used an 11-layer neural network with the GeLU activation, where the depth is chosen to be the smallest among those that satisfy the Assumption~\ref{assm:NTK-smallest-eigenvalue} in our experiments. For each $i \in [N(d_0+1)]$, let
$f_i(x) \defeq \Theta_\lambda(x, x^{(1:N)})_{0}^\intercal \lambda_i^{-1} v_i v_i^\intercal \mathbf{y}^{(1:N)}$, where $\lambda_i$ and $v_i$ are the $i$-th eigenvalue (ordered from the largest) and the corresponding eigenvector of the matrix $\Theta_\lambda(x^{(1:N)}, x^{(1:N)})$. Call $f_i$ eigenfeature. Then, by Equation~\ref{eqn:JNTKSolution},  $\fntk(x) = \sum_{i \in [N(d_0+1)]} f_i(x)$. We interpret this as $\fntk$ being a linear combination of eigenfeatures $f_i$, and analyse the precision and robustness of these eigenfeatures.  

\begin{figure}
  \centering
  \includegraphics[width=1.01\linewidth]{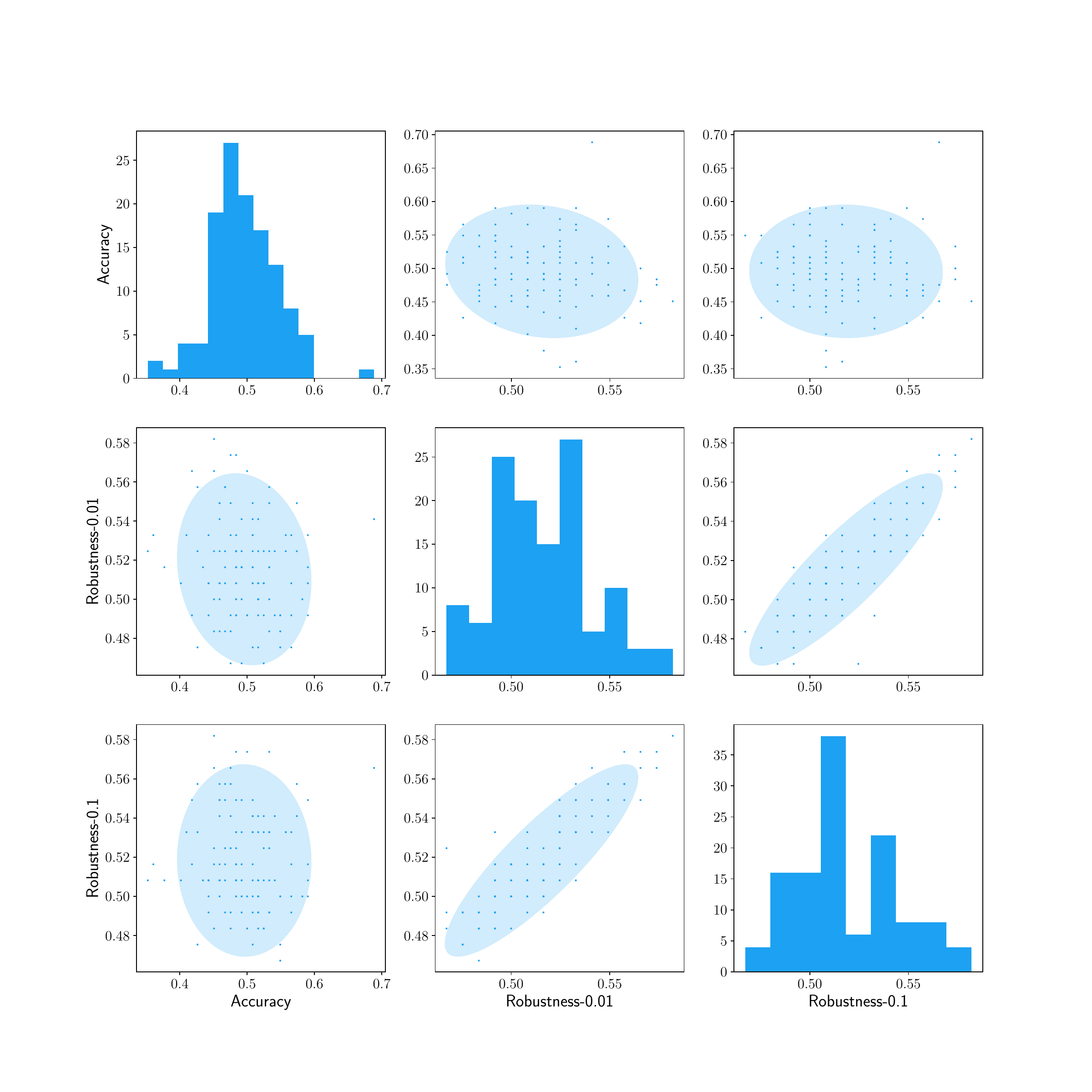}

  \caption{Pairplot of test accuracy, test accuracy after 0.01 perturbation, test accuracy after 0.1 perturbation of eigenfeatures of \textit{standard training}. The ellipses correspond to 4$\sigma$ confidence region of multivariate Gaussian distribution fitting points, to show the correlation clearly. }
  \label{fig:regressionanalysis_noJ}
\end{figure}

\begin{figure*}[t]
  \centering
  \includegraphics[width=0.9\linewidth]{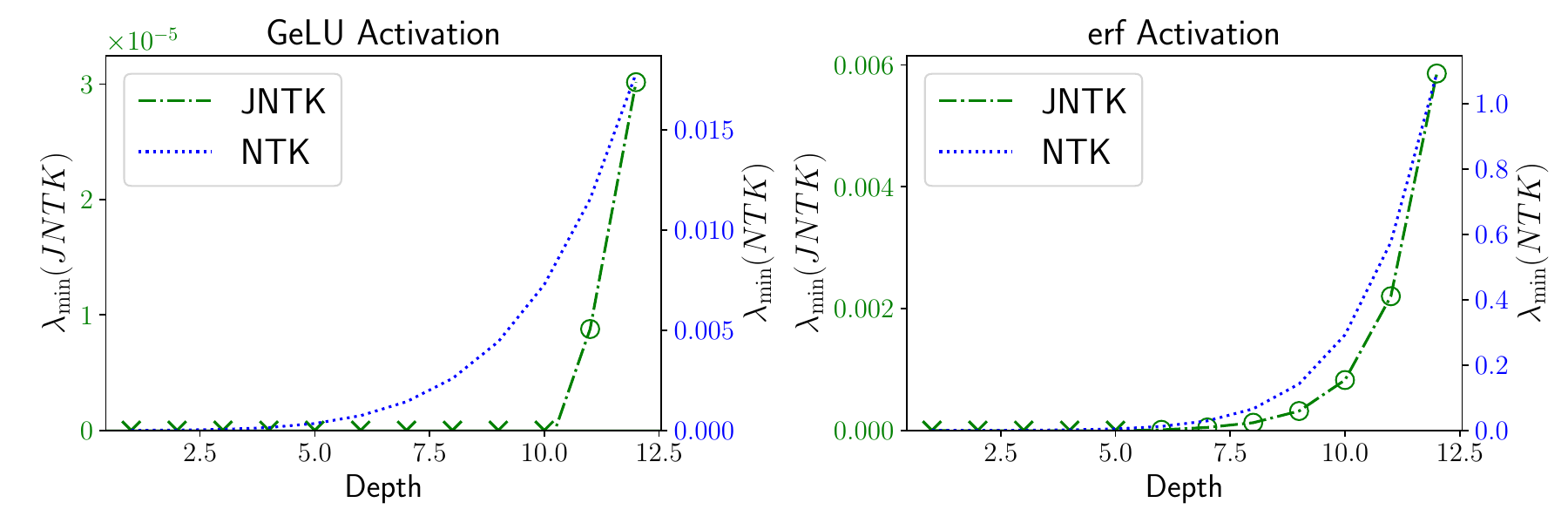}
  \caption{Smallest eigenvalue of network with varying depth with GeLU or erf activation, for both JNTK and standard NTK. {\color{black} The o-marks indicate that the nonzero minimum eigenvalue assumption is satisfied, and the x-marks indicate that the assumption is violated.}}
  \label{fig:mineig}
\end{figure*}

Figure~\ref{fig:regressionanalysis} shows the pairplot of the following three measures on these eigenfeatures $f_i$: test accuracy of $f_i$, test accuracy of $f_i$ after the $0.01$ perturbation of the training inputs, and test accuracy of $f_i$ with respect to the $0.1$ perturbation. 
The perturbation was obtained by changing the inputs via one-step projected gradient descent on the analytic solution without Jacobian regularisation, and $0.01$ and $0.1$ refer to learning rates. 
The 0.1 perturbation was enough to change the labels of some inputs, while for the 0.01 perturbation, there exists a classifier that classifies all the inputs perfectly even after this perturbation.  
In the plot, we denote eigenfeatures by their eigenvalues and use the colours to represent the Jacobian-regularisation coefficient from $\lambda = 1$ (Purple) to $\lambda = 10^{-3}$ (Red).
The result on standard training in Figure~\ref{fig:regressionanalysis_noJ} shows a distinct result, where it shows less or no correlation between accuracy and robustness.

While the empirical analysis of NTK \cite{tsilivis2022what} found, in the setting of standard training without Jacobian regularisation, eigenfeatures that are highly accurate but less robust, or highly robust but less accurate, we could not find such eigenfeatures in our setting. 
We observed that adding Jacobian regularisation promotes alignment between accuracy and robustness so that achieving high test accuracy implies satisfying robustness.
\update{We point out that Figure 2 (Right) of \citet{tsilivis2022what} shows a different result 
from our Figure~\ref{fig:regressionanalysis_noJ}. This is due to the different setups of
these two empirical analyses. Our analysis is based on the analytic solution of the kernel regression problem with
the limiting JNTK, while the analysis of \citet{tsilivis2022what} is based on the empirical NTK at finite width.}{Add comment that the experiment is not same.}

Assumption~\ref{assm:NTK-smallest-eigenvalue} is a standard assumption in the NTK analysis; it is known that the standard NTK $\Theta(x,x')_{00}$ has its smallest eigenvalue bounded away from zero if no inputs are parallel and the activation function is smooth \cite{du19c}. 
However, for our JNTK, it is not known to be true theoretically. 
Our empirical analysis suggests that whether the assumption holds or not depends on activation function, depth, and even the dataset itself. 
   
To check when Assumption~\ref{assm:NTK-smallest-eigenvalue} holds, we empirically analysed the smallest eigenvalue of the JNTK Gram matrix for different depths and activation functions. The results are summarised in Figure~\ref{fig:mineig}.
The blue-dot curves show the smallest eigenvalues of the NTK Gram matrices, which are the matrices built by only using $\Theta(x, x')_{00}$ terms. Note that the plot is always positive. 
The green curves show the smallest eigenvalues of the JNTK Gram matrices. 
Note that the smallest eigenvalues in the JNTK case are much smaller than those in the NTK case. 
Also, to be greater than $0$, it requires deeper networks, for example, $11$ in the GeLU-activation case and $6$ in the erf-activation case. 
We conjecture that this requirement on depth comes from the expanded size of our Gram matrix (that is, $(N(d_0+1))^2$ instead of $N^2$), and also from Jacobian regularisation's forcing the network to be flat around training examples and thereby reducing its flexibility. 
This conjecture is consistent with the finding of \cite{Poole16} that the complexity of the network increases exponentially as the depth increases.

\section{Related Works and Conclusion}
\label{sec:conclusion}
Jacobian regularisation was introduced in several forms including \citet{hoffman2019} and \citet{jakubovitz2018}, with different functions being differentiated. 
\update{While being simple, it has been a strong baseline for defence against adversarial attacks \cite{jakubovitz2018,liu2024}, with multiple extensions.
Also, on the theory side, the smoothness of the learnt neural network with respect to the input was shown to guarantee a small generalisation error, which partially explains the success of Jacobian regularisation~\cite{chao2021}.}{Add comments during the rebuttal that shows why Jacobian regularisation is useful.}
The reason for the success of Jacobian regularisation has not been well understood theoretically. Only recently the implicit bias of the finite difference version of Jacobian regularisation was studied for linear models \cite{karakida23a}, which favours an L1 regularisation solution.

In this paper, we provided a theoretical analysis of robust training with Jacobian regularisation from the perspective of infinite-width limits. We characterised the limiting NNGP and NTK kernels of feed-forward neural networks trained under Jacobian regularisation and showed that with high probability, such networks converge to solutions of certain kernel regression problems, as the widths of those networks are large enough. We used these theoretical results to analyse the precision and robustness of the networks trained with Jacobian regularisation, and also experimentally demonstrated the validity of these results.

\section*{Impact Statement}

This paper presents work whose goal is to advance the field of Machine Learning, especially the neural networks related to their robustness. There are many potential societal consequences of our work, none of which we feel must be specifically highlighted here.
\todo[author=TK,inline]{This year ICML requires us to write this section, but also allows us to write a simple statement as above if we do not have anything to say. I modified their sample statement slightly.}

\section*{Acknowledgements}

\update{We would like to thank Hoil Lee, Sungjin Ahn, and Juho Lee for helpful discussions, and anonymous reviewers for their useful comments, which led to the big improvement of the paper. This work was supported by the National Research Foundation of Korea (NRF) grant funded by the Korea Government (MSIT) (No. RS-2023-00279680).}{Add ack.}

\bibliography{ntk,neurips_2022}

\appendix
\onecolumn
\renewcommand{\thefigure}{\Alph{section}\arabic{figure}}

\allowdisplaybreaks

\section{Experiment Details}
\label{appendix:experiment}
For the GP convergence and convergence of finite JNTK at initialisation, we utilised a synthetic dataset generated using the Fibonacci Lattice algorithm extended to the hypersphere, which approximates the 0.5 $\epsilon$-net of the $S^3$.
Regarding the evolution of the finite JNTK during robust training, we set the values of $\lambda$ to 0.01, $\kappa$ to 0.1, and the learning rate to 1. These specific values were selected to ensure that the test accuracy approached 100\% by the end of 2048 epochs.

All the natural datasets first have their entries scaled to be contained in [-1, 1], then normalised to have a unit norm to match our assumption. 
We include two additional datasets in the appendix, the banknote authentication dataset \cite{banknote} which is 4-dimensional data with 1372 data points, and the connectionist bench dataset \cite{connectionistbench} which is 60-dimensional data with 208 data points.
The banknote authentication dataset \cite{banknote} contains several data points that are nearly parallel, so the standard NTK kernel becomes nearly singular, making the comparison impossible.
To mitigate this, we filtered out some data points that are nearly parallel. 
In specific, for GeLU activation, we removed data points $x^{(i)}$ such that $|\langle x^{(i)}, x^{(j)}\rangle| > 0.99$ for some $j < i$, and 0.995 for erf activation, each resulting 186, 320 data points respectively.

For our experiments, we utilised 3 NVIDIA RTX 6000 GPUs to validate the convergence of the GP over three days. Additionally, we employed 3 NVIDIA RTX A5000 GPUs to verify the constancy of finite JNTK during training for one day. All other experiments were performed on CPUs, each completed in under an hour.

\section{Additional Experiments}
In this section, we describe additional experiments on the minimum eigenvalue assumption and also on the kernel regression solution analysis.
We repeat the same experiments in Figures 4 and 6 but with different datasets \cite{banknote,connectionistbench}. 

\begin{figure}[h]
  \centering
  \includegraphics[width=1.01\linewidth]{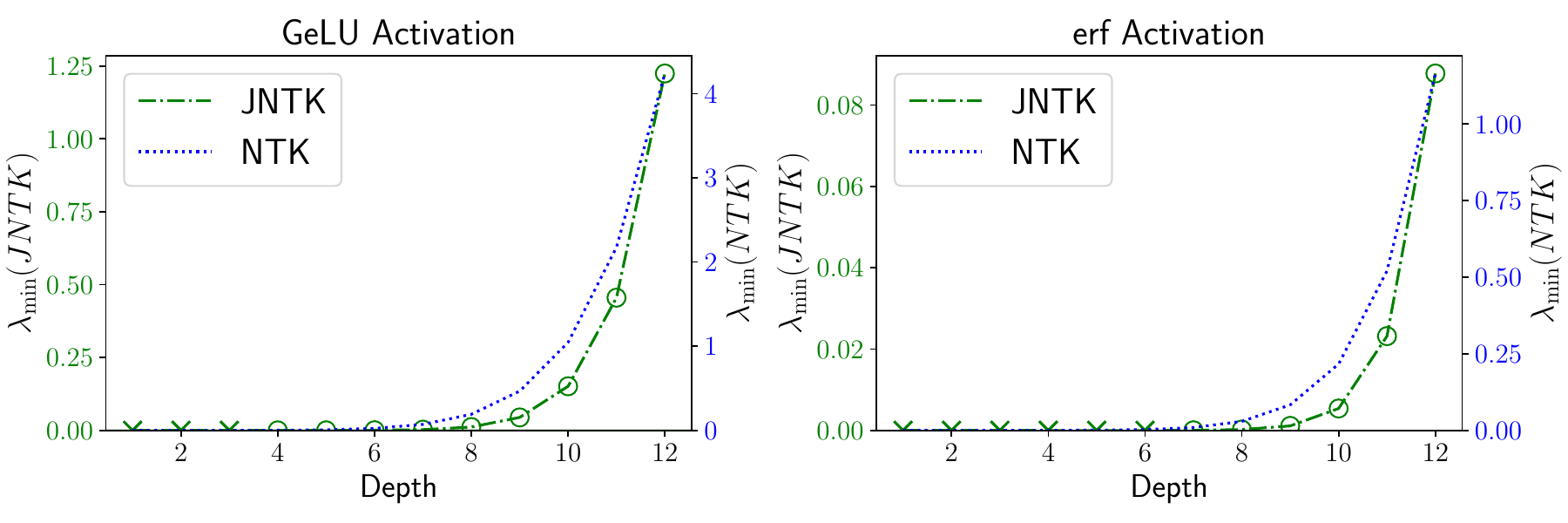}

  \caption{The minimum eigenvalues of the standard NTK kernel and the JNTK kernel under the GeLU and erf activation functions on the banknote authentication dataset. The x-axis is the depth of the network, and the y-axis is the minimum eigenvalue. The o-marks indicate that the nonzero minimum eigenvalue assumption is satisfied, and the x-marks indicate that the assumption is violated. We do not mark the points for the standard NTK kernel because the assumption is always satisfied for the standard NTK kernel.}
  \label{fig:bank_min_eig}
\end{figure}

\begin{figure}[h]
  \centering
  \includegraphics[width=1.01\linewidth]{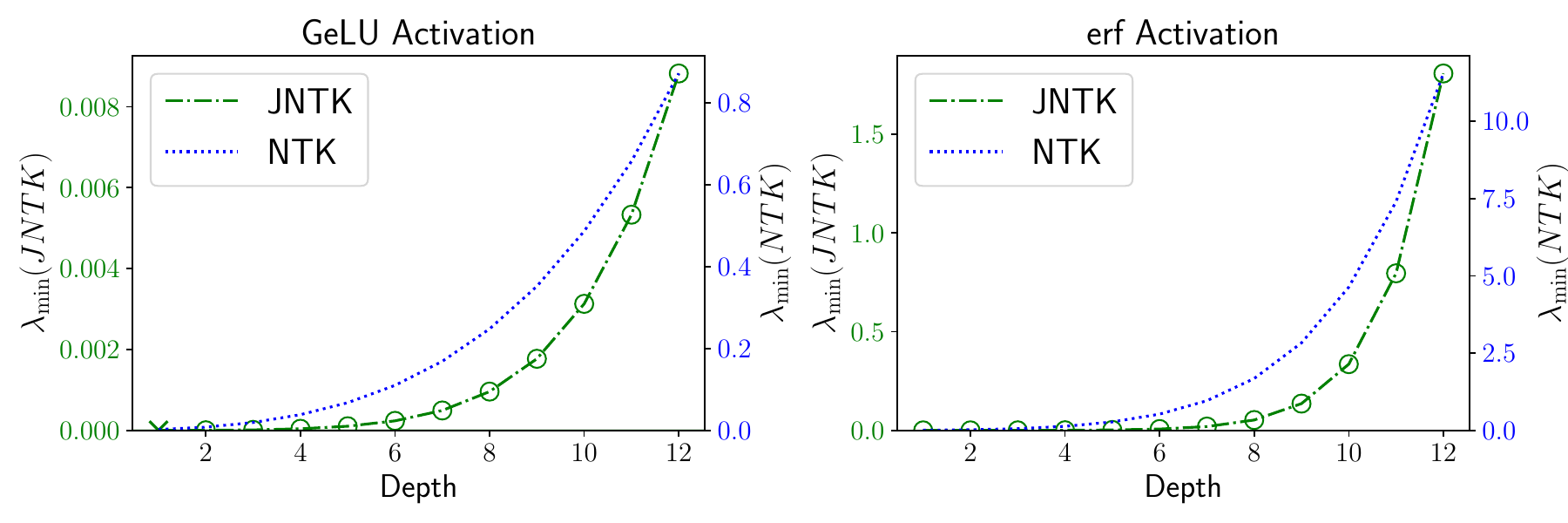} % TODO : Add
  \caption{The minimum eigenvalues of the standard NTK kernel and the JNTK kernel under the GeLU and erf activation functions on the connectionist bench dataset. The x-axis is the depth of the network, and the y-axis is the minimum eigenvalue. The o-marks indicate that the nonzero minimum eigenvalue assumption is satisfied, and the x-marks indicate that the assumption is violated. We do not mark the points for the standard NTK kernel because the assumption is always satisfied for the standard NTK kernel.}
  \label{fig:sonar_min_eig}
\end{figure}

Figures~\ref{fig:bank_min_eig} and~\ref{fig:sonar_min_eig} show the minimum eigenvalues of the standard NTK kernel and the JNTK kernel under the GeLU and erf activation functions on the banknote authentication dataset and the connectionist bench dataset, respectively. 
Both plots show a similar tendency as Figure 6 in the main paper. Note that in the connectionist bench dataset, the assumption is satisfied for lower-depth networks, being invalidated only for shallow GeLU networks.
This can be explained by the fact that the connectionist bench dataset has higher dimensions than the banknote authentication and forest fire datasets. 

We give the kernel regression analysis results for varying datasets and activations in Figures~\ref{fig:regression-analysis-fire-erf}, \ref{fig:regression-analysis-bank-gelu}, \ref{fig:regression-analysis-bank-erf}, \ref{fig:regression-analysis-sonar-gelu}, and \ref{fig:regression-analysis-sonar-erf}.
We use the smallest depth network that satisfies the minimum eigenvalue assumption for each dataset and activation, which can be found in Figures~\ref{fig:bank_min_eig} and~\ref{fig:sonar_min_eig}, and also in Figure 6 in the main text.

\begin{figure}[H]
  \centering
  \begin{subfigure}[b]{0.49\linewidth}
    \includegraphics[width=\linewidth]{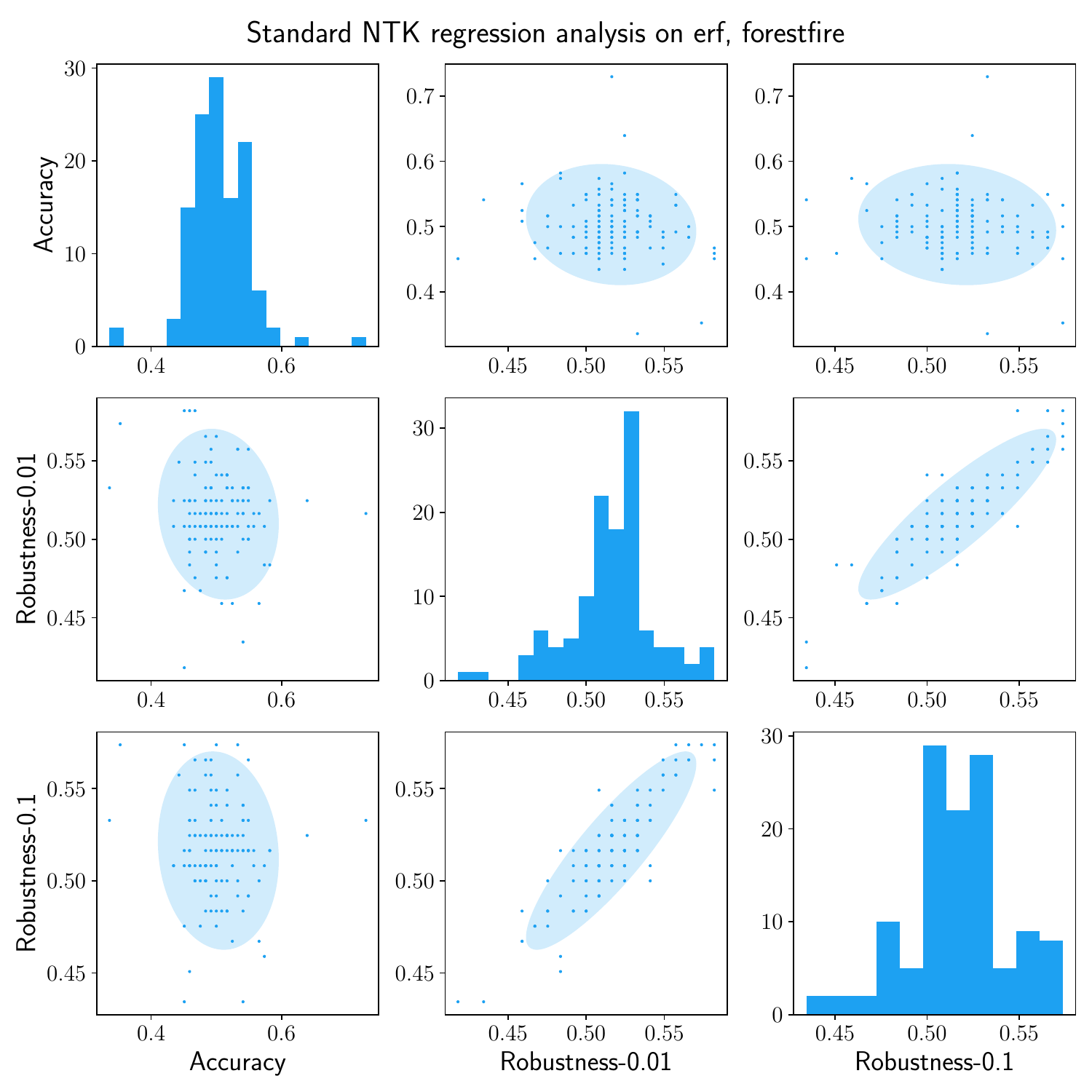}
  \end{subfigure}
  \begin{subfigure}[b]{0.49\linewidth}
    \includegraphics[width=\linewidth]{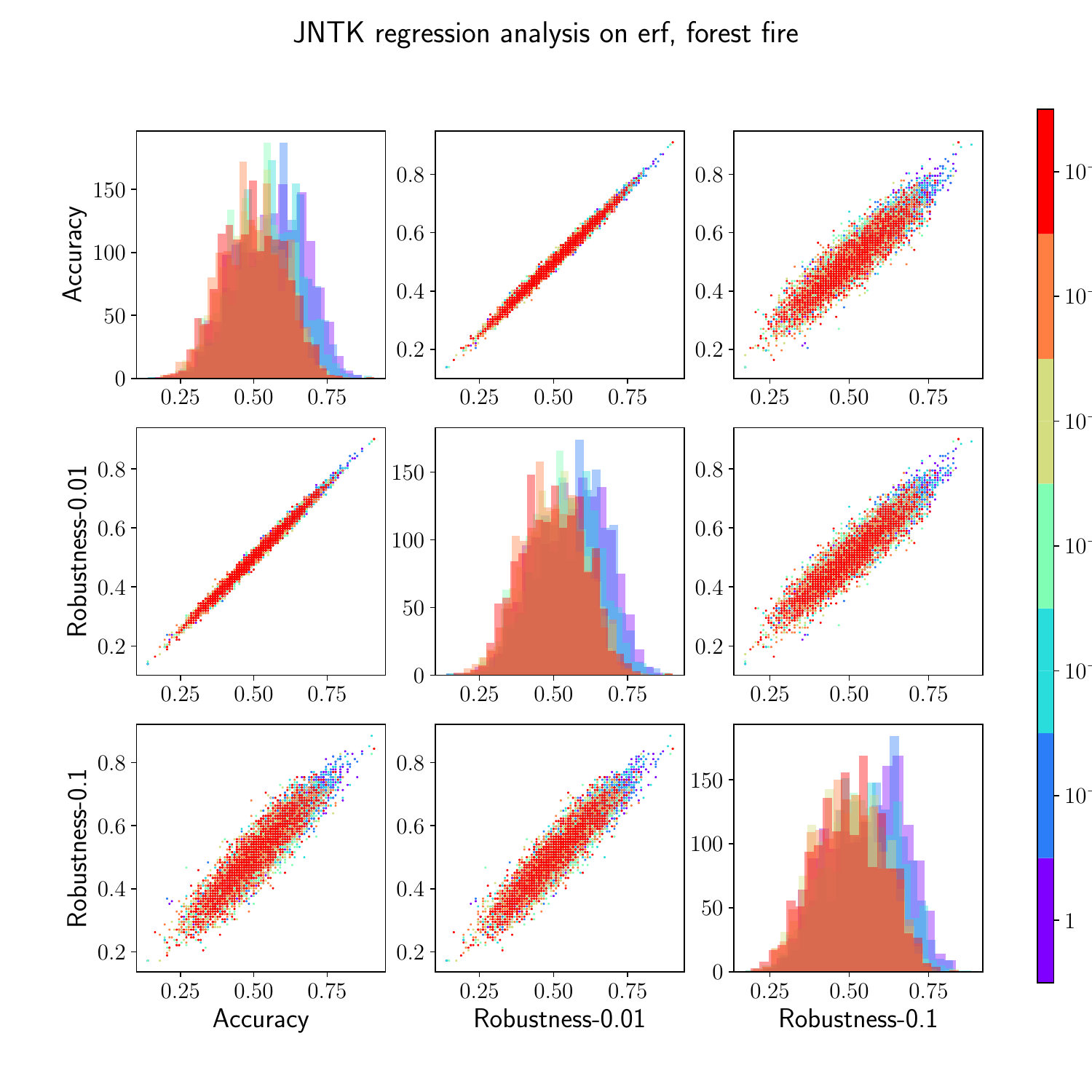}
  \end{subfigure}

  \caption{Kernel regression analysis on forest fire dataset, and erf activation. The left figure is the result without Jacobian regularisation, and the right figure is the result with Jacobian regularisation.}
  \label{fig:regression-analysis-fire-erf}
\end{figure}

\begin{figure}[H]
  \centering
  \begin{subfigure}[b]{0.49\linewidth}
    \includegraphics[width=\linewidth]{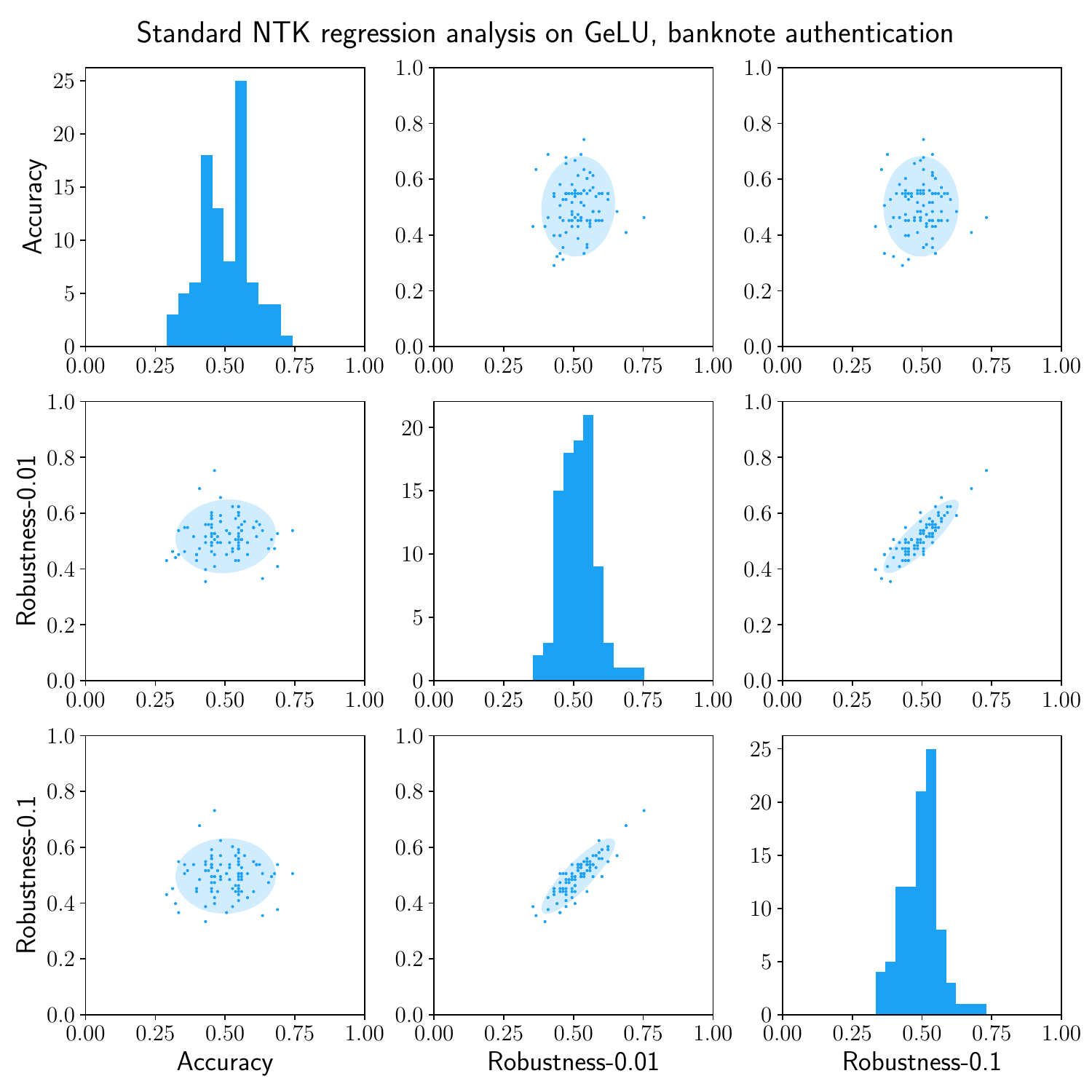}
  \end{subfigure}
  \begin{subfigure}[b]{0.49\linewidth}
    \includegraphics[width=\linewidth]{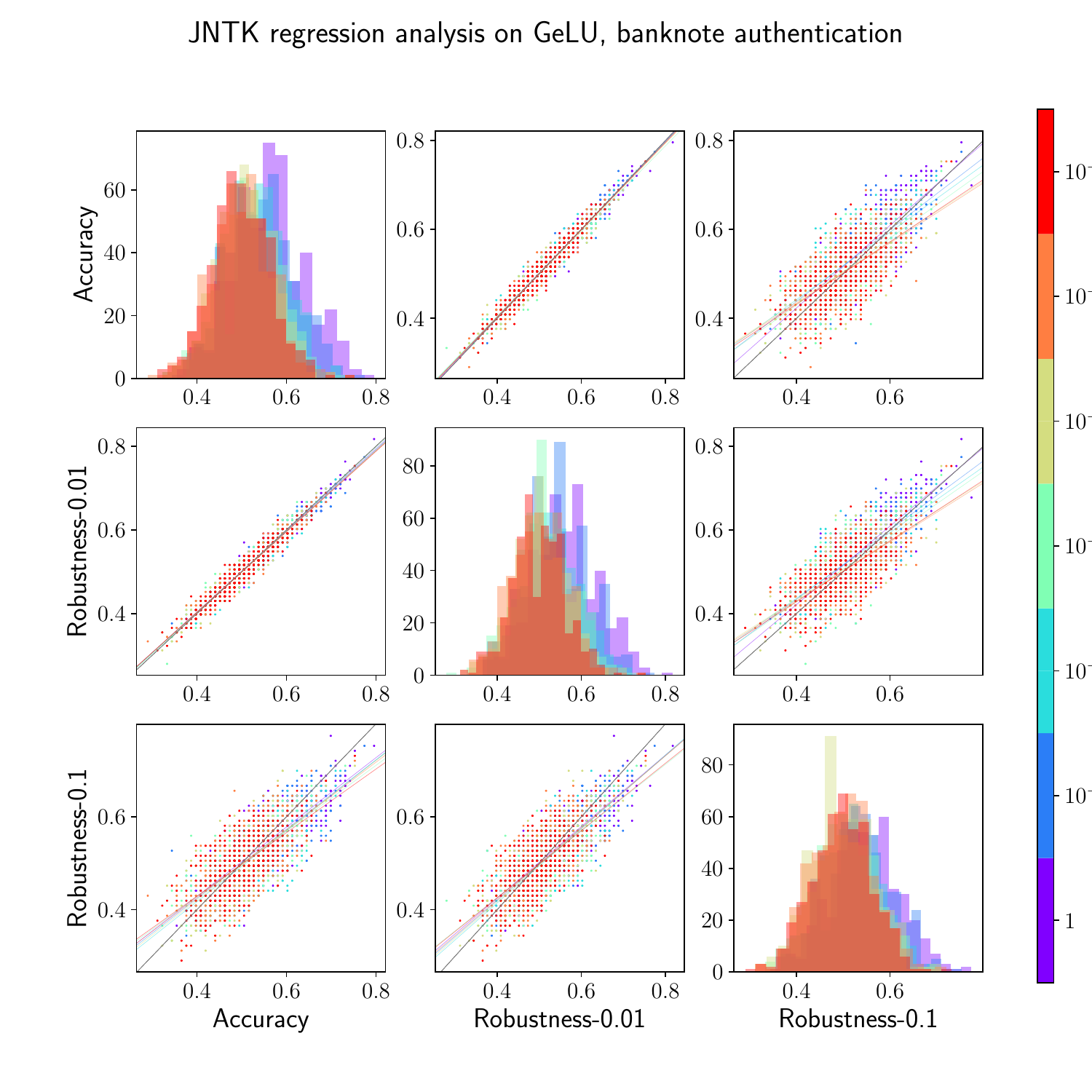}
  \end{subfigure}

  \caption{Kernel regression analysis on the banknote authentication dataset under the GeLU activation. The left figure is the result without Jacobian regularisation, and the right figure is the result with Jacobian regularisation.}
  \label{fig:regression-analysis-bank-gelu}
\end{figure}

\begin{figure}[H]
  \centering
  \begin{subfigure}[b]{0.49\linewidth}
    \includegraphics[width=\linewidth]{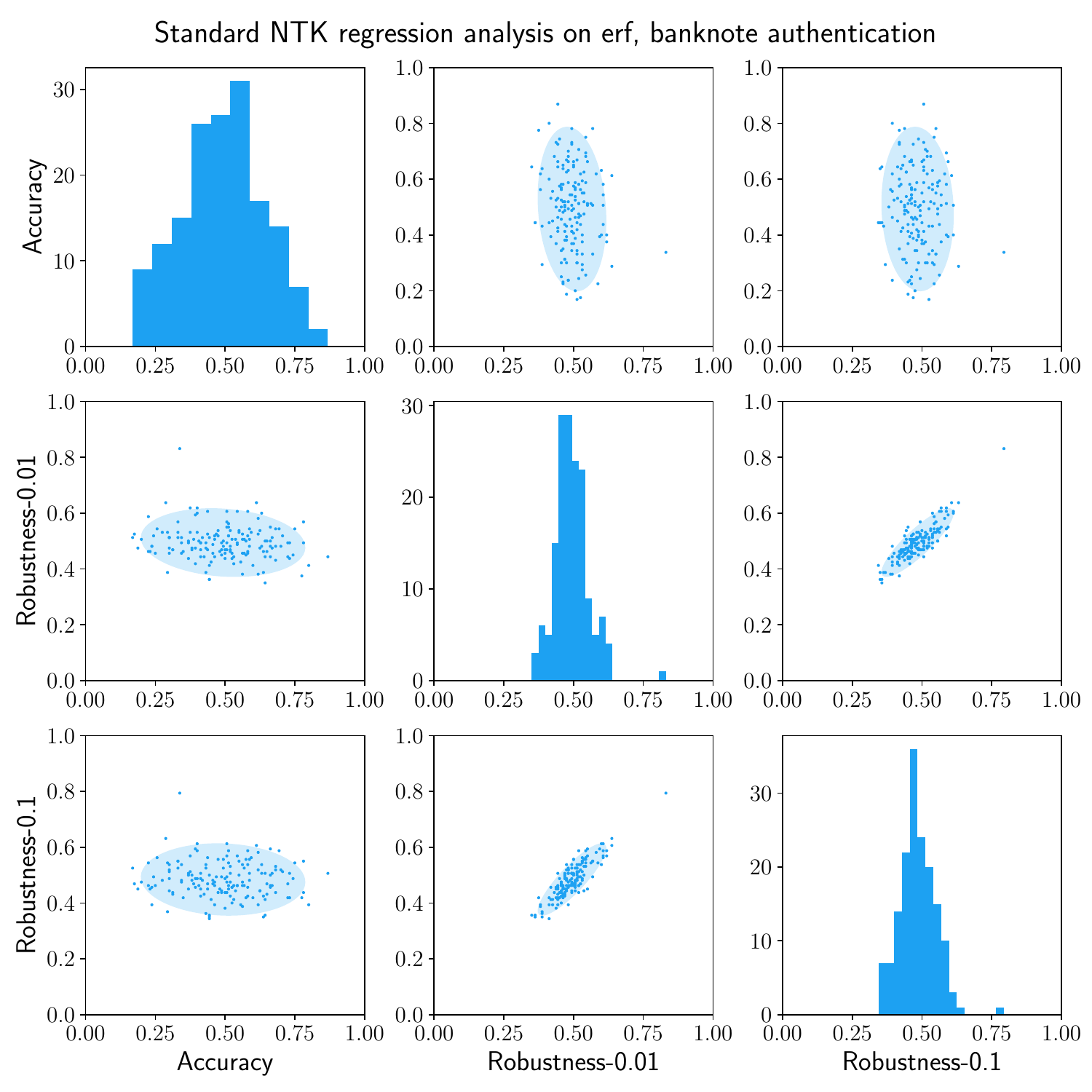}
  \end{subfigure}
  \begin{subfigure}[b]{0.49\linewidth}
    \includegraphics[width=\linewidth]{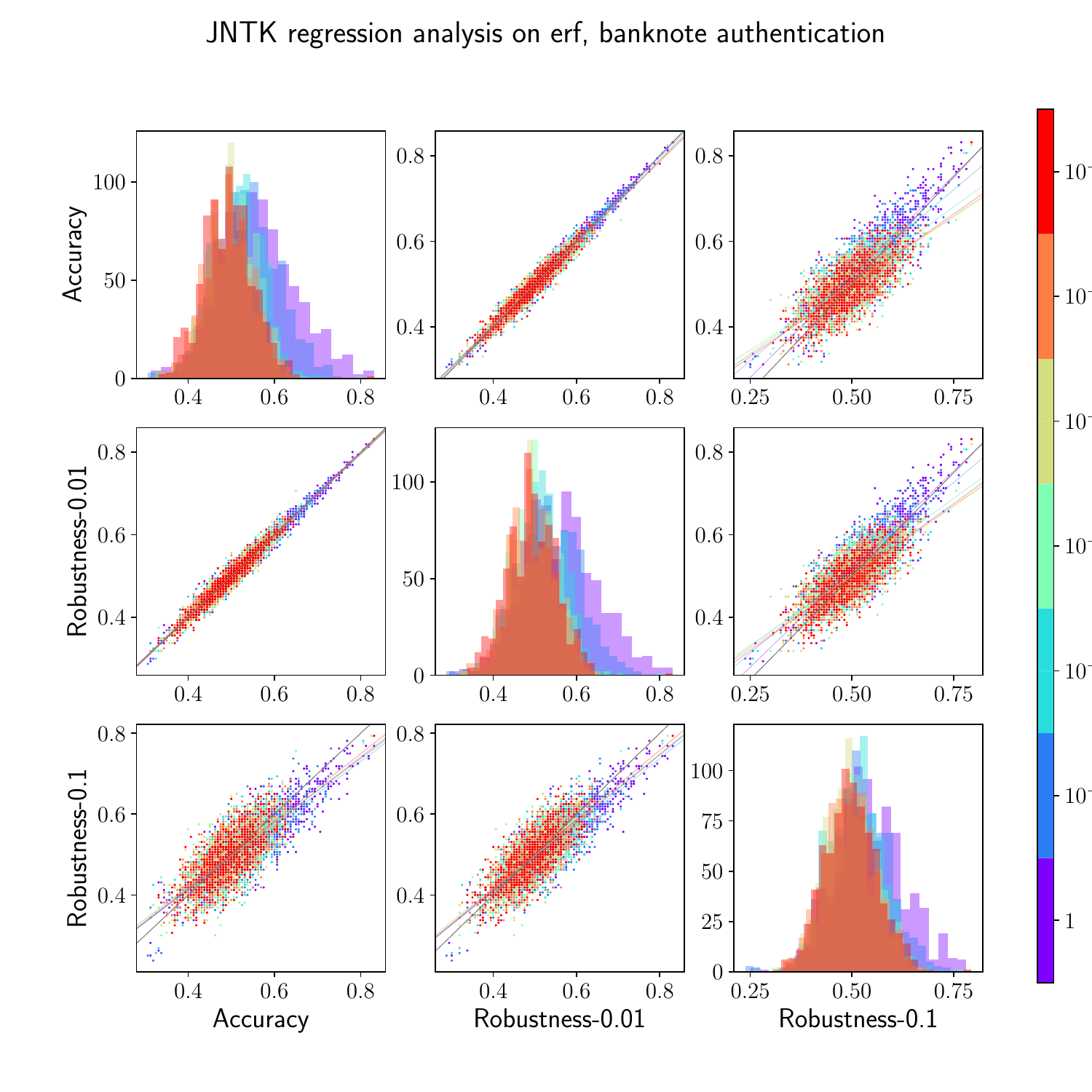}
  \end{subfigure}

  \caption{Kernel regression analysis on the banknote authentication dataset under the erf activation. The left figure is the result without Jacobian regularisation, and the right figure is the result with Jacobian regularisation.}
  \label{fig:regression-analysis-bank-erf}
\end{figure}

\begin{figure}[H]
  \centering
  \begin{subfigure}[b]{0.49\linewidth}
    \includegraphics[width=\linewidth]{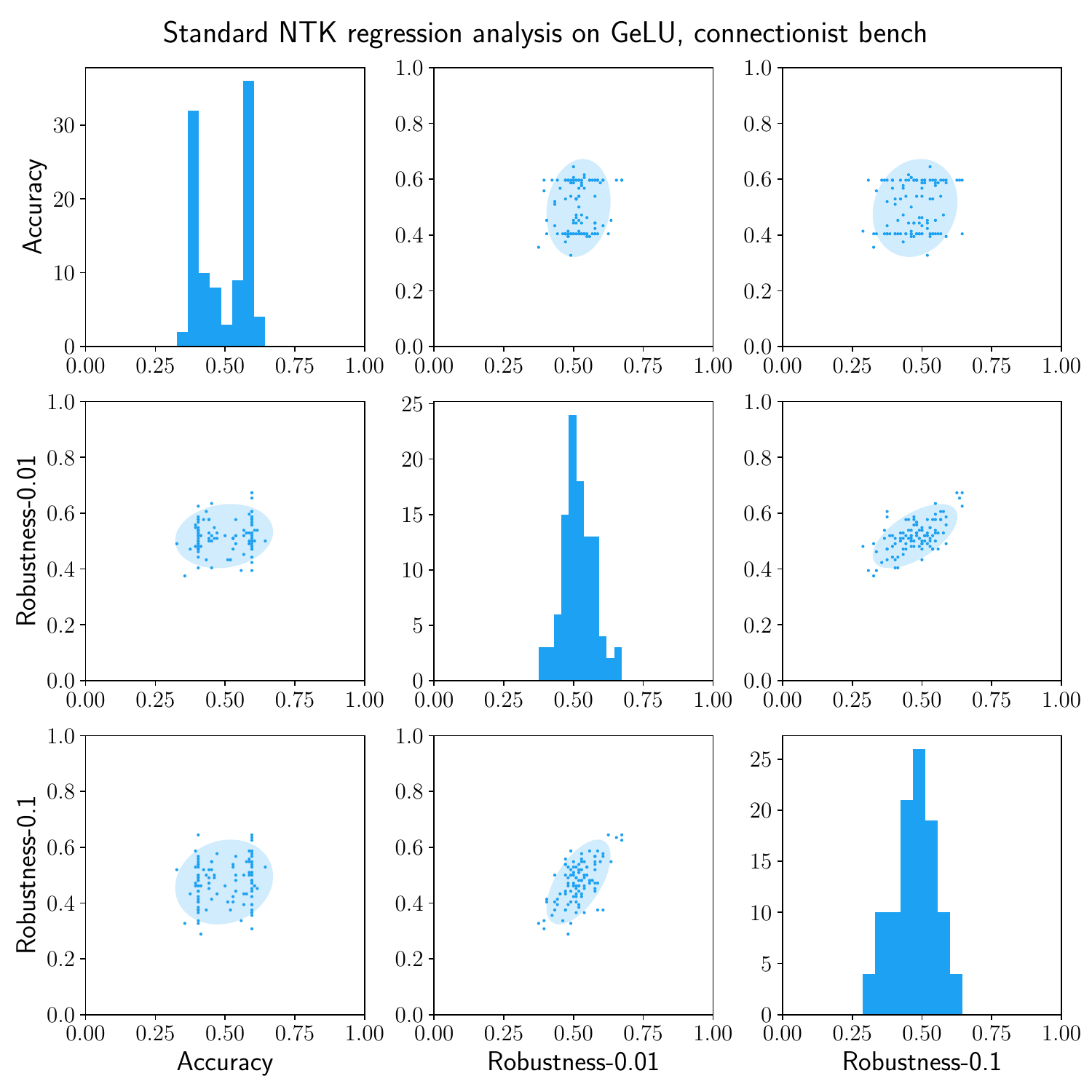} 
  \end{subfigure}
  \begin{subfigure}[b]{0.49\linewidth}
    \includegraphics[width=\linewidth]{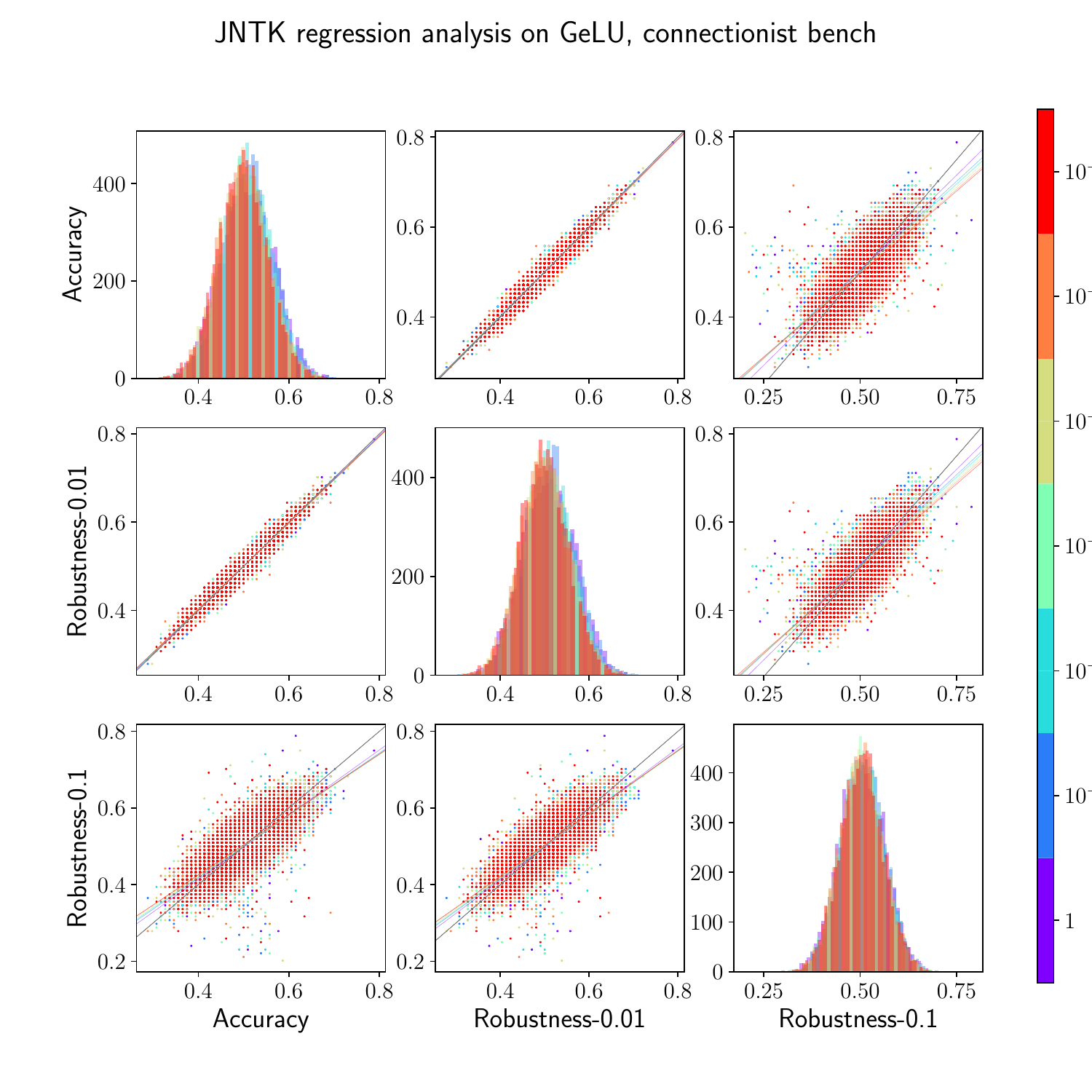} 
  \end{subfigure}

  \caption{Kernel regression analysis on the connectionist bench dataset under the GeLU activation. The left figure is the result without Jacobian regularisation, and the right figure is the result with Jacobian regularisation.}
  \label{fig:regression-analysis-sonar-gelu}
\end{figure}

\begin{figure}[H]
  \centering
  \begin{subfigure}[b]{0.49\linewidth}
    \includegraphics[width=\linewidth]{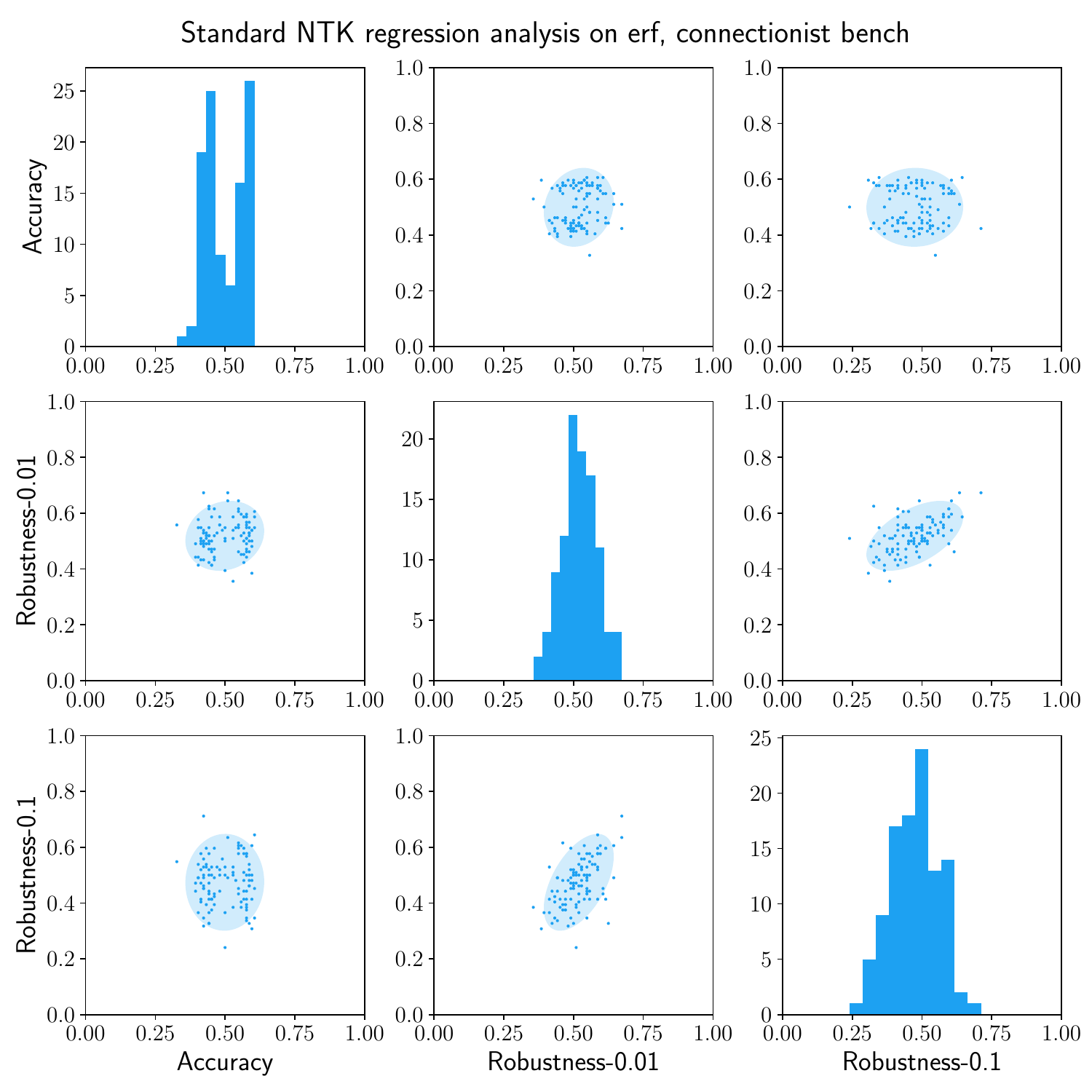} 
  \end{subfigure}
  \begin{subfigure}[b]{0.49\linewidth}
    \includegraphics[width=\linewidth]{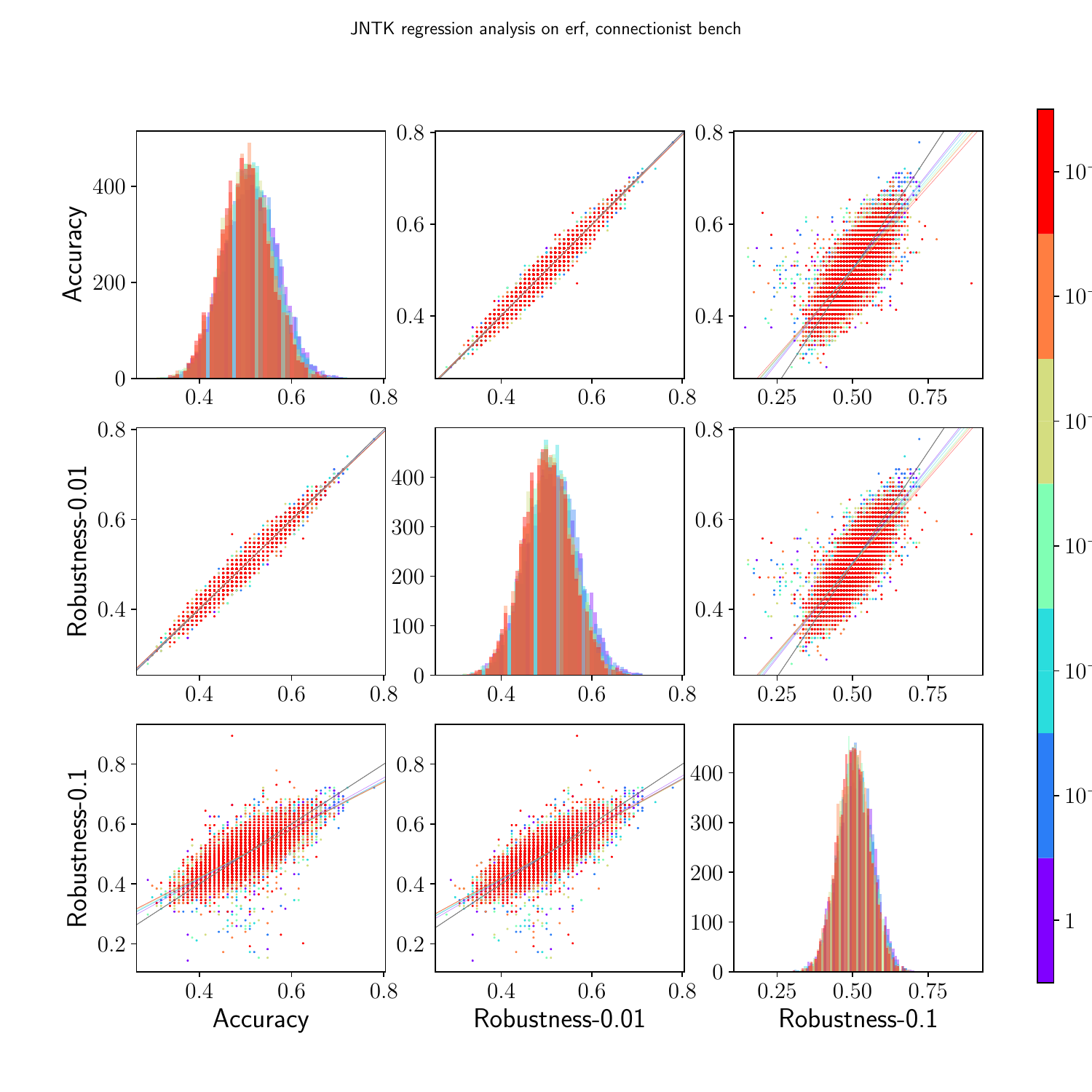}
  \end{subfigure}

  \caption{Kernel regression analysis on the connectionist bench dataset under the erf activation. The left figure is the result without Jacobian regularisation, and the right figure is the result with Jacobian regularisation.}
  \label{fig:regression-analysis-sonar-erf}
\end{figure}

\section{Additional Notations}
\label{appendix:notation}
We introduce symbols $h^{(0)}(x)$ and $g^{(l)}(x)$ for $l=1,\ldots,L+1$ and $x \in \R^{d_0}$, and let them denote the input $x$ and the pre-activation values at layers $l$. Thus, the MLP $f:\R^{d_0} \to \R^{d_{L+1}}$ is defined as follows: for all inputs $x \in \R^{d_0}$ and all layers $l = 2,\ldots,L$, 
\begin{align*}
  & h^{(0)}(x) \in \R^{d_0},  
  &&
  h^{(0)}(x) \defeq x,
  \\
  & g^{(1)}(x) \in \R^{d_1}
   &&
  g^{(1)}(x) \defeq W^{(1)}x,
  \\
  & h^{(1)}(x) \in \R^{d_1},  
  &&
  h^{(1)}(x) \defeq \phi(g^{(1)}(x)),
  \\
  & g^{(l)}(x) \in \R^{d_l},
  &&
  g^{(l)}(x) \defeq \frac{1}{\sqrt{d}} W^{(l)} h^{(l-1)}(x),
  \\
  & h^{(l)}(x) \in \R^{d_l},
  &&
  h^{(l)}(x) \defeq \phi(g^{(l)}(x)),
  \\
  & g^{(L+1)}(x) \in \R^{d_{L+1}},
  &&
  g^{(L+1)}(x) \defeq \frac{1}{\sqrt{d}} W^{(L+1)} h^{(L)}(x),
  \\
  & f(x) \in \R^{d_{L+1}},
  &&
  f(x) \defeq \kappa g^{(L+1)}(x).
\end{align*}  
Recall our assumption that $d = d_1 = \ldots = d_L$ and $d_{L+1} = 1$. 

We use the following constant $C_l$, especially for simplifying the multipliers in the neural network:
\[
  C_l \defeq \begin{cases}
    1 & l = 0,
    \\
    d & l \ge 1.
  \end{cases}
\]
We write $G_C(n)$ for the sum of the geometric series $C^0, C^1,\ldots,C^n$, i.e.,
\[
  G_C(n) \defeq \sum_{i=0}^n C^i.
\]
In the Appendix, we consider norms on random variables, such as the sub-Gaussian norm $\|\ \cdot\ \|_{SG}$ and the sub-exponential norm $\|\ \cdot\ \|_{SE}$.

For the sums $\sum_{i=a}^b$ and products $\prod_{i=a}^b$, we use empty sum / product convention when $a > b$,
\[
  \sum_{i=a}^b ({}\cdots{}) = 0, \qquad \prod_{i=a}^b ({}\cdots{}) = 1.
\]

\section{Review of the Tensor-Program Framework}\label{sec:tensorprogram}

We quickly review the tensor-program framework by Greg Yang~\cite{Yang2019TensorPI,Yang2020a}. To simplify the presentation, our review describes only a simplified version of the framework where all the hidden layers have the same width $n$. For the full version where the hidden layers have different widths (but these widths are sent to infinity with a fixed ratio), see the original papers of the framework~\cite{Yang2019TensorPI,Yang2020a}.

\emph{Tensor programs} are a particular type of straight-line programs that represent computations involving neural networks, such as forward computation and backpropagation. Each tensor program consists of three parts, namely, initialisation, main body, and output, each of which we will explain next. 

The initialisation part declares $\R^d$-valued input variables $x_1, \ldots, x_{p}$, each being initialised with i.i.d. Gaussian entries, and also $d \times d$ matrices  $M_1, \ldots, M_{q}$ where each matrix is initialised with i.i.d. Gaussian entries and independent with all the other matrices and the input variables. That is, for $i \in [p]$, $i' \in [q]$, and $k, k' \in [d]-1$,
  \begin{align*}
    x_{i} &\in \R^{d}, 
    & 
    (x_{i})_k &\stackrel{\mathrm{i.i.d.}}{\sim} \cN(0, \sigma_{i}^2), 
    \\
    M_{i'} &\in \R^{d \times d}, 
    & 
    (M_{i'})_{k,k'} &\stackrel{\mathrm{i.i.d.}}{\sim} \cN\left(0, \frac{\tau_{i'}^2}{d}\right).
  \end{align*}
  The input variables can be correlated. This correlation is described by a covariance matrix $(C_{i,i'})_{i,i' \in [p]}$ that is assumed given:
  for $i,i' \in [p]$ and $k,k' \in [d]-1$,
  \begin{equation*}
    \mathrm{Cov}((x_i)_k, (x_i')_k') 
    = 
    \begin{cases}
    C_{i,i'},  & \text{if } k = k';
    \\
    0, & \text{otherwise}.
    \end{cases}
  \end{equation*}
  
  The main body of a tensor program is a sequence of two types of assignments such that no input variables are assigned and the other variables are assigned once. The two types of assignments are as follows:
  \begin{description}
    \item[MatMul] $h = Mx$ or $h = M^\intercal x$ where $M$ is one of the matrices declared in the initialisation part, and $x$ is a variable assigned by the next NonLin-type assignment before the current MatMul-type assignment.
    \item[NonLin] $x = \phi(h_1,\ldots,h_m)$ where $h_1,\ldots,h_m$ are input variables or those assigned by the first MatMul-type assignment before the current assignment $x = \phi(h_1,\ldots,h_m)$, the $\phi : \R^m \to \R$ is a function, and this function is applied to the vectors $h_1,\ldots,h_m$ pointwise in this assignment.
  \end{description}

  The last output part of a tensor program has the form:
  \begin{align*}
    \frac{1}{d} \sum_{k=0}^{d-1} \psi((h_1)_k, \ldots, (h_m)_k) \text{ or }\left(\frac{1}{\sqrt{d}} v^\intercal h_1,\ldots,\frac{1}{\sqrt{d}} v^\intercal h_m\right)
  \end{align*}
  where $\psi$ is a function of type $\R^m \to \R$, $h_1,\ldots,h_m$ are variables, and $v$ is an input variable.

 Every variable $v$ (which can be $x$ or $h$) in a tensor program has an associated real-valued random variable $Z^v$, which is defined inductively. For input variables $x_1,\ldots,x_p$, we have real-valued random variables $Z^{x_1},\ldots, Z^{x_p} \in \R$ that are jointly distributed by the zero-mean multivariate Gaussian with the following covariance:
   \[
   \mathrm{Cov}(Z^{x_i}, Z^{x_{i'}}) = C_{i,i'}.
   \]
 For assigned variables, their random variables are defined as follows.
  \begin{description}
    \item[MatMul] For the assignment $h = M x$ or $h = M^\intercal x$, the corresponding $Z^h$ is the zero-mean Gaussian random variable that is independent with $Z^{h'}$ previously-defined by induction whenever the matrix $M'$ in $h' = M' x'$ (including $M^\intercal$) is not the same as $M$, and is correlated with $Z^{h''}$ with $h'' = M x''$ for the same $M$ as follows:
    \[
        \mathrm{Cov}(Z^{h}, Z^{h''}) = \sigma_M^2 \E[Z^{x} Z^{x''}]
    \]
    where $\sigma_M^2$ is the variance of the entries of the matrix $M$. 
    \item[NonLin] For the assignment $x = \phi(h_1,\ldots,h_m)$, the corresponding random variable $Z^x$ is defined by the following equation:
    \[
    Z^x = \phi(Z^{h_1}, \ldots, Z^{h_m}).
    \]
    Note that $h_1,\ldots,h_m$ are assigned before the assignment for $x$ so that the random variables $Z^{h_1},\ldots,Z^{h_m}$ are defined by the inductive construction.
  \end{description}

The main reason for using the tensor-program framework is the master theorem which states that as $n$ goes to infinity, the $k$-th components of variables in a tensor program jointly converge the random variables that we have just defined inductively. 

\begin{theorem}[Master Theorem] \label{thm:MasterTheoremLLN}
   If a tensor program uses only polynomially-bounded nonlinear functions $\phi$ in NonLin (i.e., there are some $C, c_1, c_2 > 0$ such that for any $u \in \R^m$, $\phi(u) \le C \lVert u \rVert^{c_1} + c_2$) and it satisfies the so-called BP-likeness (see \cite{Yang2019TensorPI,Yang2020a} for the definition), then for all polymonially-bounded $\psi$, we have the following almost-sure convergence:
\[
\frac{1}{d} \sum_{k=0}^{d-1} \psi((h_1)_k, \ldots, (h_m)_k) \stackrel{a.s.}{\to} \E[\psi(Z^{h_1}, \ldots, Z^{h_m})],
\]
as $d$ tends to infinity.
\end{theorem}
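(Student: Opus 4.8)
The plan is to prove the Master Theorem by induction on the structure of the tensor program --- concretely, on the number of \textsc{MatMul} assignments --- with the \emph{Gaussian conditioning trick} as the engine of the inductive step. First I would fix the bookkeeping: for each vector variable $h$ produced so far, $Z^h \in \R$ is its associated limiting random variable, defined inductively exactly as in the statement, and I would strengthen the induction hypothesis to the assertion that \emph{every} empirical average $\frac1d\sum_k \psi((h_1)_k,\dots,(h_m)_k)$ over all vectors produced up to a given point in the program converges almost surely to $\E[\psi(Z^{h_1},\dots,Z^{h_m})]$, for every polynomially-bounded $\psi$. This stronger invariant (in particular, a.s.\ convergence of all empirical inner products) is what makes the induction close.

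For the base case the only vectors present are the input variables $x_1,\dots,x_p$ together with \textsc{NonLin}-assigned variables built from them without any \textsc{MatMul}. By construction the rows $((x_1)_k,\dots,(x_p)_k)$ are i.i.d.\ draws from $\cN(0,(C_{i,i'}))$, so the conclusion is the ordinary strong law of large numbers for i.i.d.\ vectors, with polynomial boundedness of $\psi$ composed with the \textsc{NonLin} functions securing integrability of the limit.

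For the inductive step, consider the first \textsc{MatMul} assignment not yet accounted for, say $h = Mx$ or $h = M^\intercal x$, and condition on the $\sigma$-algebra $\mathcal F$ generated by all vectors computed before it. The key linear-algebra fact is that, conditioned on $\mathcal F$, the Gaussian matrix $M$ decomposes --- via the conditional law of a Gaussian given linear observations --- as a part that is deterministic in $\mathcal F$ and exactly reproduces the already-observed products $M(\text{previous }x\text{'s})$ and $M^\intercal(\text{previous }x\text{'s})$, plus an independent fresh Gaussian matrix acting on the component of $x$ orthogonal to those previous inputs. Hence $h$ splits as $h = h^{\parallel} + h^{\perp}$, where $h^{\parallel}$ is an explicit linear combination of previously produced vectors whose coefficients are ratios of empirical inner products --- which converge a.s.\ to the corresponding ratios of $\E[Z^{(\cdot)}Z^{(\cdot)}]$ by the induction hypothesis --- and $h^{\perp}$ is, conditionally on $\mathcal F$, an i.i.d.\ Gaussian vector independent of everything so far with variance converging to the prescribed value. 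Substituting this decomposition into any empirical average and applying a conditional strong law of large numbers for the fresh Gaussian coordinates (polynomial boundedness controlling moments, Gaussian hypercontractivity giving the uniform integrability needed to upgrade $L^2$ convergence) yields convergence to $\E[\psi(\dots,Z^h,\dots)]$ with $Z^h$ as in the statement; an elementary argument removes the conditioning on $\mathcal F$.

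The main obstacle, as always with this theorem, is the bookkeeping around using both $M$ and $M^\intercal$ in the same program: one must verify that the fresh Gaussian part produced at a \textsc{MatMul} is genuinely independent of \emph{all} earlier fresh parts, including those coming from the transpose, which requires tracking the joint conditional law of $(Mx^{(1)},\dots,M^\intercal x'^{(1)},\dots)$ and is exactly where the BP-likeness / rank-stability hypothesis enters to keep the conditioning nondegenerate. A secondary non-trivial point is promoting in-probability (or $L^2$) convergence of the empirical averages to the almost-sure statement, which needs a quantitative concentration bound (high moments of the averages controlled via the polynomial bounds and Gaussian moment inequalities, then Borel--Cantelli). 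The remaining steps --- continuity of polynomially-bounded $\psi$ under the decomposition, and convergence of the finite-dimensional covariance recursions --- are routine; for the full argument we refer to \cite{Yang2019TensorPI,Yang2020a}.
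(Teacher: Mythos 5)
The paper does not prove Theorem~\ref{thm:MasterTheoremLLN}: it is stated in Section~\ref{sec:tensorprogram} as a review of the Tensor-Program framework, and the paper explicitly defers to \cite{Yang2019TensorPI,Yang2020a} for both the precise hypotheses (BP-likeness) and the proof. So there is no in-paper proof to compare against; what you are sketching is the proof from the cited works, and the paper then uses the theorem as a black box (e.g.\ in Appendices~\ref{sec:ProofJacobianNNGPInitialisation} and~\ref{sec:ProofJacobianNTKInitialisation}).

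That said, your sketch is a faithful high-level account of Yang's argument: induction on the number of \textsc{MatMul}s with a strengthened hypothesis that \emph{all} moment-type empirical averages converge a.s.; the Gaussian conditioning trick to split each new $Mx$ into a part $h^{\parallel}$ determined by previously observed products (with coefficients that are ratios of empirical inner products, themselves convergent by the hypothesis) and a conditionally-independent fresh Gaussian part $h^{\perp}$; then a conditional law of large numbers and a moment/Borel--Cantelli upgrade to almost-sure convergence. You also correctly locate the role of BP-likeness: it is precisely the hypothesis that keeps the joint conditioning involving both $M$ and $M^\intercal$ nondegenerate, so that the forward and backward variables associated with the same matrix can be treated as independent in the limit (this is the ``gradient independence'' phenomenon in Yang's terminology). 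Two small caveats. First, your base case needs to also include the separately-declared output vector $v$ (used in Corollary~\ref{cor:MasterTheoremCLT}); this is independent of the $x_i$ and does not affect the law-of-large-numbers step, but it does matter for the CLT corollary, which is what the paper actually invokes. Second, ``Gaussian hypercontractivity'' is stronger than what is needed; the standard route is just that polynomial boundedness of $\psi$ plus Gaussian moment bounds give uniformly bounded $p$-th moments for some $p>1$, which is enough for uniform integrability and for a fourth-moment Borel--Cantelli argument. Neither caveat is a gap in the idea, only in the calibration of the tools. Since you explicitly defer the full bookkeeping to \cite{Yang2019TensorPI,Yang2020a}, as the paper itself does, this is an appropriate treatment of the statement.
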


Using this Master theorem and Proposition G.4 from \cite{Yang2019TensorPI}, we can show that the output distribution converges to a Gaussian random variable.

\begin{corollary}[GP Convergence] \label{cor:MasterTheoremCLT}
  Assume that a tensor program uses only polynomially-bounded nonlinear functions $\phi$ in NonLin and it satisfies the so-called BP-likeness. Also, assume that the output of the tensor program is
\[
    \left(\frac{1}{\sqrt{d}} v^\intercal h_1,\ldots,\frac{1}{\sqrt{d}} v^\intercal h_m\right)
\]
for the input random vector $v$ such that its entries are initialised with $\cN(0, \sigma_v^2)$, the vector is not used anywhere else in the program, and it is independent of all other input variables in the program. Then, 
as $n$ tends to infinity, the output of the program converges in distribution to the following Gaussian distribution:
\[
  \left( \frac{1}{\sqrt{d}} v^\intercal h_1, \ldots, \frac{1}{\sqrt{d}} v^\intercal h_k \right) \stackrel{dist.}{\to } \cN(0, K)
\]
where
\[
  K_{ij} = \sigma_v^2 \E[Z^{h_i} Z^{h_j}].
\]
\end{corollary}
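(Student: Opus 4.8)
The plan is to exploit the hypothesis that $v$ is a fresh Gaussian vector, independent of everything else, and to condition on the $\sigma$-algebra $\mathcal{F}_d$ generated by all input variables and matrices of the program other than $v$. Because $v$ never appears on the right-hand side of a MatMul or NonLin assignment, each $h_i$ is a measurable function of the remaining inputs and matrices only, so the conditional law of $v$ given $\mathcal{F}_d$ is still $\cN(0, \sigma_v^2 I_d)$ and the $h_i$ are $\mathcal{F}_d$-measurable. Hence, conditionally on $\mathcal{F}_d$, the vector $\left(\tfrac{1}{\sqrt d} v^\intercal h_1, \ldots, \tfrac{1}{\sqrt d} v^\intercal h_m\right)$ is a linear image of $v$ and is therefore, for every finite $d$, a zero-mean Gaussian vector with covariance matrix $\Sigma_d \in \R^{m \times m}$ given by
\[
  (\Sigma_d)_{ij} = \frac{\sigma_v^2}{d}\, \langle h_i, h_j\rangle = \frac{\sigma_v^2}{d} \sum_{k=0}^{d-1} (h_i)_k (h_j)_k.
\]

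Next I would apply the Master Theorem (Theorem~\ref{thm:MasterTheoremLLN}) to the tensor program obtained by deleting the readout, using the polynomially-bounded output functional $\psi(a,b) = ab$, to conclude that for each fixed pair $(i,j)$,
\[
  \frac{1}{d} \sum_{k=0}^{d-1} (h_i)_k (h_j)_k \stackrel{a.s.}{\to} \E[Z^{h_i} Z^{h_j}],
\]
so that $\Sigma_d \to K$ almost surely, entrywise and hence in any matrix norm, where $K_{ij} = \sigma_v^2\, \E[Z^{h_i} Z^{h_j}]$; since there are only finitely many pairs, the convergence holds simultaneously on one almost-sure event. It then remains to upgrade this conditional statement to unconditional weak convergence. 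For any $t \in \R^m$, the conditional characteristic function of the output is $\exp\!\big({-}\tfrac12 t^\intercal \Sigma_d t\big)$, so the unconditional characteristic function is $\E\big[\exp({-}\tfrac12 t^\intercal \Sigma_d t)\big]$; the integrand is bounded by $1$ and $\Sigma_d \to K$ almost surely, so dominated convergence gives $\E\big[\exp({-}\tfrac12 t^\intercal \Sigma_d t)\big] \to \exp({-}\tfrac12 t^\intercal K t)$, and Lévy's continuity theorem yields $\left(\tfrac{1}{\sqrt d} v^\intercal h_1, \ldots, \tfrac{1}{\sqrt d} v^\intercal h_m\right) \stackrel{dist.}{\to} \cN(0, K)$. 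This is exactly the content of Proposition~G.4 of \cite{Yang2019TensorPI}, which I would cite for the rigorous handling of the conditioning.

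The main obstacle is not the probabilistic argument, which is standard, but the bookkeeping needed to invoke the Master Theorem correctly: one must check that the subprogram with $v$ and the readout removed still satisfies the polynomial-boundedness and BP-likeness hypotheses (deleting an unused input variable and the output line cannot break these, but this should be argued explicitly), and one must be careful that $v$ is genuinely independent of $\mathcal{F}_d$, i.e. that the structural restriction ``$v$ is not used anywhere else in the program'' really does make every $h_i$ independent of $v$. Once these structural points are settled, the convergence of the conditional covariance via the Master Theorem and the characteristic-function passage to the limit are routine.
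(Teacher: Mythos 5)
Your argument is correct and follows essentially the route the paper itself takes: the paper does not spell out a proof but derives this corollary by combining the Master theorem with the conditioning argument of Proposition G.4 of \cite{Yang2019TensorPI}, which is exactly your scheme of conditioning on everything except $v$, identifying the conditional law as $\cN(0,\Sigma_d)$ with $(\Sigma_d)_{ij}=\tfrac{\sigma_v^2}{d}\langle h_i,h_j\rangle$, using the Master theorem to get $\Sigma_d\to K$ almost surely, and passing to the limit via characteristic functions. No gaps beyond the structural bookkeeping you already flag.
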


\section{Proof of Theorem \ref{thm:JacobianNNGPInitialisation}}
\label{sec:ProofJacobianNNGPInitialisation}
We express the computation of the network's outputs and Jacobians on all the training inputs as a tensor program and use the Master theorem~\ref{cor:MasterTheoremCLT} to show that these outputs and Jacobians jointly converge to Gaussian random variables.

We first define the input vectors and matrices. We have the following $(1+d_0)N$-many input vectors, which correspond to the values of the preactivation on the training inputs and the Jacobians of these preactivation:
\[
 g^{(1)}_1,\;\ldots,\;g^{(1)}_N,\; (J_1g^{(1)}_1),\;\ldots,\; (J_{d_0}g^{(1)}_N) \in \R^d
\]
These input vectors are initialised as follows:
for $k \in [d]-1$, $i, j \in [N]$, and $\alpha, \beta \in [d_0]$, 
\begin{align*}
  \mathrm{Var}\left((g_i^{(1)})_k\right) &= 1, & \mathrm{Var}\left((J_\alpha g_i^{(1)})_k\right) &= 1,
  \\
  \mathrm{Cov}\left( (g_i^{(1)})_k, (g_j^{(1)})_k \right) &= \left\langle x^{(i)}, x^{(j)} \right\rangle, & \mathrm{Cov}\left( (J_\alpha g_i^{(1)})_k, (g_j^{(1)})_k \right) &= \left\langle e_{\alpha-1}, x^{(j)} \right\rangle,
  \\
  \mathrm{Cov}\left( (g_i^{(1)})_k, (J_\beta g_j^{(1)})_k \right) &= \left\langle x^{(i)}, e_{\beta-1} \right\rangle, & \mathrm{Cov}\left( (J_\alpha g_i^{(1)})_k, (J_\beta g_j^{(1)})_k \right) &= \left\langle e_{\alpha-1}, e_{\beta-1} \right\rangle,
\end{align*}
where $e_{\alpha-1}$ is the unit vector with its $\alpha$-th component being 1 and the others being 0. Note that this initialisation corresponds to having a random matrix $W^{(1)}$ intialised with i.i.d. samples from $\mathcal{N}(0,1)$, and setting $g^{(1)}_i$ to $W^{(1)}x^{(i)}$ and $J_\alpha g_i^{(1)}$ to $\frac{\partial W^{(1)}x}{\partial x_{\alpha-1}}|_{x = x^{(i)}}$. 

\update{We also have an input vector $V^{(L+1)} \in \R^d$ whose entries are set to the i.i.d. samples from $\cN(0, \kappa^2)$. 
The vector is independent of the previously defined input vectors.}{Note that we use different notation.}

For the input matrices, we have $V^{(l)}$ for each layer $l = 2,\ldots,L$. The entries of $V^{(l)}$ are initialised with i.i.d. samples from $\cN(0, 1/d)$. 

\update{Note that we use $V$ instead of $W$ as opposed to the main text.
This is to match the syntax of the Tensor program while keeping the computation equivalent to the main text.
In specific, the computation of matrix multiplication
\[
  h^{(l)}(x) = \phi \left( \frac{1}{\sqrt{d_{l-1}}} W^{(l)} h^{(l-1)}(x) \right)
\]
is modelled by setting $V^{(l)} = 1/\sqrt{d_{l-1}} W^{(l)}$ satisfies the requirement of the Tensor program.
Similarly, the output vector $W^{(L+1)}$ is implemented by $V^{(L+1)} = \kappa W^{(L+1)}$, where we absorb the scaling factor $\kappa$ into the initialisation of $V^{(L+1)}$, and the normalisation factor $1/\sqrt{d}$ is absorbed in the result form of Corollary~\ref{cor:MasterTheoremCLT}.}{Note that we use different parameterisation to follow Tensor program.}

Next, we define the body of the program that computes the network's outputs on all the training inputs as well as the corresponding Jacobians. This body uses the following variables:
\begin{align*}
  & 
  h_1^{(1)},\; \ldots,\; h_N^{(1)},\;
  \ldots,\;
  h_1^{(L)},\; \ldots,\; h_N^{(L)} \in \R^d,
  \\
  &
  g_1^{(2)},\; \ldots,\; g_N^{(2)},\;
  \ldots,\;
  g_1^{(L)},\; \ldots,\; g_N^{(L)} \in \R^d,
  \\
  &
  (J_1h_1^{(1)}),\; \ldots,\; (J_{d_0} h_N^{(1)}),\;
  \ldots,\;
  (J_1h_1^{(L)}),\; \ldots,\; (J_{d_0} h_N^{(L)}) \in \R^d,
  \\
  &
  (J_1g_1^{(2)}),\; \ldots,\; (J_{d_0}g_N^{(2)}),\;
  \ldots,\;
  (J_1g_1^{(L)}),\; \ldots,\; (J_{d_0}g_N^{(L)}) \in \R^d.
\end{align*}
It is defined as follows:
\begin{align*}
  & h_1^{(1)} = \phi(g_1^{(1)});
  &&
    \ldots
  &&
    h_N^{(1)} = \phi(g_N^{(1)});
  \\
  & g_1^{(2)} = V^{(2)} h_1^{(1)};
  &&
    \ldots
  &&
    g_N^{(2)} = V^{(2)} h_N^{(1)};
  \\
  & h_1^{(2)} = \phi(g_1^{(2)});
  &&
    \ldots
  &&
    h_N^{(2)} = \phi(g_N^{(2)});
  \\
  & \vdots &&&& \vdots
  \\
  & g_1^{(L)} = V^{(L)} h_1^{(L-1)};
  &&
    \ldots
  &&
    g_N^{(L)} = V^{(L)} h_N^{(L-1)};
  \\
  & h_1^{(L)} = \phi(g_1^{(L)});
  &&
    \ldots
  &&
    h_N^{(L)} = \phi(g_N^{(L)});
  \\[1ex]
  & (J_1 h_1^{(1)}) = \dot{\phi}(g_1^{(1)}) \odot (J_1 g_1^{(1)});
  && 
    \ldots
  &&
    (J_{d_0} h_N^{(1)}) = \dot{\phi}(g_N^{(1)}) \odot (J_{d_0} g_N^{(1)});
  \\ 
  & (J_1 g_1^{(2)}) = V^{(2)} (J_1 h_1^{(1)});
  &&
    \ldots
  && 
    (J_{d_0} g_N^{(2)}) = V^{(2)} (J_{d_0} h_N^{(1)});
  \\
  & (J_1 h_1^{(2)}) = \dot{\phi}(g_1^{(2)}) \odot (J_1 g_1^{(2)});
  && 
    \ldots
  &&
    (J_{d_0} h_N^{(2)}) = \dot{\phi}(g_N^{(2)}) \odot (J_{d_0} g_N^{(2)});
  \\
  & \vdots &&&& \vdots
  \\
  & (J_1 g_1^{(L)}) = V^{(L)} (J_1 h_1^{(L-1)});
  &&
    \ldots
  && 
    (J_{d_0} g_N^{(L)}) = V^{(L)} (J_{d_0} h_N^{(L-1)});
  \\
  & (J_1 h_1^{(L)}) = \dot{\phi}(g_1^{(L)}) \odot (J_1 g_1^{(L)});
  && 
    \ldots
  &&
    (J_{d_0} h_N^{(L)}) = \dot{\phi}(g_N^{(L)}) \odot (J_{d_0} g_N^{(L)});
  \end{align*}

Finally, we make the program output the following $(1+d_0)N$-dimensional vector:
\begin{align*}
  &
  \Bigg\langle 
  \frac{1}{\sqrt{d}} (V^{(L+1)})^\intercal h_1^{(L)},\;
  \ldots,\; 
  \frac{1}{\sqrt{d}} (V^{(L+1)})^\intercal h_N^{(L)},\;
  \\
  & \qquad 
  \frac{1}{\sqrt{d}} (V^{(L+1)})^\intercal (J_1 h_1^{(L)}),\;
  \ldots,\;
  \frac{1}{\sqrt{d}} (V^{(L+1)})^\intercal (J_1 h_N^{(L)}),\;
  \\
  & \qquad 
  \ldots
  \\
  & \qquad
  \frac{1}{\sqrt{d}} (V^{(L+1)})^\intercal (J_{d_0} h_1^{(L)}),\;
  \ldots,\;
  \frac{1}{\sqrt{d}} (V^{(L+1)})^\intercal (J_{d_0} h_N^{(L)})
  \Bigg\rangle.
\end{align*}

Now Corollary \ref{cor:MasterTheoremCLT} implies that these random variables jointly converge in distribution to a zero-mean multivariate Gaussian distribution as $d$ tends to infinity. Furthermore, the Master theorem (i.e., Theorem~\ref{thm:MasterTheoremLLN}) gives the following inductive formula for computing the expectation in the definition of the covariance matrix in the corollary:
\begin{align*}
  \mathrm{Cov} \left( Z^{g_i^{(l)}}, Z^{g_j^{(l)}} \right) &= \E \left[ \phi\left( Z^{g_i^{(l-1)}} \right) \phi\left( Z^{g_j^{(l-1)}} \right) \right],
  \\
  \mathrm{Cov} \left( Z^{J_\alpha g_i^{(l)}}, Z^{g_j^{(l)}} \right) &= \E \left[ \dot{\phi}\left( Z^{g_i^{(l-1)}} \right) Z^{J_\alpha g_i^{(l-1)}} \phi\left( Z^{g_j^{(l-1)}} \right) \right],
  \\
  \mathrm{Cov} \left( Z^{g_i^{(l)}}, Z^{J_\beta g_j^{(l)}} \right) &= \E \left[ \phi\left( Z^{g_i^{(l-1)}} \right) \dot{\phi}\left( Z^{g_j^{(l-1)}} \right) Z^{J_\beta g_j^{(l-1)}} \right],
  \\
  \mathrm{Cov} \left( Z^{J_\alpha g_i^{(l)}}, Z^{J_\beta g_j^{(l)}} \right) &= \E \left[ \dot{\phi}\left( Z^{g_i^{(l-1)}} \right) Z^{J_\alpha g_i^{(l-1)}} \dot{\phi}\left( Z^{g_j^{(l-1)}} \right) Z^{J_\beta g_j^{(l-1)}}\right].
\end{align*}
By definition, the random vectors $(Z^{g_i^{(1)}}, Z^{J_1 g_i^{(1)}}, \ldots, Z^{J_{d_0} g_i^{(l)}}) \in \R^{1 + d_0}$ for $i \in [N]$ have the covariance given by $\Sigma^{(0)}(x^{(i)}, x^{(j)})$ in the statement of Theorem~\ref{thm:JacobianNNGPInitialisation}. 
Note that the inductive definition that we have just given is identical to the inductive definition of $\Sigma^{(l-1)}(x^{(i)}, x^{(j)})$ in Theorem~\ref{thm:JacobianNNGPInitialisation}. 
Thus, the random vectors $(Z^{g_i^{(l)}}, Z^{J_1 g_i^{(l)}}, \ldots, Z^{J_{d_0} g_i^{(l)}}) \in \R^{1 + d_0}$ for $i \in [N]$ have the covariance given by 
$\Sigma^{(l-1)}(x^{(i)}, x^{(j)})$. As a result, the covariance of the limiting distribution of the network output and its Jacobian is given by
\begin{align*}
  \mathrm{Cov}\left( f(x^{(i)}), f(x^{(j)}) \right) &= \kappa^2 \Sigma^{(L)}(x^{(i)}, x^{(j)})_{00},
  \\
  \mathrm{Cov}\left( J(f)(x^{(i)})_{\alpha-1}, f(x^{(j)}) \right) &= \kappa^2 \Sigma^{(L)}(x^{(i)}, x^{(j)})_{\alpha 0},
  \\
  \mathrm{Cov}\left( f(x^{(i)}), J(f)(x^{(j)})_{\beta-1} \right) &= \kappa^2 \Sigma^{(L)}(x^{(i)}, x^{(j)})_{0\beta}, 
  \\
  \mathrm{Cov}\left( J(f)(x^{(i)})_{\alpha-1}, J(f)(x^{(j)})_{\beta-1} \right) &= \kappa^2 \Sigma^{(L)}(x^{(i)}, x^{(j)})_{\alpha \beta}.
\end{align*}
\update{So far, we implemented the forward computation of our network as a tensor program.
To apply the Master theorem, we need to check that (1) the nonlinear functions we applied are polynomially bounded and (2) the program satisfies the BP-likeness property.
The polynomial boundedness is immediate since the function we use is either $\phi$ or $\dot{\phi}(a) \cdot b$, and the assumption that $\phi$ is Lipschitz implies that both $\phi$ and $\dot{\phi}(a) \cdot b$ are polynomially bounded.
For the BP-likeness, the simple GIA check (Condition 1 of \cite{Yang2020a}) applies to our program, since $V^{(L+1)}$ is not used in any other part of the program, except when we compute the output.}{Add the explanation of the BP-likeness.}

This proves Theorem~\ref{thm:JacobianNNGPInitialisation}.

We use this theorem to show that if we take $\kappa$ small enough and $d$ large enough, both outputs and Jacobians at initialisation are close to zero.

\begin{remark} \label{rmk:InitCloseToZero}
  By the definition of $\Sigma^L$ and the assumption on the activiation function $\phi$, for all $x \in \R^{d_0}$ with $\|x\| = 1$, the diagonal entries of $\kappa^2 \Sigma^{(L)}(x, x)$ are at most $\kappa^2(M_1^{2L}+1)$. Thus, using the tail bound for the standard normal distribution and also union bound, we can show that for all $\epsilon > 0$ and $x \in \R^{d_0}$ with $\|x\| = 1$, if $g \sim \GP(0, \kappa^2 \Sigma^{(L)})$, then
  \[
    P\left( \left\|g(x)\right\|_\infty > \epsilon \right) \le 2(1 + d_0) \exp\left( - \frac{\epsilon^2}{2\kappa^2(M_1^{2L}+1)} \right).
  \]
  Since every training input $x^{(i)}$ satisfies $\|x^{(i)}\| = 1$ by assumption, we can instantiate the above bound on those inputs and get, by union bound, that
  \begin{equation}
  \label{eqn:init-bound:limit-case}
    P\left( \forall i \in [N].\, \left\| g(x^{(i)}) \right\|_\infty \le \epsilon \right) \ge 1 - 2 N (1 + d_0) \exp\left( - \frac{\epsilon^2}{2\kappa^2(M^{2L}_1+1)} \right).
  \end{equation}

  By Theorem~\ref{thm:JacobianNNGPInitialisation}, there exists a function $F_1$ that if
  \[
    d \ge F_1\left(x^{(1:N)},\, L,\, \delta,\ \frac{\epsilon}{\kappa\sqrt{M^{2L}_1+1}}\right),
  \]
  then 
  \begin{equation}
    \label{eqn:init-bound:convergence-error}
    \left| P \left( \forall i \in [N], \alpha \in [d_0].\, \left(f_d(x^{(i)}), J(f_{d})(x^{(i)})_{\alpha-1} \in [-\epsilon, \epsilon]\right) \right) - P\left( \left\|g(x^{(i)})\right\|_\infty \le \epsilon \right) \right| \le \frac{\delta}{2}.
  \end{equation}
  From the bounds in \eqref{eqn:init-bound:limit-case} and \eqref{eqn:init-bound:convergence-error} it follows that for all $\delta \in (0,1)$, if $\kappa$ is sufficiently small and $d$ is sufficiently large, then with probability at least $1 - \delta$,
  \[
    f_d(x^{(i)}),\, J(f_{d})(x^{(i)})_{\alpha - 1} \in [-\epsilon, \epsilon]
  \]
  for all $i \in [N]$ and $\alpha \in [d_0]$.
\end{remark}

\section{Proof of Theorem~\ref{thm:JacobianNNGPCorrespondence}}
\label{sec:ProofJacobianNNGPCorrespondence}

We start with one general result that allows us to move the derivative from the outside of an expectation to the inside when the expectation is taken over a GP. 
\begin{theorem}
  \label{thm:exchange-theorem}
  Let $K:\R^{d_0} \times \R^{d_0} \to \R$ be a symmetric kernel. Consider functions
  $\psi : \R \to \R$ and $\varphi : \R^2 \to \R$ such that
  \begin{itemize}
    \item $\psi$ and $\varphi$ are polynomially bounded; and
    \item there exists a polynomially-bounded function $\dot{\psi} : \R \to \R$ that satisfies
    $\int_0^x \dot{\psi}(t) dt = \psi(x) - \psi(0)$ for all $x \in \R$.
  \end{itemize}
  Let $x,x' \in \R^{d_0}$, $\alpha,\beta \in [d_0]$, and
  \begin{align*}
  \Gamma & = 
  \begin{pmatrix}
  K(x,x) & K(x,x') & \frac{\partial K(x,y')}{\partial y'_{\alpha-1}}\Big|_{y' = x} 
  & \frac{\partial K(x,y')}{\partial y'_{\beta-1}}\Big|_{y' = x'} 
  \\
  K(x',x) & K(x',x') & \frac{\partial K(x',y')}{\partial y'_{\alpha-1}}\Big|_{y' = x} 
  & \frac{\partial K(x',y')}{\partial y'_{\beta-1}}\Big|_{y' = x'} 
  \\
  \frac{\partial K(y,x)}{\partial y_{\alpha-1}}\Big|_{y=x} & \frac{\partial K(y,x')}{\partial y_{\alpha-1}}\Big|_{y = x} 
  & \frac{\partial^2 K(y,y')}{\partial y_{\alpha-1} \partial y'_{\alpha-1}}\Big|_{(y,y') = (x, x)}
  & \frac{\partial^2 K(y,y')}{\partial y_{\alpha-1} \partial y'_{\beta-1}}\Big|_{(y,y') = (x,x')}
  \\
  \frac{\partial K(y,x)}{\partial y_{\beta-1}}\Big|_{y=x'} & \frac{\partial K(y,x')}{\partial y_{\beta-1}}\Big|_{y = x'} 
  & \frac{\partial^2 K(y,y')}{\partial y_{\beta-1} \partial y'_{\alpha-1}}\Big|_{(y,y') = (x', x)}
  & \frac{\partial^2 K(y,y')}{\partial y_{\beta-1} \partial y'_{\beta-1}}\Big|_{(y,y') = (x',x')}
  \end{pmatrix}.
  \end{align*}
  Then, 
  \begin{equation}
  \label{eqn:exchange-theorem:1}
  \frac{\partial}{\partial x'_{\beta-1}}\E_{z \sim \cN(0,\Gamma)}\left[\psi(z_1) \cdot \varphi(z_0,z_2)\right] 
  = 
  \E_{z \sim \cN(0, \Gamma)} \left[\dot{\psi}(z_1) \cdot z_3 \cdot \varphi(z_0,z_2)\right].
  \end{equation}
\end{theorem}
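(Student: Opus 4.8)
The plan is to reduce the identity to a classical fact about Gaussian integration: differentiating under the expectation sign, followed by an application of Stein's lemma (Gaussian integration by parts) to convert the derivative of the density into the multiplication by the ``missing'' coordinate $z_3$. First I would observe that $\Gamma$ is by construction the covariance matrix of the random vector $z = (z_0,z_1,z_2,z_3) \defeq (g(x), g(x'), \partial_{\alpha-1} g(x), \partial_{\beta-1} g(x'))$ where $g \sim \GP(0,K)$ — this is exactly the joint law of a GP and two of its partial derivatives, so the $4\times4$ Gram matrix in the statement is the relevant one and all entries exist because $K$ is smooth enough (the derivatives of $K$ appearing in $\Gamma$ are well defined, which we may take as part of the hypothesis / follows from the smoothness of $\Sigma^{(L)}$ when the theorem is applied). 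The quantity on the left of \eqref{eqn:exchange-theorem:1} is then $\partial_{x'_{\beta-1}} \E[\psi(z_1)\varphi(z_0,z_2)]$ where only the entries of $\Gamma$ involving $x'$ (i.e., the variance of $z_1$, the variance of $z_3$, and the covariances $\Cov(z_0,z_1)$, $\Cov(z_0,z_3)$, $\Cov(z_1,z_3)$, $\Cov(z_2,z_3)$) depend on $x'_{\beta-1}$.

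The key steps, in order: (1) Justify differentiating under the integral sign. Write the expectation as an integral against the Gaussian density $p_\Gamma(z)$; since $\psi,\varphi$ are polynomially bounded and the Gaussian density and its $x'_{\beta-1}$-derivative have all polynomial moments, dominated convergence applies on a neighbourhood of the given $x'$, giving $\partial_{x'_{\beta-1}}\E[\psi(z_1)\varphi(z_0,z_2)] = \E\!\big[\psi(z_1)\varphi(z_0,z_2)\cdot \partial_{x'_{\beta-1}}\log p_\Gamma(z)\big] + (\text{terms from }\partial_{x'_{\beta-1}}\text{ of normalising constant})$, or more cleanly, use the reparametrisation trick: write $z = \Gamma^{1/2}\xi$ with $\xi \sim \cN(0,I)$ fixed, so the $x'$-dependence is entirely inside $\Gamma^{1/2}$, and differentiate the smooth map $x'_{\beta-1}\mapsto \Gamma^{1/2}$. (2) Alternatively, and I think more transparently, I would prove the identity first in the special case where $\psi,\varphi$ are smooth with all derivatives polynomially bounded, by the chain rule: $\partial_{x'_{\beta-1}}\E[F(z)] = \sum_k \E[\partial_k F(z)]\cdot \partial_{x'_{\beta-1}}(\text{mean}_k) + \tfrac12\sum_{k,m}\E[\partial_k\partial_m F(z)]\cdot \partial_{x'_{\beta-1}}\Gamma_{km}$ — the standard formula for the derivative of a Gaussian expectation with respect to covariance parameters (means are zero here, so only the covariance terms survive). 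Here $F(z) = \psi(z_1)\varphi(z_0,z_2)$ does not depend on $z_3$, so $\partial_3 F = 0$ and $\partial_3\partial_m F = 0$; hence only the terms with $\partial_{x'_{\beta-1}}\Gamma_{km}$ for $k,m \in \{0,1,2\}$ remain on the left. (3) On the right-hand side, apply Stein's lemma to the factor $z_3$: $\E[\dot\psi(z_1) z_3 \varphi(z_0,z_2)] = \sum_{k\in\{0,1,2\}} \Cov(z_3,z_k)\,\E[\partial_k(\dot\psi(z_1)\varphi(z_0,z_2))]$. (4) Match the two sides: this requires the algebraic identity $\partial_{x'_{\beta-1}}\Gamma_{km} = \Cov(z_3, z_k)\cdot(\text{something})$ for the relevant $k,m$, which holds because $z_3 = \partial_{x'_{\beta-1}} z_1$ at the level of the GP and hence $\partial_{x'_{\beta-1}}\Cov(z_k, z_1) = \Cov(z_k, z_3)$ and $\partial_{x'_{\beta-1}}\Cov(z_1,z_1) = 2\Cov(z_1,z_3)$, etc.; combined with the fact that each second-derivative term $\E[\partial_k\partial_1 F]$ pairs $\dot\psi$-or-$\psi$ factors correctly via $\partial_1 F = \dot\psi(z_1)\varphi(z_0,z_2)$ so that $\E[\partial_k\partial_1 F] = \E[\partial_k(\dot\psi(z_1)\varphi(z_0,z_2))]$. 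Careful bookkeeping of which index is $1$ versus $0,2$ closes the smooth case. (5) Remove the smoothness assumption on $\psi,\varphi$ by approximation: mollify $\psi,\varphi$ with Gaussian kernels to get $\psi_\varepsilon,\varphi_\varepsilon$ smooth with polynomially-bounded derivatives, note $\dot\psi_\varepsilon \to \dot\psi$ appropriately (using the hypothesis that $\dot\psi$ exists, is polynomially bounded, and integrates back to $\psi$), apply the smooth case, and pass to the limit using uniform polynomial bounds and dominated convergence; the non-degeneracy of $\Gamma$ (or a limiting argument if $\Gamma$ is degenerate) ensures the Gaussian expectations are continuous in the mollification parameter.

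I expect the main obstacle to be step (4)/(5) rather than (1)–(3): the bookkeeping in step (4) — verifying that the covariance-derivative formula for Gaussian expectations lines up term-by-term with the output of Stein's lemma, given that $\varphi$ depends on the two coordinates $z_0$ and $z_2$ and $\psi$ on $z_1$, and that $z_3$ is the formal $x'_{\beta-1}$-derivative of $z_1$ only — is where sign errors and index confusion creep in. And the approximation argument in step (5) needs the hypothesis on $\dot\psi$ precisely because $\psi$ need not be differentiable: one must argue that mollifying $\psi$ produces something whose derivative converges to the \emph{given} $\dot\psi$ in a strong enough sense ($L^1_{\mathrm{loc}}$ against the Gaussian weight), which is exactly what the assumption $\int_0^x \dot\psi(t)\,dt = \psi(x)-\psi(0)$ with $\dot\psi$ polynomially bounded is designed to give. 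A cleaner alternative for (5) that avoids mollifying $\varphi$: only $\psi$ needs its derivative, so one can integrate by parts in the single variable $z_1$ directly using the conditional Gaussian law of $z_1$ given $(z_0,z_2,z_3)$ — conditioning makes $z_1$ a one-dimensional Gaussian, and the fundamental-theorem-of-calculus hypothesis on $\dot\psi$ is exactly what licenses the one-dimensional integration by parts there, after which one un-conditions. I would present the proof along these lines, doing the one-dimensional conditional integration by parts as the heart of the argument and treating differentiation under the integral as a routine domination step.
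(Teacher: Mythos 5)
Your proposal outlines a genuinely different route from the paper's, and your sketch (at least the Price's-theorem-plus-Stein variant) does close. Let me compare. The paper does not mollify and does not invoke Stein's lemma or the covariance-derivative formula for Gaussian expectations. Instead it uses a \emph{specific} Cholesky-style reparametrisation of the $3\times 3$ marginal $\Sigma = \Gamma_{0:2,0:2}$: it writes $z_0 = \sqrt{\Sigma_{00}}\,u_0$ and $z_2$ as a fixed linear combination of $u_0, u_2$, and only $z_1 = H(u) = a_0u_0 + a_1u_1 + a_2u_2$ carries coefficients depending on $x'$. The crucial design choice --- ordering the Cholesky so $z_0, z_2$ come first --- exploits the fact that $\Sigma_{00}$, $\Sigma_{02}$, $\Sigma_{22}$ do not involve $x'$. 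Differentiation under the integral then touches only $\psi(H(u))$, giving $\dot\psi(H(u))\cdot\partial_{x'_{\beta-1}}H(u)$, and the final trick is that appending an independent term $b_3 u_3$ to $\partial_{x'_{\beta-1}}H(u)$ changes nothing (mean zero, independent of the integrand) but reproduces exactly the law $\cN(0,\Gamma)$ on $(z_0, H(u), z_2, H'(u))$; the paper verifies this by computing the four covariances directly, with the symmetry of $K$ entering to show $\Cov(H, H') = \partial_{y'_{\beta-1}}K(x',y')|_{y'=x'}$. Your $\Gamma^{1/2}$ reparametrisation would not give this clean separation --- all four coordinates of $\Gamma^{1/2}\xi$ depend on $x'$ --- so the chain rule would hit both $\psi$ and $\varphi$, which defeats the purpose; the ordering matters. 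Your Price's theorem $+$ Stein's lemma variant does work: on the Price side only the entries $\Sigma_{01}, \Sigma_{11}, \Sigma_{12}$ move under $\partial_{x'_{\beta-1}}$, and the identifications $\partial_{x'_{\beta-1}}\Sigma_{01} = \Gamma_{03}$, $\partial_{x'_{\beta-1}}\Sigma_{11} = 2\Gamma_{13}$ (symmetry of $K$), $\partial_{x'_{\beta-1}}\Sigma_{12} = \Gamma_{23}$ match the Stein expansion of $\E[\dot\psi(z_1)z_3\varphi(z_0,z_2)]$ term for term. What your route buys is a more abstract, modular proof that doesn't require writing down explicit Cholesky coefficients; what it costs is that Price's formula involves second derivatives $\partial_1^2 F = \ddot\psi(z_1)\varphi$, so you genuinely need the mollification step (5) to handle $\psi$ that is only absolutely continuous, whereas the paper's chain-rule argument takes just one derivative of $\psi$ and needs nothing beyond the FTC hypothesis already stated. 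Your conditional one-dimensional integration-by-parts alternative is closest in spirit to the paper's argument (it also takes only one derivative of $\psi$), but the uncondition step --- showing that $\E[(z_1 - \E[z_1\,|\,z_0,z_2,z_3])z_3 / \Var(z_1\,|\,z_0,z_2,z_3)\,|\,z_0,z_1,z_2]$ reproduces $\partial_{x'_{\beta-1}}\log p_\Sigma$ --- is the same flavour of bookkeeping the paper does by building $H'(u)$ explicitly, so I do not think it is cleaner in practice.
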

\begin{proof}
  Fix $x,x' \in \R^{d_0}$ and $\alpha,\beta \in [d_0]$. Let $\Gamma$ be the matrix in the statement of the theorem, and $\Sigma = \Gamma_{-3,-3} \in \R^{3 \times 3}$ the submatrix of $\Gamma$ without the last column and row. 
  Let $I_3$ and $I_4$ be the $3\times 3$ and $4\times 4$ identity matrices, respectively. 

  We will first introduce a function $H(u)$ which gives the reparameterisation of $\Sigma$ for $z_1$, defined as
  \begin{align*}
    a_0 &= \frac{\Sigma_{0, 1}}{\sqrt{\Sigma_{0, 0}}},
    \\
    a_2 &= \frac{\Sigma_{1, 2} - \Sigma_{0,1} \Sigma_{0,2} / \Sigma_{0,0}}{\sqrt{\Sigma_{2,2} - \Sigma_{0,2}^2 / \Sigma_{0,0}}},
    \\
    a_1 &= \sqrt{\Sigma_{1,1} - a_0^2 - a_1^2}, 
    \\
    H(u) &= a_0 u_0 + a_1 u_1 + a_2 u_2.
  \end{align*}
  This gives the reparameterisation as
  \[
    \left( z_0, z_1, z_2 \right) \stackrel{dist}{=} \left( \sqrt{\Sigma_{0,0}} u_0, H(u), \frac{\Sigma_{0,2}}{\sqrt{\Sigma_{0,0}}} u_0 + \sqrt{\Sigma_{2,2} - \frac{\Sigma_{0,2}^2}{\Sigma_{0,0}}} u_2 \right)
  \]
  where $z \sim \cN(0, \Sigma)$ and $u \sim \cN(0, I_3)$.
  
  Applying this reparameterisation to our expectations allows us to remove the dependency of distribution on the inputs,
  \begin{align*}
    &\E_{z \sim \cN(0, \Gamma)} \left[ \psi(z_1) \varphi(z_0, z_2) \right]
    \\
    = &\E_{z \sim \cN(0, \Sigma)} \left[ \psi(z_1) \varphi(z_0, z_2) \right] 
    \\
    = &\E_{u \sim \cN(0, I_3)} \left[ \psi(H(u)) \varphi\left(\sqrt{\Sigma_{0, 0}} u_0, \frac{\Sigma_{0,2}}{\sqrt{\Sigma_{0,0}}} u_0 + \sqrt{\Sigma_{2,2} - \frac{\Sigma_{0,2}^2}{\Sigma_{0,0}} }u_2\right) \right].
  \end{align*}

  Taking the derivative w.r.t. $x_{\beta-1}'$, gives
  \begin{align*}
    &\frac{\partial}{\partial x_{\beta-1}'} \E_{z \sim \cN(0, \Gamma)} \left[ \psi(z_1) \varphi(z_0, z_2) \right] 
    \\
    &= {}\frac{\partial}{\partial x_{\beta-1}'} \E_{u \sim \cN(0, I_3)} \left[ \psi(H(u)) \varphi\left(\sqrt{\Sigma_{0,0}} u_0, \frac{\Sigma_{0,2}}{\sqrt{\Sigma_{0,0}}} u_0 + \sqrt{\Sigma_{2,2} - \frac{\Sigma_{0,2}^2}{\Sigma_{0,0}} }u_2\right) \right] 
    \\
    & = {} \E_{u \sim \cN(0, I_3)} \left[ \frac{\partial}{\partial x_{\beta-1}'} \left(\psi(H(u)) \varphi\left(\sqrt{\Sigma_{0,0}} u_0, \frac{\Sigma_{0,2}}{\sqrt{\Sigma_{0,0}}} u_0 + \sqrt{\Sigma_{2,2} - \frac{\Sigma_{0,2}^2}{\Sigma_{0,0}} }u_2\right)\right) \right] 
    \\
    & = {} \E_{u \sim \cN(0, I_4)} \left[ H'(u) \dot{\varphi}(H(u))\varphi\left(\sqrt{\Sigma_{0,0}} u_0, \frac{\Sigma_{0,2}}{\sqrt{\Sigma_{0,0}}} u_0 + \sqrt{\Sigma_{2,2} - \frac{\Sigma_{0,2}^2}{\Sigma_{0,0}} }u_2\right) \right],
  \end{align*}
  using the fact that $\Sigma_{0,0}, \Sigma_{0,2}, \Sigma_{2,2}$ do not involve $x'$, and $H'(u)$ is defined by differentiating $H(u)$ by $x_{\beta-1}'$ and adding additional $b_3 u_3$, giving
  \begin{align*}
    H'(u) &= b_0 u_0 + b_1 u_1 + b_2 u_2 + b_3 u_3,
    \\
    b_0 &= \frac{1}{\sqrt{\Sigma_{0,0}}} \frac{\partial \Sigma_{0,1}}{\partial x_{\beta-1}'}, 
    \\
    b_2 &= \frac{1}{\sqrt{\Sigma_{2,2} - \Sigma_{0,2}^2 / \Sigma_{0,0}}} \left( \frac{\partial \Sigma_{1,2}}{\partial x_{\beta-1}'} - \frac{\Sigma_{0,2}}{\Sigma_{0,0}} \frac{\partial \Sigma_{0,1}}{\partial x_{\beta-1}'} \right), 
    \\
    b_1 &= \frac{1}{2 a_2} \left( \frac{\partial \Sigma_{1,1}}{\partial x_{\beta-1}'} - 2 a_0 b_0 - 2 a_2 b_2 \right),
    \\
    b_3 &= \sqrt{\Gamma_{3,3} - b_0^2 - b_1^2 - b_2^2}.
  \end{align*}

  Now if $u \sim \cN(0, I_4)$, the 4-dimensional random vector
  \[\left( \sqrt{\Sigma_{0,0} u_0}, H(u), \frac{\Sigma_{0,2}}{\sqrt{\Sigma_{0,0}}} u_0 + \sqrt{\Sigma_{2,2} - \frac{\Sigma_{0,2}^2}{\Sigma_{0,0}}} u_2, H'(u) \right)\]
  has the distribution $\cN(0, \Gamma)$.
  This can be verified by computing the covariance,
  \begin{align*}
    \mathrm{Cov}(\sqrt{\Sigma_{0,0}} u_0, H'(u)) &= \frac{\Sigma_{0,1}}{\partial x_{\beta-1}'}, 
    \\
    \mathrm{Cov}(H(u), H'(u)) &= \frac{1}{2} \frac{\partial \Sigma_{1,1}}{\partial x_{\beta-1}'} = \frac{\partial K(x',y')}{\partial y'_{\beta-1}}\Big|_{y' = x'}, 
    \\
    \mathrm{Cov}\left(\frac{\Sigma_{0,2}}{\sqrt{\Sigma_{0,0}}} u_0 + \sqrt{\Sigma_{2,2} - \frac{\Sigma_{0,2}^2}{\Sigma_{0,0}} }u_2 , H'(u) \right) &= \frac{\partial \Sigma_{1,2}}{\partial x_{\beta-1}'},
    \\
    \mathrm{Var}(H'(u)) &= \frac{\partial^2 K(y,y')}{\partial y_{\beta-1} \partial y'_{\beta-1}}\Big|_{(y,y') = (x',x')},
  \end{align*}
  where the second equality comes from the symmetry of $K$:
  \[\frac{\partial}{\partial x_{\beta-1}'} \Sigma_{1,1} = \left( \frac{\partial}{\partial y_{\beta-1}} K(y, x') \big|_{y=x'} + \frac{\partial}{\partial y_{\beta-1}'} K(x', y') \big|_{y'=x'} \right) = 2 \frac{\partial}{\partial y_{\beta-1}'} K(x', y') \big|_{y'=x'}.\]
  The last equality uses the independence between $u_3$ and $u_0, u_1, u_2$ and the zero mean property of $u_3$.
  
  From what we have just shown, the final result follows:
  \begin{align*}
    &\frac{\partial}{\partial x'_{\beta-1}}\E_{z \sim \cN(0,\Sigma)}\left[\psi(z_1) \cdot \varphi(z_0,z_2)\right] 
    \\
    = &\E_{u \sim \cN(0, I_4)} \left[ \dot{\psi}(H(u)) H'(u) \varphi\left( \sqrt{\Sigma_{1,1} u_0}, \frac{\Sigma_{0,2}}{\sqrt{\Sigma_{0,0}}} u_0 + \sqrt{\Sigma_{2,2} - \frac{\Sigma_{0,2}^2}{\Sigma_{0,0}} }u_2 \right) \right]
    \\
    = &\E_{z \sim \cN(0, \Gamma)} \left[\dot{\psi}(z_1) \cdot z_3 \cdot \varphi(z_0,z_2)\right].
  \end{align*}
\end{proof}

Using Theorem~\ref{thm:exchange-theorem}, we prove Theorem~\ref{thm:JacobianNNGPCorrespondence}. 
Pick 
$\alpha,\beta \in [d_0]$, and $l \in [L] \cup \{0\}$. We have to show that for all $x,x' \in \R^{d_0}$,
\begin{align}
  \Sigma^{(l)}(x, x')_{\alpha0} &= \frac{\partial}{\partial x_{\alpha-1}}\Sigma^{(l)}(x, x')_{00}, \label{eqn:equivalence-two-JacobianNNGPsa}
  \\
  \Sigma^{(l)}(x, x')_{0\beta} &= \frac{\partial}{\partial x_{\beta-1}'}\Sigma^{(l)}(x, x')_{00}, \label{eqn:equivalence-two-JacobianNNGPsb}
  \\
  \Sigma^{(l)}(x, x')_{\alpha\beta} &= \frac{\partial^2}{\partial x_{\alpha-1} \partial x_{\beta-1}'}\Sigma^{(l)}(x, x')_{00}. \label{eqn:equivalence-two-JacobianNNGPsab}
\end{align}

The proof is given by induction on $l$. When $l = 0$, all the three desired equations hold for all $x,x' \in \R^{d_0}$, since differentiation of $\langle x, x'\rangle$ gives LHS.
For the induction step, assume that the three equations \eqref{eqn:equivalence-two-JacobianNNGPsa}, \eqref{eqn:equivalence-two-JacobianNNGPsb}, and \eqref{eqn:equivalence-two-JacobianNNGPsab} hold for all $x,x' \in \R^{d_0}$ for the $l$-th layer.
We need to show that the equations also hold for all $x,x' \in \R^{d_0}$ for the $(l+1)$-th layer. 
First, we prove that the equation \eqref{eqn:equivalence-two-JacobianNNGPsb} holds for all $x,x' \in \R^{d_0}$; the equation \eqref{eqn:equivalence-two-JacobianNNGPsa} can be proved similarly. 
We define the $4\times 4$ matrix $\Gamma$ as in Theorem~\ref{thm:exchange-theorem}, with the kernel $(x, x') \mapsto \Sigma^{(l)}(x, x')_{00}$, where following equality can be proven with the induction hypothesis,
\[
  \Gamma = \begin{pmatrix}
    \Sigma^{(l)}(x, x)_{00} & \Sigma^{(l)}(x, x')_{00} & \Sigma^{(l)}(x, x)_{0\alpha} & \Sigma^{(l)}(x, x')_{0\beta} \\
    \Sigma^{(l)}(x, x')_{00} & \Sigma^{(l)}(x', x')_{00} & \Sigma^{(l)}(x, x')_{\alpha 0} & \Sigma^{(l)}(x', x')_{0\beta} \\
    \Sigma^{(l)}(x, x)_{0\alpha} & \Sigma^{(l)}(x, x')_{\alpha 0} & \Sigma^{(l)}(x, x)_{\alpha \alpha} & \Sigma^{(l)}(x, x')_{\alpha \beta} \\
    \Sigma^{(l)}(x, x')_{0\beta} & \Sigma^{(l)}(x', x')_{0\beta} & \Sigma^{(l)}(x, x')_{\alpha \beta} & \Sigma^{(l)}(x', x')_{\beta\beta}
  \end{pmatrix}.
\]
Using this equality and Theorem~\ref{thm:exchange-theorem} with $\psi(z_1) = \phi(z_1)$ and $\varphi(z_0, z_2) = \phi(z_0)$,
\begin{align*}
  \Sigma^{(l+1)}(x, x')_{0 \beta} 
  &= \E_g \left[\phi (g(x)) \dot{\phi}(g(x')) J(g)(x')_{\beta-1} \right] 
  \\
  &= \E_{u \sim \cN(0, \Gamma)}\left[ \phi(u_0) \dot{\phi}(u_1) u_3 \right]
  \\
  &= \frac{\partial}{\partial x_{\beta-1}'}\E_{u \sim \cN(0, \Gamma)} \left[\phi(u_0) \phi(u_1)\right]
  \\
  &= \frac{\partial}{\partial x_{\beta-1}'} \E_{g} \left[ \phi(g(x)) \phi(g(x')) \right]
  \\
  &= \frac{\partial}{\partial x_{\beta-1}'} \Sigma^{(l+1)}(x, x')_{00}
\end{align*}
We can prove the equation \eqref{eqn:equivalence-two-JacobianNNGPsab} with a similar method.
This time, we define the matrix $\Xi$ which is a matrix defined by permutation of the rows and columns of $\Gamma$ by the bijection that maps $(1, 2, 3, 4)$ to $(2, 1, 4, 3)$, so that the following holds:
\[
  (u_0, u_1, u_2, u_3) \stackrel{dist}{=} (w_1, w_0, w_3, w_2)
\] 
where $u \sim \cN(0, \Gamma)$ and $w \sim \cN(0, \Xi)$.
Then with Theorem~\ref{thm:exchange-theorem} with the choice of $\psi(z_1) = \phi(z_1)$ and $\varphi(z_0, z_2) = \dot{\phi}(z_0) z_2$ and the second equality \eqref{eqn:equivalence-two-JacobianNNGPsb} just proven, the third equality can be shown as
\begin{align*}
  \Sigma^{(l+1)}(x, x')_{\alpha \beta} 
  &= \E_g \left[ \dot{\phi}(g(x)) \dot{g}(x) \dot{\phi}(g(x')) \dot{g}(x') \right]
  \\
  &= \E_{w \sim \cN(0, \Xi)} \left[ \dot{\phi}(w_1) w_3 \dot{\phi}(w_0) w_2 \right]
  \\
  &= \frac{\partial}{\partial x_{\alpha-1}} \E_{w \sim \cN(0, \Xi)} \left[ \phi(w_1) \dot{\phi}(w_0) w_2 \right]
  \\
  &= \frac{\partial}{\partial x_{\alpha-1}} \E_g \left[ \phi(g(x)) \dot{\phi}(g(x')) \dot{g}(x') \right]
  \\
  &= \frac{\partial}{\partial x_{\alpha-1}} \Sigma^{(l+1)}(x, x')_{0\beta}
  \\
  &= \frac{\partial^2}{\partial x_{\alpha-1} \partial x_{\beta-1}'} \Sigma^{(l+1)}(x, x')_{\alpha \beta}.
\end{align*}

\section{Remark on Assumption~\ref{assm:NTK-smallest-eigenvalue}}
In this section, we illustrate the difficulty of finding a sufficient condition for Assumption~\ref{assm:NTK-smallest-eigenvalue} by proving that one of the standard conditions for the positive definiteness of the NTK kernel does not work for the JNTK kernel. We provide a counterexample for the condition.
To simplify the presentation, we do not impose the assumptions in Assumption~\ref{assm:activation-assumption}.
The result for the normalised case can be obtained by scaling every matrix by $\frac{1}{3}$.

We first review two results that guarantee the positive definiteness of the NTK kernel.
\begin{theorem}[Proposition F.1 of \cite{du2019}]
  Suppose activation is analytic and non-polynomial.
  Then, the standard NTK kernel
  \[
    \Theta(x^{(1:N)}, x^{(1:N)})_{00}
  \]
  has positive eigenvalues as long as no two data satisfy $x^{(i)} = c x^{(j)}$ for some $c \in \mathbb{R}$.
\end{theorem}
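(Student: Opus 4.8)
The plan is to establish strict positive-definiteness of $\Theta(x^{(1:N)},x^{(1:N)})_{00}$ in two stages: first a Schur-product reduction to a single depth-layer summand, and then an induction over depth that reduces everything to the linear independence of $\{\phi(\langle w,x^{(i)}\rangle)\}_i$ in $L^2(\mathcal N(0,I_{d_0}))$, which is settled by a power-series / Vandermonde argument exploiting that $\phi$ is \emph{analytic} and non-polynomial.

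For the first stage, recall from Theorem~\ref{thm:JacobianNTKInitialisation} that $\Theta(x,x')_{00}=\sum_{l=0}^{L}\Sigma^{(l)}(x,x')_{00}\,\Delta^{(l)}(x,x')$, and that each factor is a genuine covariance: $\Sigma^{(l)}(\cdot,\cdot)_{00}$ is the Gram function of $x\mapsto\phi(g^{(l)}(x))$ for a centred GP, and $\Delta^{(l)}(\cdot,\cdot)=\prod_{u=l}^{L-1}\E_{g^{(u)}}[\dot\phi(g^{(u)}(x))\dot\phi(g^{(u)}(x'))]$ is a product of Gram functions; hence each of the $N\times N$ matrices $[\Sigma^{(l)}(x^{(i)},x^{(j)})_{00}]$ and $[\Delta^{(l)}(x^{(i)},x^{(j)})]$ is positive semi-definite. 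By the Schur product theorem every Hadamard product $[\Sigma^{(l)}_{00}]\odot[\Delta^{(l)}]$ is PSD, so $\Theta(x^{(1:N)},x^{(1:N)})_{00}$ is a sum of PSD matrices and its null space is the intersection of the null spaces of the summands. Taking $l=L$, where $\Delta^{(L)}$ is the empty product $\equiv 1$, the corresponding summand is exactly $K:=[\Sigma^{(L)}(x^{(i)},x^{(j)})_{00}]$, so it suffices to show $K\succ 0$.

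For the second stage I would prove $[\Sigma^{(l)}(x^{(i)},x^{(j)})_{00}]\succ 0$ by induction on $l\in[L]$. In each step $\Sigma^{(l)}(x,x')_{00}=\E_{g}[\phi(g(x))\phi(g(x'))]$ with $g$ a centred GP of kernel $\Sigma^{(l-1)}_{00}$, so positive-definiteness of the Gram matrix is equivalent to linear independence of $\phi(g(x^{(1)})),\ldots,\phi(g(x^{(N)}))$ in $L^2$. For the base case $l=1$, $g(x)=\langle w,x\rangle$ with $w\sim\mathcal N(0,I_{d_0})$; assuming $\sum_i a_i\phi(\langle w,x^{(i)}\rangle)=0$ almost surely, real-analyticity of $\phi$ upgrades this to an identity on all of $\R^{d_0}$ (the zero set of a nonzero real-analytic function is Lebesgue-null). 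Restricting to a line $w=tu$ gives $\sum_i a_i\phi(t\langle u,x^{(i)}\rangle)=0$ for all $t$; here the hypothesis "no $x^{(i)}=cx^{(j)}$" is used to pick $u$ for which the scalars $\lambda_i:=\langle u,x^{(i)}\rangle$ are nonzero and pairwise distinct in absolute value, since each forbidden coincidence $\lambda_i=\pm\lambda_j$ (or $\lambda_i=0$) cuts out a proper subspace of $u$'s that is avoidable precisely because $x^{(i)}\neq\pm x^{(j)}$ (and $x^{(i)}\neq 0$). Matching Taylor coefficients of $t^k$ near $t=0$ yields $c_k\sum_i a_i\lambda_i^{\,k}=0$ for the Taylor coefficients $c_k$ of $\phi$, and since $\phi$ is non-polynomial there are infinitely many $k$ with $c_k\neq 0$; dividing by $|\lambda_{i^\ast}|^{k}$ for the index $i^\ast$ with largest $|\lambda_{i^\ast}|$ and letting $k\to\infty$ along this infinite set forces $a_{i^\ast}=0$, and inductively $a=0$. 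For the inductive step, given $[\Sigma^{(l-1)}_{00}]\succ 0$ the Gaussian vector $(g(x^{(1)}),\ldots,g(x^{(N)}))$ is nondegenerate, hence has a density of full support and pairwise-distinct coordinates almost surely; then $\sum_i a_i\phi(g(x^{(i)}))=0$ a.s.\ becomes, by analyticity, an identity on $\R^N$, and freezing all but one coordinate shows each $a_i\phi(\cdot)$ is constant, so $a=0$ because $\phi$ is non-constant.

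The routine parts are the Schur-product reduction and the nondegenerate-Gaussian step. The crux is the base case: (a) passing from "almost surely zero" to "identically zero", where real-analyticity of $\phi$ and positivity of its radius of convergence at $0$ are essential, and (b) extracting $a=0$ from $\sum_i a_i\lambda_i^{\,k}=0$ holding only for the infinite index set $\{k:c_k\neq 0\}$ rather than for all $k$ — this is exactly why one needs the $\lambda_i$ to have distinct absolute values, and hence why the hypothesis must exclude both $x^{(i)}=x^{(j)}$ and $x^{(i)}=-x^{(j)}$. This is the structure of the argument in \cite{du2019}; our Assumption~\ref{assm:dataset-assumption} (unit-norm inputs) makes the forbidden set precisely $\{c=\pm1\}$.
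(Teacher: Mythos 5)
The paper does not prove this result: it is quoted, without proof, as Proposition~F.1 of \cite{du2019}, purely to motivate the observation that no analogous clean sufficient condition is known for the JNTK Gram matrix (the surrounding section then constructs a polynomial-activation counterexample for the JNTK). There is therefore no in-paper argument to compare against, and I can only assess your reconstruction on its own terms.

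Judged that way, your proof is correct and is essentially the standard argument behind \cite{du2019}'s claim. The Schur-product decomposition of $\Theta(\cdot,\cdot)_{00}$ into PSD summands and the reduction to the top-layer summand $\Sigma^{(L)}(\cdot,\cdot)_{00}$ (whose $\Delta^{(L)}$ factor is the empty product $\equiv 1$) is valid; the depth induction using nondegeneracy of the Gaussian feature vector to upgrade almost-sure vanishing to identical vanishing, then freezing coordinates, is valid; and the base case correctly uses analyticity twice (the measure-zero zero-set of a nonzero real-analytic function, and the convergent Taylor expansion of $\phi$ at $0$) and non-polynomiality once (infinitely many nonzero Taylor coefficients $c_k$). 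One cosmetic fix in the dominant-term step: $\lambda_{i^*}^{k}/|\lambda_{i^*}|^{k}$ oscillates when $\lambda_{i^*}<0$, so phrase the extraction as the inequality $|a_{i^*}|\le\sum_{i\ne i^*}|a_i|\,\bigl|\lambda_i/\lambda_{i^*}\bigr|^{k}\to 0$ along the infinite set $\{k:c_k\ne 0\}$, which yields $a_{i^*}=0$ without needing the sign to stabilise. Your closing observation that the generic-direction argument only needs $x^{(i)}\ne\pm x^{(j)}$ (coinciding with the non-parallel hypothesis under the paper's unit-norm assumption) is accurate; the non-parallel form is simply the scale-invariant statement.
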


\begin{theorem}[Adapted from Theorem 3.2 of \cite{nguyen2021}] \label{thm:hermitedefinite}
  Suppose the Hermite expansion of activation
  \[
    \sigma(x) = \frac{1}{\sqrt{2\pi}}\sum_{r=0}^\infty \mu_r(\sigma) H_r(x) \exp(-x^2/2)
  \]
  has infinitely many nonzero $\mu_r(\sigma)$.
  Then, the standard NTK kernel
  \[
    \Theta(x^{(1:N)}, x^{(1:N)})_{00}
  \]
  has positive eigenvalues as long as no two data satisfy $x^{(i)} = c x^{(j)}$ for some $c \in \mathbb{R}$.
\end{theorem}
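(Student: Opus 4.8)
The plan is to prove that the $N\times N$ matrix $H := \big[\Theta(x^{(i)},x^{(j)})_{00}\big]_{i,j}$ is positive definite by exhibiting one strictly positive-definite summand of it and discarding the rest. Recall from Theorem~\ref{thm:JacobianNTKInitialisation} that $\Theta(x,x')_{00} = \sum_{l=0}^{L}\Sigma^{(l)}(x,x')_{00}\,\Delta^{(l)}(x,x')$, where $\Delta^{(l)}(x,x') = \prod_{u=l}^{L-1}\E_{g^{(u)}}\big[\dot\phi(g^{(u)}(x))\dot\phi(g^{(u)}(x'))\big]$. Every $\Sigma^{(l)}(\cdot,\cdot)_{00}$ is a covariance kernel, hence positive semidefinite, and each factor $\E_{g^{(u)}}[\dot\phi(g^{(u)}(x))\dot\phi(g^{(u)}(x'))]$ is an inner product of $L^2$ elements built from one GP realisation, hence also positive semidefinite; so by the Schur product theorem each summand $\Sigma^{(l)}_{00}\Delta^{(l)}$ is positive semidefinite on $x^{(1:N)}$. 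For $l=L$ the product $\Delta^{(L)}$ is empty and equals $1$, so $H = \big[\Sigma^{(L)}(x^{(i)},x^{(j)})_{00}\big] + (\text{positive semidefinite})$, and it suffices to show that $S := \big[\Sigma^{(L)}(x^{(i)},x^{(j)})_{00}\big]$ is strictly positive definite.

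\textbf{Step 1 (reduce to a dot-product kernel).} Since the inputs have unit norm, $\Sigma^{(1)}(x,x')_{00} = \E_{w\sim\cN(0,I)}[\phi(w^\intercal x)\phi(w^\intercal x')]$ depends on $x,x'$ only through $\rho := \langle x,x'\rangle$, and Mehler's formula identifies it with the dual activation $\check\phi(\rho) = \sum_{r\ge 0}\hat\mu_r(\phi)^2\rho^r$, whose coefficients are the squared (normalised) Hermite coefficients of $\phi$, hence nonnegative with infinitely many strictly positive by hypothesis. Iterating, $\Sigma^{(l)}(x,x')_{00} = g_l(\rho)$, where $g_l$ is obtained from $g_{l-1}$ by rescaling its argument and post-composing with a dual-activation map $\check\phi_{l-1}$; by the same Mehler identity each $\check\phi_{l-1}$ has a nonnegative-coefficient power series with infinitely many positive coefficients, and composition of nonnegative-coefficient power series preserves nonnegativity of coefficients. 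Tracking the lowest-degree non-constant monomial through the layers shows $g_L$ still has infinitely many strictly positive Taylor coefficients, so $\Sigma^{(L)}(x,x')_{00} = \sum_{r\ge 0}c_r\langle x,x'\rangle^r$ with $c_r\ge 0$ and $c_r>0$ for $r$ in an infinite set $R$.

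\textbf{Step 2 (a single Hadamard power suffices).} Choose $r_0 \in R$ with $r_0 \ge N-1$. Then $S \succeq c_{r_0}\big[\langle x^{(i)},x^{(j)}\rangle^{r_0}\big]$, and $\big[\langle x^{(i)},x^{(j)}\rangle^{r_0}\big]$ is the Gram matrix of the Veronese images $(x^{(i)})^{\otimes r_0}$, so it is positive definite exactly when these $N$ tensors are linearly independent. For linear independence I would use that the data are pairwise non-proportional: for each $i\ne j$ there is a linear form $\ell_{ij}$ with $\ell_{ij}(x^{(j)})=0\ne\ell_{ij}(x^{(i)})$ (the forms vanishing at $x^{(j)}$ constitute a hyperplane on which $\ell\mapsto\ell(x^{(i)})$ is not identically zero, since $x^{(i)}\notin\mathrm{span}(x^{(j)})$), and a further linear form $\ell$ with $\ell(x^{(i)})\ne 0$ for all $i$; then $p_i := \ell^{\,r_0-N+1}\prod_{j\ne i}\ell_{ij}$ is homogeneous of degree $r_0$ with $p_i(x^{(j)})=0$ for $j\ne i$ and $p_i(x^{(i)})\ne 0$. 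Writing $p_i(x)=\langle P_i, x^{\otimes r_0}\rangle$ and pairing $\sum_i a_i(x^{(i)})^{\otimes r_0}=0$ with $P_j$ forces $a_j=0$. Hence $\big[\langle x^{(i)},x^{(j)}\rangle^{r_0}\big]\succ 0$, so $S\succ 0$ and therefore $H\succ 0$.

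\textbf{Main obstacle.} The delicate part is Step 1: one must verify that pushing the dual-activation map through all $L$ layers does not collapse $\Sigma^{(L)}_{00}$ to a polynomial in $\langle x,x'\rangle$; keeping every coefficient sequence nonnegative is what rules out cancellation, and one still has to treat activations whose low-order Hermite coefficients vanish (where the surviving monomials lie at degrees that are multiples of the lowest non-constant degree). If one drops the unit-norm assumption, Step 1 becomes strictly harder because $\Sigma^{(l)}_{00}$ is then not a pure dot-product kernel; there the Veronese argument of Step 2 is replaced by the analogous feature-map argument, using that $\Sigma^{(l)}_{00}$ still admits a feature map built from tensor powers of the inputs with norm-dependent weights, together with the same separating-polynomial construction. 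In either setting the substance of the proof is the interaction between "infinitely many nonzero Hermite coefficients" and strict positive definiteness on a non-degenerate finite data set, which is precisely the content of Theorem~3.2 of \cite{nguyen2021} being adapted here.
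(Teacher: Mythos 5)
Your argument is correct, but it follows a genuinely different route from the paper. The paper's proof is essentially a citation: it quotes the explicit eigenvalue lower bound from the proof of Theorem~3.2 of \citet{nguyen2021}, namely $\lambda_{\min}(\Theta(x^{(1:N)},x^{(1:N)})_{00}) \ge \mu_r(\sigma)^2\big(\min_i\|x^{(i)}\|^{2r}-(N-1)\max_{i\ne j}|\langle x^{(i)},x^{(j)}\rangle|^r\big)/\max_i\|x^{(i)}\|^{2(r-1)}$, and then picks $r$ large enough that $\mu_r(\sigma)\ne 0$ and $(N-1)\max_{i\ne j}|\langle x^{(i)},x^{(j)}\rangle|^r<1$ (a diagonal-dominance argument, valid because unit norms and non-parallelism give $|\langle x^{(i)},x^{(j)}\rangle|<1$). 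You instead give a self-contained qualitative proof: drop all PSD summands of the NTK except the top-layer NNGP term (Schur product theorem plus the empty product $\Delta^{(L)}=1$), expand $\Sigma^{(L)}_{00}$ on the sphere as a power series in $\langle x,x'\rangle$ with nonnegative coefficients via the Hermite/dual-activation composition, and prove strict positive definiteness of a single Hadamard power $r_0\ge N-1$ through linear independence of the tensor powers $(x^{(i)})^{\otimes r_0}$, constructed with separating homogeneous polynomials. Both proofs turn on the same lever --- the hypothesis supplies arbitrarily large degrees $r$ with nonzero coefficient --- but yours trades the paper's quantitative eigenvalue bound for independence from the internals of \citet{nguyen2021}, at the cost of length and of two steps you should tighten if you write it up: the claim that composing dual activations across layers keeps infinitely many strictly positive coefficients needs the explicit no-cancellation induction (nonnegative coefficients, unbounded support closed under the composition), and the reduction to a pure dot-product kernel silently uses both the unit-norm data assumption and the normalisation $\E_{z\sim\cN(0,1)}[\phi(z)^2]=1$ (or, without it, the fact that $\phi(c\,\cdot)$ is still non-polynomial), which is worth stating since this appendix section nominally relaxes Assumption~\ref{assm:activation-assumption}.
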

\begin{proof}
  We use the following inequality stated in the proof of Theorem 3.2 of \cite{nguyen2021}:
  \[
    \lambda_{\min}(\Theta(x^{(1:N)}, x^{(1:N)})_{00}) \ge \mu_r(\sigma)^2\frac{\min_{i \in [N]} \|x^{(i)}\|^{2r} - (N-1) \max_{i \neq j} |\langle x^{(i)}, x^{(j)} \rangle|^r}{\max_{i \in [N]} \|x^{(i)}\|^{2(r-1)}}.
  \]
  Then, we can choose $r$ large enough such that $\mu_r(\sigma) \neq 0$ and $\max_{i \neq j} |\langle x^{(i)}, x^{(j)} \rangle|^r < \frac{1}{N-1}$, which guarantees the positive definiteness of the NTK kernel.
\end{proof}

Our counterexample is related to Theorem~\ref{thm:hermitedefinite}. Consider neural networks with the activation function $\sigma(x) = x^2$, which is a polynomial function.
This activation satisfies the condition of Theorem~\ref{thm:hermitedefinite}, which requires that the Hermite coefficients of the activation be non-zero for infinitely many $r$.
It is because otherwise, the summation would include $\exp(-x^2/2)$ term, which is not a polynomial.

\begin{theorem}
  For any input dimension $d_0$ and square activation $\sigma(x) = x^2$, there exists dataset $\{x^{(i)}\}_{i=1}^N$ such that
  \[
    \lambda_{\min}\left( \Theta(x^{(1:N)}, x^{(1:N)}) \right) = 0
  \]
  but
  \[
    \lambda_{\min}\left( \Theta(x^{(1:N)}, x^{(1:N)})_{00} \right) > 0
  \]
  for shallow neural networks, i.e., $L = 1$.
\end{theorem}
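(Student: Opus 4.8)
The plan is to show that the required dataset can be taken to be a single point, and that the Jacobian block of the JNTK is forced to be rank-deficient because a shallow square-activation network is homogeneous of degree $2$ in its input.

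First I would compute $\Theta(x,x')_{00}$ for $L=1$ and $\phi(z)=z^2$ from the formula in Theorem~\ref{thm:JacobianNTKInitialisation}. Using the inductive definitions of Theorem~\ref{thm:JacobianNNGPInitialisation} and Wick's formula, one gets $\Sigma^{(0)}(x,x')_{00} = \langle x,x'\rangle$, then $\Sigma^{(1)}(x,x')_{00} = \E_g[g(x)^2 g(x')^2] = \|x\|^2\|x'\|^2 + 2\langle x,x'\rangle^2$ with $g$ the zero-mean GP of kernel $\langle\cdot,\cdot\rangle$, and $\Delta^{(0)}(x,x') = \E_g[\dot{\phi}(g(x))\dot{\phi}(g(x'))] = 4\langle x,x'\rangle$, $\Delta^{(1)}(x,x') = 1$ (empty product). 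Hence
\[
  \Theta(x,x')_{00} = \|x\|^2\|x'\|^2 + 6\langle x,x'\rangle^2,
\]
which is homogeneous of degree $2$ separately in $x$ and in $x'$.

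Next I would turn this homogeneity into a null vector of the full kernel matrix. Applying Euler's identity to $x'\mapsto\Theta(x,x')_{00}$, namely $\sum_{\alpha\in[d_0]} x'_{\alpha-1}\,\partial_{x'_{\alpha-1}}\Theta(x,x')_{00} = 2\,\Theta(x,x')_{00}$, then differentiating in $x_{\beta-1}$ and setting $x'=x$, together with the differentiation relations $\Theta(x,x')_{\alpha\beta} = \partial_{x_{\alpha-1}}\partial_{x'_{\beta-1}}\Theta(x,x')_{00}$, $\Theta(x,x')_{0\beta} = \partial_{x'_{\beta-1}}\Theta(x,x')_{00}$, $\Theta(x,x')_{\alpha0} = \partial_{x_{\alpha-1}}\Theta(x,x')_{00}$ of Theorem~\ref{thm:JacobianNTKInitialisation}, yields $\sum_{\alpha\in[d_0]} x_{\alpha-1}\,\Theta(x,x)_{\beta\alpha} = 2\,\Theta(x,x)_{\beta0}$ for every $\beta\in\{0\}\cup[d_0]$. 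In other words, $\Theta(x,x)\,u = 0$ for $u \defeq (2,-x_0,\dots,-x_{d_0-1})^\intercal\in\R^{d_0+1}$, and since $u\neq0$ the matrix $\Theta(x,x)$ is singular. (Equivalently, a shallow square-activation network satisfies $f_{d,\theta}(tx)=t^2 f_{d,\theta}(x)$, so $\sum_{\alpha}x_{\alpha-1}J(f_{d,\theta})(x)_{\alpha-1}=2f_{d,\theta}(x)$ as an identity in $\theta$; differentiating in $\theta$ makes the gradient features at $x$ linearly dependent, so the finite JNTK at $x$ is singular at every width and the limit inherits this.)

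Finally, taking $N=1$ and $x^{(1)}=e_0$ (any nonzero input works), the previous paragraph gives $\lambda_{\min}(\Theta(x^{(1:N)},x^{(1:N)}))=0$, while $\Theta(x^{(1:N)},x^{(1:N)})_{00} = \|e_0\|^4 + 6\|e_0\|^4 = 7>0$, so its smallest eigenvalue is $7>0$; the normalised-activation statement follows from the global $\tfrac13$ rescaling, which does not affect which eigenvalues vanish. I expect the only delicate point to be the index bookkeeping in the Euler/differentiation argument above (or, for the parameter-space variant, verifying that the linear relation among gradient features passes to the limiting kernel), the rest being a short explicit computation. As a by-product, the argument shows that $\Theta(x^{(1:N)},x^{(1:N)})$ is singular for \emph{every} dataset when $L=1$ and $\phi(z)=z^2$.
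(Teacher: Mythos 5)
Your proof is correct, but it takes a genuinely different and more elegant route than the paper, so let me compare the two.

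The paper argues by decomposing $\Theta = \Sigma + \dot{\Sigma}$ into the layer-$2$ and layer-$1$ contributions and then bounding ranks separately: Lemma~\ref{lem:nngprank} gives $\operatorname{rank}\Sigma(x^{(1:N)},x^{(1:N)}) \le N(d_0+1) - 2N + 1$ via explicit null vectors $\{v_i\}_{i=1}^N \cup \{w_j\}_{j=2}^N$, and Lemma~\ref{lem:dotnngprank} gives $\operatorname{rank}\dot{\Sigma}(x^{(1:N)},x^{(1:N)}) \le d_0$ via the low-rank factorisation $\dot{\Sigma} = \sum_\gamma 4 v_\gamma v_\gamma^\intercal$. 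Singularity of the sum then follows only under the extra condition $N > (d_0+1)/2$. Your argument goes to the heart of the matter: the $L=1$ square-activation network is homogeneous of degree $2$ in its input, so Euler's identity $\sum_\alpha x_{\alpha-1} J(f_{d,\theta})(x)_{\alpha-1} = 2 f_{d,\theta}(x)$ holds as an identity in $\theta$ at every width $d$; differentiating in $\theta$ makes the parameter gradients that define the finite JNTK linearly dependent with the \emph{$d$-independent} coefficient vector $(2,-x_0,\dots,-x_{d_0-1})^\intercal$, and this explicit null vector passes unchanged to the limiting kernel. The same vector (placed block-wise) kills $\Theta(x^{(1:N)}, x^{(1:N)})$ for \emph{every} dataset, so you get a strictly stronger conclusion than the paper with a shorter proof, and you may take $N=1$, making the $\Theta_{00}$-positivity part of the claim trivial. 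The Euler-identity derivation $\sum_\alpha x_{\alpha-1}\Theta(x,x)_{\beta\alpha} = 2\,\Theta(x,x)_{\beta 0}$ relies only on bi-homogeneity of $\Theta(x,x')_{00}$ and on the differentiation relations $\Theta_{\alpha\beta} = \partial_{x_{\alpha-1}}\partial_{x'_{\beta-1}}\Theta_{00}$ of Theorem~\ref{thm:JacobianNTKInitialisation}, both of which are available; the bookkeeping you flagged as delicate does in fact go through. Two minor remarks: (i) your closed form $\Theta(x,x')_{00} = \|x\|^2\|x'\|^2 + 6\langle x,x'\rangle^2$ is the one that follows from the recursive JNTK formula ($\Sigma^{(0)}\Gamma^{(0)} + \Sigma^{(1)}$, giving the coefficient $4+2=6$ on $\langle x,x'\rangle^2$), whereas the paper's displayed $\dot{\Sigma}_{00} = 4\langle x,y\rangle$ appears to have dropped the $\Sigma^{(0)}$ multiplier; your version is the consistent one, and since the paper's rank lemmas operate on $\Sigma$ and $\dot{\Sigma}$ individually this slip does not affect the paper's conclusion either. (ii) Your argument extends verbatim to any positively homogeneous activation (degree $k$ gives null vector $(k,-x)^\intercal$), whereas the paper's rank-counting is specific to the quadratic.
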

\begin{proof}
  We first give the analytic form of the JNTK kernel in this case, which can be computed via Isserlis' theorem.
  \begin{align*}
    \Theta(x, y) &= \Sigma(x, y) + \dot{\Sigma}(x, y),
    \\
    \Sigma(x, y)_{00} &= \mathbb{E}_{w \sim \mathcal{N}(0, I_d)} \left[ \langle w, x \rangle^2 \langle w, y \rangle^2 \right]
    \\
    &= 2 \Cov(\langle w, x \rangle, \langle w, y \rangle)^2 + \Var(\langle w, x \rangle) \Var(\langle w, y \rangle)
    \\
    &= 2 \langle x, y \rangle^2 + \|x\|^2 \|y\|^2,
    \\
    \Sigma(x, y)_{\alpha 0} &= \frac{\partial}{\partial x_{\alpha-1}} \Sigma(x, y)_{00}
    \\
    &= 4 \langle x, y \rangle y_{\alpha-1} + 2 x_{\alpha-1} \|y\|^2,
    \\
    \Sigma(x, y)_{0 \beta} &= \frac{\partial}{\partial y_{\beta-1}} \Sigma(x, y)_{00}
    \\
    &= 4 \langle x, y \rangle x_{\beta-1} + 2 y_{\beta-1} \|x\|^2,
    \\
    \Sigma(x, y)_{\alpha \beta} &= \frac{\partial^2}{\partial x_{\alpha-1} \partial y_{\beta-1}} \Sigma(x, y)_{00}
    \\
    &= 4 x_{\beta - 1} y_{\alpha-1} + 4 x_{\alpha-1} y_{\beta-1} + 4 \langle x, y \rangle \mathbb{I}_{\alpha = \beta},
    \\
    \dot{\Sigma}(x, y)_{00} &= \mathbb{E}_{w \sim \mathcal{N}(0, I_d)} \left[ 4 \langle w, x \rangle \langle w, y \rangle \right]
    \\
    &= 4 \langle x, y \rangle,
    \\
    \dot{\Sigma}(x, y)_{\alpha 0} &= \frac{\partial}{\partial x_{\alpha-1}} \dot{\Sigma}(x, y)_{00}
    \\
    &= 4 y_{\alpha-1},
    \\
    \dot{\Sigma}(x, y)_{0 \beta} &= \frac{\partial}{\partial y_{\beta-1}} \dot{\Sigma}(x, y)_{00}
    \\
    &= 4 x_{\beta-1},
    \\
    \dot{\Sigma}(x, y)_{\alpha \beta} &= \frac{\partial^2}{\partial x_{\alpha-1} \partial y_{\beta-1}} \dot{\Sigma}(x, y)_{00}
    \\
    &= 4 \mathbb{I}_{\alpha = \beta}.
  \end{align*}
  In matrix form, we have
  \begin{align*}
    \Sigma(x, y) &= \begin{pmatrix}
      2 \langle x, y \rangle^2 + \|x\|^2 \|y\|^2 & 4 \langle x, y \rangle x^\intercal + 2 \|x\|^2 y^\intercal
      \\
      4 \langle x, y \rangle y + 2 \|y\|^2 x & 4 x y^\intercal + 4 y x^\intercal + 4 \langle x, y\rangle I_d
    \end{pmatrix},
    \\
    \dot{\Sigma}(x, y) &= \begin{pmatrix}
      4 \langle x, y \rangle & 4x^\intercal
      \\
      4y & 4 I_d
    \end{pmatrix}.
  \end{align*}
  We will show the result by showing
  \[
    \mathrm{rank}\ \Theta(x^{(1:N)}, x^{(1:N)}) \le \mathrm{rank}\ \Sigma(x^{(1:N)}, x^{(1:N)}) + \mathrm{rank}\ \dot{\Sigma}(x^{(1:N)}, x^{(1:N)}) < N (d_0 + 1).
  \]
  We can bound the rank of individual matrices as
  \begin{align*}
    \mathrm{rank}\ \Sigma(x^{(1:N)}, x^{(1:N)}) &\le N (d_0 + 1) - 2N + 1,
    \\
    \mathrm{rank}\ \dot{\Sigma}(x^{(1:N)}, x^{(1:N)}) &\le d_0,
  \end{align*}
  for some $x^{(1:N)}$ which will be proven in the next two lemmas that follow.
  So, setting $N > (d_0 + 1) / 2$ shows that $\Theta(x^{(1:N)}, x^{(1:N)})$ is singular, therefore $\lambda_{\min}(\Theta(x^{(1:N)}, x^{(1:N)})) = 0$.
\end{proof}

\begin{lemma} \label{lem:nngprank}
  For any $x^{(1:N)}$ such that no $x^{(i)}$ is zero vector, we have at least $2 N - 1$ linearly independent vectors that
  \[
    \Sigma(x^{(1:N)}, x^{(1:N)}) v = 0,
  \]
  so that $\mathrm{rank}\ \Sigma(x^{(1:N)}, x^{(1:N)}) \le N (d_0 + 1) - N$.
\end{lemma}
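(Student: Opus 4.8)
\emph{Proof proposal.} The plan is to exhibit $2N-1$ explicit vectors annihilated by $\Sigma(x^{(1:N)}, x^{(1:N)})$ and to check their linear independence, exploiting the fact that for the square activation this matrix is the second‑moment matrix of an explicit random feature map. First I would record that structure. For $w \in \R^{d_0}$ define
\[
  \Phi_w(x) \defeq \big( (w^\intercal x)^2,\; 2(w^\intercal x) w_0,\; \ldots,\; 2(w^\intercal x) w_{d_0-1} \big)^\intercal \in \R^{1+d_0},
\]
and let $\Phi_w \in \R^{N(d_0+1)}$ be the concatenation of $\Phi_w(x^{(1)}), \ldots, \Phi_w(x^{(N)})$. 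A direct application of Isserlis' theorem — equivalently, matching the four entry types against the explicit formula for $\Sigma$ recorded above — gives
\[
  \Sigma(x^{(1:N)}, x^{(1:N)}) = \E_{w \sim \cN(0, I_{d_0})}\big[ \Phi_w \Phi_w^\intercal \big].
\]

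Next I would turn kernel membership into a linear‑algebraic condition. Writing $v \in \R^{N(d_0+1)}$ in block form $v = (v^{(1)}, \ldots, v^{(N)})$ with $v^{(i)} = (v^{(i)}_0, \tilde v^{(i)}) \in \R \times \R^{d_0}$, one computes
\[
  \Phi_w^\intercal v = \sum_{i=1}^N \Big( v^{(i)}_0 (w^\intercal x^{(i)})^2 + 2 (w^\intercal x^{(i)})(w^\intercal \tilde v^{(i)}) \Big) = w^\intercal M_v\, w,
\]
where $M_v \defeq \sum_{i=1}^N \big( v^{(i)}_0\, x^{(i)}(x^{(i)})^\intercal + x^{(i)}(\tilde v^{(i)})^\intercal + \tilde v^{(i)}(x^{(i)})^\intercal \big)$ is a symmetric $d_0 \times d_0$ matrix. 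Hence $M_v = 0$ implies $\Phi_w^\intercal v = 0$ for all $w$, and therefore $\Sigma(x^{(1:N)}, x^{(1:N)})\, v = \E_w[\Phi_w(\Phi_w^\intercal v)] = 0$. So it is enough to produce $2N-1$ linearly independent $v$ with $M_v = 0$.

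Finally I would write them down. For $i \in [N]$, let $v^{[i]}$ be the vector whose $i$-th block is $(-2, x^{(i)})$ and whose other blocks vanish; then $M_{v^{[i]}} = -2 x^{(i)}(x^{(i)})^\intercal + x^{(i)}(x^{(i)})^\intercal + x^{(i)}(x^{(i)})^\intercal = 0$. For $j \in \{2, \ldots, N\}$, let $u^{[j]}$ be the vector whose $1$st block is $(0, x^{(j)})$, whose $j$-th block is $(0, -x^{(1)})$, and whose other blocks vanish; then $M_{u^{[j]}} = x^{(1)}(x^{(j)})^\intercal + x^{(j)}(x^{(1)})^\intercal - x^{(j)}(x^{(1)})^\intercal - x^{(1)}(x^{(j)})^\intercal = 0$. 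For linear independence of $\{v^{[1]}, \ldots, v^{[N]}, u^{[2]}, \ldots, u^{[N]}\}$: in a vanishing linear combination, the scalar coordinate of the $i$-th block equals $-2$ times the coefficient of $v^{[i]}$ (all other terms contribute $0$ there), which kills all $v^{[i]}$-coefficients; then the $\R^{d_0}$-part of the $j$-th block equals $-x^{(1)}$ times the coefficient of $u^{[j]}$, which kills all $u^{[j]}$-coefficients since $x^{(1)} \neq 0$. This produces $2N-1$ linearly independent null vectors, whence $\mathrm{rank}\ \Sigma(x^{(1:N)}, x^{(1:N)}) \le N(d_0+1) - (2N-1) \le N(d_0+1) - N$.

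The calculations here are routine; the one idea the argument hinges on is recognising $\Sigma(x^{(1:N)}, x^{(1:N)})$ as $\E_w[\Phi_w \Phi_w^\intercal]$, which reduces kernel membership to the single condition $M_v = 0$ and makes the explicit null vectors transparent. (A more abstract but slightly coarser alternative would be to observe that $v \mapsto M_v$ has image of dimension at most $\frac{1}{2} m(2 d_0 + 1 - m)$, with $m = \dim \mathrm{span}\{x^{(i)}\}$, and to verify $\frac{1}{2} m(2 d_0 + 1 - m) \le N(d_0 - 1) + 1$ for all $m \le \min(N, d_0)$; the explicit construction above sidesteps that case analysis.)
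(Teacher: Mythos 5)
Your proof is correct, and the null vectors you exhibit are (up to sign) the same $v_i$ and $w_i$ that the paper writes down. The genuine difference is in how membership in the kernel is verified. The paper checks $\Sigma(x^{(1:N)},x^{(1:N)})v_j = 0$ (and similarly for $w_j$) by a direct block-by-block expansion of the matrix--vector product, which is short but opaque. You instead observe that for the square activation the Gram matrix is a second-moment matrix, $\Sigma(x^{(1:N)},x^{(1:N)}) = \E_{w}[\Phi_w\Phi_w^\intercal]$ with $\Phi_w(x) = ((w^\intercal x)^2,\, 2(w^\intercal x)w_0,\ldots,2(w^\intercal x)w_{d_0-1})^\intercal$, and then that $\Phi_w^\intercal v = w^\intercal M_v w$, so kernel membership reduces to the single clean algebraic condition $M_v = 0$ on a symmetric $d_0\times d_0$ matrix. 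This makes the choice of $v^{[i]}$ and $u^{[j]}$ transparent, and also clarifies \emph{why} the rank is degenerate (the image of $v\mapsto M_v$ lives in a space of dimension at most $\tbinom{d_0+1}{2}$, matching the remark after the lemma that the true rank is $\sum_{i=0}^{N-1}\max(d_0-i,0)$). Interestingly, the paper already uses exactly this random-feature/rank-one-decomposition idea for $\dot\Sigma$ in Lemma~\ref{lem:dotnngprank} via the vectors $v_\alpha$, but not for $\Sigma$ itself; applying it uniformly, as you do, is tidier. One small point: your linear-independence argument uses only $x^{(1)}\neq 0$, which is guaranteed by the hypothesis, so that part is fine; and your final inequality $N(d_0+1) - (2N-1) \le N(d_0+1) - N$ is valid for $N\ge 1$, matching the stated conclusion.
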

\begin{proof}
  Let's first define the following vectors:
  \[
    v_i = (\underbrace{0, \ldots, 0}_{(i - 1) \cdot (d_0 + 1)}, -2, x_{0}^{(i)}, \ldots, x_{d_0-1}^{(i)}, \underbrace{0, \ldots, 0}_{(N - i) \cdot (d_0 + 1)})^\intercal
  \]
  for $i = 1, \ldots, N$.
  Then these vectors are linearly independent, and for $i \in [N]$,
  \begin{align*}
    \left(\Sigma(x^{(1:N)}, x^{(1:N)}) v_j \right)_{i(d_0 + 1)} = &-4 \langle x^{(i)}, x^{(j)} \rangle^2 - 2 \left\|x^{(i)}\right\|^2 \left\|x^{(i)}, x^{(j)}\right\|^2
    \\
    &+ 4 \langle x^{(i)}, x^{(j)} \rangle \left( x^{(i)} \right)^\intercal x^{(j)} + 2 \left\|x^{(i)}\right\|^2 \left( x^{(j)} \right)^\intercal x^{(j)},
    \\
    \left(\Sigma(x^{(1:N)}, x^{(1:N)}) v_j\right)_{i(d_0+1)+\alpha} &= -8 \langle x^{(i)}, x^{(j)} \rangle x_{\alpha}^{(j)} - 4 \left\| x^{(j)} \right\|^2 x_{\alpha}^{(i)}
    \\
    &+ 4 x_{\alpha}^{(i)} \left( x^{(j)} \right)^\intercal x^{(j)} + 4 x_{\alpha}^{(j)} \left( x^{(i)} \right)^\intercal x^{(j)} + 4 \langle x^{(i)}, x^{(j)} \rangle x_{\alpha}^{(j)},
  \end{align*}
  showing that they all lies in the null space of $\Sigma(x^{(1:N)}, x^{(1:N)})$.

  We then define the following vectors:
  \[
    w_i = (0, -x_0^{(i)}, \ldots, -x_{d_0-1}^{(i)} \underbrace{0, \ldots, 0}_{(i - 2) \cdot (d_0 + 1)}, 0, x_{0}^{(1)}, \ldots, x_{d_0-1}^{(1)}, \underbrace{0, \ldots, 0}_{(N - i) \cdot (d_0 + 1)})^\intercal
  \]
  for $i = 2, \ldots, N$.
  
  Similarly for $i \in [N]$,
  \begin{align*}
    \left(\Sigma(x^{(1:N)}, x^{(1:N)}) w_j \right)_{i(d_0 + 1)} = &-4 \langle x^{(i)}, x^{(1)} \rangle \left(x^{(i)} \right)^\intercal x^{(j)} - 2 \left\|x^{(i)}\right\|^2 \left( x^{(1)} \right)^\intercal x^{(j)}
    \\
    &+ 4 \langle x^{(i)}, x^{(j)} \rangle \left(x^{(i)} \right)^\intercal x^{(1)} + 2 \left\| x^{(i)} \right\|^2 \left( x^{(j)}\right)^\intercal x^{(1)},
    \\
    \left(\Sigma(x^{(1:N)}, x^{(1:N)}) w_j \right)_{i(d_0+1)+\alpha} = &-4 x_\alpha^{(i)} \left( x^{(1)}\right)^\intercal x^{(j)} - 4 x_\alpha^{(1)} \left(x^{(i)}\right)^\intercal x^{(j)} - 4 \langle x^{(i)}, x^{(1)} \rangle x_\alpha^{(j)}
    \\ % x : x^{(i)}, y : x^{(j)}, p : x^{(1)}
    &+ 4 x_\alpha^{(i)} \left( x^{(j)} \right)^\intercal x^{(1)} + 4 x_\alpha^{(j)} \left(x^{(i)}\right)^\intercal x^{(1)} + 4 \langle x^{(i)}, x^{(j)} \rangle x_\alpha^{(1)}
  \end{align*}
  showing that these vectors also lie in the null space of $\Sigma(x^{(1:N)}, x^{(1:N)})$.

  It is clear that all vectors $\{v_i\}_{i=1}^N \cup \{w_j\}_{j=2}^N$ are linearly independent.
  These show that the null space is at least $N + N - 1$ dimension, therefore the rank of 
  \[
    \Sigma(x^{(1:N)}, x^{(1:N)}) \le N(d_0 + 1) - 2N + 1.
  \]
\end{proof}

\begin{lemma} \label{lem:dotnngprank}
  For any $x^{(1:N)}$, we have
  \[
    \mathrm{rank}\ \dot{\Sigma}(x^{(1:N)}, x^{(1:N)}) \le d_0.
  \]
\end{lemma}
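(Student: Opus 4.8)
The plan is to exhibit an explicit rank-$d_0$ factorisation of the matrix $\dot{\Sigma}(x^{(1:N)}, x^{(1:N)})$. The starting point is the closed form already computed just above: for any $x,y \in \R^{d_0}$, the $(d_0+1)\times(d_0+1)$ block $\dot{\Sigma}(x,y)$ has $(0,0)$-entry $4\langle x,y\rangle$, $(0,\beta)$-entry $4x_{\beta-1}$, $(\alpha,0)$-entry $4y_{\alpha-1}$, and $(\alpha,\beta)$-entry $4\,\mathbb{I}_{\alpha=\beta}$. I would then observe that this is precisely $\dot{\Sigma}(x,y) = 4\,B(x)\,B(y)^\intercal$, where $B(z) \in \R^{(d_0+1)\times d_0}$ denotes the matrix whose first row is $z^\intercal$ and whose remaining $d_0$ rows form the identity matrix $I_{d_0}$; verifying this is a one-line block-matrix multiplication that reproduces the four displayed entries (up to the common factor $4$).

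Next I would stack these feature matrices over the dataset: let $\mathbf{B} \in \R^{N(d_0+1)\times d_0}$ be the vertical concatenation of $B(x^{(1)}),\ldots,B(x^{(N)})$. By construction, the $(i,j)$-th $(d_0+1)\times(d_0+1)$ submatrix of $4\,\mathbf{B}\mathbf{B}^\intercal$ equals $4\,B(x^{(i)})\,B(x^{(j)})^\intercal = \dot{\Sigma}(x^{(i)},x^{(j)})$, so that $\dot{\Sigma}(x^{(1:N)},x^{(1:N)}) = 4\,\mathbf{B}\mathbf{B}^\intercal$.

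The conclusion is then immediate: $\mathrm{rank}\big(\dot{\Sigma}(x^{(1:N)},x^{(1:N)})\big) = \mathrm{rank}(\mathbf{B}\mathbf{B}^\intercal) \le \mathrm{rank}(\mathbf{B}) \le d_0$, since $\mathbf{B}$ has only $d_0$ columns. There is essentially no obstacle here; the only thing to spot is the factorisation, which is dictated by the fact that the lower-right $d_0\times d_0$ block of $\dot{\Sigma}(x,y)$ is the constant identity $I_{d_0}$ regardless of $x$ and $y$ — equivalently, $\dot{\Sigma}$ is the Gram kernel of the $d_0$-dimensional feature map $z \mapsto B(z)$, whose image lies in a $d_0$-dimensional space. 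Combined with Lemma~\ref{lem:nngprank}, this supplies the rank bound invoked in the proof of the theorem above.
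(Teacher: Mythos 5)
Your proof is correct and is essentially the paper's argument in different packaging: the columns of your stacked matrix $\mathbf{B}$ are exactly the vectors $v_\gamma$ used in the paper, so your factorisation $\dot{\Sigma}(x^{(1:N)},x^{(1:N)}) = 4\,\mathbf{B}\mathbf{B}^\intercal$ is the same decomposition the paper writes as $\sum_{\gamma} 4\, v_\gamma v_\gamma^\intercal$, with the rank bound following identically from having only $d_0$ columns (equivalently, $d_0$ rank-one summands).
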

\begin{proof}
  Instead of giving the eigendecomposition, we show that there exists $d_0$ vectors that 
  \[
    \dot{\Sigma}(x^{(1:N)}, x^{(1:N)}) = \sum_{\alpha=0}^{d_0-1} \lambda_\alpha v_\alpha v_\alpha^\intercal
  \]
  for $\lambda_\alpha \ge 0$ and $v_\alpha \in \mathbb{R}^{N(d_0 + 1)}$.
  
  We define these vectors as
  \begin{align*}
    v_\alpha &= (x_\alpha^{(1)}, \underbrace{0, \ldots, 0}_{\alpha}, 1, \underbrace{0, \ldots, 0}_{d_0 - \alpha - 1}, x_\alpha^{(2)}, \underbrace{0, \ldots, 0}_{\alpha}, 1, \underbrace{0, \ldots, 0}_{d_0 - \alpha - 1}, \ldots, x_\alpha^{(N)}, \underbrace{0, \ldots, 0}_{\alpha}, 1, \underbrace{0, \ldots, 0}_{d_0 - \alpha - 1})^\intercal
  \end{align*}
  Then, we have
  \begin{align*}
    \left( v_\gamma v_\gamma^\intercal \right)_{i(d_0+1), j(d_0 + 1)} &= x_\gamma^{(i)} x_\gamma^{(j)},
    \\
    \left( v_\gamma v_\gamma^\intercal \right)_{i(d_0+1)+\alpha, j(d_0+1)} &= x_\gamma^{(j)} \mathbb{I}_{\alpha = \gamma},
    \\
    \left( v_\gamma v_\gamma^\intercal \right)_{i(d_0+1), j(d_0+1)+\beta} &= x_\gamma^{(i)} \mathbb{I}_{\beta = \gamma},
    \\
    \left( v_\gamma v_\gamma^\intercal \right)_{i(d_0+1)+\alpha, j(d_0+1)+\beta} &= \mathbb{I}_{\alpha = \gamma} \mathbb{I}_{\beta = \gamma}
  \end{align*}
  which shows that taking $\lambda_\gamma = 4$ proves the identity.
\end{proof}

\begin{remark}
  We note that the bound of Lemma~\ref{lem:nngprank} is not tight.
  If we computationally test this rank condition, the rank of $\Sigma(x^{(1:N)}, x^{(1:N)})$ follows
  \[
    \mathrm{rank}\ \Sigma(x^{(1:N)}, x^{(1:N)}) = \sum_{i=0}^{N-1} \max(d_0 - i, 0) = \begin{cases}
      \frac{N(2d_0 - N + 1)}{2} &\text{if }d_0 > N
      \\
      \frac{d_0 (d_0 + 1)}{2} & \text{if }d_0 \le N
    \end{cases}
  \]
  for non-parallel inputs $x^{(1:N)}$.

  Conversely, the bound of Lemma~\ref{lem:dotnngprank} is tight, i.e.,
  \[
    \mathrm{rank}\ \dot{\Sigma}(x^{(1:N)}, x^{(1:N)}) = d_0
  \]
  for non-parallel inputs $x^{(1:N)}$.
\end{remark}

\section{Proof of Theorem~\ref{thm:JacobianNTKInitialisation}}
\label{sec:ProofJacobianNTKInitialisation}

Before giving the proof, we first state the complete form of $\Theta$ in Theorem~\ref{thm:JacobianNTKInitialisation}, which is the $1/\kappa^2$-scaled version of the limiting JNTK:
\begin{align*}
  \Theta(x, x')_{00} 
  &\defeq \sum_{l=0}^{L} \left( \Sigma^{(l)}(x, x')_{00} \: \prod_{u=l}^{L-1} \Gamma^{(u)}(x, x')_{00}\right),
  \\
  \Theta(x, x')_{\alpha 0} 
  &\defeq \sum_{l=0}^L \Bigg( \Sigma^{(l)}(x, x')_{\alpha 0} \: \prod_{u=l}^{L-1} \Gamma^{(u)}(x, x')_{00}
  \\
  &+ \Sigma^{(l)}(x, x')_{00} \: \sum_{u_\alpha=l}^{L-1} \Gamma^{(u_\alpha)}(x, x')_{\alpha 0} \: \prod_{\substack{u=l \\ u \neq u_\alpha}}^{L-1} \Gamma^{(u)}(x, x')_{00} \Bigg) 
  \\
  \Theta(x, x')_{0\beta} 
  &\defeq \sum_{l=0}^L \Bigg( \Sigma^{(l)}(x, x')_{0 \beta} \: \prod_{u=l}^{L-1} \Gamma^{(u)}(x, x')_{00}
  \\
  & + \Sigma^{(l)}(x, x')_{00} \: \sum_{u_\beta=l}^{L-1} \Gamma^{(u_\beta)}(x, x')_{0\beta} \:\prod_{\substack{u=l \\ u \neq u_\beta}}^{L-1} \Gamma^{(u)}(x, x')_{00} \Bigg)
  \\
  \Theta(x, x')_{\alpha \beta} 
  &\defeq \sum_{l=0}^L \Bigg( \Sigma^{(l)}(x, x')_{\alpha\beta} \: \prod_{u=l}^{L-1} \Gamma^{(u)}(x, x')_{00}
  \\
  & + \Sigma^{(l)}(x, x')_{\alpha 0} \: \sum_{u_\beta=l}^{L-1} \Gamma^{(u_\beta)}(x, x')_{0\beta}\: \prod_{\substack{u=l \\ u \neq u_\beta}}^{L-1} \Gamma^{(u)}(x, x')_{00}
  \\
  & + \Sigma^{(l)}(x, x')_{0 \beta} \: \sum_{u_\alpha=l}^{L-1} \Gamma^{(u_\alpha)}(x, x')_{\alpha 0}\: \prod_{\substack{u=l \\ u \neq u_\alpha}}^{L-1} \Gamma^{(u)}(x, x')_{00}
  \\
  & + \Sigma^{(l)}(x, x')_{00} \: \sum_{u_{\alpha\beta}=l}^{L-1} \Gamma^{(u_{\alpha\beta})}(x, x')_{\alpha \beta} \:\prod_{\substack{u=l\\u\neq u_{\alpha\beta}}}^{L-1} \Gamma^{(u)}(x, x')_{00}
  \\
  & + \Sigma^{(l)}(x, x')_{00} \: \sum_{\substack{u_\alpha, u_\beta = l\\ u_\alpha \neq u_\beta}}^{L-1} \Gamma^{(u_\alpha)}(x, x')_{\alpha 0} \: \Gamma^{(u_\beta)}(x, x')_{0\beta} \:\prod_{\substack{u=l\\u\neq u_{\alpha} \\ u\neq u_\beta}}^{L-1} \Gamma^{(u)}(x, x')_{00}
  \Bigg),
\end{align*}
where $\Gamma^{(u)} : \R^{d_0} \times \R^{d_0} \to \R^{(1 + d_0) \times (1 + d_0)}$ is defined as follows: for $g^{(u)} \sim \GP(0, \Sigma^{(u)})$,
\begin{align*}
  \Gamma^{(u)}(x, x')_{00} &\defeq \E_{g^{(u)}} \left[ \dot{\phi}(g^{(u)}(x)_0) \dot{\phi}(g^{(u)}(x')_0) \right],
  \\
  \Gamma^{(u)}(x, x')_{\alpha 0} &\defeq \E_{g^{(u)}} \left[ \ddot{\phi}(g^{(u)}(x)_0) \dot{\phi}(g^{(u)}(x')_0) g^{(u)}(x)_\alpha \right],
  \\
  \Gamma^{(u)}(x, x')_{0\beta} &\defeq \E_{g^{(u)}} \left[ \dot{\phi}(g^{(u)}(x)_0) \ddot{\phi}(g^{(u)}(x')_0) g^{(u)}(x')_\beta \right],
  \\
  \Gamma^{(u)}(x, x')_{\alpha \beta} &\defeq \E_{g^{(u)}} \left[ \ddot{\phi}(g^{(u)}(x)_0) \ddot{\phi}(g^{(u)}(x')_0) g^{(u)}(x)_\alpha g^{(u)}(x')_\beta \right].
\end{align*} 

We prove Theorem~\ref{thm:JacobianNTKInitialisation} by using the proof strategy for Theorem~\ref{thm:JacobianNNGPInitialisation} in Appendix~\ref{sec:ProofJacobianNNGPInitialisation} again. That is, we express the computation of backpropagated gradient as a tensor program, and apply Theorem~\ref{thm:MasterTheoremLLN} to show the claimed convergence in Theorem~\ref{thm:JacobianNTKInitialisation}.

Recall the definition of the finite JNTK: for $\alpha, \beta \in [d_0]$ and $x,x' \in \R^{d_0}$,
\begin{align*}
  \Theta_{d,\theta}(x,x')_{00} &= \left\langle \frac{\partial f(x)}{\partial \theta}, \frac{\partial f(x')}{\partial \theta} \right\rangle,
  \\
  \Theta_{d,\theta}(x,x')_{\alpha 0} &= \left\langle \frac{\partial J(f)(x)_{\alpha-1}}{\partial \theta}, \frac{\partial f(x')}{\partial \theta} \right\rangle,
  \\
  \Theta_{d,\theta}(x,x')_{0\beta} &= \left\langle \frac{\partial f(x)}{\partial \theta}, \frac{\partial J(f)(x')_{\beta-1}}{\partial \theta} \right\rangle,
  \\
  \Theta_{d,\theta}(x,x')_{\alpha\beta} &= \left\langle \frac{\partial J(f)(x)_{\alpha-1}}{\partial \theta}, \frac{\partial J(f)(x')_{\beta-1}}{\partial \theta} \right\rangle.
\end{align*}
We rephrase this finite JNTK such that the contribution of each layer is explicit: 
\begin{align}
\nonumber
  \Theta_{d,\theta}(x,x')_{00} &= \sum_{l=1}^{L+1} \left\langle \frac{\partial f(x)}{\partial W^{(l)}}, \frac{\partial f(x')}{\partial W^{(l)}} \right\rangle,
  \\
  \nonumber
  \Theta_{d,\theta}(x,x')_{\alpha 0} &= \sum_{l=1}^{L+1}\left\langle \frac{\partial J(f)(x)_{\alpha-1}}{\partial W^{(l)}}, \frac{\partial f(x')}{\partial W^{(l)}} \right\rangle,
  \\
  \nonumber
  \Theta_{d,\theta}(x,x')_{0\beta} &= \sum_{l=1}^{L+1}\left\langle \frac{\partial f(x)}{\partial W^{(l)}}, \frac{\partial J(f)(x')_{\beta-1}}{\partial W^{(l)}} \right\rangle,
  \\
  \label{eqn:JNTK-layerwise}
  \Theta_{d,\theta}(x,x')_{\alpha\beta} &= \sum_{l=1}^{L+1}\left\langle \frac{\partial J(f)(x)_{\alpha-1}}{\partial W^{(l)}}, \frac{\partial J(f)(x')_{\beta-1}}{\partial W^{(l)}} \right\rangle.
\end{align}
By the definitions of the MLP $f$ and its components $g^{(l)}$ and $h^{(l)}$ in Appendix~\ref{appendix:notation},
for every $l = 2, \ldots, L+1$, we have
\begin{align*}
  \frac{\partial f(x)}{\partial W^{(l)}} &= \frac{1}{\sqrt{d}} \frac{\partial f(x)}{\partial g^{(l)}(x)} \left(h^{(l-1)}(x)\right)^\intercal, 
  \\
  \frac{\partial J(f)(x)_{\alpha-1}}{\partial W^{(l)}} &= \frac{1}{\sqrt{d}} \frac{\partial J(f)(x)_{\alpha-1}}{\partial g^{(l)}(x)} \left(h^{(l-1)}(x)\right)^\intercal 
  + \frac{1}{\sqrt{d}}\frac{\partial J(f)(x)_{\alpha-1}}{\partial (\partial g^{(l)}(x) / \partial x_{\alpha-1})} \left(  \frac{\partial h^{(l-1)}(x)}{\partial x_{\alpha-1}}\right)^\intercal.
\end{align*}
The derivatives with respect to $W^{(1)}$ have similar forms shown below:
\begin{align*}
  \frac{\partial f(x)}{\partial W^{(1)}} & = \frac{\partial f(x)}{\partial g^{(1)}(x)} \left(x \right)^\intercal, \\
  \frac{\partial J(f)(x)}{\partial W^{(1)}} & = \frac{\partial J(f)(x)_{\alpha-1}}{\partial g^{(1)}(x)} \left(x\right)^\intercal + \frac{\partial J(f)(x)_{\alpha-1}}{\partial (\partial g^{(1)}(x) / \partial x_{\alpha-1})}\left( e_{\alpha-1} \right)^\intercal.
\end{align*}
We plug in these characterisations of the derivatives in the layer-wise description of the finite JNTK in \eqref{eqn:JNTK-layerwise},
and simplify the results using $\langle v_1 w_1^\intercal, v_2 w_2^\intercal \rangle = \langle v_1, v_2\rangle \cdot \langle w_1, w_2\rangle$. Recall that $C_l = d$ if $l = 2, \ldots, L+1$ and $C_1=1$.
\begin{align}
  &\Theta_{d,\theta}(x, x')_{00} 
  \\
  \quad &= \sum_{l=1}^{L+1} \frac{1}{C_{l-1}}\left\langle h^{(l-1)}(x), h^{(l-1)}(x') \right\rangle \left\langle \frac{\partial f(x)}{\partial g^{(l)}(x)}, \frac{\partial f(x')}{\partial g^{(l)}(x')} \right\rangle, \nonumber
  \\
  &\Theta_{d,\theta}(x, x')_{\alpha 0}
  \\
  \quad &= \sum_{l=1}^{L+1} \frac{1}{C_{l-1}}\left\langle h^{(l-1)}(x), h^{(l-1)}(x')\right\rangle\left\langle \frac{\partial J(f)(x)_{\alpha-1}}{\partial g^{(l)}(x)} , \frac{\partial f(x')}{\partial g^{(l)}(x')} \right\rangle \nonumber
  \\
  \quad &+ \sum_{l=1}^{L+1} \frac{1}{C_{l-1}} \left\langle \frac{\partial h^{(l-1)}(x)}{\partial x_{\alpha-1}}, h^{(l-1)}(x') \right\rangle \left\langle  \frac{\partial J(f)(x)_{\alpha-1}}{\partial (\partial g^{(l)}(x) / \partial x_{\alpha-1})}, \frac{\partial f(x')}{\partial g^{(l)}(x')}  \nonumber\right\rangle,
  \\
  &\Theta_{d,\theta}(x, x')_{0 \beta}
  \\
  &= \sum_{l=1}^{L+1} \frac{1}{C_{l-1}}\left\langle h^{(l-1)}(x), h^{(l-1)}(x')\right\rangle\left\langle \frac{\partial f(x)_{\alpha-1}}{\partial g^{(l)}(x)} , \frac{\partial J(f)(x')_{\beta-1}}{\partial g^{(l)}(x')} \right\rangle \nonumber
  \\
  \quad &+ \sum_{l=1}^{L+1} \frac{1}{C_{l-1}} \left\langle h^{(l-1)}(x), \frac{\partial h^{(l-1)}(x')}{\partial x_{\beta-1}'} \right\rangle \left\langle  \frac{\partial f(x)}{\partial g^{(l)}(x)}, \frac{\partial J(f)(x')_{\beta-1}}{\partial (\partial g^{(l)}(x') / \partial x_{\beta-1}')} \right\rangle, \nonumber
  \\
  &\Theta_{d,\theta}(x, x')_{\alpha \beta}
  \\
  \quad &= \sum_{l=1}^{L+1} \frac{1}{C_{l-1}} \left\langle h^{(l-1)}(x), h^{(l-1)}(x') \right\rangle \left\langle \frac{\partial J(f)(x)_{\alpha-1}}{\partial g^{(l)}(x)}, \frac{\partial J(f)(x')_{\beta-1}}{\partial g^{(l)}(x')} \right\rangle \nonumber
  \\
  \quad &+ \sum_{l=1}^{L+1} \frac{1}{C_{l-1}} \left \langle \frac{\partial h^{(l-1)}(x)}{\partial x_{\alpha-1}},h^{(l-1)}(x') \right \rangle \left \langle\frac{\partial J(f)(x)_{\alpha-1}}{\partial (\partial g^{(l)}(x) / \partial x_{\alpha-1})}, \frac{\partial J(f)(x')_{\beta-1}}{\partial g^{(l)}(x')}\right \rangle \nonumber
  \\
  \quad &+ \sum_{l=1}^{L+1} \frac{1}{C_{l-1}} \left \langle h^{(l-1)}(x), \frac{\partial h^{(l-1)}(x')}{\partial x_{\beta-1}'}\right \rangle \left \langle\frac{\partial J(f)(x)_{\alpha-1}}{\partial g^{(l)}(x)}, \frac{\partial J(f)(x')_{\beta-1}}{\partial (\partial g^{(l)}(x') / \partial x_{\beta-1}')}\right \rangle \nonumber
  \\
  \quad &+ \sum_{l=1}^{L+1} \frac{1}{C_{l-1}} \left \langle\frac{\partial h^{(l-1)}(x)}{\partial x_{\alpha-1}}, \frac{\partial h^{(l-1)}(x')}{\partial x_{\beta-1}'}\right \rangle \left \langle\frac{\partial J(f)(x)_{\alpha-1}}{\partial (\partial g^{(l)}(x) / \partial x_{\alpha-1})}, \frac{\partial J(f)(x')_{\beta-1}}{\partial (\partial g^{(l)}(x') / \partial x_{\beta-1}')}\right \rangle. \label{eqn:JNTKLayerWise}
\end{align}
So it is enough to show that all the inner products here converge to constants. Then, we have the convergence of the finite JNTK at initialisation. The tensor program in Appendix~\ref{sec:ProofJacobianNNGPInitialisation} already computes both components
of the first inner product of each summand. Applying the Master theorem to these inner products normalised by $C_{l-1}$ gives the almost-sure convergence of these normalised inner products  to $\Sigma^{(l-1)}(x, x')_{00}, \Sigma^{(l-1)}(x, x')_{\alpha 0}, \Sigma^{(l-1)}(x, x')_{0\beta}, \Sigma^{(l-1)}(x, x')_{\alpha \beta}$. Now it remains to show the convergence of the second inner product of each summand. We do this by expressing the backpropagated gradients in a tensor program.

We first include a new random vector for the initialisation, $V^{(L+1)} \in \R^d$, which is independent of all other vectors, and its entries are i.i.d. $\cN(0, \kappa^2)$.
The body of this program first includes the variables that appeared in Section~\ref{sec:ProofJacobianNNGPInitialisation}, and additionally the following variables:
\begin{align*}
  D h_1^{(1)}, \ldots, D h_N^{(1)}, \ldots, D h_1^{(L)}, \ldots, D h_N^{(L)} \in \R^d,
  \\
  D g_1^{(1)}, \ldots, D g_N^{(1)}, \ldots, D g_1^{(L)}, \ldots, D g_N^{(L)} \in \R^d,
  \\
  D_\alpha J_\alpha h_1^{(1)}, \ldots, D_\alpha J_\alpha h_N^{(1)}, \ldots, D_\alpha J_\alpha h_1^{(L)}, \ldots, D_\alpha J_\alpha h_N^{(L)} \in \R^d,
  \\
  D_\alpha J_\alpha g_1^{(1)}, \ldots, D_\alpha J_\alpha g_N^{(1)}, \ldots, D_\alpha J_\alpha g_1^{(L)}, \ldots, D_\alpha J_\alpha g_N^{(L)} \in \R^d,
  \\
  D_\alpha h_1^{(1)}, \ldots, D_\alpha h_N^{(1)}, \ldots, D_\alpha h_1^{(L)}, \ldots, D_\alpha h_N^{(L)} \in \R^d,
  \\
  D_\alpha g_1^{(1)}, \ldots, D_\alpha g_N^{(1)}, \ldots, D_\alpha g_1^{(L)}, \ldots, D_\alpha g_N^{(L)} \in \R^d.
\end{align*}
These variables are defined as follows:
\begin{align*}
  D h_1^{(L)} &\defeq V^{(L+1)}, & &\cdots & D h_N^{(L)} &\defeq V^{(L+1)},
  \\
  D g_1^{(L)} &\defeq \dot{\phi} \left(g_1^{(L)}\right) \odot D h_1^{(L)}, & &\cdots & D g_N^{(L)} &\defeq \dot{\phi} \left(g_N^{(L)}\right) \odot D h_N^{(L)},
  \\
  D h_1^{(L-1)} &\defeq \left( V^{(L)} \right)^\intercal D g_1^{(L)}, & &\cdots & D h_N^{(L-1)} &\defeq \left( V^{(L)} \right)^\intercal D g_N^{(L)},
  \\
  &\vdots & & \vdots &\vdots
  \\
  D g_1^{(1)} &\defeq \dot{\phi} \left( g_1^{(1)} \right) \odot D h_1^{(1)}, & &\cdots & D g_N^{(1)} &\defeq \dot{\phi} \left( g_N^{(1)} \right) \odot D h_N^{(1)}. 
\end{align*}

These random vectors correspond to the backpropagated gradient, but we need the scaling as
\begin{align*}
  D h_i^{(l)} &= \sqrt{d} \frac{\partial f(x^{(i)})}{\partial h^{(l)}(x^{(i)})}, & D g_i^{(l)} &= \sqrt{d} \frac{\partial f(x^{(i)})}{\partial g^{(l)}(x^{(i)})}
\end{align*}
where this $\sqrt{d}$ scaling comes from the scaling at output $\frac{1}{\sqrt{d}} \left( V^{(L+1)} \right)^\intercal h_i^{(L)}$.

Similarly, we define the backpropagation of the Jacobian gradient, defined as
\begin{align*}
  D_\alpha J_\alpha h_1^{(L)} &\defeq V^{(L+1)}, & &\cdots & D_\alpha J_\alpha h_N^{(L)} &\defeq V^{(L+1)},
  \\
  D_\alpha J_\alpha g_1^{(L)} &\defeq \dot{\phi} \left(g_1^{(L)}\right) \odot D_\alpha J_\alpha h_1^{(L)}, & &\cdots & D_\alpha J_\alpha g_N^{(L)} &\defeq \dot{\phi} \left(g_N^{(L)}\right) \odot D_\alpha J_\alpha h_N^{(L)},
  \\
  D_\alpha J_\alpha h_1^{(L-1)} &\defeq \left( V^{(L)} \right)^\intercal D_\alpha J_\alpha g_1^{(L)}, & &\cdots & D_\alpha J_\alpha h_N^{(L-1)} &\defeq \left( V^{(L)} \right)^\intercal D_\alpha J_\alpha g_N^{(L)},
  \\
  &\vdots & & \vdots &\vdots
  \\
  D_\alpha J_\alpha g_1^{(1)} &\defeq \dot{\phi} \left( g_1^{(1)} \right) \odot D_\alpha J_\alpha h_1^{(1)}, & &\cdots & D_\alpha J_\alpha g_N^{(1)} &\defeq \dot{\phi} \left( g_N^{(1)} \right) \odot D_\alpha J_\alpha h_N^{(1)},
\end{align*}
and
\begin{align*}
  D_\alpha h_1^{(L)} &\defeq \mathbf{0}, 
  \\
  &\vdots
  \\
  D_\alpha h_N^{(L)} &\defeq \mathbf{0},
  \\
  D_\alpha g_1^{(L)} &\defeq \ddot{\phi}\left( g_1^{(L)} \right) \odot J_\alpha g_1^{(L)} \odot D_\alpha J_\alpha h_1^{(L)} + \dot{\phi} \left( g_1^{(L)} \right) \odot D_\alpha h_1^{(L)}, 
  \\
  &\vdots 
  \\
  D_\alpha g_N^{(L)} &\defeq \ddot{\phi}\left( g_N^{(L)} \right) \odot J_\alpha g_N^{(L)} \odot D_\alpha J_\alpha h_N^{(L)} + \dot{\phi} \left( g_N^{(L)} \right) \odot D_\alpha h_N^{(L)},
  \\
  D_\alpha h_1^{(L-1)} &\defeq \left( V^{(L)} \right)^\intercal D_\alpha g_1^{(L)},
  \\
  &\vdots
  \\
  D_\alpha h_N^{(L-1)} &\defeq \left( V^{(L)} \right)^\intercal D_\alpha g_N^{(L)},
  \\
  &\vdots
  \\
  D_\alpha g_1^{(1)} &\defeq \ddot{\phi}\left( g_1^{(1)} \right) \odot J_\alpha g_1^{(1)} \odot D_\alpha J_\alpha h_1^{(1)} + \dot{\phi} \left( g_1^{(1)} \right) \odot D_\alpha h_1^{(1)}, 
  \\
  &\vdots 
  \\
  D_\alpha g_N^{(1)} &\defeq \ddot{\phi}\left( g_N^{(1)} \right) \odot J_\alpha g_N^{(1)} \odot D_\alpha J_\alpha h_N^{(1)} + \dot{\phi} \left( g_N^{(1)} \right) \odot D_\alpha h_N^{(1)}.
\end{align*}
Again, these random vectors correspond to the backpropagated gradient with scaling, as
\begin{align*}
  D_\alpha J_\alpha g_i^{(l)} &= \sqrt{d}\frac{\partial J(f)(x^{(i)})_{\alpha-1}}{\partial \left( \frac{\partial g^{(l)}(x^{(i)})}{\partial x_{\alpha-1}^{(i)}} \right)}, & D_\alpha J_\alpha h_i^{(l)} &= \sqrt{d}\frac{\partial J(f)(x^{(i)})_{\alpha-1}}{\partial \left( \frac{\partial h^{(l)}(x^{(i)})}{\partial x_{\alpha-1}^{(i)}} \right)},
  \\
  D_\alpha g_i^{(l)} &= \sqrt{d}\frac{\partial J(f)(x^{(i)})_{\alpha-1}}{\partial g^{(l)}(x^{(i)})}, & D_\alpha h_i^{(l)} &= \sqrt{d}\frac{\partial J(f)(x^{(i)})_{\alpha-1}}{\partial h^{(l)}(x^{(i)})}.
\end{align*}
\update{To apply the Master theorem, we need to check that (1) the nonlinear functions we applied are polynomially bounded and (2) the program satisfies the BP-likeness property.
We have three non-linear functions to check for polynomial boundedness: $\phi$, $\dot{\phi}(a) \cdot b$, and $\ddot{\phi}(a) \cdot b \cdot c + \dot{\phi}(a) + d$.
All of these functions are polynomially bounded, since they are linear combinations of products of $\phi, \dot{\phi}, \ddot{\phi}$ to the linear functions, where $\phi, \dot{\phi}, \ddot{\phi}$ is Lipschitz by assumption. 
For the BP-likeness, the simple GIA check (Condition 1 of \cite{Yang2020a}) again applies to our program, since $V^{(L+1)}$ is not used in any other part of the program, except when we compute the output.}{Add explanation.}

The outputs of the program whose convergence is guaranteed by Theorem~\ref{thm:MasterTheoremLLN} are followings: for $i, j \in [N]$, $\alpha, \beta \in [d_0]$ and $l \in [N]$,
\begin{align}
  \left\langle \frac{\partial f(x^{(i)})}{\partial g^{(l)}(x^{(i)})}, \frac{\partial f(x^{(j)})}{\partial g^{(l)}(x^{(j)})} \right\rangle &\to \E \left[ Z^{D g_i^{(l)}} Z^{D g_j^{(l)}} \right], \label{eqn:LLNDgDg}
  \\
  \left\langle \frac{\partial J(f)(x^{(i)})_{\alpha-1}}{\partial g^{(l)}(x^{(i)})} , \frac{\partial f(x^{(j)})}{\partial g^{(l)}(x^{(j)})} \right\rangle &\to \E \left[ Z^{D_\alpha g_i^{(l)}} Z^{D g_j^{(l)}} \right], \label{eqn:LLNDagDg}
  \\
  \left\langle  \frac{\partial J(f)(x^{(i)})_{\alpha-1}}{\partial (\partial g^{(l)}(x^{(i)}) / \partial x_{\alpha-1}^{(i)})}, \frac{\partial f(x^{(j)})}{\partial g^{(l)}(x^{(j)})} \right\rangle &\to \E \left[ Z^{D_\alpha J_\alpha g_i^{(l)}} Z^{g_j^{(l)}} \right], \label{eqn:LLNDaJagDg}
  \\
  \left\langle \frac{\partial f(x^{(i)})_{\alpha-1}}{\partial g^{(l)}(x^{(i)})} , \frac{\partial J(f)(x^{(j)})_{\beta-1}}{\partial g^{(l)}(x^{(j)})} \right\rangle &\to \E \left[ Z^{D g_i^{(l)}} Z^{D_\beta g_j^{(l)}} \right], \label{eqn:LLNDgDbg}
  \\
  \left\langle  \frac{\partial f(x^{(i)})}{\partial g^{(l)}(x^{(i)})}, \frac{\partial J(f)(x^{(j)})_{\beta-1}}{\partial (\partial g^{(l)}(x^{(j)}) / \partial x_{\beta-1}^{(j)})} \right\rangle &\to \E \left[ Z^{D g_i^{(l)}} Z^{D_\beta J_\beta g_j^{(l)}} \right], \label{eqn:LLNDgDbJbg}
  \\
  \left\langle \frac{\partial J(f)(x^{(i)})_{\alpha-1}}{\partial g^{(l)}(x^{(i)})}, \frac{\partial J(f)(x^{(j)})_{\beta-1}}{\partial g^{(l)}(x^{(j)})} \right\rangle &\to \E \left[ Z^{D_\alpha g_i^{(l)}} Z^{D_\beta g_j^{(l)}} \right], \label{eqn:LLNDagDbg}
  \\
  \left \langle\frac{\partial J(f)(x^{(i)})_{\alpha-1}}{\partial (\partial g^{(l)}(x^{(i)}) / \partial x_{\alpha-1}^{(i)})}, \frac{\partial J(f)(x^{(j)})_{\beta-1}}{\partial g^{(l)}(x^{(j)})}\right \rangle &\to \E \left[ Z^{D_\alpha J_\alpha g_i^{(l)}} Z^{D_\beta g_j^{(l)}} \right], \label{eqn:LLNDaJagDbg}
  \\
  \left \langle\frac{\partial J(f)(x^{(i)})_{\alpha-1}}{\partial g^{(l)}(x^{(i)})}, \frac{\partial J(f)(x^{(j)})_{\beta-1}}{\partial (\partial g^{(l)}(x^{(j)}) / \partial x_{\beta-1}^{(j)})}\right \rangle &\to \E \left[ Z^{D_\alpha g_i^{(l)}} Z^{D_\beta J_\beta g_j^{(l)}} \right], \label{eqn:LLNDagDbJbg}
  \\
  \left \langle\frac{\partial J(f)(x^{(i)})_{\alpha-1}}{\partial (\partial g^{(l)}(x^{(i)}) / \partial x_{\alpha-1}^{(i)})}, \frac{\partial J(f)(x^{(j)})_{\beta-1}}{\partial (\partial g^{(l)}(x^{(j)}) / \partial x_{\beta-1}^{(j)})}\right \rangle &\to \E \left[ Z^{D_\alpha J_\alpha g_i^{(l)}} Z^{D_\beta J_\beta g_j^{(l)}} \right]. \label{eqn:LLNDaJagDbJbg}
\end{align}
Our final computation is evaluating these nine expectations, which can be done recursively.
Before proceeding further, let's focus on two variables, $D_\alpha J_\alpha g_i^{(l)}$ and $D g_i^{(l)}$.
Their initialisation $D_\alpha J_\alpha h_i^{(L)}$ and $D h_i^{(L)}$ are the same, and the recursive definition also matches, so they are equal always.
This implies that
\begin{align*}
  D g_i^{(l)} &= D_\alpha J_\alpha g_i^{(l)}, & D g_j^{(l)} &= D_\beta J_\beta g_j^{(l)},
  \\
  D h_i^{(l)} &= D_\alpha J_\alpha h_i^{(l)}, & D h_j^{(l)} &= D_\beta J_\beta h_j^{(l)}.
\end{align*}
This implies that most of the expectations are equal, thereby we only need to compute three expectations, \eqref{eqn:LLNDgDg}, \eqref{eqn:LLNDagDg}, and \eqref{eqn:LLNDagDbg}.
All other expectations can be reduced to these three expectations, by equivalence we just shown or symmetry.

Before giving the proof of all equations, we first handle simple cases.
Equation~\eqref{eqn:LLNDgDg} is a term that also appears in the standard NTK, and can be computed with the following recurrence relation,
\begin{align}
  \E \left[ Z^{D h_i^{(L)}} Z^{D h_j^{(L)}} \right] &= \kappa^2, \label{eqn:ZhL}
  \\
  \E \left[ Z^{D g_i^{(l)}} Z^{D g_j^{(l)}} \right] &= \E \left[ \dot{\phi} \left( Z^{g_i^{(l)}}\right) Z^{D h_i^{(l)}}\dot{\phi} \left( Z^{g_j^{(l)}}\right) Z^{D h_j^{(l)}} \right] \nonumber
  \\
  &= \E \left[ \dot{\phi} \left( Z^{g_i^{(l)}} \right) \dot{\phi} \left( Z^{g_j^{(l)}} \right) \right] \times \E \left[ Z^{D h_i^{(l)}} Z^{D h_j^{(l)}} \right], \label{eqn:Zfromhtog}
  \\
  \E \left[ Z^{D h_i^{(l-1)}} Z^{D h_j^{(l-1)}} \right] &= \E \left[ Z^{D g_i^{(l)}} Z^{D g_j^{(l)}} \right] \label{eqn:Zfromgtoh}
\end{align}
where we used the independency of $Z$-variables $Z^{g_i^{(l)}}$ and $Z^{Dh_i^{(l)}}$, this gives
\[
  \E \left[ Z^{D g_i^{(l)}} Z^{D g_j^{(l)}} \right] = \Gamma^{(l)}(x^{(i)}, x^{(j)})_{00} \E \left[ Z^{D g_i^{(l+1)}} Z^{D g_j^{(l+1)}} \right],
\]
and resolving recurrence relations shows
\[
  \E \left[ Z^{D g_i^{(l)}} Z^{D g_j^{(l)}} \right] = \kappa^2 \prod_{u=l}^{L-1} \Gamma^{(u)}(x^{(i)}, x^{(j)})_{00}.
\]

Now we can extend this approach to all the variables. 
The first steps for \eqref{eqn:ZhL} are 
\begin{align*}
  \E \left[ Z^{D h_i^{(L)}} Z^{D h_j^{(L)}} \right] &= \kappa^2,
  \\
  \E \left[ Z^{D_\alpha h_i^{(L)}} Z^{D h_j^{(L)}} \right] &= 0,
  \\
  \E \left[ Z^{D_\alpha J_\alpha h_i^{(L)}} Z^{D h_j^{(L)}} \right] &= \kappa^2,
  \\
  \E \left[ Z^{D_\alpha h_i^{(L)}} Z^{D_\beta h_j^{(L)}} \right] &= 0,
  \\
  \E \left[ Z^{D_\alpha J_\alpha h_i^{(L)}} Z^{D_\beta h_j^{(L)}} \right] &= 0,
  \\
  \E \left[ Z^{D_\alpha J_\alpha h_i^{(L)}} Z^{D_\beta J_\beta h_j^{(L)}} \right] &= \kappa^2.
\end{align*}
For the recursion step for the activation application corresponding to \eqref{eqn:Zfromhtog}, we have
\begin{align}
  &\E \left[ Z^{D g_i^{(l)}} Z^{D g_j^{(l)}} \right] \nonumber
  \\
  = &\E \left[ \dot{\phi} \left( Z^{g_i^{(l)}}\right) Z^{D h_i^{(l)}}\dot{\phi} \left( Z^{g_j^{(l)}}\right) Z^{D h_j^{(l)}} \right] \nonumber
  \\
  = &\E \left[ \dot{\phi} \left( Z^{g_i^{(l)}} \right) \dot{\phi} \left( Z^{g_j^{(l)}} \right) \right] \E \left[ Z^{D h_i^{(l)}} Z^{D h_j^{(l)}} \right]\nonumber
  \\
  = &\Gamma^{(l)}(x^{(i)}, x^{(j)})_{00} \E \left[ Z^{D h_i^{(l)}} Z^{D h_j^{(l)}} \right],
  \\
  &\E \left[ Z^{D_\alpha g_i^{(l)}} Z^{ D g_j^{(l)}} \right] \nonumber
  \\
  = &\E \left[ \left( \ddot{\phi} \left(Z^{g_i^{(l)}}\right) \cdot Z^{J_\alpha g_i^{(l)}} \cdot Z^{D_\alpha J_\alpha h_i^{(l)}} + \dot{\phi}\left( Z^{g_i^{(l)}} \right) \cdot Z^{D_\alpha h_i^{(l)}}\right) \dot{\phi} \left( Z^{g_j^{(l)}}\right) Z^{D h_j^{(l)}} \right]\nonumber
  \\
  = &\E \left[ \ddot{\phi} \left(Z^{g_i^{(l)}}\right) \cdot Z^{J_\alpha g_i^{(l)}} \cdot \dot{\phi} \left( Z^{g_j^{(l)}}\right) \right] \E \left[ Z^{D_\alpha J_\alpha h_i^{(l)}} Z^{D h_j^{(l)}} \right] \nonumber
  \\
  &\qquad + \E \left[ \dot{\phi}\left( Z^{g_i^{(l)}} \right) \dot{\phi} \left( Z^{g_j^{(l)}}\right) \right] \E \left[ Z^{D_\alpha h_i^{(l)}} Z^{D h_j^{(l)}} \right]
  \\
  = &\Gamma^{(l)}(x^{(i)}, x^{(j)})_{\alpha 0} \E \left[ Z^{D_\alpha J_\alpha h_i^{(l)}} Z^{D h_j^{(l)}} \right] + \Gamma^{(l)}(x^{(i)}, x^{(j)})_{00} \E \left[ Z^{D_\alpha h_i^{(l)}} Z^{D h_j^{(l)}} \right], \nonumber
  \\
  &\E \left[ Z^{D_\alpha g_i^{(l)}} Z^{D_\beta g_j^{(l)}} \right] \nonumber
  \\
  = &\E\bigg[ \left( \ddot{\phi} \left(Z^{g_i^{(l)}}\right) \cdot Z^{J_\alpha g_i^{(l)}} \cdot Z^{D_\alpha J_\alpha h_i^{(l)}} + \dot{\phi}\left( Z^{g_i^{(l)}} \right) \cdot Z^{D_\alpha h_i^{(l)}}\right) \nonumber
  \\
  &\qquad \times \left( \ddot{\phi} \left(Z^{g_j^{(l)}}\right) \cdot Z^{J_\beta g_j^{(l)}} \cdot Z^{D_\beta J_\beta h_j^{(l)}} + \dot{\phi}\left( Z^{g_j^{(l)}} \right) \cdot Z^{D_\beta h_j^{(l)}}\right) \bigg]\nonumber
  \\
  = &\E \left[ \ddot{\phi} \left(Z^{g_i^{(l)}}\right) Z^{J_\alpha g_i^{(l)}} \ddot{\phi} \left(Z^{g_j^{(l)}}\right) Z^{J_\beta g_j^{(l)}}\right] \E \left[ Z^{D_\alpha J_\alpha h_i^{(l)}} Z^{D_\beta J_\beta h_j^{(l)}} \right] \nonumber
  \\ 
  &\quad + \E \left[ \ddot{\phi} \left(Z^{g_i^{(l)}}\right) Z^{J_\alpha g_i^{(l)}} \dot{\phi}\left( Z^{g_j^{(l)}} \right) \right] \E \left[ Z^{D_\alpha J_\alpha h_i^{(l)}} Z^{D_\beta h_j^{(l)}}\right]\nonumber
  \\
  &\quad + \E \left[ \dot{\phi}\left( Z^{g_i^{(l)}} \right)\ddot{\phi} \left(Z^{g_j^{(l)}}\right) Z^{J_\beta g_j^{(l)}} \right] \E \left[ Z^{D_\alpha h_i^{(l)}} Z^{D_\beta J_\beta h_j^{(l)}} \right]\nonumber
  \\
  &\quad + \E \left[ \dot{\phi}\left( Z^{g_i^{(l)}} \right) \dot{\phi}\left( Z^{g_j^{(l)}} \right) \right] \E \left[ Z^{D_\alpha h_i^{(l)}} Z^{D_\beta h_j^{(l)}} \right]\nonumber
  \\
  = &\Gamma^{(l)}(x^{(i)}, x^{(j)})_{\alpha\beta} \E \left[ Z^{D_\alpha J_\alpha h_i^{(l)}} Z^{D_\beta J_\beta h_j^{(l)}} \right] + \Gamma^{(l)}(x^{(i)}, x^{(j)})_{\alpha 0} \E \left[ Z^{D_\alpha J_\alpha h_i^{(l)}} Z^{D_\beta h_j^{(l)}}\right]\nonumber
  \\
  &\quad + \Gamma^{(l)}(x^{(i)}, x^{(j)})_{0\beta} \E \left[ Z^{D_\alpha h_i^{(l)}} Z^{D_\beta J_\beta h_j^{(l)}} \right] + \Gamma^{(l)}(x^{(i)}, x^{(j)})_{00} \E \left[ Z^{D_\alpha h_i^{(l)}} Z^{D_\beta h_j^{(l)}} \right].
\end{align}
Finally for the recursion step corresponding to matrix multiplication \eqref{eqn:Zfromgtoh}, we have
\begin{align*}
  \E \left[ Z^{D h_i^{(l-1)}} Z^{D h_j^{(l-1)}} \right] &= \E \left[ Z^{D g_i^{(l)}} Z^{D g_j^{(l)}} \right],
  \\
  \E \left[ Z^{D_\alpha h_i^{(l-1)}} Z^{D h_j^{(l-1)}} \right] &= \E \left[ Z^{D_\alpha g_i^{(l)}} Z^{D g_j^{(l)}} \right],
  \\
  \E \left[ Z^{D_\alpha h_i^{(l-1)}} Z^{D_\beta h_j^{(l-1)}} \right] &= \E \left[ Z^{D_\alpha g_i^{(l)}} Z^{D_\beta g_j^{(l)}} \right].
\end{align*}
Our final task is solving the recurrence relations,
\begin{align*}
  &\E \left[ Z^{D_\alpha g_i^{(l)}} Z^{D g_j^{(l)}} \right] 
  \\
  = &\kappa^2 \Gamma^{(l)}(x^{(i)}, x^{(j)})_{\alpha 0} \prod_{u=l+1}^{L-1} \Gamma^{(u)}(x^{(i)}, x^{(j)})_{00} + \Gamma^{(l)}(x^{(i)}, x^{(j)})_{00} \E \left[ Z^{D_\alpha h_i^{(l)}} Z^{D h_j^{(l)}} \right]
  \\
  = &\kappa^2 \sum_{u_\alpha=l}^{L-1} \Gamma^{(u_\alpha)}(x^{(i)}, x^{(j)})_{\alpha 0} \prod_{\substack{u=l \\ u \neq u_\alpha}}^{L-1} \Gamma^{(u)} (x^{(i)}, x^{(j)})_{00}
\end{align*}
and
\begin{align*}
  &\E \left[ Z^{D_\alpha g_i^{(l)}} Z^{D_\beta g_j^{(l)}} \right] 
  \\
  &= \Gamma^{(l)}(x^{(i)}, x^{(j)})_{\alpha\beta} \E \left[ Z^{D_\alpha J_\alpha h_i^{(l)}} Z^{D_\beta J_\beta h_j^{(l)}} \right] + \Gamma^{(l)}(x^{(i)}, x^{(j)})_{\alpha 0} \E \left[ Z^{D_\alpha J_\alpha h_i^{(l)}} Z^{D_\beta h_j^{(l)}}\right]
  \\
  &\quad + \Gamma^{(l)}(x^{(i)}, x^{(j)})_{0\beta} \E \left[ Z^{D_\alpha h_i^{(l)}} Z^{D_\beta J_\beta h_j^{(l)}} \right] + \Gamma^{(l)}(x^{(i)}, x^{(j)})_{00} \E \left[ Z^{D_\alpha h_i^{(l)}} Z^{D_\beta h_j^{(l)}} \right]
  \\
  &= \kappa^2 \Gamma^{(l)}(x^{(i)}, x^{(j)})_{\alpha\beta} \prod_{u=l+1}^{L-1} \Gamma^{(u)}(x^{(i)}, x^{(j)})_{00}
  \\
  &\quad + \kappa^2 \Gamma^{(l)}(x^{(i)}, x^{(j)})_{\alpha 0} \sum_{u_\beta=l+1}^{L-1} \Gamma^{(u_\beta)}(x^{(i)}, x^{(j)})_{0 \beta} \prod_{\substack{u=l \\ u \neq u_\beta}}^{L-1} \Gamma^{(u)} (x^{(i)}, x^{(j)})_{00}
  \\
  &\quad + \kappa^2 \Gamma^{(l)}(x^{(i)}, x^{(j)})_{0 \beta} \sum_{u_\alpha=l+1}^{L-1} \Gamma^{(u_\alpha)}(x^{(i)}, x^{(j)})_{\alpha 0} \prod_{\substack{u=l \\ u \neq u_\alpha}}^{L-1} \Gamma^{(u)} (x^{(i)}, x^{(j)})_{00}
  \\
  &\quad + \Gamma^{(l)}(x^{(i)}, x^{(j)})_{00} \E \left[ Z^{D_\alpha g_i^{(l+1)}} Z^{D_\beta g_j^{(l+1)}} \right]
  \\
  &= \kappa^2 \sum_{u' = l}^{L-1} \Gamma^{(u')}(x^{(i)}, x^{(j)})_{\alpha\beta} \prod_{\substack{u=l \\ u \neq l'}}^{L-1} \Gamma^{(u)}(x^{(i)}, x^{(j)})_{00}
  \\
  &\quad + \kappa^2 \sum_{\substack{u_\alpha=l\\u_\beta=l\\u_\alpha \neq u_\beta}} \Gamma^{(u_\alpha)}(x^{(i)}, x^{(j)})_{\alpha 0} \Gamma^{(u_\beta)}(x^{(i)}, x^{(j)})_{0 \beta} \prod_{\substack{u=l\\u \neq u_\alpha \\ u\neq u_\beta}}^{L-1} \Gamma^{(u)}(x^{(i)}, x^{(j)})_{00}.
\end{align*}

Together with some trivial computations:
\begin{align*}
  \left\langle h^{(0)}(x^{(i)}), h^{(0)}(x^{(j)})\right\rangle &= \langle x^{(i)}, x^{(j)} \rangle,
  \\
  \left\langle \frac{\partial h^{(0)}(x^{(i)})}{\partial x_{\alpha-1}^{(i)}}, h^{(0)}(x^{(j)})\right\rangle &= \langle e_{\alpha-1}, x^{(j)} \rangle,
  \\
  \left\langle h^{(0)}(x^{(i)}), \frac{\partial h^{(0)}(x^{(j)})}{\partial x_{\beta-1}^{(j)}}\right\rangle &= \langle x^{(i)}, e_{\beta-1} \rangle,
  \\
  \left\langle \frac{\partial h^{(0)}(x^{(i)})}{\partial x_{\alpha-1}^{(i)}}, \frac{\partial h^{(0)}(x^{(j)})}{\partial x_{\beta-1}^{(j)}}\right\rangle &= \langle e_{\alpha-1}, e_{\beta-1} \rangle,
  \\
  \left\langle \frac{\partial f(x)}{\partial g^{(L+1)}(x)}, \frac{\partial f(x)}{\partial g^{(L+1)}(x)}\right\rangle &= \kappa^2
\end{align*}
we can show the convergence of every inner product in the layer-wise description of the finite JNTK, which proves the desired convergence of the finite JNTK.

To utilise this theorem, we need a version that gives a convergence rate.
Since the Tensor Program framework does not give an explicit convergence rate, we will use an implicit function to give a convergence rate.
\begin{remark} \label{rmk:JNTKInit2}
  There exists function $F_2(\cdot, \cdot, \cdot, \cdot)$, so that if the width of network $d$ is large enough, % data, depth, probability, error
  \[
    d \ge F_2\left( x^{(1:N)}, L, \delta, \epsilon \right),
  \]
  then the finite JNTK and the limiting JNTK satisfy
  \[
    \left\| \Theta_{d, \theta_0}(x^{(i)}, x^{(j)}) - \kappa^2 \Theta(x^{(i)}, x^{(j)}) \right\|_F \le \kappa^2 \epsilon
  \]
  for all $i, j \in [N]$, with probability at least $1 - \delta$.
\end{remark}

\section{Finite JNTK After Perturbation} \label{sec:NTKPerturbProof}

\begin{theorem} \label{thm:JNTKCloseToInit}
  Let $\theta_0 = \{\WO{l}\}_{l=1}^{L+1}$ be the weights of the $L$-layer neural network, initialised with $\cN(0,1)$.

  Let $\omega$ be small enough so that it satisfies
  \[
    \omega \le O\left( (\log d)^{-2L} \right).  
  \]
  We also assume that the width is large enough to satisfy
  \[
    d \ge \Omega(\exp(L) N d_0 / \delta).
  \]
  
  Now let $\theta_\mathcal{A} = \{\WA{l}\}_{l=1}^{L+1}$ be the `trained' network, which is close to the initialisation: 
  \begin{align*}
    \left\| \WA{l} - \WO{l}\right\| &\le \omega \sqrt{d},
    \\
    \left\| \WA{l} - \WO{l} \right\|_\infty &\le \begin{cases}
      \omega \log d & \text{ if } l = 1,
      \\
      \omega \sqrt{d} \log d & \text{ if }2 \le l \le L,
      \\
      \infty & \text{ if } l = L+1,
    \end{cases}
    \\
    \left\| \WA{l} - \WO{l} \right\|_1 &\le \begin{cases}
      \infty & \text{ if } l =1,
      \\
      \omega \sqrt{d} \log d & \text{ if }2 \le l \le L,
      \\
      \omega \log d & \text{ if }l = L+1.
    \end{cases}
  \end{align*}

  Then with probability at least $1 - \delta$ over the randomness of $\theta_0$, for any choice of $\theta_\mathcal{A}$ satisfying the assumption, the entries of the finite JNTK satisfy
  \[
    \left\| \Theta_{d, \theta_0}(x^{(i)}, x^{(j)}) - \Theta_{d, \theta_\mathcal{A}}(x^{(i)}, x^{(j)}) \right\| \le \kappa^2 \omega \exp(O(L)) \log d.
  \]
  for all $i, j \in [N]$.
\end{theorem}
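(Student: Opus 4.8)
I would work from the layer-wise expansion of the finite JNTK in \eqref{eqn:JNTKLayerWise}: every entry of $\Theta_{d,\theta}(x,x')$ is a finite sum, over layers $l\in[L+1]$ and over a bounded number of ``forward--backward pairs'', of a product $F_l\,B_l$, where $F_l$ is a normalised inner product $\tfrac1{C_{l-1}}\langle\cdot,\cdot\rangle$ between two of the vectors $h^{(l-1)}(\cdot)$, $\partial h^{(l-1)}(\cdot)/\partial x_{\alpha-1}$, and $B_l$ is an inner product between two of the backpropagated vectors $\partial f(\cdot)/\partial g^{(l)}(\cdot)$, $\partial J(f)(\cdot)_{\alpha-1}/\partial g^{(l)}(\cdot)$, $\partial J(f)(\cdot)_{\alpha-1}/\partial(\partial g^{(l)}(\cdot)/\partial x_{\alpha-1})$ (each of which equals $\tfrac1{\sqrt{d}}$ times one of the tensor-program variables $Dg^{(l)}_i$, $D_\alpha g^{(l)}_i$, $D_\alpha J_\alpha g^{(l)}_i$). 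For each such term, $F_l^0 B_l^0 - F_l^{\mathcal{A}}B_l^{\mathcal{A}} = F_l^0(B_l^0-B_l^{\mathcal{A}}) + (F_l^0-F_l^{\mathcal{A}})B_l^{\mathcal{A}}$, so it suffices to prove, with probability $\ge 1-\delta$ over $\theta_0$ and for every admissible $\theta_{\mathcal{A}}$, the bounds $|F_l^0|=O(\mathrm{polylog}\,d)$, $|B_l^0|,|B_l^{\mathcal{A}}|=O(\kappa^2\exp(O(L))\,\mathrm{polylog}\,d)$, $|F_l^0-F_l^{\mathcal{A}}|\le\omega\exp(O(L))\,\mathrm{polylog}\,d$ and $|B_l^0-B_l^{\mathcal{A}}|\le\kappa^2\omega\exp(O(L))\,\mathrm{polylog}\,d$; summing over the $L+1$ layers and the $O(d_0^2)$ index choices then gives the stated bound $\kappa^2\omega\exp(O(L))\log d$.

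\textbf{Step 1 (bounds at initialisation).} Combining Gaussian concentration — $\|\WO{l}\|_{2\to2}=O(\sqrt d)$, and $\|\WO{l}v\|_\infty=O(\sqrt{\log d}\,\|v\|_2)$ whenever $v$ is independent of $\WO{l}$ — with the Lipschitzness from Assumption~\ref{assm:activation-assumption} (in particular $|\dot\phi|\le M_1$ and $|\ddot\phi|\le M_2$ pointwise, since $\phi,\dot\phi$ are Lipschitz), I would prove by induction on $l$ that, on the training inputs, $\|h^{(l)}_0(x^{(i)})\|_2$ and $\|J_\alpha h^{(l)}_0(x^{(i)})\|_2$ are of order $\sqrt{C_l}$ while $\|g^{(l)}_0(x^{(i)})\|_\infty$ and $\|J_\alpha g^{(l)}_0(x^{(i)})\|_\infty$ are $O(\mathrm{polylog}\,d)$, together with the analogous statements for $Dg^{(l)}_0$, $D_\alpha g^{(l)}_0$, $D_\alpha J_\alpha g^{(l)}_0$ carrying an extra factor $\kappa$ (and the usual $\exp(O(L))$); the $\ell_\infty$ bounds on the Jacobian preactivations are exactly what let us control the Hadamard products $\dot\phi(g^{(l)})\odot(\cdot)$ and $\ddot\phi(g^{(l)})\odot J_\alpha g^{(l)}\odot(\cdot)$ in $\ell_2$. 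Substituting into the normalised inner products gives $|F_l^0|=O(\mathrm{polylog}\,d)$ and $|B_l^0|=O(\kappa^2\exp(O(L))\,\mathrm{polylog}\,d)$; the $\exp(O(L))$ is the product of $L$ factors $\E[\dot\phi\dot\phi]\le M_1^2$ appearing in $B_l^0$.

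\textbf{Step 2 (perturbation propagation).} For the change bounds I would propagate forward: from
\[
  g^{(l)}_{\mathcal{A}}-g^{(l)}_0=\frac{1}{\sqrt{C_{l-1}}}\big((\WA{l}-\WO{l})h^{(l-1)}_{\mathcal{A}}+\WO{l}(h^{(l-1)}_{\mathcal{A}}-h^{(l-1)}_0)\big)
\]
and $\|h^{(l)}_{\mathcal{A}}-h^{(l)}_0\|_2\le M_1\|g^{(l)}_{\mathcal{A}}-g^{(l)}_0\|_2$, bounding the first summand by $\|\WA{l}-\WO{l}\|_{2\to2}\,\|h^{(l-1)}_{\mathcal{A}}\|_2/\sqrt{C_{l-1}}=O(\omega\sqrt{C_l})$ and the second by $O(1)\,\|h^{(l-1)}_{\mathcal{A}}-h^{(l-1)}_0\|_2$, one gets a linear recurrence and hence $\|h^{(l)}_{\mathcal{A}}-h^{(l)}_0\|_2=O(\omega\sqrt{C_l}\exp(O(L)))$; the same argument, now also using the Lipschitz bound on $\dot\phi$ and the $\ell_\infty$ bounds on $J_\alpha g^{(l)}_0$ from Step~1, controls $\|J_\alpha h^{(l)}_{\mathcal{A}}-J_\alpha h^{(l)}_0\|_2$ by the same quantity. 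A symmetric induction from layer $L+1$ down to $1$, using $(\WO{l})^\intercal$ in place of $\WO{l}$, bounds $\|Dg^{(l)}_{\mathcal{A}}-Dg^{(l)}_0\|_2$, $\|D_\alpha g^{(l)}_{\mathcal{A}}-D_\alpha g^{(l)}_0\|_2$ and $\|D_\alpha J_\alpha g^{(l)}_{\mathcal{A}}-D_\alpha J_\alpha g^{(l)}_0\|_2$ by $O(\kappa\omega\sqrt{C_l}\exp(O(L)))$. Feeding these into the telescoped products of Step~1 yields the change bounds on $F_l$ and $B_l$, and the first paragraph then concludes.

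\textbf{Main obstacle.} The crux — and the reason the theorem also imposes bounds on $\WA{l}-\WO{l}$ in the $\|\cdot\|_\infty$ (max absolute row sum) and $\|\cdot\|_1$ (max absolute column sum) operator norms, and requires $d=\Omega(\exp(L)Nd_0/\delta)$ — is that Step~2's backward bounds silently need $\|J_\alpha g^{(l)}_{\mathcal{A}}(x^{(i)})\|_\infty$ (and the $\ell_\infty$ norms of the perturbed backward Jacobian vectors) to stay $O(\mathrm{polylog}\,d)$ \emph{simultaneously over all admissible $\theta_{\mathcal{A}}$}, and the crude estimate $\|v\|_\infty\le\|v\|_2$ loses a fatal factor $\sqrt d$ here. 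So one must track $\ell_\infty$ directly: in $J_\alpha g^{(l)}_{\mathcal{A}}-J_\alpha g^{(l)}_0$ the ``fresh'' part $(\WA{l}-\WO{l})J_\alpha h^{(l-1)}_{\mathcal{A}}$ is handled by $\|\cdot\|_\infty\le\|\WA{l}-\WO{l}\|_\infty\|J_\alpha h^{(l-1)}_{\mathcal{A}}\|_\infty$ — this is exactly where the assumed max-row-sum bound $\omega\sqrt{C_{l-1}}\log d$ enters (and, dually, in the backward pass, where $\|(\WA{l})^\intercal-(\WO{l})^\intercal\|_{\infty\to\infty}=\|\WA{l}-\WO{l}\|_1$ enters) — while the ``propagated'' part $\WO{l}(J_\alpha h^{(l-1)}_{\mathcal{A}}-J_\alpha h^{(l-1)}_0)$ is controlled using that $\WO{l}$ is independent of everything computed in layers $<l$, so that conditionally on those layers its entries are Gaussian with variance $\|J_\alpha h^{(l-1)}_{\mathcal{A}}-J_\alpha h^{(l-1)}_0\|_2^2/C_{l-1}$; upgrading this conditional bound to one uniform over the continuum of admissible $\theta_{\mathcal{A}}$ requires a covering-number argument over the bounded-radius, finite-dimensional image of the perturbation map at each layer, plus a union bound over $i,j\in[N]$ and $\alpha,\beta\in[d_0]$, which is why $d$ must be taken at least $\Omega(\exp(L)Nd_0/\delta)$. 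I expect this uniform $\ell_\infty$ control — at each layer separating the adversarially chosen direction of the perturbation from the fresh Gaussian weights while keeping the entropy bound affordable — to be the main technical hurdle; everything else is triangle-inequality bookkeeping and the geometric accumulation of the $\exp(O(L))$ factor.
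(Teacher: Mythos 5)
Your overall decomposition — layer-wise expansion of the finite JNTK into products $F_l B_l$, triangle inequality, and separate ``bound at initialisation'' plus ``stability under perturbation'' steps for both the forward and backward quantities — matches the paper's strategy (Appendices I.1 and I.2 there). You also correctly identify the central difficulty: Hadamard products such as $\ddot\phi(g^{(l)})\odot J_\alpha g^{(l)}\odot(\cdot)$ force you to control one factor in $\ell_\infty$, since the crude $\|v\|_\infty\le\|v\|_2$ loses a fatal $\sqrt d$, and this is exactly why the statement carries $\|\cdot\|_\infty$ and $\|\cdot\|_1$ conditions on $\WA{l}-\WO{l}$.

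Where you diverge, however, is in how you resolve the uniformity over the continuum of admissible $\theta_{\mathcal A}$. You propose conditioning on the lower layers to make the ``propagated'' term $\WO{l}(J_\alpha h^{(l-1)}_{\mathcal A}-J_\alpha h^{(l-1)}_0)$ conditionally Gaussian, and then running a covering-number argument to make this uniform in $\theta_{\mathcal A}$. The paper never does this. Instead, all high-probability events are stated purely in terms of $\theta_0$ (Lemmas~\ref{lem:ForwardNormInit}--\ref{lem:BackwardJacobNormInit} and Corollary~\ref{lem:supnormbound}), and the perturbation bounds (Lemmas~\ref{lem:perturbact}--\ref{lem:perturbjacobback}) are then established \emph{deterministically} on that event, for every admissible $\theta_{\mathcal A}$ simultaneously, by pure operator-norm triangle inequalities: e.g.\ $\|\ga{l}-\go{l}\|_\infty \le \|\tfrac{1}{\sqrt d}\WA{l}\|_\infty\|\ha{l-1}-\ho{l-1}\|_\infty + \tfrac1{\sqrt d}\|\WA{l}-\WO{l}\|_\infty\|\ho{l-1}\|_\infty$, and on the backward pass $\|(\WA{l+1}-\WO{l+1})^\intercal v\|_\infty\le\|\WA{l+1}-\WO{l+1}\|_1\|v\|_\infty$. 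The crucial point is that each Hadamard product is arranged so that at most one factor is a ``perturbed'' $\ell_2$-quantity and the remaining $\ell_\infty$-factors are either $\theta_0$-only (bounded once and for all by Corollary~\ref{lem:supnormbound}) or are $\ell_\infty$-perturbations controlled recursively through the assumed $\|\cdot\|_\infty$/$\|\cdot\|_1$ bounds on the weight increments. Uniformity in $\theta_{\mathcal A}$ is then automatic, with no entropy argument at all; the $d\ge\Omega(\exp(L)Nd_0/\delta)$ requirement is just the union bound over $l\in[L]$, $i,j\in[N]$, $\alpha,\beta\in[d_0]$.

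Your proposed covering route also has a subtle problem you half-acknowledge but would need to face squarely: $\theta_{\mathcal A}$ may be chosen adversarially \emph{after} observing all of $\theta_0$, including $\WO{l}$, so $J_\alpha h^{(l-1)}_{\mathcal A}$ is not independent of $\WO{l}$ and the conditional-Gaussian claim does not hold as stated; you would be forced into a genuine uniform-concentration argument whose entropy bound would have to be checked carefully against the $\sqrt{\log d}$ gain you want. The paper's route simply sidesteps this. Conceptually, both are workable, but the paper's is a closed, elementary argument while yours introduces machinery the theorem's own hypotheses (the $\|\cdot\|_\infty$/$\|\cdot\|_1$ bounds) were designed to make unnecessary. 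You also omit the distinction the paper makes at the boundary layers — $\|\WA{1}-\WO{1}\|_\infty\le\omega\log d$ rather than $\omega\sqrt d\log d$, and dually $\|\WA{L+1}-\WO{L+1}\|_1\le\omega\log d$ — which matters because $h^{(0)}=x$ lives in $\R^{d_0}$ and the output is scalar; this is where the otherwise-symmetric $\infty$/$1$ bookkeeping breaks symmetry.
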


In the statement of this Theorem, we introduced two weights $\WO{l}$ and $\WA{l}$ each representing the initialised weight at time \textbf{0} and the \textbf{A}dversarially perturbated weights.
Using these matrices defines two separate forward computations, which we write as
\begin{align*}
  \go{1}(x) &\defeq \WO{1} x, & \ga{1}(x) &\defeq \WA{1} x,
  \\
  \ho{1}(x) &\defeq \phi \left( \go{1}(x) \right), & \ha{1}(x) &\defeq \phi\left( \ga{1}(x) \right),
  \\
  \go{l}(x) &\defeq \frac{1}{\sqrt{d}} \WO{l} \ho{l-1} (x), & \ga{l}(x) &\defeq \frac{1}{\sqrt{d}} \WA{l} \ha{l-1}(x),
  \\
  \ho{l}(x) &\defeq \phi\left( \go{l}(x) \right), & \ha{l}(x) &\defeq \phi \left( \ga{l}(x) \right),
  \\
  f_0(x) &\defeq \frac{\kappa}{\sqrt{d_L}} \WO{L+1} \ho{L}(x), & f_\mathcal{A}(x) &\defeq \frac{\kappa}{\sqrt{d_L}} \WA{L+1} \ha{L}(x).
\end{align*}
We often omit function argument $x$ when it is clear from the context.

Before stating the required lemmas and proofs, we first give the brief proof idea of Theoerm 9.
By the layer-wise decomposition of the finite JNTK in \eqref{eqn:JNTKLayerWise}, it is enough to show that each layer-wise part of the finite JNTK is close to its initialisation.
To show this, we can show it by showing that the inner products appearing in each summand are close to their initialisation after perturbation, and at initialisation, the inner products are bounded with high probability.
Again, to show that the inner products after perturbation stay constant, we need to show at initialisation the norm of vectors are bounded with high probability, and their difference after perturbation is small.
Overall, we need to show the following informal lemmas, the activations and their norms are bounded with high probability:
\begin{equation}
  \frac{1}{\sqrt{C_{l-1}}}\left\|\ho{l-1}(x^{(i)})\right\|, \frac{1}{\sqrt{C_{l-1}}}\left\|\frac{\partial \ho{l-1}(x^{(i)})}{\partial x_{\alpha-1}^{(i)}} \right\| \le \tilde{O}(1), \label{eqn:InformalForwardBound}
\end{equation}
the backpropagated gradients of pre-activations are bounded with high probability:
\begin{equation}
  \left\|\frac{\partial f_0(x^{(i)})}{\partial \go{l}(x^{(i)})}\right\|, \left\| \frac{\partial J(f_0)(x^{(i)})_{\alpha-1}}{\partial \go{l}(x^{(i)})} \right\| \le \kappa \tilde{O}(1), \label{eqn:InformalBackwardBound}
\end{equation}
the activations stay close to their initialisation:
\begin{equation}
  \frac{1}{\sqrt{C_{l-1}}}\left\|\ho{l-1}(x^{(i)}) - \ha{l-1}(x^{(i)})\right\|,\frac{1}{\sqrt{C_{l-1}}} \left\|\frac{\partial \ho{l-1}(x^{(i)})}{\partial x_{\alpha-1}^{(i)}} - \frac{\partial \ha{l-1}(x^{(i)})}{\partial x_{\alpha-1}^{(i)}}\right\| \le \omega \tilde{O}(1), \label{eqn:InformalForwardConstant}
\end{equation}
and the backpropagated gradients stay close to their initialisation:
\begin{equation}
  \left\|\frac{\partial f_0(x^{(i)})}{\partial \go{l}(x^{(i)})} - \frac{\partial f_{\mathcal{A}}(x^{(i)})}{\partial \ga{l}(x^{(i)})}\right\|, \left\| \frac{\partial J(f_0)(x^{(i)})_{\alpha-1}}{\partial \go{l}(x^{(i)})} - \frac{\partial J(f_{\mathcal{A}})(x^{(i)})_{\alpha-1}}{\partial \ga{l}(x^{(i)})}\right\| \le \kappa \omega \tilde{O}(1). \label{eqn:InformalBackwardConstant}
\end{equation}
Here we used $\tilde{O}(1)$ to denote that there is implicit dependency over the logarithmic factor of $d$.

\subsection{Bounds at initialisation} \label{subsec:boundinit}
In this section, we will prove bounds holding at initialisation, especially the lemmas formalising Equation~\ref{eqn:InformalForwardBound} and \ref{eqn:InformalBackwardBound}.
All the lemmas here are stated for single data $x$, but we can apply the Union-bound to data indices, showing that the results hold for all $x^{(i)}$ with probability at least $1 - N \cdot \delta$.
Similarly for those applied to the Jacobian values, we can apply the Union-bound to the input dimension indices, showing equivalent results with probability at least $1 - N d_0 \delta$.
\begin{lemma} \label{lem:ForwardNormInit}
  Fix the fail probability $\delta > 0$, and the network depth $L$. 
  If the network width satisfies both $d \ge \Omega(\log(1 / \delta))$ and $d \ge \exp(\Omega(L))$, then 
  \[
    \left\| \go{l}(x) \right\| \le 2\sqrt{d},\qquad \left\| \ho{l}(x) \right\| \le 2 \sqrt{d},
  \]
  holds for all $l \in [L]$ with probability at least $1 - \delta$.
\end{lemma}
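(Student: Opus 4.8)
The plan is to prove the two bounds simultaneously by induction on the layer index $l$, tracking the normalised squared norms $p_l \defeq \tfrac1d\|\go{l}(x)\|^2$ and $q_l \defeq \tfrac1d\|\ho{l}(x)\|^2$ and showing that, with high probability, each stays in $[0,4]$, so that $\|\go{l}(x)\|, \|\ho{l}(x)\| \le 2\sqrt d$. The natural target value for both is $1$: by Assumption~\ref{assm:activation-assumption} and $\|x\| = 1$ one checks by an immediate induction that $\Sigma^{(l)}(x,x)_{00} = 1$ for every $l$, since $\Sigma^{(0)}(x,x)_{00} = \|x\|^2 = 1$ and $\Sigma^{(l)}(x,x)_{00} = \E_{z\sim\cN(0,1)}[\phi(z)^2] = 1$ whenever $\Sigma^{(l-1)}(x,x)_{00} = 1$.

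First I would fix the conditional structure. Because $\ho{l-1}(x)$ depends only on $\WO{1},\dots,\WO{l-1}$, it is independent of $\WO{l}$; hence, conditionally on $\ho{l-1}(x)$, the coordinates $\go{l}(x)_k = \tfrac1{\sqrt d}\langle (\WO{l})_k, \ho{l-1}(x)\rangle$ are i.i.d.\ $\cN(0, q_{l-1})$ (base case: $\go{1}(x)=\WO{1}x$ has i.i.d.\ $\cN(0,1)$ coordinates because $\|x\|=1$). Two concentration steps then follow: (i) $\|\go{l}(x)\|^2 \stackrel{d}{=} q_{l-1}\chi^2_d$, so the Laurent--Massart bound gives $p_l \le q_{l-1}(1+t_l)$ except with probability $2\exp(-\Omega(d t_l^2))$; and (ii) conditionally on $\ho{l-1}(x)$, $q_l = \tfrac1d\sum_k \phi(\go{l}(x)_k)^2$ is a normalised sum of i.i.d.\ sub-exponential variables with mean $\psi(q_{l-1})$, where $\psi(q)\defeq\E_{z\sim\cN(0,q)}[\phi(z)^2]$, so Bernstein's inequality gives $|q_l - \psi(q_{l-1})| \le t_l$ except with probability $2\exp(-\Omega(d t_l^2))$, the sub-exponential norm being $O(1)$ as long as $q_{l-1} = O(1)$ (using $|\phi(z)|\le|\phi(0)|+M_1|z|$ and the normalisation-derived bound $|\phi(0)|\le\sqrt{2+2M_1^2}$). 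Next I would close the recursion: since $|\dot\phi|\le M_1$ everywhere, differentiating under the expectation after the substitution $z=\sqrt q\,w$ shows $\psi$ is $C_0$-Lipschitz on a neighbourhood of $1$ for some $C_0=C_0(M_1)$, with $\psi(1)=1$. On the intersection of the good events, $|q_l - 1| \le C_0|q_{l-1}-1| + t_l$ and $|p_l-1|\le|q_{l-1}-1|+(1+t_l)$, so unrolling from $q_0 = 1$ gives $|q_l - 1| \le \sum_{j\le l} C_0^{\,l-j}t_j$, which is $\le 3$ — hence $p_l,q_l\le 4$ — once each slack satisfies $t_j \le c\,(1+C_0)^{-L}/L$ for a small absolute constant $c$. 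Finally, a union bound over the $2L$ concentration events gives total failure probability $\le 4L\exp(-\Omega(d\,(1+C_0)^{-2L}/L^2))$, which is $\le\delta$ once $d \ge \Omega\!\left((1+C_0)^{2L}L^2\log(L/\delta)\right)$; this is implied by the hypotheses $d\ge\Omega(\log(1/\delta))$ and $d\ge\exp(\Omega(L))$, absorbing $(1+C_0)^{2L}$, $L^2$ and $\log L$ into the exponential term.

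The main obstacle is the control of error accumulation across the $L$ layers. The crude Lipschitz estimate $\|\phi(v)\| \le \sqrt d\,|\phi(0)| + M_1\|v\|$ multiplies the norm by roughly $M_1^L$ per forward pass and so cannot deliver the absolute constant $2$; the clean constant is available only because the normalisation $\E_{z\sim\cN(0,1)}[\phi(z)^2]=1$ makes $q=1$ a fixed point of $\psi$, so that the deviations $|q_l-1|$ are driven purely by the per-layer concentration slacks, which we can afford to take exponentially small in $L$ precisely because $d\ge\exp(\Omega(L))$. A secondary technical point, easy but necessary, is to verify that the sub-exponential (Bernstein) constants for $\sum_k\phi(\go{l}(x)_k)^2$ stay $O(1)$ at every step of the induction, which is why one must propagate the strong bound $q_{l-1}\le 4$ rather than a weaker $q_{l-1}=O(C^l)$.
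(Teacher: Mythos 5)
Your proposal is correct and follows essentially the same route as the paper's proof: condition on the previous layer, use Bernstein for the sub-exponential average $\tfrac1d\sum_k\phi(\go{l}(x)_k)^2$, exploit that the normalisation $\E_{z\sim\cN(0,1)}[\phi(z)^2]=1$ makes $q=1$ a fixed point of $\psi(q)=\E_{z\sim\cN(0,q)}[\phi(z)^2]$ together with local Lipschitzness of $\psi$, and unroll the resulting recursion with per-layer slacks taken exponentially small in $L$, absorbed by $d\ge\exp(\Omega(L))$. The only small refinement is that you make the $\chi^2$ concentration for $\|\go{l}(x)\|$ explicit, which the paper leaves implicit.
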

\begin{proof}
  We prove this lemma by induction on the conditioned activations.
  Our inductive assumption is
  \[
    \left| \frac{\|\ho{l}\|^2}{d} - 1 \right| \le \frac{G_l(\beta)}{2G_L(\beta)} \text{ with probability at least } 1 - \frac{l \cdot\delta}{L}
  \]
  for some $\beta > 0$ that will be computed later.
  
  To prove the induction step for this assumption, we will use the following claim:
  
  \textbf{Claim : } \textit{If $v \in \R^{d'}$ satisfy 
  \[
    \left|\|v\|^2 - 1 \right| \le \epsilon
  \]
  with $\epsilon < 1/2$, then for $W \in \R^{d \times d'}$ and $W_{ij} \sim \cN(0, 1)$, 
  \[
    \left| \frac{\left\| \phi(W v) \right\|^2}{d} - 1 \right| \le \frac{1}{2 G_\beta(L)} + \beta \epsilon
  \]
  for some $\beta$, with probability at least $1 - \frac{\delta}{L}$. }

  For the proof of claim, we first apply the triangle inequality to decompose the difference as
  \begin{align*}
    &\left| \frac{\left\| \phi(W v) \right\|^2}{d} - 1 \right|
    \\
    \le &\left| \frac{\left\| \phi(W v) \right\|^2}{d} - \E \left[ \phi(\|v\|Z)^2 \right] \right| + \left| \E \left[ \phi(\|v\|Z)^2 \right] - 1 \right|
  \end{align*}
  where we can rewrite the first term as concentration inequality,
  \[
    \left| \frac{1}{d} \sum_{i=1}^d \phi(\|v\|Z_i)^2 - \E \left[ \phi(\|v\|Z)^2 \right] \right|.
  \]
  The random variable here is sub-exponential, whose sub-exponential norm is bounded as
  \begin{align}
    \|\phi(\|v\| Z)^2\|_{\mathrm{SE}} &\le \|\phi(\|v\|Z)\|_{\mathrm{SG}}^2 \nonumber
    \\
    &\le M_1^2 \|\|v\|Z\|_{\mathrm{SG}}^2 \nonumber
    \\
    &\le M_1^2 \|v\|^2 \|Z\|_{\mathrm{SG}}^2 \nonumber
    \\
    &\le \frac{4 M_1^2}{\ln 2} \label{eqn:SENorm}. 
  \end{align}
  where $Z \sim \cN(0, 1)$. 
  By Bernstein's inequality (Corollary 2.8.3 of \cite{BookHDP}), we have
  \begin{align*}
    \left| \frac{1}{d} \sum_{i=1}^d \phi(\|v\|Z_i)^2 - \E \left[ \phi(\|v\|Z)^2 \right] \right| \le \frac{1}{2 G_\beta(L)}
  \end{align*}
  with a probability of at least
  \[
    1 - 2 \exp\left( -c \max \left( \frac{1}{4 K^2 G_\beta(L)^2}, \frac{1}{2 K G_\beta(L)} \right) \cdot d\right)
  \]
  where $K$ is the sub-exponential norm computed in \eqref{eqn:SENorm}.

  For the second term, we can observe that this is similar to local Lipschitz-ness, which is clearer if we rewrite it as
  \[
    \left| \E \left[ \phi(\|v\|Z)^2 \right] - 1 \right|
    = \left| \E \left[ \phi(\|v\|Z)^2 \right] - \E \left[ \phi(1 \cdot Z)^2 \right] \right|.
  \]
  Which can be bounded as
  \begin{align*}
    &\left| \E \left[ \phi(\|v\|Z)^2 \right] - \E \left[ \phi(Z)^2 \right] \right|
    \\
    \le &\left| \E \left[ \phi(\|v\|Z)^2 - \phi(Z)^2 \right] \right|
    \\
    \le &\E \left| \phi(\|v\|Z)^2 - \phi(Z)^2 \right|
    \\
    \le &\E \left|\phi(\|v\| Z) - \phi(Z)\right| \left|\phi(\|v\|Z) + \phi(Z)\right|
    \\
    \le &M_1 |\|v\| - 1| \E\left|\phi(\|v\|Z) + \phi(Z)\right|
    \\
    \le &M_1 |\|v\| - 1| \E \left| \phi(0) + (\phi(\|v\|Z) - \phi(0)) + \phi(0) + (\phi(Z) - \phi(0)) \right|
    \\
    \le &M_1 |\|v\| - 1| \E \left| 2\phi(0) + M_1 (\|v\| + 1)Z \right|
    \\
    = &M_1 |\|v\| - 1| \left( 2 \phi(0) + M_1(\|v\| + 1) \sqrt{\frac{2}{\pi}} \right)
    \\
    \le &M_1 \left( 2 \phi(0) + M_1 \frac{5}{\sqrt{2\pi}} \right) |\|v\|^2 - 1| 
  \end{align*}
  where we used the assumption that $\frac{1}{2} \le \|v\| \le \frac{3}{2}$ in the last inequality. 
  Combining these two inequalities, we prove the claim
  \[
    \left| \frac{\|\phi(Wv)\|^2}{d} \right| \le \frac{1}{2 G_\beta(L)} + \beta \epsilon
  \]
  with $\beta = M_1 \left( 2 \phi(0) + M_1 \frac{5}{\sqrt{2\pi}} \right)$.

  At layer one, we can show that
  \[
    \left| \frac{\|\ho{1}\|^2}{d} - 1 \right| \le \frac{1}{2 G_\beta(L)}
  \]
  with probability at least $1 - \delta / L$, which proves the inductive assumption's base case $l = 1$.
  
  The inductive step is similar, inductive assumption implies the condition of the claim, which then gives
  \[
    \left| \frac{\|\ho{l}\|^2}{d} - 1 \right| \le \frac{1}{2 G_\beta(L)}+ \beta \left| \frac{\|\ho{l-1}\|^2}{d} - 1 \right| = \frac{1 + \beta G_\beta(l-1)}{2 G_\beta(L)} = \frac{G_\beta(l)}{2 G_\beta(L)}
  \] 
  with probability at least $1 - l \delta / L$ by Union-bound on the previous layer's inductive assumption and the claim's event.
\end{proof}

\begin{lemma} \label{lem:ForwardJacobNormInit}
  Fix the fail probability $\delta > 0$, and the network depth $L$. 
  If the network width satisfies $d \ge \Omega(\log (L / \delta))$, then
  \[
    \left\| \frac{\partial \go{l}(x)}{\partial x_{\alpha-1}} \right\| \le C^l M_1^{l-1} \sqrt{d},\qquad \left\| \frac{\partial \ho{l}(x)}{\partial x_{\alpha-1}} \right\| \le (C M_1)^l \sqrt{d},
  \]
  holds for all $l \in [L]$ and $\alpha \in [d_0]$ with probability at least $1 - \delta$.
\end{lemma}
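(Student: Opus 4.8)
The plan is to prove both bounds simultaneously by induction on the layer index $l$, exploiting the Jacobian-propagation identities that already underlie the tensor program in Appendix~\ref{sec:ProofJacobianNNGPInitialisation}:
\[
  \frac{\partial \go{1}(x)}{\partial x_{\alpha-1}} = \WO{1} e_{\alpha-1},
  \qquad
  \frac{\partial \ho{l}(x)}{\partial x_{\alpha-1}} = \dot{\phi}\!\left(\go{l}(x)\right) \odot \frac{\partial \go{l}(x)}{\partial x_{\alpha-1}},
  \qquad
  \frac{\partial \go{l}(x)}{\partial x_{\alpha-1}} = \tfrac{1}{\sqrt{d}}\, \WO{l}\, \frac{\partial \ho{l-1}(x)}{\partial x_{\alpha-1}} \quad (l \ge 2).
\]
The activation step is essentially free: since $\phi$ is $M_1$-Lipschitz by Assumption~\ref{assm:activation-assumption}(1), $\lVert \dot\phi \rVert_\infty \le M_1$, so $\lVert \partial_{x_{\alpha-1}}\ho{l}(x)\rVert \le M_1 \lVert \partial_{x_{\alpha-1}}\go{l}(x)\rVert$, which accounts for the extra factor $M_1$ picked up when passing from the $\go{l}$-bound to the $\ho{l}$-bound. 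So everything reduces to controlling the linear maps: $\lVert \WO{1}e_{\alpha-1}\rVert$ at the first layer, and $\lVert \tfrac{1}{\sqrt d}\WO{l} v\rVert$ at layer $l \ge 2$, where $v = \partial_{x_{\alpha-1}}\ho{l-1}(x)$.

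For each linear step I would condition on $\WO{1},\dots,\WO{l-1}$; this fixes $v$, and then $\WO{l}v$ is a centred Gaussian vector with covariance $\lVert v\rVert^2 I_d$, so $\lVert \WO{l}v\rVert^2/\lVert v\rVert^2$ has a $\chi^2_d$ distribution. A standard $\chi^2$ tail bound gives $\chi^2_d \le C^2 d$ (for a fixed absolute constant, say $C=\sqrt 5$) with failure probability at most $e^{-u}$ whenever $d \ge u$. Taking $u = \Theta(\log(Ld_0/\delta))$ and noting the input dimension $d_0$ is fixed (so $\log(Ld_0/\delta)=O(\log(L/\delta))$), a union bound over all pairs $(l,\alpha)\in[L]\times[d_0]$ — each event being governed by the $\chi^2_d$ tail conditionally on the earlier weights, hence unconditionally — leaves total failure probability below $\delta$. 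The same $\chi^2_d$ bound handles the base layer, since $\lVert \WO{1}e_{\alpha-1}\rVert^2 \sim \chi^2_d$, giving $\lVert \WO{1}e_{\alpha-1}\rVert \le C\sqrt d$.

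On the resulting good event the induction is a one-liner in each direction: from $\lVert \partial_{x_{\alpha-1}}\go{l-1}(x)\rVert \le C^{l-1}M_1^{l-2}\sqrt d$ the activation step gives $\lVert \partial_{x_{\alpha-1}}\ho{l-1}(x)\rVert \le C^{l-1}M_1^{l-1}\sqrt d$, the linear step then multiplies by $C$ to give $\lVert \partial_{x_{\alpha-1}}\go{l}(x)\rVert \le C^l M_1^{l-1}\sqrt d$, and one more activation step yields $\lVert \partial_{x_{\alpha-1}}\ho{l}(x)\rVert \le (CM_1)^l\sqrt d$. The base case $\lVert \partial_{x_{\alpha-1}}\go{1}(x)\rVert \le C\sqrt d = C^1 M_1^{0}\sqrt d$ was established above.

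I do not expect a genuine obstacle here; the points that need care are (i) setting up the conditioning so that the vector $v$ entering $\WO{l}$ depends only on $\WO{1},\dots,\WO{l-1}$, which the layerwise structure guarantees and which is what lets each step be treated as a fresh Gaussian-image concentration, and (ii) using one absolute constant $C$ for both the base case and every inductive step, which is fine because both reduce to the same $\chi^2_d$ tail. In contrast with Lemma~\ref{lem:ForwardNormInit}, no condition of the form $d \ge \exp(\Omega(L))$ is required: we pay for dropping it by allowing a bound that grows geometrically in $L$ (the factors $C^l M_1^{l-1}$ and $(CM_1)^l$), so there is no geometric series of per-layer errors that must be kept small.
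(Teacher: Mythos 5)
Your proof is correct, and the overall skeleton matches the paper's: the same layerwise recursion, the activation step handled by $\lvert\dot\phi\rvert \le M_1$, and a per-layer multiplicative factor $C$ coming from concentration of the Gaussian weights, with the same conclusion that only $d \ge \Omega(\log(L/\delta))$ is needed because the bound is allowed to grow geometrically in $L$. The one substantive difference is the concentration tool. The paper bounds the \emph{operator norm} of each rescaled weight matrix, $\bigl\lVert \tfrac{1}{\sqrt d}\WO{l}\bigr\rVert \le 3C$ with probability $1-2e^{-d}$ (Theorem 4.4.5 of the cited high-dimensional probability text), which is uniform over all directions, so a single event per layer covers every $\alpha \in [d_0]$ (and, in fact, every vector the matrix will ever be applied to). You instead fix the direction $v = \partial_{x_{\alpha-1}}\ho{l-1}(x)$, condition on $\WO{1},\dots,\WO{l-1}$, and use the exact $\chi^2_d$ law of $\lVert \WO{l}v\rVert^2/\lVert v\rVert^2$; this is more elementary and gives marginally better constants, at the cost of a union bound over $(l,\alpha)$ and of producing a weaker event. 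That weakness is harmless for this lemma (your conditional-to-unconditional step is valid, and the degenerate case $v=0$ is trivial), but note that the paper's operator-norm event $\lVert\WO{l}\rVert \le C\sqrt d$ is deliberately re-used in the later backward and perturbation lemmas (e.g., via the stochastic-dominance conditioning in Lemma~\ref{lem:stocdomofcondition} and in Appendix~\ref{subsec:boundperturb}, where the matrices act on vectors that are \emph{not} measurable with respect to earlier layers), so in the paper's development the stronger uniform event is not wasted effort. As a proof of the stated lemma in isolation, your argument is complete.
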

\begin{proof}
  Let's recall the recursive definition of Jacobian first:
  \[
    \frac{\partial \ho{l}(x)}{\partial x_{\alpha-1}} = \dot{\phi} \left( \frac{1}{\sqrt{d}}\WO{l-1} \ho{l-1}(x) \right) \odot \left( \frac{1}{\sqrt{d}}\WO{l-1} \frac{\partial \ho{l-1}(x)}{\partial x_{\alpha-1}} \right).
  \]
  Since $\phi$ is $M_1$-Lipschitz, we can bound as
  \[
    \left\|\frac{\partial \ho{l}(x)}{\partial x_{\alpha-1}} \right\| \le M_1 \left\|\frac{1}{\sqrt{d}} \WO{l-1} \right\| \left\| \frac{\partial \ho{l-1}(x)}{\partial x_{\alpha-1}} \right\|.
  \]
  To bound the operator norm of the random matrix, we use the standard result (Theorem 4.4.5 of \cite{BookHDP}) which shows that
  \[
    \left\|\frac{1}{\sqrt{d}} \WO{l-1} \right\| \le C \left( 2 + \frac{t}{\sqrt{d}}\right)
  \]
  with probability at least $ 1 - 2 \exp(-t^2)$. 
  Plugging in $t = \sqrt{d}$, we need to set $1 - 2 \exp(-d) \ge 1 - \delta / L$, which is achieved by our assumption on $d$. 

  We have 
  \[
    \left\|\frac{\partial \ho{1}(x)}{\partial x_{\alpha-1}}\right\| \le 3C M_1 \sqrt{d}
  \]
  with probability at least $ 1 - \delta / L$ in a similar way as the recursive step.
  Then applying the recursive steps proves the result.
\end{proof}

We can derive similar bounds for the backpropagated gradients.
\begin{lemma} \label{lem:BackwardNormInit}
  Fix the fail probability $\delta > 0$, and the network depth $L$. 
  If the network width satisfy $d \ge \Omega(\log (L / \delta))$, then
  \[
    \left\| \frac{\partial f_0(x)}{\partial \go{l}(x)} \right\| \le \kappa (C M_1)^{L+1-l}, \qquad \left\|\frac{\partial f_0(x)}{\partial \ho{l}(x)} \right\| \le \kappa C^{L-l} M_1^{L+1-l},
  \]
  holds for all $l \in [L]$ with probability at least $ 1 - \delta$.
\end{lemma}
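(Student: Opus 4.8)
The plan is to run a backward induction on the layer index $l$, from $l = L+1$ down to $l = 1$, using the backpropagation recursion together with the (purely deterministic) sub-multiplicativity of the operator norm. Since $f_0(x) = \kappa\,\go{L+1}(x)$ and $\go{L+1}(x)$ is a scalar, we have $\partial f_0(x)/\partial\go{L+1}(x) = \kappa$, which is the base of the recursion. For $l \in [L]$ the chain rule gives
\[
  \frac{\partial f_0(x)}{\partial \ho{l}(x)} = \frac{1}{\sqrt{d}}\bigl(\WO{l+1}\bigr)^\intercal \frac{\partial f_0(x)}{\partial \go{l+1}(x)},
  \qquad
  \frac{\partial f_0(x)}{\partial \go{l}(x)} = \dot{\phi}\bigl(\go{l}(x)\bigr)\odot\frac{\partial f_0(x)}{\partial \ho{l}(x)},
\]
where for $l = L$ the first identity reads $\partial f_0(x)/\partial\ho{L}(x) = (\kappa/\sqrt{d})(\WO{L+1})^\intercal$. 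Taking Euclidean norms and using $|\dot{\phi}|\le M_1$ (which follows from the $M_1$-Lipschitzness of $\phi$ in Assumption~\ref{assm:activation-assumption}) yields the two pointwise inequalities
\[
  \Bigl\|\tfrac{\partial f_0(x)}{\partial \go{l}(x)}\Bigr\| \le M_1\,\Bigl\|\tfrac{\partial f_0(x)}{\partial \ho{l}(x)}\Bigr\|,
  \qquad
  \Bigl\|\tfrac{\partial f_0(x)}{\partial \ho{l}(x)}\Bigr\| \le \Bigl\|\tfrac{1}{\sqrt{d}}\WO{l+1}\Bigr\|\,\Bigl\|\tfrac{\partial f_0(x)}{\partial \go{l+1}(x)}\Bigr\|,
\]
the second holding for every realisation of the vector being multiplied, so that no independence between $\WO{l+1}$ and the backpropagated vector is required.

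It then remains only to control the operator norms $\|\tfrac{1}{\sqrt{d}}\WO{l+1}\|$ for $l = 1,\ldots,L$. By the standard bound on the spectral norm of a Gaussian matrix (Theorem 4.4.5 of \cite{BookHDP}, used already in the proof of Lemma~\ref{lem:ForwardJacobNormInit}), for each $l$ one has $\|\tfrac{1}{\sqrt{d}}\WO{l+1}\| \le C$ after plugging $t = \sqrt{d}$ and absorbing the resulting absolute factor into $C$ (the case $l = L$ amounts to bounding the Euclidean norm of a $\chi^2_d$-distributed vector scaled by $1/\sqrt{d}$, which is likewise $\le C$), with probability at least $1 - 2\exp(-\Omega(d))$, hence at least $1 - \delta/L$ under the hypothesis $d \ge \Omega(\log(L/\delta))$. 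On the intersection of these $L$ events --- of probability at least $1-\delta$ by a union bound --- unrolling the recursion from $l = L$ downwards multiplies $\kappa$ by a factor of at most $C$ at each MatMul step and at most $M_1$ at each $\dot{\phi}$ step, producing exactly $\|\partial f_0(x)/\partial\go{l}(x)\| \le \kappa(CM_1)^{L+1-l}$ and $\|\partial f_0(x)/\partial\ho{l}(x)\| \le \kappa C^{L-l}M_1^{L+1-l}$ for all $l \in [L]$.

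There is essentially no hard step here: the argument is the backward counterpart of the forward norm bounds in Lemmas~\ref{lem:ForwardNormInit} and~\ref{lem:ForwardJacobNormInit}, and the only mildly delicate points are cosmetic --- correctly placing the output layer $l = L+1$ inside the same operator-norm framework, and bookkeeping the absolute constants (the factor coming from $t = \sqrt{d}$, together with the tacit conventions $C \ge 1$ and, if needed, $M_1 \ge 1$, so that the stated powers of $CM_1$ dominate all intermediate bounds). The one apparent subtlety --- that $\partial f_0(x)/\partial\go{l+1}(x)$ depends on $\WO{l+1}$ through $\go{l+1}(x)$ --- is harmless, since we only ever invoke the deterministic inequality $\|Av\| \le \|A\|\,\|v\|$, so the operator-norm tail bound on $A = \tfrac{1}{\sqrt{d}}\WO{l+1}$ is applied uniformly in $v$.
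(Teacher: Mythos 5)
Your proof is correct and follows essentially the same route as the paper's: recursive backward induction bounding $\|\partial f_0/\partial\go{l}\|$ via $M_1\|\tfrac{1}{\sqrt d}\WO{l+1}\|\,\|\partial f_0/\partial\go{l+1}\|$, with the spectral-norm tail bound from Theorem 4.4.5 of \cite{BookHDP} controlling each $\|\tfrac{1}{\sqrt d}\WO{l+1}\|$ and a union bound over the $L$ layers. Your observation that only the deterministic inequality $\|Av\|\le\|A\|\,\|v\|$ is used, so no independence between $\WO{l+1}$ and the backpropagated vector is needed, is also implicitly what the paper relies on; the only cosmetic difference is that you start the induction at $l=L+1$ with $\partial f_0/\partial\go{L+1}=\kappa$ rather than at $l=L$.
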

\begin{proof}
  The proof idea of this lemma is similar to the proof of Lemma~\ref{lem:ForwardJacobNormInit}.
  Here we can recursively bound the backpropagated gradients, 
  \begin{align*}
    \left\| \frac{\partial f_0(x)}{\partial \go{l}(x)} \right\| &= \left\| \dot{\phi}\left( \WO{l-1} \ho{l-1}(x) \right) \odot \left( \frac{1}{\sqrt{d}}\left( \WO{l+1} \right)^\intercal \frac{\partial f_0(x)}{\partial \go{l+1}(x)} \right) \right\|
    \\
    &\le M_1 \left\| \frac{1}{\sqrt{d}}\WO{l+1} \right\| \left\| \frac{\partial f_0(x)}{\partial \go{l+1}(x)} \right\|.
  \end{align*}
  Similar to the Lemma~\ref{lem:ForwardJacobNormInit}, we obtain 
  \[
    \left\|\frac{1}{\sqrt{d}} \WO{l+1} \right\| \le 3C
  \]
  with probability at least $1 - 2 \exp(-d) \ge 1 - \delta / L$.

  For the initial case, we have
  \[
    \left\|\frac{\partial f_0(x)}{\partial \go{L}(x)}\right\| \le \kappa 3 M_1 C
  \]
  with probability at least $1 - \delta / L$, and applying the recursive steps proves our result.
\end{proof}

\begin{lemma} \label{lem:BackwardJacobNormInit}
  Fix the fail probability $\delta > 0$, and the network depth $L$. 
  If the network width satisfy $d \ge \Omega(\log (L / \delta))$, then
  \begin{align*}
    \left\| \frac{\partial J(f_0)(x)_{\alpha-1}}{\partial \go{l}(x)} \right\| &\le \kappa M_2 C' (C M_1)^{L-1} \sum_{i=0}^{L-l} (C M_1)^i,
    \\
    \left\| \frac{\partial J(f_0)(x)_{\alpha-1}}{\partial \ho{l}(x)} \right\| &\le \kappa M_2 C' C^l M_1^{L-1} \sum_{i=0}^{L-l-1} (C M_1)^i,
  \end{align*}
  holds for all $l \in [L]$ and $\alpha \in [d_0]$ with probability at least $1 - \delta$.
\end{lemma}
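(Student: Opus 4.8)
The plan is to set up a backward recursion for the two gradient vectors in the statement, running it in lockstep with the backpropagation recursion already analysed for Lemma~\ref{lem:BackwardNormInit}, and then to unroll it layer by layer. Differentiating the computation that produces $J(f_0)(x)_{\alpha-1} = \frac{\kappa}{\sqrt{d}}\WO{L+1}\frac{\partial \ho{L}(x)}{\partial x_{\alpha-1}}$ — in which the node $\go{l}(x)$ feeds both into $\ho{l}(x) = \phi(\go{l}(x))$ and into $\frac{\partial \ho{l}(x)}{\partial x_{\alpha-1}} = \dot\phi(\go{l}(x)) \odot \frac{\partial \go{l}(x)}{\partial x_{\alpha-1}}$, while the node $\ho{l}(x)$ feeds only into $\go{l+1}(x) = \frac{1}{\sqrt{d}}\WO{l+1}\ho{l}(x)$ and plays no role for $l = L$ — gives, for $l \in [L]$,
\begin{align*}
  \frac{\partial J(f_0)(x)_{\alpha-1}}{\partial \ho{l}(x)}
  &= \begin{cases}
    \dfrac{1}{\sqrt{d}}(\WO{l+1})^\intercal \dfrac{\partial J(f_0)(x)_{\alpha-1}}{\partial \go{l+1}(x)}, & l < L, \\
    \mathbf{0}, & l = L,
  \end{cases} \\
  \frac{\partial J(f_0)(x)_{\alpha-1}}{\partial \go{l}(x)}
  &= \dot\phi(\go{l}(x)) \odot \frac{\partial J(f_0)(x)_{\alpha-1}}{\partial \ho{l}(x)}
  + \ddot\phi(\go{l}(x)) \odot \frac{\partial \go{l}(x)}{\partial x_{\alpha-1}} \odot \frac{\partial f_0(x)}{\partial \ho{l}(x)},
\end{align*}
where the factor $\frac{\partial f_0(x)}{\partial \ho{l}(x)}$ arises because the map $\frac{\partial \ho{l}(x)}{\partial x_{\alpha-1}} \mapsto J(f_0)(x)_{\alpha-1}$ is the same linear map — same $\dot\phi$-multipliers, same weight matrices — as $\ho{l}(x) \mapsto f_0(x)$; this is exactly the identity $D_\alpha J_\alpha h_i^{(l)} = D h_i^{(l)}$ exploited in Appendix~\ref{sec:ProofJacobianNTKInitialisation}.

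The second step is to bound the two recursions term by term using only estimates already in hand. By Assumption~\ref{assm:activation-assumption}(1), $\|\dot\phi\|_\infty \le M_1$ and $\|\ddot\phi\|_\infty \le M_2$. The operator-norm bound $\|\frac{1}{\sqrt{d}}\WO{l+1}\| \le 3C$, which holds with probability at least $1 - 2\exp(-d)$ by Theorem 4.4.5 of \cite{BookHDP} (exactly as in the proof of Lemma~\ref{lem:ForwardJacobNormInit}), controls the $\ho{l}$-recursion, giving $\|\frac{\partial J(f_0)(x)_{\alpha-1}}{\partial \ho{l}(x)}\| \le 3C\,\|\frac{\partial J(f_0)(x)_{\alpha-1}}{\partial \go{l+1}(x)}\|$. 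In the $\go{l}$-recursion the first summand contributes at most $M_1\,\|\frac{\partial J(f_0)(x)_{\alpha-1}}{\partial \ho{l}(x)}\|$; for the second I would peel off $\ddot\phi$ as an $\ell_\infty$ factor bounded by $M_2$, then use $\|a \odot b\|_2 \le \|a\|_\infty\|b\|_2$, together with the backward estimate $\|\frac{\partial f_0(x)}{\partial \ho{l}(x)}\| \le \kappa C^{L-l}M_1^{L+1-l}$ from Lemma~\ref{lem:BackwardNormInit}.

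The one missing ingredient, and the step I expect to be the main obstacle, is an \emph{entrywise} bound on the forward Jacobian $\frac{\partial \go{l}(x)}{\partial x_{\alpha-1}}$: Lemma~\ref{lem:ForwardJacobNormInit} controls only its Euclidean norm, which is of order $\sqrt{d}$, so using $\|\cdot\|_\infty \le \|\cdot\|_2$ in the previous step would destroy the estimate. Instead I would prove a companion $\ell_\infty$-bound: conditionally on layer $l-1$, each coordinate of $\frac{1}{\sqrt{d}}\WO{l}\frac{\partial \ho{l-1}(x)}{\partial x_{\alpha-1}}$ is a mean-zero Gaussian of variance $\frac{1}{d}\|\frac{\partial \ho{l-1}(x)}{\partial x_{\alpha-1}}\|^2 \le (CM_1)^{2(l-1)}$ by Lemma~\ref{lem:ForwardJacobNormInit}, so a Gaussian tail bound together with a union bound over the $d$ coordinates yields $\|\frac{\partial \go{l}(x)}{\partial x_{\alpha-1}}\|_\infty \le C'\,(CM_1)^{l-1}$ with high probability, with $C'$ absorbing the $\sqrt{\log d}$ factor (the case $l = 1$ being the analogous statement for a single Gaussian column of $\WO{1}$). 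This is precisely the place where the Jacobian regulariser forces one to track coordinatewise, rather than merely Euclidean, sizes of the intermediate quantities, something the ordinary NTK analysis never needs.

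Finally, I would substitute these bounds into the two recursions to obtain a scalar linear recurrence. Writing $A_l \defeq \|\frac{\partial J(f_0)(x)_{\alpha-1}}{\partial \go{l}(x)}\|$ and $B_l \defeq \|\frac{\partial J(f_0)(x)_{\alpha-1}}{\partial \ho{l}(x)}\|$, one gets $B_L = 0$, $B_l \le 3C\,A_{l+1}$, and $A_l \le \kappa M_1 M_2 C' (CM_1)^{L-1} + 3CM_1\,A_{l+1}$, where the inhomogeneous term is, up to an $l$-independent constant, of order $\kappa M_2 (CM_1)^{L-1}$. Unrolling from $l = L$ downwards produces the geometric sums $\sum_{i=0}^{L-l}(CM_1)^i$ and $\sum_{i=0}^{L-l-1}(CM_1)^i$ in the $\go{l}$- and $\ho{l}$-estimates, which reproduces the stated bounds after routine bookkeeping of the constants $C$, $C'$ and the per-layer factors. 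All the events invoked — the operator-norm bounds, the Euclidean-norm bounds of Lemmas~\ref{lem:ForwardJacobNormInit} and~\ref{lem:BackwardNormInit}, and the new $\ell_\infty$-bound — hold simultaneously for all layers $l$ and coordinates $\alpha$ with probability at least $1-\delta$ once $d \ge \Omega(\log(L/\delta))$, by union bounds over the $O(L)$ layers and the $d_0$ coordinates; extending the statement to all inputs $x^{(i)}$ is the further routine union bound described at the start of Appendix~\ref{subsec:boundinit}.
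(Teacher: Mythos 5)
Your recursion is exactly the one the paper uses: the same backward unfolding
\[
  \frac{\partial J(f_0)(x)_{\alpha-1}}{\partial \go{l}(x)}
  = \ddot{\phi}\bigl(\go{l}(x)\bigr) \odot \frac{\partial \go{l}(x)}{\partial x_{\alpha-1}} \odot \frac{\partial f_0(x)}{\partial \ho{l}(x)}
  + \dot{\phi}\bigl(\go{l}(x)\bigr) \odot \frac{\partial J(f_0)(x)_{\alpha-1}}{\partial \ho{l}(x)},
\]
the same operator-norm control of $\tfrac{1}{\sqrt{d}}\WO{l+1}$, and the same unrolling into the geometric sums. The divergence, and the genuine gap, is exactly at the point you flag as the main obstacle: the Hadamard term. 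Bounding it by $M_2\,\|\partial \go{l}(x)/\partial x_{\alpha-1}\|_\infty\,\|\partial f_0(x)/\partial \ho{l}(x)\|_2$ with a coordinatewise Gaussian maximum necessarily costs a $\sqrt{\log d}$ factor (the max of $d$ conditionally Gaussian coordinates of constant-order variance is of order $\sqrt{\log d}$, and this is tight), and it cannot be "absorbed into $C'$": in the lemma statement $C'$ is a fixed constant independent of $d$, and the downstream summary bounds \eqref{eqn:initbegin}--\eqref{eqn:initend} are stated as $\kappa\exp(O(L))$ with no $d$-dependence. Your route therefore proves a strictly weaker statement. It also degrades the probability accounting: a per-coordinate tail plus a union bound over $d$ coordinates gives failure probability polynomial in $1/d$, so you would need $d \ge \Omega(L/\delta)$ (as in Corollary~\ref{lem:supnormbound}) rather than the $d \ge \Omega(\log(L/\delta))$ claimed in the lemma.

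The paper avoids both losses by treating the Hadamard product as a single object: after conditioning on the events $\|\WO{l}\| \le C\sqrt{d}$ (handled rigorously via the stochastic-dominance trick of Lemma~\ref{lem:stocdomofcondition}, a step your sketch skips), the term
$\tfrac{1}{\sqrt{d}}\bigl\|(\WO{l}v)\odot((\WO{l+1})^\intercal w)\bigr\|$ with unit vectors $v,w$ reduces to controlling $\tfrac{1}{d}\sum_{i=1}^d Z_i^2 W_i^2$ for i.i.d.\ Gaussians, which is sub-Weibull with parameter $1/2$; its concentration gives a dimension-free constant $C'$ with failure probability $\exp(-d)$. That is the missing ingredient in your proposal. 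Your $\ell_\infty$ bookkeeping is not wasted --- it is essentially Corollary~\ref{lem:supnormbound} and is what the paper uses in the perturbation analysis (Lemma~\ref{lem:perturbjacobian}), where $(\log d)$ factors are tolerated --- but for the initialisation bound as stated you need the sub-Weibull Hadamard-product argument (or an equivalent device) rather than the $\ell_\infty$--$\ell_2$ H\"older split.
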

\begin{proof}
  Let's first unfold the definition,
  \begin{align*}
    \frac{\partial J(f_0)(x)_{\alpha-1}}{\partial \go{l}(x)} &= \ddot{\phi}\left( \go{l}(x) \right) \odot \frac{\partial \go{l}(x)}{\partial x_{\alpha-1}} \odot \frac{\partial f_0(x)}{\partial \ho{l}(x)}
    \\
    &+ \dot{\phi} \left( \go{l} \right) \odot \frac{\partial J(f_0)(x)_{\alpha-1}}{\partial \ho{l}(x)}
    \\
    &= \ddot{\phi}\left( \go{l}(x) \right) \odot \left( \frac{1}{\sqrt{d}} \WO{l} \frac{\partial \ho{l-1}(x)}{\partial x_{\alpha-1}}\right) \odot \left(\frac{1}{\sqrt{d}} \left(\WO{l+1}\right)^\intercal \frac{\partial f_0(x)}{\partial \go{l+1}(x)}\right)
    \\
    &+ \dot{\phi} \left( \go{l} \right) \odot \left(\frac{1}{\sqrt{d}} \left(\WO{l+1}\right)^\intercal\frac{\partial J(f_0)(x)_{\alpha-1}}{\partial \go{l+1}(x)}\right)
  \end{align*}
  To bound this vector's norm, we will analyse each term,
  \begin{align*}
    \left\|\frac{\partial J(f_0)(x)_{\alpha-1}}{\partial \go{l}(x)}\right\| &\le M_2 \left\| \left(\frac{1}{\sqrt{d}} \WO{l} \frac{\partial \ho{l-1}(x)}{\partial x_{\alpha-1}} \right)\odot \left( \frac{1}{\sqrt{d}} \left( \WO{l+1} \right)^\intercal \frac{\partial f_0(x)}{\partial \go{l+1}(x)} \right)  \right\|
    \\
    &+ M_1 \left\| \frac{1}{\sqrt{d}} \left( \WO{l+1} \right)^\intercal \frac{\partial J(f_0)(x)_{\alpha-1}}{\partial \go{l+1}(x)} \right\|.
  \end{align*}
  
  For the first term, we use Lemma~\ref{lem:ForwardJacobNormInit} and \ref{lem:BackwardNormInit}, which allow us to argue that
  \begin{align*}
    \left\| \frac{\partial \ho{l-1}(x)}{\partial x_{\alpha-1}} \right\| \le (C M_1)^{l-1} \sqrt{d},\qquad \left\| \frac{\partial f_0(x)}{\partial \go{l+1}(x)} \right\| \le \kappa M_2 C' (C M_1)^{L-1} \sum_{i=0}^{L-l} (C M_1)^i
  \end{align*}
  with probability at least $1 - \delta / 2$.
  Recall that these two lemmas assume the conditioning on the event 
  \[
    \left\|\WO{l}\right\| \le C\sqrt{d}
  \]
  for all $l = 1, \ldots, L+1$.
  We can implement this conditioning via multiplying additional random variable $\alpha_l \le 1$, whose existence is guaranteed by stochastic dominance of $\|\WO{l}\|$ over conditioned random variable $\|\WO{l}\| \le C\sqrt{d}$.
  \[
    \frac{1}{\sqrt{d}} \WO{l} \frac{\partial \ho{l-1}(x)}{\partial x_{\alpha-1}} \bigg| \|\WO{l}\| \le C \sqrt{d} \stackrel{\text{dist.}}{=} \frac{\alpha_l}{\sqrt{d}} \WO{l} \left(\frac{\partial \ho{l-1}(x)}{\partial x_{\alpha-1}} \bigg| \|\WO{l}\| \le C \sqrt{d} \right)
  \]
  where $X | E$ is the conditional distribution of $X$ on the event $E$, and $\stackrel{dist.}{=}$ means that these two random vectors are distributionally equal.
  The formal proof of this argument is given in \ref{lem:stocdomofcondition}.

  Now to bound the first term, we can proceed as
  \begin{align*}
    &\left\|\left( \frac{1}{\sqrt{d}} \WO{l} \frac{\partial \ho{l-1}(x)}{\partial x_{\alpha-1}}\right) \odot \left(\frac{1}{\sqrt{d}} \left(\WO{l+1}\right)^\intercal \frac{\partial f_0(x)}{\partial \go{l+1}(x)}\right)\right\| \bigg| \forall l. \|\WO{l}\| \le C \sqrt{d}
    \\
    \stackrel{d}{=} & \Bigg\|\left( \frac{\alpha_l}{\sqrt{d}} \WO{l} \left(\frac{\partial \ho{l-1}(x)}{\partial x_{\alpha-1}} \bigg| \forall l. \|\WO{l}\| \le C \sqrt{d}\right)\right) 
    \\
    &\qquad \odot \left(\frac{\alpha_{l+1}}{\sqrt{d}} \left(\WO{l+1}\right)^\intercal \left(\frac{\partial f_0(x)}{\partial \go{l+1}(x)} \forall l. \|\WO{l}\| \le C \sqrt{d}\right)\right)\Bigg\|
    \\
    \le &\kappa (C M_1)^{L-1} \sqrt{d} \left\|\left( \frac{\alpha_l}{\sqrt{d}} \WO{l} v\right) \odot \left(\frac{\alpha_{l+1}}{\sqrt{d}} \left(\WO{l+1}\right)^\intercal w\right)\right\|
    \\
    \le &\kappa (C M_1)^{L-1} \left\|\frac{1}{\sqrt{d}}\left(\WO{l} v\right) \odot \left(\left(\WO{l+1}\right)^\intercal w\right)\right\|
  \end{align*}
  where $v$ and $w$ follows uniform distribution over $S^{d-1}$. 
  
  So to bound this vector's norm, we should bound the following quantity,
  \[
    \frac{1}{d} \sum_{i=1}^d Z_i^2 W_i^2
  \]
  where $Z_i, W_i \stackrel{i.i.d.}{\sim} \cN(0, 1)$.
  This distribution is known to be sub-Weibull with parameter $\theta = 1/2$, \cite{subweibull} with the concentration bound (Proposition 3 of \cite{subweibull})
  \[
    \frac{1}{d} \sum_{i=1}^d Z_i^2 W_i^2 \le C_{1,1/2} +  C_{2,1/2}
  \]
  for some absolute constants $C_{1,1/2}, C_{2,1/2} > 0$, with probability at least $1 - \exp(-d)$.
  
  In summary, we obtain
  \[
    \left\|\left( \frac{1}{\sqrt{d}} \WO{l} \frac{\partial \ho{l-1}(x)}{\partial x_{\alpha-1}}\right) \odot \left(\frac{1}{\sqrt{d}} \left(\WO{l+1}\right)^\intercal \frac{\partial f_0(x)}{\partial \go{l+1}(x)}\right)\right\| \le \kappa C' (C M_1)^{L-1}
  \]
  with probability at least $1 - \exp(-d) / (2L)$.

  The second term can be bounded as
  \begin{align*}
    &\left\| \dot{\phi} \left( \go{l} \right) \odot \left(\frac{1}{\sqrt{d}} \left(\WO{l+1}\right)^\intercal\frac{\partial J(f_0)(x)_{\alpha-1}}{\partial \go{l+1}(x)}\right) \right\| \bigg| \forall l. \|\WO{l}\| \le C \sqrt{d}
    \\
    \le &M_1 \left\| \left(\frac{1}{\sqrt{d}} \left(\WO{l+1}\right)^\intercal\frac{\partial J(f_0)(x)_{\alpha-1}}{\partial \go{l+1}(x)}\right) \right\| \bigg| \forall l. \|\WO{l}\| \le C \sqrt{d}
    \\
    \le &M_1 C \left\| \left(\frac{\partial J(f_0)(x)_{\alpha-1}}{\partial \go{l+1}(x)} \bigg| \forall l. \|\WO{l}\| \le C\sqrt{d}\right)\right\|.
  \end{align*}
  Summing up, including the fact that
  \[
    \left\|\frac{\partial J(f_0)(x)_{\alpha-1}}{\partial g_0^{(L)}(x)}\right\| \le \kappa C 
  \]
  with probability at least $1 - \exp(-d) / (2L)$, we result
  \[
    \left\|\frac{\partial J(f_0)(x)_{\alpha-1}}{\partial g_0^{(l)}(x)}\right\| \le \kappa M_2 C' (C M_1)^{L-1} \sum_{i=0}^{L-l} (C M_1)^i.
  \]
\end{proof}

\begin{lemma} \label{lem:stocdomofcondition}
  Suppose some random variable $X$ with $X > 0$ almost surely.
  For any $M > 0$, there exists some random variable $Z \le 1$ that satisfies
  \[
    Z \cdot X \stackrel{d}{=} X | X \le M.
  \]
\end{lemma}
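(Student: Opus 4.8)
The plan is to realise the conditional distribution of $X$ given $X \le M$ as a multiplicative rescaling of the law of $X$ by a factor in $(0,1]$, obtained from a monotone coupling. First I would dispose of the degenerate cases: if $P(X \le M) = 1$ then the conditional law of $X$ given $X \le M$ coincides with the law of $X$, so $Z \equiv 1$ works; and if $P(X \le M) = 0$ the conditioning is not defined, so throughout we may assume $0 < P(X \le M) < 1$. Since $X > 0$ almost surely and $\{X \le M\}$ has positive probability, the conditional distribution of $X$ given $X \le M$ is supported on $(0, M]$, so a sample from it is strictly positive almost surely.

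The core step is the first-order stochastic dominance of $X \mid X \le M$ by $X$, which I would check directly from survival functions. For $t \ge M$ we have $P(X > t \mid X \le M) = 0 \le P(X > t)$, and for $t < M$,
\[
  P(X > t \mid X \le M) = \frac{P(X > t) - P(X > M)}{P(X \le M)} \le P(X > t),
\]
since, after multiplying through by $P(X \le M)$ and simplifying, the inequality reduces to $P(X > t) \le 1$. Equivalently, writing $F$ for the cumulative distribution function of $X$ and $G$ for that of $X \mid X \le M$, this says $G(t) \ge F(t)$ for all $t$.

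Given this domination, I would invoke the standard quantile (monotone) coupling: let $U$ be uniform on $(0,1)$ and set $X' \defeq F^{-1}(U)$ and $Y \defeq G^{-1}(U)$, where $F^{-1}(u) = \inf\{t : F(t) \ge u\}$ and likewise for $G^{-1}$. Then $X' \sim X$ and $Y$ is distributed as $X \mid X \le M$, and since $G \ge F$ pointwise we have $\{t : F(t) \ge u\} \subseteq \{t : G(t) \ge u\}$, hence $G^{-1}(u) \le F^{-1}(u)$ for every $u$, so $Y \le X'$ almost surely. As $X' > 0$ and $0 < Y \le X'$ almost surely, the ratio $Z \defeq Y / X'$ is a well-defined random variable taking values in $(0,1]$, and $Z X' = Y$. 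Reading the statement with $X$ denoting the first coordinate $X'$ of this coupling (which has the prescribed marginal), we obtain $Z \cdot X = Y \stackrel{d}{=} X \mid X \le M$, as required.

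The argument is essentially routine; the only points requiring a little care are the measurability and well-definedness of $Z$ — handled by the explicit quantile formulas, the generalized inverses being measurable and $X' > 0$ holding almost surely — and the observation that the identity $Z \cdot X \stackrel{d}{=} X \mid X \le M$ is to be read with $X$ denoting the variable supplied by the coupling rather than one fixed in advance independently of $Z$.
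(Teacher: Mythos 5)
Your proposal is correct and follows essentially the same route as the paper's proof: establish first-order stochastic dominance of $X \mid X \le M$ by $X$ and then use an almost-sure coupling to write the conditioned variable as $Z \cdot X$ with $Z \le 1$. The only differences are cosmetic — you verify dominance via survival functions rather than monotone test functions and make the coupling explicit through quantile inverses, which is in fact somewhat more careful than the paper's appeal to an additive representation $X = (X \mid X \le M) + W$.
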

\begin{proof}
  We prove this via stochastic dominance.
  
  We can show that $X$ (first order) stochastically dominates $X | X \le M$, by showing that for any non-decreasing $u$,
  \begin{align*}
    \E [u(X)] &= P(X > M) \E [u(X) | X > M] + P(X \le M) \E [u(X) | X \le M]
    \\
    &\ge P(X > M) u(M) + P(X \le M) \E [u(X) | X \le M]
    \\
    &\ge P(X > M) \E [u(X) | X \le M] + P(X \le M) \E [u(X) | X \le M]
    \\
    &= \E[ u(X) | X \le M].
  \end{align*}
  Now from the property of stochastic dominance, this implies that there exists some random variable $W \ge 0$ such that
  \[
    X = (X | X \le M) + W
  \]
  which then after rearranging, we obtain
  \[
    X | X \le M \stackrel{d}{=} \frac{X - W}{X} \cdot X
  \]
  with $Z = (X - W) / X \le 1$ which is guaranteed by $X > 0$ and $W \ge 0$.
\end{proof}

We also include a corollary that proves similar results for $\infty$-norms. 
These results are in general not required for standard NTK analysis, but for JNTK, we need to have a tight bound of element-wise product $\|v \odot w\|$. Without such a bound, we end up with an additional $\sqrt{d}$ factor.
\begin{corollary} \label{lem:supnormbound}
  Fix the fail probability $\delta > 0$, and the network depth $L$. 
  If the network depth satisfy both $d \ge \Omega(1/\delta)$ and $d \ge \exp(\Omega(L))$, then
  \begin{align*}
    \left\| \go{l}(x) \right\|_\infty &\le 2 \sqrt{\log d} 
    \\
    \left\| \frac{\partial \go{l}(x)}{\partial x_{\alpha-1}} \right\|_\infty &\le O(\exp(L) \sqrt{\log d}),
    \\
    \left\| \frac{\partial f_0(x)}{\partial \ho{l}(x)} \right\|_\infty &\le \frac{\kappa \sqrt{\log d}}{\sqrt{d}}O(\exp(L)),
    \\
    \left\| \frac{\partial J(f_0)(x)_{\alpha-1}}{\partial \ho{l}} \right\|_\infty &\le \frac{\kappa \sqrt{\log d}}{\sqrt{d}} \exp(O(L)) ,
  \end{align*}
  holds for all $l \in [L]$ and $\alpha \in [d_0]$ with probability at least $1- \delta$.
\end{corollary}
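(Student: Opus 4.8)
The plan is to obtain each of the four inequalities by writing the vector in question in the form $\tfrac{1}{\sqrt{C}}\,W v$ (or $\tfrac{1}{\sqrt d}W^\intercal v$) for a Gaussian matrix $W$ and a vector $v$ whose Euclidean norm is \emph{already} controlled by Lemmas~\ref{lem:ForwardNormInit}--\ref{lem:BackwardJacobNormInit}, and then upgrading the $\ell_2$ control to an $\ell_\infty$ one by the Gaussian maximal inequality: if $u\in\R^d$ has i.i.d.\ $\cN(0,\sigma^2)$ entries, then $\|u\|_\infty\le \sigma\sqrt{2\log(2d/\delta')}$ with probability $1-\delta'$. The crucial point is that each application converts an $\ell_2$ bound $\|v\|$ into an $\ell_\infty$ bound $\lesssim \|v\|\sqrt{\log d}/\sqrt d$ for the backward-type vectors and $\lesssim\|v\|\sqrt{\log d}/\sqrt d$ for the forward-type ones, which is exactly the scaling asserted; so one should feed in the clean $\ell_2$ bounds from the earlier lemmas rather than iterate $\ell_\infty$ bounds through the Hadamard products, which would accrue an extra $\sqrt{\log d}$ per layer.

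For the two forward quantities this is immediate and clean. Since $g^{(l)}_0(x)=\tfrac1{\sqrt d}W^{(l)}h^{(l-1)}_0(x)$ (and $g^{(1)}_0(x)=W^{(1)}x$ with $\|x\|=1$) and $h^{(l-1)}_0$ depends only on $W^{(1)},\dots,W^{(l-1)}$, conditioning on $h^{(l-1)}_0$ makes the coordinates of $g^{(l)}_0(x)$ i.i.d.\ $\cN(0,\|h^{(l-1)}_0\|^2/d)$; using $\|h^{(l-1)}_0\|\le 2\sqrt d$ from Lemma~\ref{lem:ForwardNormInit} gives $\|g^{(l)}_0(x)\|_\infty\le 2\sqrt{2\log(2d/\delta')}\le 2\sqrt{\log d}$ once $d$ is large. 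Identically, $\partial g^{(l)}_0(x)/\partial x_{\alpha-1}=\tfrac1{\sqrt d}W^{(l)}\bigl(\partial h^{(l-1)}_0(x)/\partial x_{\alpha-1}\bigr)$ with multiplier independent of $W^{(l)}$, and Lemma~\ref{lem:ForwardJacobNormInit} bounds its norm by $(CM_1)^{l-1}\sqrt d$, yielding the claimed $O(\exp(L)\sqrt{\log d})$. The first bound also controls $h^{(l)}_0=\phi(g^{(l)}_0)$ in $\infty$-norm by $M_1\cdot 2\sqrt{\log d}+|\phi(0)|=O(\sqrt{\log d})$ (Lipschitzness of $\phi$), and together with $\|h^{(l)}_0\|\ge\sqrt{d/2}$ from Lemma~\ref{lem:ForwardNormInit} it shows the normalised forward direction $h^{(l)}_0/\|h^{(l)}_0\|$ has $\infty$-norm $O(\sqrt{\log d}/\sqrt d)$; likewise $\partial h^{(l)}_0/\partial x_{\alpha-1}$ has $\infty$-norm $O(\exp(L)\sqrt{\log d})$.

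For the two backward quantities, write $\partial f_0/\partial h^{(l)}_0=\tfrac1{\sqrt d}(W^{(l+1)})^\intercal v$ with $v=\partial f_0/\partial g^{(l+1)}_0$, and $\partial J(f_0)(x)_{\alpha-1}/\partial h^{(l)}_0=\tfrac1{\sqrt d}(W^{(l+1)})^\intercal v'$ with $v'=\partial J(f_0)(x)_{\alpha-1}/\partial g^{(l+1)}_0$; Lemmas~\ref{lem:BackwardNormInit} and~\ref{lem:BackwardJacobNormInit} give $\|v\|,\|v'\|\le \kappa\exp(O(L))$ with no $\log d$ factor. The obstacle here is that $v,v'$ are \emph{not} independent of $W^{(l+1)}$, since they are functions of the forward pass, which uses $W^{(l+1)}$. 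I would handle this exactly as in the rigorous finite-width NTK analyses: $v$ (resp.\ $v'$) depends on $W^{(l+1)}$ only through the low-dimensional images $W^{(l+1)}h^{(l)}_0$ (resp.\ $W^{(l+1)}h^{(l)}_0$ and $W^{(l+1)}(\partial h^{(l)}_0/\partial x_{\alpha-1})$), so, letting $P$ be the orthogonal projection onto the span of those ($\le 2$) forward vectors, $W^{(l+1)}(I-P)$ is, conditionally on $W^{(l+1)}P$, a fresh Gaussian matrix independent of the conditioned vector (cf.\ the stochastic-dominance/conditioning device of Lemma~\ref{lem:stocdomofcondition}). Splitting $(W^{(l+1)})^\intercal v=P(W^{(l+1)})^\intercal v+(I-P)\widetilde W^\intercal v$, the second term has conditionally i.i.d.\ sub-Gaussian coordinates of parameter $\le\|v\|$, so the maximal inequality bounds it by $\|v\|\sqrt{2\log(2d/\delta')}$, i.e.\ by $\tfrac{\kappa\sqrt{\log d}}{\sqrt d}\exp(O(L))$ after the $1/\sqrt d$ prefactor; the first term lies in the span of forward vectors, each of $\infty$-norm $O(\exp(O(L))\sqrt{\log d}/\sqrt d)$ by the previous paragraph, and its coefficients are controlled since every $\langle W^{(l+1)}(\cdot),v\rangle\le\|W^{(l+1)}\|\,\|v\|\le C\sqrt d\cdot\kappa\exp(O(L))$ on the event $\|W^{(l+1)}\|\le C\sqrt d$ used already in Lemmas~\ref{lem:ForwardJacobNormInit}--\ref{lem:BackwardJacobNormInit}. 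A union bound over $l\in[L]$, $\alpha\in[d_0]$ (and the single input $x$), with $d\ge\Omega(1/\delta)$ and $d\ge\exp(\Omega(L))$ absorbing the $\log$ of the number of events and the dependence on $L,d_0$ into the stated constants, then finishes the proof.

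The step I expect to be the real difficulty is the in-span term $P(W^{(l+1)})^\intercal v$ for the Jacobian bound: when $\partial h^{(l)}_0/\partial x_{\alpha-1}$ is nearly parallel to $h^{(l)}_0$, the $2\times2$ Gram matrix of the two forward directions becomes ill-conditioned, and bounding the projection coefficients then requires either a quantitative lower bound on the component of $\partial h^{(l)}_0/\partial x_{\alpha-1}$ orthogonal to $h^{(l)}_0$ (not among the quoted lemmas, and needing its own concentration argument) or a more delicate treatment of that single term; everything else is a routine combination of the Gaussian maximal inequality with the $\ell_2$ bounds proved earlier.
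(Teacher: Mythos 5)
Your approach shares the paper's core idea---rewrite each vector as $\tfrac1{\sqrt d}Wv$ or $\tfrac1{\sqrt d}W^\intercal v$, reuse the earlier $\ell_2$ bounds to control $\|v\|$, and upgrade to an $\ell_\infty$ bound via the Gaussian maximal inequality---so for the two \emph{forward} quantities your argument is essentially the paper's. The genuine divergence is at the backward step. The paper simply asserts that "each coordinate of the vectors we are curious about is conditionally Gaussian, where their variance is given by the norm of the vectors in RHS," and the only conditioning issue it addresses (via Lemma~\ref{lem:stocdomofcondition}) is the restriction to the operator-norm events $\|\WO{l'}\|\le C\sqrt d$. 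It does \emph{not} explicitly confront the fact that $v=\partial f_0/\partial \go{l+1}$ and $v'=\partial J(f_0)(x)_{\alpha-1}/\partial\go{l+1}$ are themselves functions of $\WO{l+1}$ through the forward pass, so the naive "condition on $v$, then $(\WO{l+1})^\intercal v$ has i.i.d.\ Gaussian coordinates" reading does not hold verbatim. You correctly identify this as the real obstacle and propose the standard rigorous fix: decompose $\WO{l+1}$ over the at-most-two forward directions that $v$ (or $v'$) depends on, treat $\WO{l+1}$ on the orthogonal complement as a fresh, conditionally independent Gaussian, and bound the in-span part deterministically using the $\ell_\infty$-control of $\ho{l}$ and $\partial\ho{l}/\partial x_{\alpha-1}$ together with $\|\WO{l+1}\|\le C\sqrt d$. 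For $\partial f_0/\partial\ho{l}$ the span is one-dimensional and this closes cleanly; for $\partial J(f_0)(x)_{\alpha-1}/\partial\ho{l}$ the two-dimensional Gram matrix is a genuine loose end, as you flag: none of the quoted lemmas gives a quantitative lower bound on the component of $\partial\ho{l}/\partial x_{\alpha-1}$ orthogonal to $\ho{l}$, so the projection coefficients on an orthonormal basis of the span are not \emph{a priori} controlled. The paper's own proof is silent on this point (it takes precisely the shortcut you decline to take), so your concern is a valid criticism of both arguments rather than a defect specific to yours. In short: same mechanism for the forward bounds, a more careful and more honest treatment at the backward step, with one clearly-identified technical gap (the $2\times 2$ in-span Gram control) that neither you nor the paper resolves.
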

\begin{proof}
  Note that all these three random vectors are generated by matrix multiplication with Gaussian matrix, 
  \begin{align*}
    \frac{\partial \go{l}(x)}{\partial x_{\alpha-1}} &= \frac{1}{\sqrt{d}} \WO{l} \frac{\partial \ho{l-1}(x)}{\partial x_{\alpha-1}},
    \\
    \frac{\partial f_0(x)}{\partial ho{l}(x)} &= \frac{1}{\sqrt{d}} \left( \WO{l+1} \right)^\intercal \frac{\partial f_0(x)}{\partial \go{l+1}(x)},
    \\
    \frac{\partial J(f_0)(x)_{\alpha-1}}{\partial \ho{l}(x)} &= \frac{1}{\sqrt{d}} \left( \WO{l+1} \right)^\intercal \frac{\partial J(f_0)(x)_{\alpha-1}}{\partial \go{l+1}(x)}.
  \end{align*}
  This allows us to argue that each coordinate of the vectors we are curious about is conditionally Gaussian, where their variance is given by the norm of the vectors in RHS.
  
  As we've done in the proof of Lemma~\ref{lem:BackwardJacobNormInit}, the boundedness of the vector norms requires the conditioning of weight matrices, to $\|\WO{l}\| \le C \sqrt{d}$ for some $C > 0$, and applying Lemma~\ref{lem:stocdomofcondition} similarly show that this conditioning makes the norm smaller, so we can safely ignore them.

  So, all three problems collapse to the following probabilistic bound,
  \[
    P\left( \max_i |Z_i| \le \sqrt{\log d} + t \right) \ge 1 - 2 \exp(-t^2)
  \]
  where $Z_i \sim \cN(0, 1)$. Plugging $t = \sqrt{\log d}$, we obtain probability bound $1 - \frac{2}{d}$.
\end{proof}

Before ending this section, we will give intuitive big-O notation-based results, which say that if the network width satisfy 
\[
  d \ge \Omega(\log(L d_0 / \delta))
\]
then 
\begin{align}
  \left\| \go{l}(x) \right\| &\le 2\sqrt{d}, & \left\| \ho{l}(x) \right\| &\le 2\sqrt{d}, \label{eqn:initbegin}
  \\
  \left\|\frac{\partial \go{l}(x)}{\partial x_{\alpha-1}} \right\| &\le O(\exp(L)) \sqrt{d}, & \left\|\frac{\partial \ho{l}(x)}{\partial x_{\alpha-1}} \right\| &\le O(\exp(L)) \sqrt{d},
  \\
  \left\| \frac{\partial f_0(x)}{\partial \go{l}(x)} \right\| &\le \kappa O(\exp(L)), & \left\| \frac{\partial f_0(x)}{\partial \ho{l}(x)} \right\| &\le \kappa O(\exp(L)), 
  \\
  \left\| \frac{\partial J(f_0)(x)_{\alpha-1}}{\partial \go{l}(x)} \right\| &\le \kappa \exp(O(L)), & \left\| \frac{\partial J(f_0)(x)_{\alpha-1}}{\partial \ho{l}(x)} \right\| &\le \kappa \exp(O(L)), \label{eqn:initend}
\end{align}
for all $l \in [L]$ and $\alpha \in [d_0]$ simultaneously, with probability at least $1 - \delta$.

\subsection{Bounds after perturbation} \label{subsec:boundperturb}
We now prove that the vectors that appeared in previous sections are stable after perturbation of the weight matrices.
We consider $\sqrt{d} \omega$ perturbation over the network weights, which models the perturbation due to the gradient.
We additionally assume the $\omega \sqrt{d} \log d$ perturbation over the $\|\WA{l} - \WO{l}\|_\infty$ and $\|\WA{l} - \WO{l}\|_1$, which are the maximum of column sum and row sums, respectively.

We will later let the perturbation $\omega$ be small enough to prove the global convergence, so throughout this section, we will assume that $\omega \le 1$ to write $\omega^p \le \omega$.

Since it is hard to interpret the order of quantifiers in the lemmas, we will mention here explicitly, which applies to all lemmas in Section~\ref{subsec:boundperturb}.
We first specify the fail probability and network depth. 
Then there exists a bound on $\omega$ and $d$, which implies that the results hold for all choices of $\WA{l}$ satisfying the condition on $\omega$, for all $l \in [L]$ and $\alpha \in [d_0]$ simultaneously, with probability at least $1- \delta$.
This result holds for any choice of $x$, so to extend this proof to all training data $x^{(1:N)}$, we can apply the Union-bound to results weakened fail probability guarantee $1 - N \delta$.
\begin{lemma} \label{lem:perturbact}
  Fix the fail probability $\delta > 0$, and the network depth $L$.
  Suppose that $\|\WA{l} - \WO{l} \|\le \omega \sqrt{d}$ for all $l \in [L+1]$.
  If $\omega \le 1$, and the network width satisfies both $d \ge \Omega(\log(1 / \delta))$ and $d \ge \exp(\Omega(L))$, then
  \[
    \|\ha{l} - \ho{l}\| \le 2 \sum_{i=0}^{l-1} ((C+1)M_1)^{i} \omega \sqrt{d} = O(\exp(L) \omega \sqrt{d}).
  \]
  holds for all $l \in [L]$ with probability at least $1 - \delta$.
\end{lemma}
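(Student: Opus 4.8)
The plan is to argue by induction on the layer index $l$, propagating the perturbation through the network one layer at a time. The engine of the induction is the telescoping identity
\[
\WA{l}\ha{l-1} - \WO{l}\ho{l-1} = (\WA{l}-\WO{l})\,\ha{l-1} + \WO{l}\,(\ha{l-1}-\ho{l-1}),
\]
which separates the perturbation of the weights from the already-accumulated perturbation of the activations. First I would condition on the event $E$ that $\|\WO{l}\| \le 3C\sqrt d$ for every $l \in [L+1]$ and $\|\ho{l}(x)\| \le 2\sqrt d$ for every $l \in [L]$; the former holds with high probability by the operator-norm tail bound (Theorem 4.4.5 of \cite{BookHDP} with $t = \sqrt d$, exactly as used in the proof of Lemma~\ref{lem:ForwardJacobNormInit}) together with a union bound over layers, and the latter by Lemma~\ref{lem:ForwardNormInit}; for $d \ge \Omega(\log(1/\delta))$ and $d \ge \exp(\Omega(L))$ the event $E$ has probability at least $1-\delta$. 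On $E$ the remaining estimates are deterministic.

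For the base case $l = 1$, since $\ga{1}(x) = \WA{1}x$ and $\go{1}(x) = \WO{1}x$, the $M_1$-Lipschitzness of $\phi$ and $\|x\| = 1$ (Assumption~\ref{assm:dataset-assumption}) give $\|\ha{1}(x) - \ho{1}(x)\| \le M_1\|(\WA{1}-\WO{1})x\| \le M_1\omega\sqrt d$. For the inductive step, Lipschitzness of $\phi$ and the telescoping identity (divided by $\sqrt d$) yield
\[
\|\ha{l}-\ho{l}\| \;\le\; M_1\|\ga{l}-\go{l}\| \;\le\; M_1\,\omega\,\|\ha{l-1}\| \;+\; 3CM_1\,\|\ha{l-1}-\ho{l-1}\|,
\]
where the first summand uses $\tfrac{1}{\sqrt d}\|\WA{l}-\WO{l}\| \le \omega$ and the second uses $\tfrac{1}{\sqrt d}\|\WO{l}\| \le 3C$ on $E$. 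On $E$ we have $\|\ho{l-1}\| \le 2\sqrt d$, and the inductive bound makes $\|\ha{l-1}-\ho{l-1}\|$ of order $\exp(L)\,\omega\sqrt d$, which is at most $\sqrt d$ for $\omega$ in the stated range and $d$ large; hence $\|\ha{l-1}\| \le 3\sqrt d$. This turns the display into a recursion of the form $B_l \le c_1\,\omega\sqrt d + c_2\,B_{l-1}$ with constants $c_1, c_2$ depending only on $M_1$ and the absolute operator-norm constant, whose unrolling is $B_l \le c_1\,\omega\sqrt d\,\sum_{i=0}^{l-1} c_2^{\,i}$; renaming the ratio as $(C+1)M_1$ and the leading constant as $2$ gives precisely $2\sum_{i=0}^{l-1}((C+1)M_1)^i\,\omega\sqrt d = O(\exp(L)\,\omega\sqrt d)$. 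A final union bound over the $N$ training inputs (weakening the failure probability to $N\delta$, per the convention stated just before the lemma) extends the bound to all $x^{(i)}$ simultaneously.

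The computation is routine; the two points that need care are (i) conditioning on $\{\|\WO{l}\| \le 3C\sqrt d\}_{l}$ uniformly across layers, so that the per-layer amplification factor $3CM_1$ is a fixed constant rather than a random quantity compounding over depth, and (ii) checking that under $\omega \le O((\log d)^{-2L})$ and $d \ge \exp(\Omega(L))$ the accumulated perturbation $B_{l-1}$ stays below $\sqrt d$, so that $\|\ha{l-1}\|$ can be controlled by $3\sqrt d$. I do not expect either to be a genuine obstacle for this lemma; the real difficulty in this part of the appendix lies in the companion statements for the Jacobians $\partial\ho{l}/\partial x_{\alpha-1}$ and for the backpropagated gradients, which additionally invoke the $\ell_\infty$ and $\ell_1$ perturbation hypotheses on $\WA{l}-\WO{l}$ to avoid spurious $\sqrt d$ factors.
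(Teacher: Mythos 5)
Your overall structure (conditioning on weight/activation norm bounds, induction over layers via a telescoping identity, union bound over data at the end) matches the paper's proof, but you group the telescoping the opposite way. The paper writes $\ga{l}-\go{l}=\frac{1}{\sqrt d}\WA{l}(\ha{l-1}-\ho{l-1})+\frac{1}{\sqrt d}(\WA{l}-\WO{l})\ho{l-1}$, so it only ever needs $\|\frac{1}{\sqrt d}\WA{l}\|\le C+\omega$ and the \emph{unperturbed} activation bound $\|\ho{l-1}\|\le 2\sqrt d$ from Lemma~\ref{lem:ForwardNormInit}; the recursion $\|\ga{l}-\go{l}\|\le (C+\omega)\|\ha{l-1}-\ho{l-1}\|+2\omega\sqrt d$ then closes using nothing beyond $\omega\le 1$. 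Your grouping $(\WA{l}-\WO{l})\ha{l-1}+\WO{l}(\ha{l-1}-\ho{l-1})$ instead forces you to control the \emph{perturbed} activation $\|\ha{l-1}\|$, and this is where your write-up has a gap: you argue $\|\ha{l-1}\|\le 3\sqrt d$ because the accumulated perturbation is $O(\exp(L)\omega\sqrt d)\le\sqrt d$, but you justify that smallness by invoking $\omega\le O((\log d)^{-2L})$, which is a hypothesis of Theorem~\ref{thm:JNTKCloseToInit}, not of this lemma — the lemma assumes only $\omega\le 1$, under which $\exp(L)\omega\sqrt d$ need not be below $\sqrt d$.

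The gap is easily repaired without changing your decomposition: bound $\|\ha{l-1}\|\le\|\ho{l-1}\|+B_{l-1}\le 2\sqrt d+B_{l-1}$ with $B_{l-1}\defeq\|\ha{l-1}-\ho{l-1}\|$, so that the recursion becomes $B_l\le 2M_1\omega\sqrt d+M_1(3C+\omega)B_{l-1}$, and absorb the $\omega B_{l-1}$ term using $\omega\le 1$; unrolling then gives a geometric sum of exactly the claimed form, with no smallness condition on $\omega$ beyond the stated one. Alternatively, simply switch to the paper's grouping, which avoids the issue altogether. With either repair your argument is correct and delivers the same $O(\exp(L)\,\omega\sqrt d)$ bound; the conditioning event, base case, and union-bound step are all as in the paper.
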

\begin{proof}
  This is proven by recursive stability analysis, at layer $l = 1$,
  \[
    \left\|\ga{1} - \go{1}\right\| \le \omega\sqrt{d}  \|x\| = \omega\sqrt{d},
  \]
  and for the latter layers,
  \begin{align*}
    \left\| \ha{l} - \ho{l} \right\| &\le M_1 \left\| \ga{l} - \go{l} \right\|,
    \\
    \left\|\ga{l} - \go{l}\right\| &\le \left\| \ga{l} - \frac{1}{\sqrt{d}} \WA{l} \ho{l-1} \right\| + \left\| \frac{1}{\sqrt{d}} \WA{l} \ho{l-1} - \go{l} \right\|
    \\
    &\le \left\|\frac{1}{\sqrt{d}} \WA{l}\right\| \left\| \ha{l-1} - \ho{l-1} \right\| + \omega \|\ho{l-1}\|
    \\
    &\le (C + \omega) \left\| \ha{l-1} - \ho{l-1} \right\| + 2\omega \sqrt{d}
  \end{align*}
  where we applied Lemma~\ref{lem:ForwardNormInit} which holds with probability at least $1 - \delta$, and norm bound on the $\WO{l}$ as we've done in the proof of Lemma~\ref{lem:BackwardNormInit}.
  
  Summing up, we results
  \[
    \left\| \ha{l} - \ho{l} \right\| \le ((C+1) M_1)^{l-1} \omega \sqrt{d} + \sum_{i=1}^{l-1} 2 M_1^{l-i} (C+1)^{l-i-1} \omega \sqrt{d} 
  \]
\end{proof}

We also prove a similar version for the $L_\infty$-norm.
\begin{lemma} \label{lem:perturbactinfty}
  Fix the fail probability $\delta > 0$, and the network depth $L$. 
  Suppose that $\|\WA{l} - \WO{l} \|\le \omega \sqrt{d}$ for all $l \in [L+1]$ and $\|\WA{l} - \WO{l}\|_\infty \le \omega \sqrt{d} \log d$ for all $2 \le l \le L$ and $\|\WA{1} - \WO{1}\|_\infty \le \omega \log d$.
  If $\omega \le 1$, and the network width satisfies $d \ge L / \delta$, then
  \[
    \|\ha{l} - \ho{l}\|_\infty \le \omega \exp(O(L)) (\log d)^L.
  \]
  holds for all $l \in [L]$ with probability at least $1 - \delta$.
\end{lemma}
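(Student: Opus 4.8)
The plan is to prove Lemma~\ref{lem:perturbactinfty} by induction on the layer index $l$, mirroring the recursive stability argument of Lemma~\ref{lem:perturbact} but tracking the $\infty$-norm instead of the Euclidean norm, and importing two ingredients already established: the $\ell_2$ perturbation bound $\|\hd{l}(x)\| = O(\exp(L)\,\omega\sqrt d)$ from Lemma~\ref{lem:perturbact}, and the $\infty$-norm bounds at initialisation from Corollary~\ref{lem:supnormbound}. First I would fix the high-probability event on which, simultaneously for all $l$, one has $\|\ho{l}(x)\| \le 2\sqrt d$, $\|\frac{1}{\sqrt d}\WO{l}\| \le O(1)$, $\|\go{l}(x)\|_\infty \le 2\sqrt{\log d}$ (hence $\|\ho{l}(x)\|_\infty \le |\phi(0)| + 2M_1\sqrt{\log d} = O(\sqrt{\log d})$), and the conclusion of Lemma~\ref{lem:perturbact}; each of these is one of the already-proven high-probability events, so a union bound keeps the total failure probability $O(\delta)$ under the stated width requirement. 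For the base case $l = 1$ we have $\ga{1} - \go{1} = (\WA{1} - \WO{1})x$, so $\|\ga{1} - \go{1}\|_\infty \le \|\WA{1} - \WO{1}\|_\infty \|x\|_\infty \le \omega\log d$ using $\|x\|_\infty \le \|x\| = 1$, and therefore $\|\ha{1} - \ho{1}\|_\infty \le M_1\|\ga{1} - \go{1}\|_\infty \le M_1\,\omega\log d$.

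For the inductive step ($l \ge 2$), write $\WA{l} = \WO{l} + \WD{l}$ and split
\[
  \ga{l} - \go{l} = \frac{1}{\sqrt d}\WO{l}\big(\ha{l-1} - \ho{l-1}\big) + \frac{1}{\sqrt d}\WD{l}\big(\ha{l-1} - \ho{l-1}\big) + \frac{1}{\sqrt d}\WD{l}\,\ho{l-1}.
\]
The last term is bounded by $\frac{1}{\sqrt d}\|\WD{l}\|_\infty\|\ho{l-1}\|_\infty \le \frac{1}{\sqrt d}(\omega\sqrt d\log d)\cdot O(\sqrt{\log d}) = \omega\,O((\log d)^{3/2})$, using the assumed row-sum bound on $\WD{l}$ and the $\infty$-norm bound on the clean activations. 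The middle term is bounded by $\frac{1}{\sqrt d}\|\WD{l}\|_\infty\|\ha{l-1} - \ho{l-1}\|_\infty \le \omega\log d \cdot \|\ha{l-1} - \ho{l-1}\|_\infty$, i.e.\ a multiplicative factor $\omega\log d$ acting on the inductive quantity; the accumulation of this factor over the $L$ layers is precisely what produces the $(\log d)^L$ in the statement. Writing $a_l \defeq \|\ha{l} - \ho{l}\|_\infty$ and letting $T^{(l)}$ denote the contribution of the first term, I would then close the recursion $a_l \le M_1\big(\omega\log d\cdot a_{l-1} + T^{(l)} + \omega\,O((\log d)^{3/2})\big)$ with $a_0 = 0$ as a geometric-type sum $a_l \le \sum_{i=1}^{l} (M_1\omega\log d)^{l-i}\,M_1\big(T^{(i)} + \omega\,O((\log d)^{3/2})\big)$, and absorb $\omega^{l} \le \omega$ (valid since $\omega \le 1$) together with the powers of $\log d$ to reach the claimed bound $\omega\exp(O(L))(\log d)^L$.

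The main obstacle is the first term $\frac{1}{\sqrt d}\WO{l}\big(\ha{l-1} - \ho{l-1}\big)$, since the two obvious deterministic estimates both fail: bounding its $\infty$-norm by $\|\frac{1}{\sqrt d}\WO{l}\|_{\ell_2 \to \ell_\infty}\,\|\hd{l-1}(x)\|$ with $\|\frac{1}{\sqrt d}\WO{l}\|_{\ell_2\to\ell_\infty} \approx 1$ and $\|\hd{l-1}(x)\| = O(\exp(L)\omega\sqrt d)$ loses a spurious $\sqrt d$, and bounding it by the max-row-$\ell_1$ norm ($\approx \sqrt d$) times $\|\hd{l-1}(x)\|_\infty$ is even worse. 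Instead I would use the Gaussianity of $\WO{l}$: for a vector $v$ \emph{independent} of $\WO{l}$, each coordinate of $\frac{1}{\sqrt d}\WO{l}v$ is $\cN(0, \|v\|^2/d)$, so a Gaussian tail bound and a union bound over the $d$ coordinates give $\big\|\frac{1}{\sqrt d}\WO{l}v\big\|_\infty \le \frac{\|v\|}{\sqrt d}\,O(\sqrt{\log d})$; taking $v = \ha{l-1} - \ho{l-1}$ and plugging $\|v\| = O(\exp(L)\omega\sqrt d)$ from Lemma~\ref{lem:perturbact} cancels the $\sqrt d$ and yields $T^{(l)} = O(\exp(L)\,\omega\sqrt{\log d})$, which is within budget.

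The delicate point — and what I expect to be the crux of the argument — is making this Gaussian-concentration bound hold \emph{simultaneously} over all admissible $\theta_\mathcal{A}$, rather than for a single fixed one. For fixed $\theta_\mathcal{A}$ the independence is immediate because $\ha{l-1} - \ho{l-1}$ is a function of $\WO{1},\ldots,\WO{l-1}$ and $\WA{1},\ldots,\WA{l-1}$ only, hence independent of the fresh matrix $\WO{l}$; the uniform statement requires controlling the supremum of $v \mapsto \big\|\frac{1}{\sqrt d}\WO{l}v\big\|_\infty$ over the set of values $v$ can take, which — after conditioning on the lower-layer weights — sits in an $\ell_2$-ball of radius $O(\exp(L)\omega\sqrt d)$. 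Balancing this uniform control against the per-layer $\log d$ loss, so that the final exponent of $\log d$ stays at $L$ and only a single power of $\omega$ survives, is where the bookkeeping is most sensitive, and this is the step I would spend the most care on.
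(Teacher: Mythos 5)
Your base case and the two terms involving $\WD{l}=\WA{l}-\WO{l}$ are handled correctly, but the term you yourself identify as the crux, $\frac{1}{\sqrt d}\WO{l}\bigl(\ha{l-1}-\ho{l-1}\bigr)$, is a genuine gap rather than a point merely needing extra care. The lemma is quantified as: with probability $1-\delta$ over $\theta_0$, the bound holds \emph{for every} admissible choice of the perturbed weights, and in the downstream use (Theorem~\ref{thm:JNTKCloseToInit} inside Lemma~\ref{lem:WeightStayStill}) the perturbed weights are the trained ones, which depend on the entire initialisation, including $\WO{l}$. Hence the fixed-$v$ conditional Gaussianity you invoke is unavailable, and the uniform substitute you sketch cannot succeed: for any realisation of $\WO{l}$ one has $\sup_{\|v\|_2\le R}\|\tfrac{1}{\sqrt d}\WO{l}v\|_\infty=\tfrac{R}{\sqrt d}\max_i\|(\WO{l})_{i,:}\|_2\approx R$, the supremum being attained by aligning $v$ with the longest row of $\WO{l}$. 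With $R=O(\exp(L)\,\omega\sqrt d)$ from Lemma~\ref{lem:perturbact} this is exactly the $\ell_2\to\ell_\infty$ estimate of order $\omega\sqrt d$ that you rightly rejected at the outset, and no $\epsilon$-net or chaining refinement can recover the factor $\sqrt{\log d}/\sqrt d$, because the loss sits in the supremum itself rather than in a union bound. Closing the induction therefore requires using information about $\ha{l-1}-\ho{l-1}$ beyond its $\ell_2$ norm, which is precisely what your decomposition was designed to avoid.

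This is also where your route departs from the paper's. The paper's proof never leaves the $\ell_\infty$ world: it splits $\ga{l}-\go{l}=\tfrac{1}{\sqrt d}\WA{l}(\ha{l-1}-\ho{l-1})+\tfrac{1}{\sqrt d}(\WA{l}-\WO{l})\ho{l-1}$, bounds the second summand by $\tfrac{1}{\sqrt d}\|\WA{l}-\WO{l}\|_\infty\|\ho{l-1}\|_\infty\le C\omega\log d$ (using the $\ell_\infty$ bounds at initialisation from Corollary~\ref{lem:supnormbound}), and bounds the first by the induced-norm inequality $\|\tfrac{1}{\sqrt d}\WA{l}\|_\infty\,\|\ha{l-1}-\ho{l-1}\|_\infty\le C\log d\,\|\ha{l-1}-\ho{l-1}\|_\infty$; the per-layer factor $C\log d$ is exactly what produces the $(\log d)^L$ in the statement, and no probabilistic control in the (adversarial, initialisation-dependent) direction $\ha{l-1}-\ho{l-1}$ is ever needed. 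Your three-term split together with the $\ell_2$ input from Lemma~\ref{lem:perturbact} would, if the uniformity obstacle could be overcome, even give a sharper polylogarithmic dependence than $(\log d)^L$, but as written the proposal does not establish the stated bound. A secondary, minor mismatch is that importing Lemma~\ref{lem:perturbact} and Corollary~\ref{lem:supnormbound} also imports their width hypotheses ($d\ge\exp(\Omega(L))$, $d\ge\Omega(1/\delta)$), which go beyond the $d\ge L/\delta$ assumed in this lemma.
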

\begin{proof}
  This has a similar recursive analysis, as
  \begin{align*}
    \left\| \ga{1} - \go{1} \right\|_\infty &\le \omega \log d,
    \\
    \left\| \ha{l} - \ho{l} \right\|_\infty &\le M_1 \left\| \ga{l} - \go{l} \right\|_\infty,
    \\
    \left\| \ga{l} - \go{l} \right\|_\infty &\le \left\| \frac{1}{\sqrt{d}} \WA{l} \right\|_\infty \left\| \ha{l-1} - \ho{l-1} \right\|_\infty + \frac{1}{\sqrt{d}} \left\| \WA{l} - \WO{l} \right\|_\infty \left\| \ho{l-1} \right\|_\infty
    \\
    &\le C \log d \left\| \ha{l-1} - \ho{l-1} \right\|_\infty + C\omega \log d.
  \end{align*}
\end{proof}

\begin{lemma} \label{lem:perturbjacobian}
  Fix the fail probability $\delta > 0$ and the network depth $L$.
  Suppose that $\|\WA{l} - \WO{l} \|\le \omega \sqrt{d}$ for all $l \in [L+1]$.
  If $\omega \le 1$, and the network width satisfy both $d \ge \Omega(\log (1 / \delta))$ and $d \ge \exp(\Omega(L))$, then
  \[
    \left\|\frac{\partial \ha{l}}{\partial x_{\alpha-1}} - \frac{\partial \ho{l}}{\partial x_{\alpha-1}}\right\| \le ((C+1)M_1)^{l-1} \omega \sqrt{d \log d} + \sum_{i=1}^{l-1} (1+C)^{l-i-1} M_1^{l-i} (C M_1)^{i-1} \omega \sqrt{d}
  \]
  holds for all $l \in [L+1]$ and $\alpha \in [d_0]$ with probability at least $1 - \delta$.
\end{lemma}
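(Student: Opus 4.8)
The plan is to mirror the layer-by-layer stability argument of Lemma~\ref{lem:perturbact}, now tracking the input-derivative vectors, and to run the recursion through two coupled quantities: the pre-activation Jacobian difference
\[
  \gamma_l \defeq \left\| \frac{\partial \ga{l}(x)}{\partial x_{\alpha-1}} - \frac{\partial \go{l}(x)}{\partial x_{\alpha-1}} \right\|
\]
and the post-activation Jacobian difference $\delta_l \defeq \| \partial \ha{l}(x)/\partial x_{\alpha-1} - \partial \ho{l}(x)/\partial x_{\alpha-1} \|$, the latter being the object of the lemma. Separating the two is useful because $\partial \go{l}/\partial x_{\alpha-1} = \tfrac{1}{\sqrt d}\WO{l}(\partial \ho{l-1}/\partial x_{\alpha-1})$ is \emph{linear} in the layer weight $\WO{l}$, whereas the passage from $\go{l}$ to $\ho{l}$ introduces the nonlinearity $\dot\phi$. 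The base case is immediate: $\partial \go{1}/\partial x_{\alpha-1} = \WO{1} e_{\alpha-1}$, so $\gamma_1 \le \|\WA{1}-\WO{1}\| \le \omega\sqrt d$.

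For the pre-activation recursion I would add and subtract $\tfrac{1}{\sqrt d}\WA{l}(\partial\ho{l-1}/\partial x_{\alpha-1})$ to obtain
\[
  \gamma_l \le \Big\| \tfrac{1}{\sqrt d}\WA{l} \Big\|\, \delta_{l-1} + \tfrac{1}{\sqrt d}\big\| \WA{l}-\WO{l} \big\|\, \Big\| \frac{\partial \ho{l-1}(x)}{\partial x_{\alpha-1}} \Big\|,
\]
and then invoke: the operator-norm bound $\|\tfrac{1}{\sqrt d}\WO{l}\|\le C$ with high probability (Theorem~4.4.5 of \cite{BookHDP}, as in the proof of Lemma~\ref{lem:ForwardJacobNormInit}), hence $\|\tfrac{1}{\sqrt d}\WA{l}\|\le C+\omega$; the perturbation hypothesis $\|\WA{l}-\WO{l}\|\le\omega\sqrt d$; and the forward Jacobian-norm bound $\|\partial\ho{l-1}/\partial x_{\alpha-1}\|\le (CM_1)^{l-1}\sqrt d$ from Lemma~\ref{lem:ForwardJacobNormInit}. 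This gives a bound of the form $\gamma_l \le (C+\omega)\,\delta_{l-1} + (CM_1)^{l-1}\omega\sqrt d$.

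For the post-activation step I would write $\partial\ho{l}/\partial x_{\alpha-1} = \dot\phi(\go{l})\odot(\partial\go{l}/\partial x_{\alpha-1})$ and split
\[
  \frac{\partial\ha{l}}{\partial x_{\alpha-1}} - \frac{\partial\ho{l}}{\partial x_{\alpha-1}} = \big(\dot\phi(\ga{l}) - \dot\phi(\go{l})\big)\odot\frac{\partial\ga{l}}{\partial x_{\alpha-1}} + \dot\phi(\go{l})\odot\left(\frac{\partial\ga{l}}{\partial x_{\alpha-1}} - \frac{\partial\go{l}}{\partial x_{\alpha-1}}\right).
\]
The second summand has norm at most $M_1\gamma_l$ since $|\dot\phi|\le M_1$. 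For the first I would bound $\|(\dot\phi(\ga{l})-\dot\phi(\go{l}))\odot v\| \le \|\dot\phi(\ga{l})-\dot\phi(\go{l})\|_\infty\|v\|$ with $v = \partial\ga{l}/\partial x_{\alpha-1}$, use $M_2$-Lipschitzness of $\dot\phi$ to get $\|\dot\phi(\ga{l})-\dot\phi(\go{l})\|_\infty \le M_2\|\ga{l}-\go{l}\|_\infty$, control $\|\ga{l}-\go{l}\|_\infty$ by $\omega$ times a polylogarithmic factor in $d$ (this is where the standing $\|\WA{l}-\WO{l}\|_\infty,\|\WA{l}-\WO{l}\|_1\le\omega\sqrt d\log d$ assumptions of this section and the $\|\go{l}\|_\infty=O(\sqrt{\log d})$ bound from Corollary~\ref{lem:supnormbound} enter, exactly as in Lemma~\ref{lem:perturbactinfty}), and finally use $\|v\| \le \|\partial\go{l}/\partial x_{\alpha-1}\| + \gamma_l = O((CM_1)^l\sqrt d)$ from Lemma~\ref{lem:ForwardJacobNormInit}. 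Substituting the $\gamma_l$ bound yields a linear recursion $\delta_l \le (C+\omega)M_1\,\delta_{l-1} + M_1(CM_1)^{l-1}\omega\sqrt d + (\text{a term absorbed into the homogeneous part})$ with base value $\delta_1 = O(\omega\sqrt{d\log d})$, whose unrolling is the geometric sum in the statement (after identifying $C$ with an appropriate absolute constant). A union bound over $\alpha\in[d_0]$ and over the finitely many high-probability events — operator-norm bounds on each $\WO{l}$, the $\infty$-norm bounds of Corollary~\ref{lem:supnormbound}, the forward-norm events of Lemmas~\ref{lem:ForwardNormInit} and~\ref{lem:ForwardJacobNormInit} — together with the stochastic-dominance device of Lemma~\ref{lem:stocdomofcondition} for discharging the conditioning on those operator-norm events, completes the argument.

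The main obstacle is the Hadamard term $(\dot\phi(\ga{l})-\dot\phi(\go{l}))\odot(\partial\ga{l}/\partial x_{\alpha-1})$: the naive estimate $\|\dot\phi(\ga{l})-\dot\phi(\go{l})\|_2\,\|\partial\ga{l}/\partial x_{\alpha-1}\|_2 = O(\omega\sqrt d)\cdot O(\sqrt d) = O(\omega d)$ loses a factor $\sqrt d$ against the target. Recovering the correct scaling — $\omega\sqrt{d\log d}$, with $\sqrt{\log d}$ arising from a maximum of Gaussians — forces routing one factor of each elementwise product through the $\infty$-norm and keeping $\|\ga{l}-\go{l}\|_\infty$ only polylogarithmically large via the row-sum and column-sum ($\|\cdot\|_1,\|\cdot\|_\infty$) perturbation assumptions. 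This is precisely the finer bookkeeping over $1$, $2$, and $\infty$ matrix norms highlighted in the main text; the remainder is routine recursion-solving and union bounds.
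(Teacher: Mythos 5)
Your skeleton --- the coupled recursion in the pre- and post-activation Jacobian differences, the base case $\omega\sqrt d$, and the recognition that a naive $2$-norm-times-$2$-norm estimate of the Hadamard cross term loses a factor $\sqrt d$ --- matches the paper's proof. The genuine gap is in how you recover that factor. You route the $\infty$-norm through the perturbation, bounding $\|\dot\phi(\ga{l})-\dot\phi(\go{l})\|_\infty \le M_2\|\ga{l}-\go{l}\|_\infty$ and invoking the mechanism of Lemma~\ref{lem:perturbactinfty}; but that lemma needs the entrywise hypotheses $\|\WA{l}-\WO{l}\|_\infty\le\omega\sqrt d\log d$ (and $\omega\log d$ at the first layer), which are \emph{not} among the hypotheses of the present lemma --- it assumes only $\|\WA{l}-\WO{l}\|\le\omega\sqrt d$. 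Moreover, even granting those extra assumptions, Lemma~\ref{lem:perturbactinfty} only yields $\|\ga{l}-\go{l}\|_\infty\le\omega\exp(O(L))(\log d)^{L}$, so your cross-term bound becomes of order $\omega\sqrt d\,(\log d)^{L}$ rather than the $\omega\sqrt{d\log d}$ demanded by the statement: the route you describe does not produce the single $\sqrt{\log d}$ factor ``from a maximum of Gaussians'' that you correctly anticipate.

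The paper's fix is to keep the perturbation in the $2$-norm and put the $\infty$-norm on the \emph{initialisation} Jacobian instead: it writes the cross term with $\partial \go{l}(x)/\partial x_{\alpha-1}$ (not $\partial \ga{l}(x)/\partial x_{\alpha-1}$) and bounds $\|(\ga{l}(x)-\go{l}(x))\odot \partial\go{l}(x)/\partial x_{\alpha-1}\| \le \|\ga{l}(x)-\go{l}(x)\|\,\|\partial\go{l}(x)/\partial x_{\alpha-1}\|_\infty$, then applies Lemma~\ref{lem:perturbact} (giving $\|\ga{l}(x)-\go{l}(x)\|\le \omega\sqrt d\,\exp(O(L))$ from the spectral hypothesis alone) together with Corollary~\ref{lem:supnormbound} (giving $\|\partial\go{l}(x)/\partial x_{\alpha-1}\|_\infty\le O(\exp(L)\sqrt{\log d})$, the max-of-Gaussians bound). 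This yields the claimed $\omega\exp(O(L))\sqrt{d\log d}$ contribution per layer under exactly the stated assumptions; note that your choice of decomposition, which places the perturbed Jacobian $\partial\ga{l}(x)/\partial x_{\alpha-1}$ in the cross term, also forecloses this route, since no $\infty$-norm control of the perturbed Jacobian is available. Your pre-activation recursion $\gamma_l\le(C+\omega)\delta_{l-1}+(CM_1)^{l-1}\omega\sqrt d$, the base case, and the union-bound bookkeeping are otherwise as in the paper.
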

\begin{proof}
  By using Lemma~\ref{lem:ForwardJacobNormInit} and probabilistic bounds on $\WO{l}$, we obtain
  \begin{align*}
    &\left\|\frac{\partial \ga{1}(x)}{\partial x_{\alpha-1}} - \frac{\partial \go{1}(x)}{\partial x_{\alpha-1}}\right\| 
    \\
    &\quad \le \omega\sqrt{d} \|e_{\alpha-1}\| = \omega\sqrt{d},
    \\
    &\left\| \frac{\partial \ha{l}(x)}{\partial x_{\alpha-1}} - \frac{\partial \ho{l}(x)}{\partial x_{\alpha-1}} \right\| 
    \\
    &\quad \le \left\| \dot{\sigma}\left( \ga{l}(x) \right) \odot \left( \frac{\partial \ga{l}(x)}{\partial x_{\alpha-1}} - \frac{\partial \go{l}(x)}{\partial x_{\alpha-1}} \right) \right\| + \left\| \left( \dot{\sigma} \left( \ga{l}(x) \right) - \dot{\sigma} \left( \go{l}(x) \right) \right) \odot \frac{\partial \go{l}(x)}{\partial x_{\alpha-1}} \right\|
    \\
    &\quad \le M_1 \left\| \left( \frac{\partial \ga{l}(x)}{\partial x_{\alpha-1}} - \frac{\partial \go{l}(x)}{\partial x_{\alpha-1}} \right) \right\| + M_2 \left\| \left( \ga{l}(x) - \go{l}(x) \right) \odot \frac{\partial \go{l}(x)}{\partial x_{\alpha-1}} \right\| 
    \\
    &\quad \le M_1 \left\| \left( \frac{\partial \ga{l}(x)}{\partial x_{\alpha-1}} - \frac{\partial \go{l}(x)}{\partial x_{\alpha-1}} \right) \right\| + M_2 \left\| \left( \ga{l}(x) - \go{l}(x) \right) \right\| O(\exp(L) \sqrt{\log d})
    \\
    &\quad \le M_1 \left\| \left( \frac{\partial \ga{l}(x)}{\partial x_{\alpha-1}} - \frac{\partial \go{l}(x)}{\partial x_{\alpha-1}} \right) \right\| + M_2 O\left( \exp(L) \sqrt{d \log d} \right)
    \\
    &\left\|\frac{\partial \ga{l}(x)}{\partial x_{\alpha-1}} - \frac{\partial \go{l}(x)}{\partial x_{\alpha-1}}\right\| 
    \\
    &\quad \le (C + \omega) \left\| \frac{\partial \ha{l-1}(x)}{\partial x_{\alpha-1}} - \frac{\partial \ho{l-1}(x)}{\partial x_{\alpha-1}} \right\| + (C M_1)^{l-1} \omega \sqrt{d}.
  \end{align*}
  Here, we applied Lemma~\ref{lem:supnormbound} to handle the bound for the Hadamard product, which requires $d \ge \Omega(L/\delta)$ assumption.

  Summing up, we have
  \begin{align*}
    \left\|\frac{\partial \ha{l}}{\partial x_{\alpha-1}} - \frac{\partial \ho{l}}{\partial x_{\alpha-1}}\right\| \le \exp(O(L)) \omega \sqrt{d \log d}.
  \end{align*}
\end{proof}

\begin{lemma}
  Fix the fail probability $\delta > 0$ and the network depth $L$.
  Suppose that $\|\WA{l} - \WO{l} \|\le \omega \sqrt{d}$ for all $l$, $\|\WA{l} - \WO{l}\|_\infty \le \omega \sqrt{d} \log d$ for all $2 \le l \le L$ and $\|\WA{1} - \WO{1}\|_\infty \le \omega \log d$.
  If $\omega \le 1$, and the network width satisfies $d \ge L/\delta$, then 
  \[
    \left\|\frac{\partial \ha{l}}{\partial x_{\alpha-1}} - \frac{\partial \ho{l}}{\partial x_{\alpha-1}}\right\|_\infty \le \omega \exp(O(L)) (\log d)^{2L}
  \]
  holds for all $l$ and $\alpha \in [d_0]$ with probability at least $1 - \delta$.
\end{lemma}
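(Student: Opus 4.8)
The plan is to prove the bound by induction on the layer index, mimicking the recursive stability argument of Lemma~\ref{lem:perturbjacobian} but propagating $\infty$-norms instead of $\ell_2$-norms, exactly as Lemma~\ref{lem:perturbactinfty} does for Lemma~\ref{lem:perturbact}. Write $\Delta J_\alpha h^{(l)} \defeq \frac{\partial \ha{l}}{\partial x_{\alpha-1}} - \frac{\partial \ho{l}}{\partial x_{\alpha-1}}$ and $\Delta J_\alpha g^{(l)} \defeq \frac{\partial \ga{l}}{\partial x_{\alpha-1}} - \frac{\partial \go{l}}{\partial x_{\alpha-1}}$. The chain rule on the two forward passes gives $\frac{\partial \ha{l}}{\partial x_{\alpha-1}} = \dot{\phi}(\ga{l}) \odot \frac{1}{\sqrt{d}}\WA{l}\frac{\partial \ha{l-1}}{\partial x_{\alpha-1}}$ for $l \ge 2$ and $\frac{\partial \ha{1}}{\partial x_{\alpha-1}} = \dot{\phi}(\ga{1}) \odot (\WA{1} e_{\alpha-1})$, and likewise for the initialised network with $\WA{l}$ replaced by $\WO{l}$. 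Throughout I use $\|a \odot b\|_\infty \le \|a\|_\infty \|b\|_\infty$, the bounds $\|\dot{\phi}\|_\infty \le M_1$ and the $M_2$-Lipschitzness of $\dot{\phi}$, the $\infty$-norm initialisation bounds of Corollary~\ref{lem:supnormbound} (which bound $\|\partial\go{l}/\partial x_{\alpha-1}\|_\infty$, hence also $\|\partial\ho{l}/\partial x_{\alpha-1}\|_\infty \le M_1\|\partial\go{l}/\partial x_{\alpha-1}\|_\infty$, by $\exp(O(L))\sqrt{\log d}$), the $\ell_2$ bounds of Lemmas~\ref{lem:perturbact} and~\ref{lem:perturbjacobian}, and the $\infty$-norm activation-perturbation bound of Lemma~\ref{lem:perturbactinfty}. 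All events invoked hold simultaneously with probability $\ge 1-\delta$ once $d \ge L/\delta$, the union bound over $\alpha \in [d_0]$ being already absorbed into those statements.

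For the base case $l=1$, $\Delta J_\alpha g^{(1)} = (\WA{1}-\WO{1})e_{\alpha-1}$ is a single column of $\WA{1}-\WO{1}$, so $\|\Delta J_\alpha g^{(1)}\|_\infty \le \|\WA{1}-\WO{1}\|_\infty \le \omega\log d$ by hypothesis; splitting $\Delta J_\alpha h^{(1)} = \dot{\phi}(\ga{1})\odot\Delta J_\alpha g^{(1)} + (\dot{\phi}(\ga{1})-\dot{\phi}(\go{1}))\odot\frac{\partial\go{1}}{\partial x_{\alpha-1}}$ and using $\|\ga{1}-\go{1}\|_\infty \le \omega\log d$ (from the proof of Lemma~\ref{lem:perturbactinfty}) gives $\|\Delta J_\alpha h^{(1)}\|_\infty \le \omega\exp(O(L))(\log d)^{3/2}$. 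For the inductive step $l\ge 2$ I decompose
\[
  \Delta J_\alpha g^{(l)} = \frac{1}{\sqrt{d}}\WA{l}\,\Delta J_\alpha h^{(l-1)} + \frac{1}{\sqrt{d}}\left(\WA{l}-\WO{l}\right)\frac{\partial\ho{l-1}}{\partial x_{\alpha-1}} .
\]
The second term is at most $\frac{1}{\sqrt{d}}\|\WA{l}-\WO{l}\|_\infty\,\|\partial\ho{l-1}/\partial x_{\alpha-1}\|_\infty \le \omega\exp(O(L))(\log d)^{3/2}$ by the $\infty$-norm hypothesis on the weights and Corollary~\ref{lem:supnormbound}. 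For the first term I write $\WA{l}=\WO{l}+(\WA{l}-\WO{l})$; the perturbation piece contributes $\frac{1}{\sqrt{d}}\|\WA{l}-\WO{l}\|_\infty\,\|\Delta J_\alpha h^{(l-1)}\|_\infty \le \omega\log d\cdot\|\Delta J_\alpha h^{(l-1)}\|_\infty$, while $\frac{1}{\sqrt{d}}\WO{l}\Delta J_\alpha h^{(l-1)}$ is handled exactly as the corresponding term in the proof of Lemma~\ref{lem:perturbactinfty}: after conditioning on $\|\WO{l}\|\le C\sqrt{d}$ (Lemma~\ref{lem:stocdomofcondition}) this becomes a Gaussian matrix independent of the vector $\Delta J_\alpha h^{(l-1)}$, which is built from layers $<l$ and has $\ell_2$-norm $\le \exp(O(L))\omega\sqrt{d\log d}$ by Lemma~\ref{lem:perturbjacobian}, so each of its $d$ coordinates is sub-Gaussian with variance $\le \|\Delta J_\alpha h^{(l-1)}\|_2^2/d$ and the maximum is $\le \frac{1}{\sqrt{d}}\|\Delta J_\alpha h^{(l-1)}\|_2\sqrt{2\log d}\le\exp(O(L))\omega\log d$ with probability $\ge 1-2/d$. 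This yields $\|\Delta J_\alpha g^{(l)}\|_\infty \le C\log d\cdot\|\Delta J_\alpha h^{(l-1)}\|_\infty + \omega\exp(O(L))(\log d)^{3/2}$, and then $\|\Delta J_\alpha h^{(l)}\|_\infty \le M_1\|\Delta J_\alpha g^{(l)}\|_\infty + M_2\|\ga{l}-\go{l}\|_\infty\|\partial\go{l}/\partial x_{\alpha-1}\|_\infty$, where $\|\ga{l}-\go{l}\|_\infty \le \omega\exp(O(L))(\log d)^{L}$ (Lemma~\ref{lem:perturbactinfty}) and $\|\partial\go{l}/\partial x_{\alpha-1}\|_\infty \le \exp(O(L))\sqrt{\log d}$ (Corollary~\ref{lem:supnormbound}).

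Putting these together gives a recursion of the form $a_l \le (C'\log d)\,a_{l-1} + b$ with $a_l \defeq \|\Delta J_\alpha h^{(l)}\|_\infty$ and additive term $b = \omega\exp(O(L))(\log d)^{L+1/2}$ (the $(\log d)^L$ coming from the activation-perturbation bound; the matrix step's $(\log d)^{3/2}$ is dominated), started from $a_1 \le \omega\exp(O(L))(\log d)^{3/2}$. Iterating over the at most $L$ layers multiplies the running $\infty$-norm by $O(\log d)$ each time, so the resulting geometric sum is dominated by $(C'\log d)^{L-1}\cdot b \le \omega\exp(O(L))(\log d)^{2L}$, which is the claim; the probability statement follows by a union bound over the finitely many invoked lemmas, all valid for $d \ge L/\delta$.

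The one genuinely delicate point — and the reason a separate $\infty$-norm lemma is needed, as the remark preceding Corollary~\ref{lem:supnormbound} anticipates — is the term $\frac{1}{\sqrt{d}}\WO{l}\Delta J_\alpha h^{(l-1)}$: bounding it through the $\ell_\infty\!\to\!\ell_\infty$ operator norm of $\frac{1}{\sqrt{d}}\WO{l}$ would cost a factor $\Theta(\sqrt{d})$ and destroy the polylogarithmic bookkeeping, so one must instead exploit the independence of $\WO{l}$ from the earlier layers together with the $\ell_2$-control of Lemma~\ref{lem:perturbjacobian}, i.e. the exact device already used inside the proof of Lemma~\ref{lem:perturbactinfty}. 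The remaining work is routine: tracking the $\exp(O(L))$ prefactors and the exact power of $\log d$ through the two-term recursion, and the union bound over the data indices $x^{(1:N)}$ at the end.
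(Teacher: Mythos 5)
Your overall architecture matches the paper's: a layer-wise recursion in the $\infty$-norm, with the same two-term decomposition
$\Delta J_\alpha g^{(l)} = \tfrac{1}{\sqrt d}\WA{l}\,\Delta J_\alpha h^{(l-1)} + \tfrac{1}{\sqrt d}(\WA{l}-\WO{l})\,\partial\ho{l-1}/\partial x_{\alpha-1}$, the same splitting of $\Delta J_\alpha h^{(l)}$ via the $M_1$/$M_2$ bounds on $\dot\phi$, and the same inputs from Lemma~\ref{lem:perturbactinfty} and Corollary~\ref{lem:supnormbound}; the bookkeeping to $(\log d)^{2L}$ is also the same. The divergence is in how the term $\tfrac{1}{\sqrt d}\WA{l}\,\Delta J_\alpha h^{(l-1)}$ is handled, and this is where your argument has a genuine gap. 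You split off $\tfrac{1}{\sqrt d}\WO{l}\,\Delta J_\alpha h^{(l-1)}$ and bound it by conditioning on $\|\WO{l}\|\le C\sqrt d$ and treating $\WO{l}$ as a Gaussian matrix \emph{independent} of $\Delta J_\alpha h^{(l-1)}$, so that each coordinate is sub-Gaussian with variance $\|\Delta J_\alpha h^{(l-1)}\|_2^2/d$. But $\Delta J_\alpha h^{(l-1)}$ is built not only from $\WO{1},\dots,\WO{l-1}$ but also from $\WA{1},\dots,\WA{l-1}$, and the lemma (as stated at the start of the perturbation section, and as used downstream in Lemma~\ref{lem:WeightStayStill} and Theorem~\ref{thm:JNTKCloseToInit}, where $\WA{}$ is the gradient-flow trajectory) must hold, on a single high-probability event over $\theta_0$, \emph{uniformly for every admissible choice} of the perturbed weights. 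Such a choice may depend on the whole initialisation, including $\WO{l}$, so the independence you invoke is not available; a per-coordinate Gaussian tail bound for one fixed, independent vector does not give a bound valid simultaneously over the (uncountable) perturbation ball, and you provide no net or deterministic substitute. The conditioning/stochastic-dominance device of Lemma~\ref{lem:stocdomofcondition} is indeed used in the paper, but only in the \emph{initialisation-time} bounds (Lemma~\ref{lem:BackwardJacobNormInit}, Corollary~\ref{lem:supnormbound}), where no perturbed weights enter and the independence is legitimate; it is not the device used inside Lemma~\ref{lem:perturbactinfty}, contrary to what you assert.

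The paper instead keeps the inductive step deterministic given the initialisation events: it bounds $\|\tfrac{1}{\sqrt d}\WA{l}\,\Delta J_\alpha h^{(l-1)}\|_\infty \le \|\tfrac{1}{\sqrt d}\WA{l}\|_\infty\,\|\Delta J_\alpha h^{(l-1)}\|_\infty \le C\log d\,\|\Delta J_\alpha h^{(l-1)}\|_\infty$, using the assumed $\infty$-norm control of $\WA{l}-\WO{l}$ together with norm control of $\WO{l}$, so that no fresh randomness (and hence no independence assumption) is needed once the initialisation events are fixed; uniformity over the perturbation ball is then automatic. Your instinct that this term is the delicate one is right, but to repair your route you would have to replace the independence step by an argument that survives adversarial, initialisation-dependent perturbations (e.g.\ a deterministic bound as in the paper, or a covering argument over the ball), and as written the proposal does not do this.
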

\begin{proof}
  We can prove this by simple analysis,
  \begin{align*}
    &\left\|\frac{\partial \ga{1}(x)}{\partial x_{\alpha-1}} - \frac{\partial \go{1}(x)}{\partial x_{\alpha-1}}\right\|_\infty 
    \\
    &\quad \le \omega \log d,
    \\
    &\left\| \frac{\partial \ha{l}(x)}{\partial x_{\alpha-1}} - \frac{\partial \ho{l}(x)}{\partial x_{\alpha-1}} \right\|_\infty 
    \\
    &\quad \le \left\| \dot{\sigma}\left( \ga{l}(x) \right) \odot \left( \frac{\partial \ga{l}(x)}{\partial x_{\alpha-1}} - \frac{\partial \go{l}(x)}{\partial x_{\alpha-1}} \right) \right\|_\infty 
    \\
    &\qquad + \left\| \left( \dot{\sigma} \left( \ga{l}(x) \right) - \dot{\sigma} \left( \go{l}(x) \right) \right) \odot \frac{\partial \go{l}(x)}{\partial x_{\alpha-1}} \right\|_\infty
    \\
    &\le M_1 \left\| \frac{\partial \ga{l}(x)}{\partial x_{\alpha-1}} - \frac{\partial \go{l}(x)}{\partial x_{\alpha-1}} \right\|_\infty + M_2 \left\| \ga{l}(x) - \go{l}(x) \right\|_\infty \left\| \frac{\partial \go{l}(x)}{\partial x_{\alpha-1}} \right\|_\infty
    \\
    &\le M_1 \left\| \frac{\partial \ga{l}(x)}{\partial x_{\alpha-1}} - \frac{\partial \go{l}(x)}{\partial x_{\alpha-1}} \right\|_\infty + M_2 \omega \exp(O(L)) (\log d)^L \sqrt{\log d}
    \\
    &\left\|\frac{\partial \ga{l}(x)}{\partial x_{\alpha-1}} - \frac{\partial \go{l}(x)}{\partial x_{\alpha-1}}\right\|_\infty 
    \\
    &\quad \le \left\| \frac{1}{\sqrt{d}} \WA{l} \right\|_\infty \left\| \frac{\partial \ha{l-1}(x)}{\partial x_{\alpha-1}} - \frac{\partial \ho{l-1}(x)}{\partial x_{\alpha-1}} \right\|_\infty + \frac{1}{\sqrt{d}} \left\| \WA{l} - \WO{l} \right\|_\infty \left\| \frac{\partial \ho{l-1}(x)}{\partial x_{\alpha-1}} \right\|_\infty
    \\
    &\le C \log d \left\| \frac{\partial \ha{l-1}(x)}{\partial x_{\alpha-1}} - \frac{\partial \ho{l-1}(x)}{\partial x_{\alpha-1}} \right\|_\infty + \omega (\log d)^{3/2} \exp(O(L))
  \end{align*}
\end{proof}

\begin{lemma} \label{lem:perturbback}
  Fix the fail probability $\delta > 0$ and the network depth $L$.
  Suppose that $\|\WA{l} - \WO{l} \| \le \omega \sqrt{d}$ for all $l \in [L+1]$.
  If $\omega \le 1$, and the network width satisfies both $d \ge \Omega(1/\delta)$ and $d \ge \exp(L)$, then
  \[
    \left\|\frac{\partial f_{\mathcal{A}}(x)}{\partial \ga{l}} - \frac{\partial f_{0}(x)}{\partial \go{l}}\right\| \le \kappa \omega \exp(O(L)) O(\sqrt{\log d})
  \]
  holds for all $l \in [L]$ with probability at least $1- \delta$.
\end{lemma}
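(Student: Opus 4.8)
The plan is to mirror the recursive stability arguments of Lemmas~\ref{lem:perturbact}--\ref{lem:perturbjacobian}, but now run the recursion \emph{backwards} through the layers, from $l = L$ down to $l = 1$, on the quantity $\delta_l \defeq \bigl\| \frac{\partial f_{\mathcal{A}}(x)}{\partial \ga{l}} - \frac{\partial f_0(x)}{\partial \go{l}} \bigr\|$. I will use the backpropagation recurrence $\frac{\partial f_0(x)}{\partial \go{l}} = \dot{\phi}(\go{l}) \odot \bigl( \frac{1}{\sqrt d} (\WO{l+1})^\intercal \frac{\partial f_0(x)}{\partial \go{l+1}} \bigr)$ for $l \le L$, with base case $\frac{\partial f_0(x)}{\partial \go{L}} = \dot\phi(\go{L}) \odot \bigl(\frac{\kappa}{\sqrt d}(\WO{L+1})^\intercal\bigr)$, together with the analogous identities for the perturbed network using $\WA{}$, $\ga{}$, $\ha{}$. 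At the recursion step I add and subtract a hybrid term to split $\delta_l$ into three pieces: (i) a \emph{pre-activation} piece $\bigl\|(\dot\phi(\ga{l}) - \dot\phi(\go{l})) \odot \frac{\partial f_{\mathcal A}(x)}{\partial \ha{l}}\bigr\|$, handled via the $M_2$-Lipschitzness of $\dot\phi$ and the perturbation bounds on $\ga{l}-\go{l}$ from Lemmas~\ref{lem:perturbact}--\ref{lem:perturbactinfty}; (ii) a \emph{weight} piece $\frac{1}{\sqrt d}\bigl\|\dot\phi(\go{l}) \odot (\WA{l+1}-\WO{l+1})^\intercal\frac{\partial f_0(x)}{\partial\go{l+1}}\bigr\|$, handled via $|\dot\phi|\le M_1$, $\frac{1}{\sqrt d}\|\WA{l+1}-\WO{l+1}\|\le\omega$, and $\|\frac{\partial f_0(x)}{\partial\go{l+1}}\|\le\kappa\exp(O(L))$ from Lemma~\ref{lem:BackwardNormInit}; and (iii) the \emph{recursive} piece $\frac{1}{\sqrt d}\|\WA{l+1}\|\,\delta_{l+1}$, with $\frac{1}{\sqrt d}\|\WA{l+1}\| \le \frac{1}{\sqrt d}\|\WO{l+1}\|+\omega = O(1)$ by the standard Gaussian operator-norm bound.

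The delicate point --- the one that forces the $1,2,\infty$-norm bookkeeping flagged in the main text --- is that a naive bound on the Hadamard product in (i) would introduce a spurious factor $\sqrt d$. To kill it, in (i) I pair the $2$-norm of $\dot\phi(\ga{l})-\dot\phi(\go{l})$ (which already carries $\sqrt d$ through $\|\ga{l}-\go{l}\| = O(\exp(L)\omega\sqrt d)$) with the $\ell_\infty$-norm of the incoming perturbed backprop vector $\frac{\partial f_{\mathcal A}(x)}{\partial\ha{l}}=\frac{1}{\sqrt d}(\WA{l+1})^\intercal\frac{\partial f_{\mathcal A}(x)}{\partial\ga{l+1}}$, which is of order $\frac{\kappa\sqrt{\log d}}{\sqrt d}\exp(O(L))$; the two factors $\sqrt d$ then cancel. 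I obtain this $\ell_\infty$ bound either by re-running the argument of Corollary~\ref{lem:supnormbound} on the $\WA{}$-network, or by coupling: write $\|\frac{\partial f_{\mathcal A}(x)}{\partial\ha{l}}\|_\infty \le \|\frac{\partial f_0(x)}{\partial\ho{l}}\|_\infty + \|\frac{\partial f_{\mathcal A}(x)}{\partial\ha{l}} - \frac{\partial f_0(x)}{\partial\ho{l}}\|$ and note that the second summand obeys essentially the same recursion as $\delta_l$ (via the matrix-multiplication identity $\frac{\partial f}{\partial h^{(l)}}=\frac{1}{\sqrt d}(W^{(l+1)})^\intercal\frac{\partial f}{\partial g^{(l+1)}}$, it reduces to pieces (ii)--(iii) again). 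Piece (ii) needs no such trick since $|\dot\phi|\le M_1$ already absorbs the Hadamard factor.

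Assembling the three pieces yields a recurrence $\delta_l \le O(1)\,\delta_{l+1} + \kappa\omega\exp(O(L))\sqrt{\log d}$ (plus strictly lower-order terms like $\|\ga{l}-\go{l}\|_\infty\cdot\delta_{l+1}$, a product of two small quantities), with the base case $\delta_L$ bounded the same way. Unrolling from $l=L$ down over at most $L$ layers, the geometric factor $O(1)^{L-l}$ is absorbed into $\exp(O(L))$, giving $\delta_l \le \kappa\omega\exp(O(L))\sqrt{\log d}$ for every $l\in[L]$. All invoked estimates --- Lemmas~\ref{lem:perturbact}, \ref{lem:perturbactinfty}, \ref{lem:BackwardNormInit}, Corollary~\ref{lem:supnormbound}, and the Gaussian operator-norm bound --- hold with probability $\ge 1-\delta'$ under $d\ge\Omega(1/\delta)$ and $d\ge\exp(\Omega(L))$, so a union bound over the $O(L)$ of them (after rescaling $\delta'$) delivers the claimed $1-\delta$ event. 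The main obstacle is precisely the coupled recursion between $\delta_l$ and $\|\frac{\partial f_{\mathcal A}(x)}{\partial\ha{l}}\|_\infty$, where at each of the two step types one must pick the matrix/vector norm that prevents width factors from accumulating; everything else is routine triangle-inequality bookkeeping in the style of Section~\ref{subsec:boundperturb}.
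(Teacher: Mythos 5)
Your overall skeleton (backwards recursion over layers on the backprop difference, a weight-difference piece controlled by $\frac{1}{\sqrt d}\|\WA{l+1}-\WO{l+1}\|\le\omega$, a recursive piece with $\frac{1}{\sqrt d}\|\WA{l+1}\|=O(1)$, geometric unrolling absorbed into $\exp(O(L))$, and a union bound) is the same as the paper's. But there is a genuine gap in the one step you yourself flag as delicate: in your piece (i) you telescope so that the Lipschitz difference $\dot\phi(\ga{l})-\dot\phi(\go{l})$ multiplies the \emph{perturbed} backprop vector $\frac{\partial f_{\mathcal A}(x)}{\partial \ha{l}}$, and you then need $\bigl\|\frac{\partial f_{\mathcal A}(x)}{\partial \ha{l}}\bigr\|_\infty = O\bigl(\kappa\sqrt{\log d}/\sqrt d\bigr)$. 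Neither of your two routes to that bound is valid under this lemma's hypotheses, which assume only $\|\WA{l}-\WO{l}\|\le\omega\sqrt d$ and $\omega\le 1$. Re-running Corollary~\ref{lem:supnormbound} on the $\WA{}$-network fails because that corollary rests on the (conditional) Gaussianity and independence of the entries of $\WO{l+1}$, whereas $\WA{l+1}$ is an arbitrary adversarial perturbation; to control $\|(\WA{l+1}-\WO{l+1})^\intercal v\|_\infty$ better than the trivial $2$-norm bound one needs the $\|\cdot\|_1$ column-sum conditions on the weight perturbation, which appear only in the separate $\ell_\infty$-type lemmas and are \emph{not} assumed in Lemma~\ref{lem:perturbback}. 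The coupling route also does not close: it only gives $\|\frac{\partial f_{\mathcal A}(x)}{\partial \ha{l}}\|_\infty \lesssim \kappa\sqrt{\log d}/\sqrt d + \kappa\omega\exp(O(L))\sqrt{\log d}$, and the resulting cross term $M_2\|\ga{l}-\go{l}\|\cdot\kappa\omega\exp(O(L))\sqrt{\log d}\approx\kappa\omega^2\sqrt{d\log d}\,\exp(O(L))$ is not lower-order: it exceeds the target $\kappa\omega\exp(O(L))\sqrt{\log d}$ by a factor $\omega\sqrt d$, which is unbounded when only $\omega\le 1$ is assumed. Likewise, invoking $\|\ga{l}-\go{l}\|_\infty$ (Lemma~\ref{lem:perturbactinfty}) to tame this term would require the $\|\cdot\|_\infty$ weight conditions, again absent here.

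The fix is simply to telescope the other way, which is what the paper does: write $\dot\phi(\ga{l})\odot\frac{\partial f_{\mathcal A}(x)}{\partial\ha{l}}-\dot\phi(\go{l})\odot\frac{\partial f_0(x)}{\partial\ho{l}} = \bigl(\frac{\partial f_{\mathcal A}(x)}{\partial\ha{l}}-\frac{\partial f_0(x)}{\partial\ho{l}}\bigr)\odot\dot\phi(\ga{l}) + \bigl(\dot\phi(\ga{l})-\dot\phi(\go{l})\bigr)\odot\frac{\partial f_0(x)}{\partial\ho{l}}$, so the $\dot\phi$-difference is always paired with the \emph{initialisation} backprop, whose sup-norm is genuinely $\frac{\kappa\sqrt{\log d}}{\sqrt d}O(\exp(L))$ by Corollary~\ref{lem:supnormbound}, while the backprop difference is absorbed with the uniform bound $|\dot\phi|\le M_1$. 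With that arrangement every estimate you need uses only $2$-norm perturbation bounds on the weights plus Gaussian facts about $\theta_0$, and the recursion closes exactly as you describe; the rest of your argument then goes through.
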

\begin{proof}
  We prove this by a similar approach to Lemma~\ref{lem:perturbjacobian},
  \begin{align*}
    &\left\| \frac{\partial f_\mathcal{A}(x)}{\partial \ha{L}(x)} - \frac{\partial f_0(x)}{\partial \ho{L}(x)} \right\| \le \kappa \omega,
    \\
    &\left\| \frac{\partial f_\mathcal{A}(x)}{\partial \ga{l}(x)} - \frac{\partial f_0(x)}{\partial \go{l}}(x) \right\| 
    \\
    &\quad \le \left\|\left( \frac{\partial f_{\mathcal{A}}(x)}{\partial \ha{l}(x)} - \frac{\partial f_0(x)}{\partial \ho{l}(x)} \right) \odot \dot{\sigma} \left( \ga{l}(x) \right)\right\| 
    + \left\|\left( \dot{\sigma} \left( \ga{l}(x) \right) - \dot{\sigma} \left( \go{l}(x) \right) \right) \odot \frac{\partial f_0(x)}{\partial \ho{l}(x)} \right\|
    \\
    &\quad \le M_1 \left\| \frac{\partial f_{\mathcal{A}}(x)}{\partial \ha{l}(x)} - \frac{\partial f_0(x)}{\partial \ho{l}(x)} \right\| + M_2 \left\| \ga{l}(x) - \go{l}(x) \right\| \frac{\kappa}{\sqrt{d}} O\left( \exp(L) \sqrt{\log d} \right)
    \\
    &\quad \le M_1 \left\| \frac{\partial f_{\mathcal{A}}(x)}{\partial \ha{l}(x)} - \frac{\partial f_0(x)}{\partial \ho{l}(x)} \right\| + \kappa M_2 \omega O\left( \exp(L) \sqrt{\log d} \right)
    \\
    &\left\| \frac{\partial f_\mathcal{A}(x)}{\partial \ha{l}(x)} - \frac{\partial f_0(x)}{\partial \ho{l}}(x) \right\| 
    \\
    &\quad \le 3C \left\| \frac{\partial f_{\mathcal{A}}(x)}{\partial \ga{l+1}(x)} - \frac{\partial f_0(x)}{\partial \go{l+1}}(x) \right\|.
  \end{align*}
\end{proof}

\begin{lemma}
  Fix the fail probability $\delta > 0$ and the network depth $L$.
  Suppose that $\|\WA{l} - \WO{l} \| \le \omega \sqrt{d}$ for all $l \in [L+1]$, $\|\WA{l} - \WO{l}\|_1 \le \omega \sqrt{d} \log d$ for $2 \le l \le L$, and $\|\WA{L+1} - \WO{L+1}\|_1 \le \omega \log d$.
  If $\omega \le 1$ and the network width satisfies $d \ge \Omega(L/\delta)$, then
  \[
    \left\|\frac{\partial f_{\mathcal{A}}(x)}{\partial \ga{l}} - \frac{\partial f_{0}(x)}{\partial \go{l}}\right\|_\infty \le \kappa \omega \exp(O(L)) \frac{(\log d)^{2L}}{\sqrt{d}}
  \]
  holds for all $l \in [L]$ with probability at least $1 - \delta$.
\end{lemma}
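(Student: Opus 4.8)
The plan is to repeat, line for line, the recursive stability computation in the proof of Lemma~\ref{lem:perturbback}, but to carry every quantity in $\|\cdot\|_\infty$ and to feed in the $\ell_\infty$ bounds of Corollary~\ref{lem:supnormbound} wherever that proof used its $\ell_2$ counterparts. Write $w_g^{(l)} \defeq \frac{\partial f_{\mathcal{A}}(x)}{\partial \ga{l}} - \frac{\partial f_0(x)}{\partial \go{l}}$ and $w_h^{(l)} \defeq \frac{\partial f_{\mathcal{A}}(x)}{\partial \ha{l}} - \frac{\partial f_0(x)}{\partial \ho{l}}$, and prove the stated bound on $\|w_g^{(l)}\|_\infty$ by downward induction on $l$. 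The base case is $l=L$: since $\frac{\partial f}{\partial h^{(L)}}=\frac{\kappa}{\sqrt{d}}(W^{(L+1)})^\intercal$, we have $w_h^{(L)}=\frac{\kappa}{\sqrt d}(\WA{L+1}-\WO{L+1})^\intercal$, whose entrywise norm is $\frac{\kappa}{\sqrt d}\|\WA{L+1}-\WO{L+1}\|_1 \le \frac{\kappa}{\sqrt d}\,\omega\log d$ by the hypothesis on layer $L+1$; the Hadamard step below upgrades this to the desired bound on $w_g^{(L)}$.

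For the inductive step there are two transitions, exactly as in Lemma~\ref{lem:perturbback}. (i) From $h^{(l)}$ to $g^{(l)}$: $w_g^{(l)} = \dot\phi(\ga{l})\odot w_h^{(l)} + (\dot\phi(\ga{l})-\dot\phi(\go{l}))\odot\frac{\partial f_0}{\partial\ho{l}}$, so $\|w_g^{(l)}\|_\infty \le M_1\|w_h^{(l)}\|_\infty + M_2\|\ga{l}-\go{l}\|_\infty\,\|\frac{\partial f_0}{\partial\ho{l}}\|_\infty$, and the last product is $\le \kappa\,\omega\exp(O(L))\,(\log d)^{L+1/2}/\sqrt d$ using the pre-activation bound contained in the proof of Lemma~\ref{lem:perturbactinfty} and the bound $\|\frac{\partial f_0}{\partial\ho{l}}\|_\infty\le\kappa\sqrt{\log d}\,\exp(O(L))/\sqrt d$ from Corollary~\ref{lem:supnormbound}. (ii) From $g^{(l+1)}$ to $h^{(l)}$: $w_h^{(l)} = \frac{1}{\sqrt d}(\WA{l+1})^\intercal w_g^{(l+1)} + \frac{1}{\sqrt d}(\WA{l+1}-\WO{l+1})^\intercal\frac{\partial f_0}{\partial\go{l+1}}$. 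The last summand is $\le \frac{1}{\sqrt d}\|\WA{l+1}-\WO{l+1}\|_1\,\|\frac{\partial f_0}{\partial\go{l+1}}\|_\infty \le \kappa\,\omega\exp(O(L))\,(\log d)^{3/2}/\sqrt d$. In the first summand split $\WA{l+1}=\WO{l+1}+(\WA{l+1}-\WO{l+1})$: the perturbation part contributes $\frac{1}{\sqrt d}\|\WA{l+1}-\WO{l+1}\|_1\,\|w_g^{(l+1)}\|_\infty \le \omega\log d\,\|w_g^{(l+1)}\|_\infty$, while the $\WO{l+1}$ part $\frac{1}{\sqrt d}(\WO{l+1})^\intercal w_g^{(l+1)}$ is estimated coordinatewise: each entry is $\frac{1}{\sqrt d}$ times the inner product of a standard Gaussian $d$-vector (a column of $\WO{l+1}$) with $w_g^{(l+1)}$; arguing as in the proof of Corollary~\ref{lem:supnormbound} (and absorbing the conditioning $\|\WO{l}\|\le C\sqrt d$ through Lemma~\ref{lem:stocdomofcondition}, which only shrinks norms) this entry is conditionally sub-Gaussian with scale $\lesssim\|w_g^{(l+1)}\|_2/\sqrt d$, so a union bound over the $d$ coordinates gives $\frac{1}{\sqrt d}\|(\WO{l+1})^\intercal w_g^{(l+1)}\|_\infty \lesssim \frac{\sqrt{\log d}}{\sqrt d}\|w_g^{(l+1)}\|_2 \le \kappa\,\omega\exp(O(L))\,\log d/\sqrt d$, where the final step invokes the $\ell_2$-bound already established in Lemma~\ref{lem:perturbback}.

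Collecting the two transitions yields the recursion $\|w_g^{(l)}\|_\infty \le M_1\omega\log d\,\|w_g^{(l+1)}\|_\infty + \kappa\,\omega\exp(O(L))\,(\log d)^{L+1/2}/\sqrt d$, absorbing the subdominant $(\log d)^{3/2}$ and $\log d$ terms into the $(\log d)^{L+1/2}$ one. In the regime of Theorem~\ref{thm:JNTKCloseToInit}, $\omega\le O((\log d)^{-2L})$, so $M_1\omega\log d\le\tfrac12$ and the recursion is a contraction; unrolling it from $l=L$ down to any $l$ gives $\|w_g^{(l)}\|_\infty \le 2\kappa\,\omega\exp(O(L))\,(\log d)^{L+1/2}/\sqrt d \le \kappa\,\omega\exp(O(L))\,(\log d)^{2L}/\sqrt d$, the last (generous) inequality using $L+\tfrac12\le 2L$; the $(\log d)^{2L}$ slack matches the looseness of the $\ell_\infty$ Jacobian-perturbation lemma that precedes this one. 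A union bound over $l\in[L]$ (at cost $\delta/L$ per layer), and over the inputs $x^{(1:N)}$ when the lemma is applied to the whole training set (at the stated cost $1-N\delta$), closes the argument; the width hypothesis $d\ge\Omega(L/\delta)$ is precisely what makes all the coordinatewise Gaussian-tail and concentration events above hold simultaneously. The one genuinely delicate point — hence the expected main obstacle — is the estimate of $\frac{1}{\sqrt d}(\WO{l+1})^\intercal w_g^{(l+1)}$: the difference vector $w_g^{(l+1)}$ is correlated with $\WO{l+1}$, since both $\go{l+1}=\frac1{\sqrt d}\WO{l+1}\ho{l}$ and $\ga{l+1}=\frac1{\sqrt d}\WA{l+1}\ha{l}$ (and thus the backward passes below layer $l+1$) involve $\WO{l+1}$. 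As for Corollary~\ref{lem:supnormbound}, one resolves this by conditioning on the action of $\WO{l+1}$ on the forward signal $\ho{l}$, writing $\WO{l+1}$ as a rank-one piece along $\ho{l}$ (whose $\ell_\infty$-contribution is controlled via $\|\ho{l}\|_\infty=O(\sqrt{\log d})$ from Corollary~\ref{lem:supnormbound} together with the $\ell_2$-control from Lemma~\ref{lem:perturbback}) plus a fresh Gaussian on the orthogonal complement, to which the conditional sub-Gaussian estimate applies verbatim; the only real bookkeeping is keeping the polylog exponents honest through this decomposition, and everything else is the same recursion as in Lemma~\ref{lem:perturbback}.
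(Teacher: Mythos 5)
Your base case and the Hadamard step coincide with the paper's proof, and so does the treatment of the $(\WA{l+1}-\WO{l+1})^\intercal$ terms via the $\|\cdot\|_1$ hypothesis. The divergence — and the genuine gap — is in how you handle $\frac{1}{\sqrt d}(\WO{l+1})^\intercal\bigl(\frac{\partial f_{\mathcal A}}{\partial \ga{l+1}}-\frac{\partial f_0}{\partial \go{l+1}}\bigr)$. The paper bounds this term deterministically, by an induced matrix-norm inequality against $\|\tfrac{1}{\sqrt d}\WO{l+1}\|_\infty$, on a single high-probability event that involves only the initial weights; the recursion then picks up a $C\log d$ factor per layer, and it is exactly this per-layer accumulation that produces the $(\log d)^{2L}$ in the statement. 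You instead use a coordinatewise Gaussian-tail estimate, treating each entry as $\tfrac{1}{\sqrt d}\langle \text{column of }\WO{l+1},\,w_g^{(l+1)}\rangle$ with sub-Gaussian scale $\|w_g^{(l+1)}\|_2/\sqrt d$. The problem is the quantifier order of the lemma: the high-probability event must be chosen over the randomness of $\theta_0$ \emph{before} the perturbation, and then the bound must hold for \emph{every} admissible $\theta_{\mathcal A}$ (this is how Theorem~\ref{thm:JNTKCloseToInit} uses it, since the trained weights depend on the initialisation). But $w_g^{(l+1)}$ depends on the $\WA{}$'s, so it may be chosen, after seeing $\WO{l+1}$, to align with columns of $\WO{l+1}$; in that case the coordinates are of order $\|w_g^{(l+1)}\|_2$ rather than $\sqrt{\log d}\,\|w_g^{(l+1)}\|_2/\sqrt d$, and your claimed $1/\sqrt d$ gain disappears. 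Your conditioning fix (rank-one piece of $\WO{l+1}$ along $\ho{l}$ plus a fresh Gaussian) only removes the dependence created by the initial forward pass, not the dependence injected through the perturbations themselves, and a covering argument over the perturbation ball cannot rescue a $e^{-O(\log d)}$ tail. So the central estimate of your recursion does not hold uniformly in $\theta_{\mathcal A}$ at the scale you need.

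Two smaller points. First, your contraction step invokes $\omega\le O((\log d)^{-2L})$, which is not among the lemma's hypotheses ($\omega\le 1$ only); this is repairable — if you simply unroll the recursion as the paper does, the multiplicative factors $(M_1\omega\log d)^{L-l}$ with $\omega\le 1$ still keep you within the $(\log d)^{2L}$ budget, so the extra assumption is unnecessary. Second, your final exponent is obtained as "slack" from $(\log d)^{L+1/2}$, whereas in the paper the exponent $2L$ is structural, coming from compounding a $\log d$ factor at each of the $L$ layers on top of the $(\log d)^{L}$ coming from $\|\ga{l}-\go{l}\|_\infty$; keeping that bookkeeping explicit is what lets the downstream lemmas (e.g.\ Lemma~\ref{lem:perturbjacobback}) quote exact polylog powers.
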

\begin{proof}
  \begin{align*}
    &\left\| \frac{\partial f_{\mathcal{A}}(x)}{\partial \ha{L}(x)} - \frac{\partial f_0(x)}{\partial \ho{L}(x)}\right\|_\infty \le \kappa \omega \frac{\log d}{\sqrt{d}},
    \\
    &\left\| \frac{\partial f_{\mathcal{A}}(x)}{\partial \ga{l}(x)} - \frac{\partial f_0(x)}{\partial \go{l}(x)}\right\|_\infty 
    \\ 
    &\quad \le \left\| \left( \frac{\partial f_{\mathcal{A}}(x)}{\partial \ha{l}(x)} - \frac{\partial f_0(x)}{\partial \ho{l}(x)} \right) \odot \dot{\sigma} \left( \ga{l}(x) \right)\right\|_\infty 
    \\
    &\qquad + \left\| \left( \dot{\sigma} \left( \ga{l}(x) \right) - \dot{\sigma} \left( \go{l}(x) \right) \right) \odot \frac{\partial f_0(x)}{\partial \ho{l}(x)} \right\|_\infty
    \\
    &\quad \le M_1 \left\| \left( \frac{\partial f_{\mathcal{A}}(x)}{\partial \ha{l}(x)} - \frac{\partial f_0(x)}{\partial \ho{l}(x)} \right) \right\|_\infty + M_2 \left\| \ga{l}(x) - \go{l}(x)  \right\|_\infty \left\| \frac{\partial f_0(x)}{\partial \ho{l}(x)} \right\|_\infty
    \\
    &\quad \le M_1 \left\| \left( \frac{\partial f_{\mathcal{A}}(x)}{\partial \ha{l}(x)} - \frac{\partial f_0(x)}{\partial \ho{l}(x)} \right) \right\|_\infty + M_2 \kappa \omega \exp(O(L)) (\log d)^L \frac{\sqrt{\log d}}{\sqrt{d}},
    \\
    &\left\| \frac{\partial f_{\mathcal{A}}(x)}{\partial \ha{l}(x)} - \frac{\partial f_0(x)}{\partial \ho{l}(x)} \right\|_\infty 
    \\
    &\quad \le \frac{1}{\sqrt{d}}\left\| \left(\WA{l+1} - \WO{l+1}\right)^\intercal \right\|_\infty \left\| \frac{\partial f_{\mathcal{A}}(x)}{\partial \ga{l+1}(x)} \right\|_\infty 
    \\
    &\qquad + \left\| \frac{1}{\sqrt{d}} \WO{l+1} \right\|_\infty \left\| \frac{\partial f_{\mathcal{A}}(x)}{\partial \ga{l+1}(x)}  - \frac{\partial f_{0}(x)}{\partial \go{l+1}(x)}\right\|_\infty 
    \\
    &\quad \le \kappa \omega \log d \frac{\sqrt{\log d}}{\sqrt{d}}\exp(O(L)) + C \log d \left\| \frac{\partial f_{\mathcal{A}}(x)}{\partial \ga{l+1}(x)}  - \frac{\partial f_{0}(x)}{\partial \go{l+1}(x)}\right\|_\infty 
  \end{align*}
\end{proof}

\begin{lemma} \label{lem:perturbjacobback}
  Fix the fail probability $\delta > 0$ and the network depth $L$.
  Suppose that $\|\WA{l} - \WO{l} \|\le \omega \sqrt{d}$ for all $l \in [L+1]$, $\|\WA{l} - \WO{l}\|_\infty \le \omega \sqrt{d} \log d$ for all $2 \le l \le L$, and $\|\WA{1} - \WO{l}\|_\infty \le \omega \log d$.
  If $\omega \le (\log d)^{-2L}$ and the network width satisfies both $d \ge \Omega(1 /\delta)$ and $d \ge \exp(\Omega(L))$, then
  \[
    \left\|\frac{\partial J(f_{\mathcal{A}})(x)_{\alpha-1}}{\partial \ga{l}} - \frac{\partial J(f_{0})(x)_{\alpha-1}}{\partial \go{l}}\right\| \le \kappa \omega \exp(O(L)) \log d
  \]
  holds for all $l \in [L]$ and $\alpha \in [d_0]$ with probability at least $1- \delta$.
\end{lemma}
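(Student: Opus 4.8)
The plan is to run the same \emph{backward recursion over layers} used in the proofs of Lemmas~\ref{lem:perturbback} and~\ref{lem:perturbjacobian}, now applied to the Jacobian-gradient vector $\frac{\partial J(f)(x)_{\alpha-1}}{\partial \go{l}}$, and to absorb the extra Hadamard products that appear in its recursive definition by telescoping. Recall that this vector satisfies $\frac{\partial J(f)(x)_{\alpha-1}}{\partial \go{l}} = \ddot\phi(\go{l})\odot\frac{\partial\go{l}}{\partial x_{\alpha-1}}\odot\frac{\partial f(x)}{\partial\ho{l}} + \dot\phi(\go{l})\odot\frac{\partial J(f)(x)_{\alpha-1}}{\partial\ho{l}}$, together with the layer transition $\frac{\partial J(f)(x)_{\alpha-1}}{\partial\ho{l-1}} = \frac{1}{\sqrt d}(W^{(l)})^\intercal\frac{\partial J(f)(x)_{\alpha-1}}{\partial\go{l}}$ and the boundary values $\frac{\partial J(f)(x)_{\alpha-1}}{\partial\ho{L}} = 0$ and $\frac{\partial J(f)(x)_{\alpha-1}}{\partial(\partial\ho{L}/\partial x_{\alpha-1})}$ equal to the top random vector, so that the base case of the recursion (at $l=L$) has a vanishing second summand. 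Writing $\Delta^g_l \defeq \|\frac{\partial J(f_{\mathcal A})(x)_{\alpha-1}}{\partial \ga{l}} - \frac{\partial J(f_0)(x)_{\alpha-1}}{\partial \go{l}}\|$ and $\Delta^h_l$ for the analogous quantity at $\ho{l}$, I would prove the two one-step estimates $\Delta^h_{l-1} \le 3C\,\Delta^g_l + \kappa\omega\exp(O(L))$ and $\Delta^g_l \le M_1\Delta^h_l + E_l$. The first uses the triangle inequality $\frac{1}{\sqrt d}\|\WA{l}\|\,\Delta^g_l + \frac{1}{\sqrt d}\|\WA{l}-\WO{l}\|\,\|\frac{\partial J(f_0)(x)_{\alpha-1}}{\partial\go{l}}\|$, the operator-norm bound $\|\WA{l}\| \le 3C\sqrt d$ (as in Lemma~\ref{lem:BackwardNormInit}), the hypothesis $\|\WA{l}-\WO{l}\| \le \omega\sqrt d$, and the a priori bound from Lemma~\ref{lem:BackwardJacobNormInit}; no polylog factor enters here.

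The work is in bounding the per-layer error $E_l$. I would telescope $\ddot\phi(\ga{l})\odot\frac{\partial\ga{l}}{\partial x_{\alpha-1}}\odot\frac{\partial f_{\mathcal A}}{\partial\ha{l}} - \ddot\phi(\go{l})\odot\frac{\partial\go{l}}{\partial x_{\alpha-1}}\odot\frac{\partial f_0}{\partial\ho{l}}$ into three Hadamard triples isolating respectively $\ddot\phi(\ga{l})-\ddot\phi(\go{l})$, $\frac{\partial\ga{l}}{\partial x_{\alpha-1}}-\frac{\partial\go{l}}{\partial x_{\alpha-1}}$, and $\frac{\partial f_{\mathcal A}}{\partial\ha{l}}-\frac{\partial f_0}{\partial\ho{l}}$, and split $\dot\phi(\ga{l})\odot\frac{\partial J(f_{\mathcal A})}{\partial\ha{l}} - \dot\phi(\go{l})\odot\frac{\partial J(f_0)}{\partial\ho{l}}$ into a $\dot\phi$-difference term (at most $M_2\|\ga{l}-\go{l}\|\cdot\|\frac{\partial J(f_{\mathcal A})}{\partial\ha{l}}\|$) and $M_1\Delta^h_l$. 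For each triple I would apply $\|a\odot b\odot c\|_2 \le \|a\|_\infty\|b\|_\infty\|c\|_2$ with the factors assigned so that the single $\ell_2$ slot carries the cheap perturbation bound and the two $\ell_\infty$ slots carry a priori bounds: the $\ell_\infty$ a priori bounds on $\go{l}$, $\frac{\partial\go{l}}{\partial x_{\alpha-1}}$ and $\frac{\partial f_0}{\partial\ho{l}}$ (all $O(\sqrt{\log d})$, the last one additionally with a $1/\sqrt d$) come from Corollary~\ref{lem:supnormbound}; the $\ell_2$ perturbation bounds come from Lemmas~\ref{lem:perturbact}, \ref{lem:perturbjacobian} and~\ref{lem:perturbback}, the $\ell_\infty$ ones from their $\ell_\infty$ counterparts; the a priori $\ell_2$ norms come from Lemmas~\ref{lem:ForwardNormInit}--\ref{lem:BackwardJacobNormInit}; and $M_1,M_2,M_3$ control the Lipschitz slippage of $\phi,\dot\phi,\ddot\phi$. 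The pairing is what keeps the power of $\log d$ down: putting $\|\frac{\partial\go{l}}{\partial x_{\alpha-1}}\| = O(e^{L}\sqrt d)$ in an $\ell_\infty$ slot, where it is only $O(e^{L}\sqrt{\log d})$, and pairing it against $\|\frac{\partial f_0}{\partial\ho{l}}\|_\infty = O(\kappa e^{L}\sqrt{\log d}/\sqrt d)$ makes the stray $\sqrt d$ cancel, whereas a naive $\ell_2$ bound on every factor leaves a spurious $\sqrt d$. This gives $E_l \le \kappa\omega\exp(O(L))\,\mathrm{polylog}(d)$.

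Finally I would unfold the coupled recursion $\Delta^g_l \le M_1\Delta^h_l + E_l$, $\Delta^h_{l-1} \le 3C\Delta^g_l + \kappa\omega\exp(O(L))$ from $l=L$ (where $\Delta^h_L = 0$) down to $l=1$: the homogeneous coefficient is $O(1)$ per layer, contributing an $\exp(O(L))$ multiplier, and the inhomogeneous terms sum to $\kappa\omega\exp(O(L))\,\mathrm{polylog}(d)$; the hypothesis $\omega \le (\log d)^{-2L}$ is then spent to absorb the accumulated powers of $\log d$, yielding $\kappa\omega\exp(O(L))\log d$. Each input lemma holds off an event of probability $\delta'$ provided $d \ge \Omega(1/\delta)$ and $d \ge \exp(\Omega(L))$, so a union bound over the $O(L d_0)$ instances and the finitely many auxiliary events gives probability at least $1-\delta$. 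The main obstacle is exactly this norm juggling: unlike in standard NTK perturbation arguments, the Hadamard products force one to carry $\ell_2$ and $\ell_\infty$ estimates side by side and assign them to the right factors, and one must also secure along the way the a priori bound $\|\frac{\partial J(f_{\mathcal A})(x)_{\alpha-1}}{\partial\ha{l}}\| \le \kappa\exp(O(L))$ — obtained by the same induction once $\Delta^g_l$ is seen to be lower order — which is needed for the $\dot\phi$-difference term.
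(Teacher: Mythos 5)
Your proposal is correct and follows essentially the same route as the paper's proof: unroll the backward recursion for $\partial J(f)/\partial g^{(l)}$ and $\partial J(f)/\partial h^{(l)}$, telescope the Hadamard triple $\ddot\phi(g)\odot\partial g/\partial x\odot\partial f/\partial h$ (the paper uses a four-term variant with an extra quadratic cross term (29), but the standard three-term split you describe is equivalent and handles the bookkeeping slightly more cleanly), assign the $\ell_2$ slot to the one perturbation factor and $\ell_\infty$ slots to the a priori factors so that the $\sqrt d$ from $\|\partial g_0/\partial x\|_2$ never appears, feed in Lemmas~\ref{lem:perturbact}--\ref{lem:perturbback} and Corollary~\ref{lem:supnormbound}, unfold the coupled linear recursion from $\Delta^h_L=0$, and union-bound. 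You also correctly identify that the $\omega\le(\log d)^{-2L}$ hypothesis must be invoked mid-proof to keep the perturbed $\ell_\infty$ norms comparable to their initialisation values, and that an a priori bound on $\|\partial J(f_{\mathcal A})/\partial h_{\mathcal A}^{(l)}\|$ is required for the $\dot\phi$-difference term — both of which are exactly what the paper does.
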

\begin{proof}
  We first unroll the definitions,
  \begin{align}
    &\frac{\partial J(f_\mathcal{A})(x)_{\alpha-1}}{\partial \ha{L}(x)} - \frac{\partial J(f_0)(x)_{\alpha-1}}{\partial \ho{L}(x)} \nonumber
    \\
    = &\mathbf{0}, \nonumber
    \\
    &\frac{\partial J(f_{\mathcal{A}})(x)_{\alpha-1}}{\partial \ha{l}(x)} - \frac{\partial J(f_0)(x)_{\alpha-1}}{\partial \ho{l}(x)} \nonumber
    \\
    = &\frac{1}{\sqrt{d}}\left( \WA{l+1} - \WO{l+1} \right)^\intercal \frac{\partial J(f_{\mathcal{A}})(x)_{\alpha-1}}{\partial \ga{l+1}(x)} \nonumber 
    \\ &+ \frac{1}{\sqrt{d}} \left( \WO{l+1} \right)^\intercal \left( \frac{\partial J(f_\mathcal{A})(x)_{\alpha-1}}{\partial \ga{l+1}(x)} - \frac{\partial J(f_0)(x)_{\alpha-1}}{\partial \go{l+1}(x)}\right), \nonumber
    \\
    &\frac{\partial J(f_{\mathcal{A}})(x)_{\alpha-1}}{\partial \ga{l}(x)} - \frac{\partial J(f_0)(x)_{\alpha-1}}{\partial \go{l}(x)} \nonumber
    \\
    = &\left( \frac{\partial f_{\mathcal{A}}(x)_{\alpha-1}}{\partial \ha{l}(x)} - \frac{\partial f_0(x)}{\partial \ho{l}(x)} \right) \odot \frac{\partial \go{l}(x)}{\partial x_{\alpha-1}} \odot \ddot{\sigma} \left( \ga{l} \right) \label{eqn:eqn28}
    \\
    &+ \left( \frac{\partial f_{\mathcal{A}}(x)_{\alpha-1}}{\partial \ha{l}(x)} - \frac{\partial f_0(x)}{\partial \ho{l}(x)} \right) \odot \left(\frac{\partial \ga{l}(x)}{\partial x_{\alpha-1}} - \frac{\partial \go{l}(x)}{\partial x_{\alpha-1}}\right) \odot \ddot{\sigma} \left( \ga{l} \right) \label{eqn:eqn29}
    \\
    &+ \frac{\partial f_0(x)}{\partial \ho{l}(x)} \odot \left(\frac{\partial \ga{l}(x)}{\partial x_{\alpha-1}} - \frac{\partial \go{l}(x)}{\partial x_{\alpha-1}}\right)  \odot \ddot{\sigma} \left( \ga{l} \right) \label{eqn:eqn30}
    \\
    &+ \frac{\partial f_0(x)}{\partial \ho{l}(x)} \odot \frac{\partial \go{l}(x)}{\partial x_{\alpha-1}} \odot \left( \ddot{\sigma}\left( \ga{l}\right) - \ddot{\sigma} \left(\go{l}\right) \right) \label{eqn:eqn31}
    \\
    &+ \left( \frac{\partial J(f_{\mathcal{A}})(x)_{\alpha-1}}{\partial \ha{l}} - \frac{\partial J(f_0)(x)_{\alpha-1}}{\partial \ho{l}} \right) \odot \dot{\sigma} \left( \ga{l} \right) \label{eqn:eqn32}
    \\
    &+ \frac{\partial J(f_0)(x)_{\alpha-1}}{\partial \ho{l}} \odot \left( \dot{\sigma} \left(\ga{l}\right) - \dot{\sigma} \left(\go{l}\right) \right). \label{eqn:eqn33}
  \end{align}
  Then we can derive 
  \begin{align*}
    \left\| \eqref{eqn:eqn28} \right\| &\le M_2 \kappa \omega \exp(O(L)) \log d, 
    \\
    \left\| \eqref{eqn:eqn29} \right\| &\le M_2 \kappa \omega^2 \exp(O(L)) (\log d)^{2L+1/2}, 
    \\
    \left\| \eqref{eqn:eqn30} \right\| &\le M_2 \kappa \omega \exp(O(L)) \log d, 
    \\
    \left\| \eqref{eqn:eqn31} \right\| &\le M_3 \kappa \omega O(\exp(L)) \log d, 
    \\
    \left\| \eqref{eqn:eqn32} \right\| &\le M_1 \left\| \frac{\partial J(f_{\mathcal{A}})(x)_{\alpha-1}}{\partial \ha{l}} - \frac{\partial J(f_0)(x)_{\alpha-1}}{\partial \ho{l}} \right\|, 
    \\
    \left\| \eqref{eqn:eqn33} \right\| &\le M_2 \kappa \omega O(\exp(L)) \sqrt{\log d}. 
  \end{align*}
  Summing up the results gives our final result.
\end{proof}

\begin{lemma}
  Fix the fail probability $\delta > 0$ and the network depth $L$.
  Suppose that $\|\WA{l} - \WO{l} \|\le \omega \sqrt{d}$ for all $l \in [L+1]$, $\|\WA{l} - \WO{l}\|_1 \le \omega \sqrt{d} \log d$ for $2 \le l \le L$, and $\|\WA{L+1} - \WO{L+1}\|_1 \le \omega \log d$.
  If $\omega \le (\log d)^{-L}$ and the network width satisfies $d \ge \Omega(L/\delta)$, then
  \[
    \left\|\frac{\partial J(f_{\mathcal{A}})(x)_{\alpha-1}}{\partial \ga{l}} - \frac{\partial J(f_{0})(x)_{\alpha-1}}{\partial \go{l}}\right\|_\infty \le \kappa \omega \exp(O(L)) \frac{(\log d)^{3L+1}}{\sqrt{d}}
  \]
  holds for all $l \in [L]$ and $\alpha \in [d_0]$ with probability at least $1 - \delta$.
\end{lemma}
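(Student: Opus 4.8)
The plan is to mirror the proof of Lemma~\ref{lem:perturbjacobback} almost verbatim, replacing every Euclidean norm by the $\infty$-norm and invoking the $\infty$-norm flavoured auxiliary bounds in place of the $2$-norm ones. Concretely, I would start from the same recursive decomposition used there: the base case $\frac{\partial J(f_\mathcal{A})(x)_{\alpha-1}}{\partial \ha{L}(x)} - \frac{\partial J(f_0)(x)_{\alpha-1}}{\partial \ho{L}(x)} = \mathbf{0}$, the $\ha{l}\mapsto\ga{l}$ expansion into the six Hadamard-product terms \eqref{eqn:eqn28}--\eqref{eqn:eqn33}, and the $\ga{l+1}\mapsto\ha{l}$ matrix-multiplication step $\frac{\partial J(f_{\mathcal{A}})(x)_{\alpha-1}}{\partial \ha{l}(x)} - \frac{\partial J(f_0)(x)_{\alpha-1}}{\partial \ho{l}(x)} = \frac{1}{\sqrt{d}}(\WA{l+1}-\WO{l+1})^\intercal \frac{\partial J(f_{\mathcal{A}})(x)_{\alpha-1}}{\partial \ga{l+1}(x)} + \frac{1}{\sqrt{d}}(\WO{l+1})^\intercal\big(\frac{\partial J(f_\mathcal{A})(x)_{\alpha-1}}{\partial \ga{l+1}(x)} - \frac{\partial J(f_0)(x)_{\alpha-1}}{\partial \go{l+1}(x)}\big)$.

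For the six Hadamard terms I would bound each in $\infty$-norm by factoring $\|u\odot v\odot w\|_\infty \le \|u\|_\infty\|v\|_\infty\|w\|_\infty$ and plugging in: the $\infty$-norm control on $\frac{\partial \go{l}(x)}{\partial x_{\alpha-1}}$, $\frac{\partial f_0(x)}{\partial \ho{l}(x)}$, and $\frac{\partial J(f_0)(x)_{\alpha-1}}{\partial \ho{l}}$ from Corollary~\ref{lem:supnormbound}; the $\infty$-norm perturbation bounds $\|\ha{l}-\ho{l}\|_\infty$ from Lemma~\ref{lem:perturbactinfty}, the bound on $\|\frac{\partial \ha{l}}{\partial x_{\alpha-1}}-\frac{\partial \ho{l}}{\partial x_{\alpha-1}}\|_\infty$ from the $\infty$-norm counterpart of Lemma~\ref{lem:perturbjacobian}, and $\|\frac{\partial f_\mathcal{A}(x)}{\partial\ha{l}}-\frac{\partial f_0(x)}{\partial\ho{l}}\|_\infty$ from the $\infty$-norm counterpart of Lemma~\ref{lem:perturbback} (both stated just above); and the uniform pointwise bounds $|\dot\phi|\le M_1$, $|\ddot\phi|\le M_2$ together with the $M_3$-Lipschitzness of $\ddot\phi$ for the term \eqref{eqn:eqn31} involving $\ddot\sigma(\ga{l})-\ddot\sigma(\go{l})$, all supplied by Assumption~\ref{assm:activation-assumption}. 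Using $\omega\le(\log d)^{-L}$ then lets me collapse the $\omega^2(\log d)^{\Theta(L)}$ contributions (e.g.\ the one coming from \eqref{eqn:eqn29}) back into a single $\omega(\log d)^{\Theta(L)}$ term. For the matrix-multiplication step, the perturbed-weight summand is controlled by $\|\frac{1}{\sqrt d}(\WA{l+1}-\WO{l+1})^\intercal v\|_\infty \le \frac{1}{\sqrt d}\|\WA{l+1}-\WO{l+1}\|_1\|v\|_\infty$, which is exactly where the hypotheses $\|\WA{l}-\WO{l}\|_1\le\omega\sqrt d\log d$ (and $\omega\log d$ at layer $L+1$) enter, with $v=\frac{\partial J(f_\mathcal{A})(x)_{\alpha-1}}{\partial\ga{l+1}}$ bounded in $\infty$-norm via the induction hypothesis plus Corollary~\ref{lem:supnormbound}; the initialisation-weight summand $\frac{1}{\sqrt d}(\WO{l+1})^\intercal(\text{difference})$ is controlled by observing that each coordinate is, after conditioning on $\|\WO{l+1}\|$ being $O(\sqrt d)$ (removed via Lemma~\ref{lem:stocdomofcondition} exactly as in the $2$-norm lemmas), sub-Gaussian with variance at most $\|\text{difference}\|^2/d$, and then invoking the already-established $2$-norm estimate $\|\frac{\partial J(f_\mathcal{A})(x)_{\alpha-1}}{\partial\ga{l+1}}-\frac{\partial J(f_0)(x)_{\alpha-1}}{\partial\go{l+1}}\|\le\kappa\omega\exp(O(L))\log d$ of Lemma~\ref{lem:perturbjacobback}, so this summand contributes only $\kappa\omega\exp(O(L))(\log d)^{3/2}/\sqrt d$ and is subdominant.

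Unrolling the two-step recursion from layer $L$ down to layer $l$, each of the $L$ matrix-multiplication steps multiplies the running $\infty$-norm bound by an $O(\log d)$ factor, while the seeding Hadamard terms already carry a $(\log d)^{2L}$ from the forward-Jacobian difference and an $O(\sqrt{\log d})$ from $\ddot\sigma(\ga{l})$; multiplying these yields the claimed $\kappa\omega\exp(O(L))(\log d)^{3L+1}/\sqrt d$. A union bound over the $O(L)$ recursion levels, over $\alpha\in[d_0]$, and over the failure events of all invoked lemmas — each of probability $O(\delta/(Ld_0))$ under $d\ge\Omega(L/\delta)$ (with $d\ge\exp(\Omega(L))$ implicit in the auxiliary lemmas) — keeps the total failure probability below $\delta$, and extending to all training inputs $x^{(1:N)}$ is the same extra union bound used in the surrounding lemmas.

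The main obstacle I anticipate is the $\infty$-norm versus $2$-norm bookkeeping in the matrix-multiplication step: one must resist bounding $\|\frac{1}{\sqrt d}(\WO{l+1})^\intercal v\|_\infty$ by a matrix row-sum norm of $\frac{1}{\sqrt d}\WO{l+1}$, which would inject a spurious $\sqrt d$ and destroy the estimate, and instead use the conditional-Gaussian concentration argument of Corollary~\ref{lem:supnormbound} driven by the $2$-norm bound on the difference vector — hence the need to run this $\infty$-norm lemma in tandem with its $2$-norm predecessor Lemma~\ref{lem:perturbjacobback}. The secondary nuisance is checking that $\omega\le(\log d)^{-L}$ is precisely the smallness threshold needed to absorb every higher-order term generated by the six Hadamard summands — in particular \eqref{eqn:eqn29}, whose natural bound is quadratic in $\omega$ and carries the largest power of $\log d$ — without degrading the stated polylogarithmic exponent.
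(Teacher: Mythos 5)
Your skeleton is the paper's: the same base case, the same six-term expansion \eqref{eqn:eqn28}--\eqref{eqn:eqn33} for the NonLin step, the same splitting of the MatMul step into a $(\WA{l+1}-\WO{l+1})^\intercal$ part and a $(\WO{l+1})^\intercal$ part, the same $\infty$-norm auxiliary lemmas (Corollary~\ref{lem:supnormbound}, Lemma~\ref{lem:perturbactinfty}, the $\infty$-norm forward-Jacobian and backward lemmas), and the same use of $\|(\WA{l+1}-\WO{l+1})^\intercal\|_\infty=\|\WA{l+1}-\WO{l+1}\|_1$ to bring in the $1$-norm hypotheses. The one place you genuinely depart is the $(\WO{l+1})^\intercal(\text{difference})$ summand: the paper keeps a deterministic recursion, bounding it by $\|\tfrac{1}{\sqrt d}\WO{l+1}\|_\infty$ times the $\infty$-norm of the difference at the next layer (with the $C\log d$ factor that produces the $(\log d)^{3L+1}$ after unrolling), whereas you discard that recursion and bound each coordinate by conditional-Gaussian concentration seeded by the $2$-norm estimate of Lemma~\ref{lem:perturbjacobback}. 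You are right that the naive induced $\infty$-norm of $\tfrac{1}{\sqrt d}\WO{l+1}$ is $\Theta(\sqrt d)$, so your instinct about where the danger lies is sound; but note this makes your final accounting inconsistent with your own construction (you no longer have an $O(\log d)$ multiplication per MatMul step, so your unrolled exponent would in fact be smaller than $3L+1$ -- harmless, since the lemma only claims an upper bound).

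The substantive gap is in that replacement step. The difference vector $\frac{\partial J(f_{\mathcal A})(x)_{\alpha-1}}{\partial \ga{l+1}}-\frac{\partial J(f_0)(x)_{\alpha-1}}{\partial \go{l+1}}$ is not independent of $\WO{l+1}$: it depends on $\WO{l+1}$ through the forward pass at layer $l+1$, and -- worse -- on $\WA{}$, which in Theorem~\ref{thm:JNTKCloseToInit} is an \emph{arbitrary} (adversarial, initialisation-dependent) choice inside the perturbation ball, and the conclusion must hold uniformly over all such choices on a single high-probability event over $\theta_0$. A per-vector Gaussian tail bound on $(\WO{l+1})^\intercal v$ therefore does not deliver the required ``for all $\theta_{\mathcal A}$'' statement without an additional uniformisation argument (a net over the admissible $\WA{}$, or a deterministic reduction to initialisation-measurable quantities as the surrounding lemmas do); Lemma~\ref{lem:stocdomofcondition} only removes the conditioning on $\|\WO{}\|$, it does not manufacture fresh randomness against an adaptively chosen $v$. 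Separately, invoking Lemma~\ref{lem:perturbjacobback} as a black box imports hypotheses not available in the present statement: it assumes $\omega\le(\log d)^{-2L}$ (stronger than the $\omega\le(\log d)^{-L}$ assumed here) and the $\|\cdot\|_\infty$ weight-perturbation conditions rather than the $1$-norm ones. The paper is itself loose about hypothesis bookkeeping across this family of lemmas (its own bounds on \eqref{eqn:eqn29}, \eqref{eqn:eqn31}, \eqref{eqn:eqn33} also lean on the $\infty$-norm weight conditions), so this is a venial sin, but you should state explicitly that you are proving the lemma under the conjunction of all the perturbation hypotheses used in Theorem~\ref{thm:JNTKCloseToInit}, and you still need to close the uniformity issue above before your variant of the MatMul step is a proof.
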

\begin{proof}
  We have a similar decomposition as the previous lemma, which shows 
  \begin{align*}
    &\left\| \frac{\partial J(f_{\mathcal{A}})(x)_{\alpha-1}}{\partial \ha{l}(x)} - \frac{\partial J(f_0)(x)_{\alpha-1}}{\partial \ho{l}(x)} \right\|_\infty 
    \\
    &\quad \le \kappa \omega \frac{(\log d)^{3/2}}{\sqrt{d}} \exp(O(L)) + C \log d \left\| \frac{\partial J(f_\mathcal{A})(x)_{\alpha-1}}{\partial \ga{l+1}(x)} - \frac{\partial J(f_0)(x)_{\alpha-1}}{\partial \go{l+1}(x)} \right\|_\infty,
    \\
    &\left\| \ref{eqn:eqn28} \right\|_\infty \le M_2 \kappa \omega \exp(O(L)) \frac{(\log d)^{2L+1/2}}{\sqrt{d}},
    \\
    &\left\| \ref{eqn:eqn29} \right\|_\infty \le M_2 \kappa \omega^2 \exp(O(L)) \frac{(\log d)^{4L}}{\sqrt{d}},
    \\
    &\left\| \ref{eqn:eqn30} \right\|_\infty \le M_2 \kappa \omega \frac{(\log d)^{2L+1/2}}{\sqrt{d}} \exp(O(L)),
    \\
    &\left\| \ref{eqn:eqn31} \right\|_\infty \le M_3 \kappa \omega \frac{(\log d)^{L+1}}{\sqrt{d}} \exp(O(L)),
    \\
    &\left\| \ref{eqn:eqn32} \right\|_\infty \le M_1 \left\| \frac{\partial J(f_{\mathcal{A}})(x)_{\alpha-1}}{\partial \ha{l}} - \frac{\partial J(f_0)(x)_{\alpha-1}}{\partial \ho{l}} \right\|_\infty,
    \\
    &\left\| \ref{eqn:eqn33} \right\|_\infty \le M_2 \kappa \omega \frac{(\log d)^{L+1/2}}{\sqrt{d}} \exp(O(L)).
  \end{align*}
\end{proof}

As we've done in the previous section, we also give an asymptotic summary of the results.
If the width $d \ge \Omega(L \log(d_0)/\delta)$, and the weights after perturbation satisfy $\|\WA{l} - \WO{l}\| \le \omega \sqrt{d}$ and $\|\WA{l} - \WO{l}\|_\infty \le \omega \log d$, we have 
\begin{align}
  \left\|\ha{l} - \ho{l}\right\| &\le \omega \sqrt{d} O(\exp(L)), \label{eqn:perturbbegin}
  \\
  \left\| \frac{\partial \ha{l}(x)}{\partial x_{\alpha-1}} - \frac{\ho{l}(x)}{\partial x_{\alpha-1}} \right\| &\le \omega \sqrt{d \log d} \exp(O(L)),
  \\
  \left\| \frac{\partial f_{\mathcal{A}}(x)}{\partial \ga{l}(x)} - \frac{\partial f_0(x)}{\partial \go{l}(x)} \right\| &\le \kappa \omega \exp(O(L)) \sqrt{\log d},
  \\
  \left\| \frac{\partial J(f_{\mathcal{A}})(x)}{\partial \ga{l}(x)} - \frac{\partial J(f_0)(x)_{\alpha-1}}{\partial \go{l}(x)} \right\| &\le \kappa \omega \log d \exp(O(L)) \label{eqn:perturbend}
\end{align}
for all $l \in [L]$ and $\alpha \in [d_0]$ simultaneously, with probability at least $1 - \delta$.

\subsection{Proof of Theorem~\ref{thm:JNTKCloseToInit}}

Now we are ready to prove Theorem~\ref{thm:JNTKCloseToInit}.
\begin{proof}
  We first recall the decomposition of finite JNTK to its layer-wise contributions, 
  \begin{align*}
    &\Theta_{d,\theta_0}(x, x')_{00} 
    \\ 
    &\quad = \sum_{l=1}^{L+1} \frac{1}{C_{l-1}}\left\langle \ho{l-1}(x), \ho{l-1}(x') \right\rangle \left\langle \frac{\partial f_0(x)}{\partial \go{l}(x)}, \frac{\partial f_0(x')}{\partial \go{l}(x')} \right\rangle,
    \\
    &\Theta_{d,\theta}(x, x')_{\alpha 0} 
    \\
    & \quad = \sum_{l=1}^{L+1} \frac{1}{C_{l-1}}\left\langle \ho{l-1}(x), \ho{l-1}(x')\right\rangle\left\langle \frac{\partial J(f_0)(x)_{\alpha-1}}{\partial \go{l}(x)} , \frac{\partial f_0(x')}{\partial \go{l}(x')} \right\rangle
    \\
    &\quad + \sum_{l=1}^{L+1} \frac{1}{C_{l-1}} \left\langle \frac{\partial \ho{l-1}(x)}{\partial x_{\alpha-1}}, \ho{l-1}(x') \right\rangle \left\langle \frac{\partial f_0(x)}{\partial \go{l}(x)}, \frac{\partial f_0(x')}{\partial \go{l}(x')}  \right\rangle,
    \\
    &\Theta_{d,\theta}(x, x')_{0 \beta} 
    \\
    & \quad = \sum_{l=1}^{L+1} \frac{1}{C_{l-1}}\left\langle \ho{l-1}(x), \ho{l-1}(x')\right\rangle\left\langle \frac{\partial f_0(x)}{\partial \go{l}(x)} , \frac{\partial J(f_0)(x')_{\beta-1}}{\partial \go{l}(x')} \right\rangle 
    \\
    &\quad + \sum_{l=1}^{L+1} \frac{1}{C_{l-1}} \left\langle \ho{l-1}(x), \frac{\partial \ho{l-1}(x')}{\partial x_{\beta-1}'} \right\rangle \left\langle  \frac{\partial f_0(x)}{\partial \go{l}(x)}, \frac{\partial f_0(x')}{\partial \go{l}(x')} \right\rangle, 
    \\
    &\Theta_{d,\theta}(x, x')_{\alpha \beta} 
    \\
    &\quad = \sum_{l=1}^{L+1} \frac{1}{C_{l-1}} \left\langle \ho{l-1}(x), \ho{l-1}(x') \right\rangle \left\langle \frac{\partial J(f_0)(x)_{\alpha-1}}{\partial \go{l}(x)}, \frac{\partial J(f_0)(x')_{\beta-1}}{\partial \go{l}(x')} \right\rangle 
    \\
    &\quad + \sum_{l=1}^{L+1} \frac{1}{C_{l-1}} \left \langle \frac{\partial \ho{l-1}(x)}{\partial x_{\alpha-1}},\ho{l-1}(x') \right \rangle \left \langle\frac{\partial f_0(x)}{\partial \go{l}(x)}, \frac{\partial J(f_0)(x')_{\beta-1}}{\partial \go{l}(x')}\right \rangle 
    \\
    &\quad + \sum_{l=1}^{L+1} \frac{1}{C_{l-1}} \left \langle \ho{l-1}(x), \frac{\partial \ho{l-1}(x')}{\partial x_{\beta-1}'}\right \rangle \left \langle\frac{\partial J(f_0)(x)_{\alpha-1}}{\partial \go{l}(x)}, \frac{\partial f_0(x')}{\partial \go{l}(x')}\right \rangle 
    \\
    &\quad + \sum_{l=1}^{L+1} \frac{1}{C_{l-1}} \left \langle\frac{\partial \ho{l-1}(x)}{\partial x_{\alpha-1}}, \frac{\partial \ho{l-1}(x')}{\partial x_{\beta-1}'}\right \rangle \left \langle\frac{\partial f_0(x)}{\partial \go{l}(x)}, \frac{\partial f_0(x')}{\partial \go{l}(x')}\right \rangle. 
  \end{align*}
  We need to bound its difference $\Theta_{d, \theta_{\mathcal{A}}}(x, x')$ where $\theta_{\mathcal{A}} = \{W_\mathcal{A}^{(l)}\}_{l=1}^{L+1}$ assuming that weights are close to their initialisation.

  We will bound each summand in the above decomposition, which adds $\times (L+1)$ multiplicative coefficient to the bound of finite JNTK difference.
  By triangle inequality, we can bound each summands in the first term.
  \begin{align*}
    &\Bigg|\frac{1}{C_{l-1}}\left\langle \ho{l-1}(x), \ho{l-1}(x') \right\rangle \left\langle \frac{\partial f_0(x)}{\partial \go{l}(x)}, \frac{\partial f_0(x')}{\partial \go{l}(x')} \right\rangle 
    \\
    &\quad - \frac{1}{C_{l-1}}\left\langle \ha{l-1}(x), \ha{l-1}(x') \right\rangle \left\langle \frac{\partial f_\mathcal{A}(x)}{\partial \ga{l}(x)}, \frac{\partial f_\mathcal{A}(x')}{\partial \ga{l}(x')} \right\rangle\Bigg|
    \\ 
    \le &\Bigg|\frac{1}{C_{l-1}}\left\langle \ho{l-1}(x), \ho{l-1}(x') \right\rangle \left\langle \frac{\partial f_0(x)}{\partial \go{l}(x)}, \frac{\partial f_0(x')}{\partial \go{l}(x')} \right\rangle 
    \\
    &\qquad - \frac{1}{C_{l-1}}\left\langle \ha{l-1}(x), \ho{l-1}(x') \right\rangle \left\langle \frac{\partial f_0(x)}{\partial \go{l}(x)}, \frac{\partial f_0(x')}{\partial \go{l}(x')} \right\rangle\Bigg|
    \\
    &+ \Bigg|\frac{1}{C_{l-1}}\left\langle \ha{l-1}(x), \ho{l-1}(x') \right\rangle \left\langle \frac{\partial f_0(x)}{\partial \go{l}(x)}, \frac{\partial f_0(x')}{\partial \go{l}(x')} \right\rangle 
    \\
    &\qquad - \frac{1}{C_{l-1}}\left\langle \ha{l-1}(x), \ha{l-1}(x') \right\rangle \left\langle \frac{\partial f_0(x)}{\partial \go{l}(x)}, \frac{\partial f_0(x')}{\partial \go{l}(x')} \right\rangle\Bigg|
    \\
    &+ \Bigg|\frac{1}{C_{l-1}}\left\langle \ha{l-1}(x), \ha{l-1}(x') \right\rangle \left\langle \frac{\partial f_0(x)}{\partial \go{l}(x)}, \frac{\partial f_0(x')}{\partial \go{l}(x')} \right\rangle 
    \\
    &\qquad - \frac{1}{C_{l-1}}\left\langle \ha{l-1}(x), \ha{l-1}(x') \right\rangle \left\langle \frac{\partial f_\mathcal{A}(x)}{\partial \ga{l}(x)}, \frac{\partial f_0(x')}{\partial \go{l}(x')} \right\rangle\Bigg|
    \\
    &+ \Bigg|\frac{1}{C_{l-1}}\left\langle \ha{l-1}(x), \ha{l-1}(x') \right\rangle \left\langle \frac{\partial f_\mathcal{A}(x)}{\partial \ga{l}(x)}, \frac{\partial f_0(x')}{\partial \go{l}(x')} \right\rangle 
    \\
    &\qquad - \frac{1}{C_{l-1}}\left\langle \ha{l-1}(x), \ha{l-1}(x') \right\rangle \left\langle \frac{\partial f_\mathcal{A}(x)}{\partial \ga{l}(x)}, \frac{\partial f_\mathcal{A}(x')}{\partial \ga{l}(x')} \right\rangle\Bigg|
    \\
    \le & \frac{1}{C_{l-1}}\left\| \ho{l-1}(x) - \ha{l-1}(x) \right\| \left\| \ho{l-1}(x') \right\| \left\| \frac{\partial f_0(x)}{\partial \go{l}(x)} \right\| \left\| \frac{\partial f_0(x')}{\partial \go{l}(x')} \right\|
    \\
    &+ \frac{1}{C_{l-1}} \left\| \ha{l-1}(x) \right\| \left\| \ho{l-1}(x') - \ha{l-1}(x') \right\| \left\| \frac{\partial f_0(x)}{\partial \go{l}(x)} \right\| \left\| \frac{\partial f_0(x')}{\partial \go{l}(x')} \right\|
    \\
    &+ \frac{1}{C_{l-1}} \left\| \ha{l-1}(x) \right\| \left\| \ha{l-1}(x') \right\| \left\| \frac{\partial f_0(x)}{\partial \go{l}(x)} - \frac{\partial f_\mathcal{A}(x)}{\partial \ga{l}(x)} \right\| \left\| \frac{\partial f_0(x')}{\partial \go{l}(x')} \right\|
    \\
    &+ \frac{1}{C_{l-1}} \left\| \ha{l-1}(x) \right\| \left\| \ha{l-1}(x') \right\| \left\| \frac{\partial f_{\mathcal{A}}(x)}{\partial \ga{l}(x)} \right\| \left\| \frac{\partial f_0(x')}{\partial \go{l}(x')} - \frac{\partial f_\mathcal{A}(x')}{\partial \ga{l}(x')} \right\|,
  \end{align*}
  which rewrites the difference by the terms we've bounded in Appendices~\ref{subsec:boundinit} and \ref{subsec:boundperturb}.

  We can bound the rest of the terms as follows:
  \begin{align*}
    &\Bigg| \frac{1}{C_{l-1}} \left\langle \ho{l-1}(x), \ho{l-1}(x') \right\rangle \left\langle \frac{\partial J(f_0)(x)_{\alpha-1}}{\partial \go{l}(x)}, \frac{\partial f_0(x')}{\partial \go{l}(x')} \right\rangle
    \\
    &\quad - \frac{1}{C_{l-1}} \left\langle \ha{l-1}(x), \ha{l-1}(x') \right\rangle \left\langle \frac{\partial J(f_\mathcal{A})(x)_{\alpha-1}}{\partial \ga{l}(x)}, \frac{\partial f_\mathcal{A}(x')}{\partial \ga{l}(x')} \right\rangle \Bigg|
    \\
    \le & \frac{1}{C_{l-1}}\left\| \ho{l-1}(x) - \ha{l-1}(x) \right\| \left\| \ho{l-1}(x') \right\| \left\| \frac{\partial J(f_0)(x)_{\alpha-1}}{\partial \go{l}(x)} \right\| \left\| \frac{\partial f_0(x')}{\partial \go{l}(x')} \right\|
    \\
    &+ \frac{1}{C_{l-1}} \left\| \ha{l-1}(x) \right\| \left\| \ho{l-1}(x') - \ha{l-1}(x') \right\| \left\| \frac{\partial J(f_0)(x)_{\alpha-1}}{\partial \go{l}(x)} \right\| \left\| \frac{\partial f_0(x')}{\partial \go{l}(x')} \right\|
    \\
    &+ \frac{1}{C_{l-1}} \left\| \ha{l-1}(x) \right\| \left\| \ha{l-1}(x') \right\| \left\| \frac{\partial J(f_0)(x)_{\alpha-1}}{\partial \go{l}(x)} - \frac{\partial J(f_\mathcal{A})(x)_{\alpha-1}}{\partial \ga{l}(x)} \right\| \left\| \frac{\partial f_0(x')}{\partial \go{l}(x')} \right\|
    \\
    &+ \frac{1}{C_{l-1}} \left\| \ha{l-1}(x) \right\| \left\| \ha{l-1}(x') \right\| \left\| \frac{\partial J(f_{\mathcal{A}})(x)_{\alpha-1}}{\partial \ga{l}(x)} \right\| \left\| \frac{\partial f_0(x')}{\partial \go{l}(x')} - \frac{\partial f_\mathcal{A}(x')}{\partial \ga{l}(x')} \right\|,
    \\
    &\Bigg| \frac{1}{C_{l-1}} \left\langle \frac{\partial \ho{l-1}(x)}{\partial x_{\alpha-1}}, \ho{l-1}(x') \right\rangle \left\langle \frac{\partial f_0(x)}{\partial \go{l}(x)}, \frac{\partial f_0(x')}{\partial \go{l}(x')}  \right\rangle
    \\
    &\quad - \frac{1}{C_{l-1}} \left\langle \frac{\partial \ha{l-1}(x)}{\partial x_{\alpha-1}}, \ha{l-1}(x') \right\rangle \left\langle \frac{\partial f_\mathcal{A}(x)}{\partial \ga{l}(x)}, \frac{\partial f_\mathcal{A}(x')}{\partial \ga{l}(x')}  \right\rangle \Bigg|
    \\
    \le & \frac{1}{C_{l-1}}\left\| \frac{\partial \ho{l-1}(x)}{\partial x_{\alpha-1}} - \frac{\partial \ha{l-1}(x)}{\partial x_{\alpha-1}} \right\| \left\| \ho{l-1}(x') \right\| \left\| \frac{\partial f_0(x)}{\partial \go{l}(x)} \right\| \left\| \frac{\partial f_0(x')}{\partial \go{l}(x')} \right\|
    \\
    &+ \frac{1}{C_{l-1}} \left\| \frac{\partial \ha{l-1}(x)}{\partial x_{\alpha-1}} \right\| \left\| \ho{l-1}(x') - \ha{l-1}(x') \right\| \left\| \frac{\partial f_0(x)}{\partial \go{l}(x)} \right\| \left\| \frac{\partial f_0(x')}{\partial \go{l}(x')} \right\|
    \\
    &+ \frac{1}{C_{l-1}} \left\| \frac{\partial \ha{l-1}(x)}{\partial x_{\alpha-1}} \right\| \left\| \ha{l-1}(x') \right\| \left\| \frac{\partial f_0(x)}{\partial \go{l}(x)} - \frac{\partial f_\mathcal{A}(x)}{\partial \ga{l}(x)} \right\| \left\| \frac{\partial f_0(x')}{\partial \go{l}(x')} \right\|
    \\
    &+ \frac{1}{C_{l-1}} \left\| \frac{\partial \ha{l-1}(x)}{\partial x_{\alpha-1}} \right\| \left\| \ha{l-1}(x') \right\| \left\| \frac{\partial f_{\mathcal{A}}(x)}{\partial \ga{l}(x)} \right\| \left\| \frac{\partial f_0(x')}{\partial \go{l}(x')} - \frac{\partial f_\mathcal{A}(x')}{\partial \ga{l}(x')} \right\|,
    \\
    &\Bigg| \frac{1}{C_{l-1}}\left\langle \ho{l-1}(x), \ho{l-1}(x')\right\rangle\left\langle \frac{\partial f_0(x)}{\partial \go{l}(x)} , \frac{\partial J(f_0)(x')_{\beta-1}}{\partial \go{l}(x')} \right\rangle
    \\
    &\quad - \frac{1}{C_{l-1}}\left\langle \ha{l-1}(x), \ha{l-1}(x')\right\rangle\left\langle \frac{\partial f_\mathcal{A}(x)}{\partial \ga{l}(x)} , \frac{\partial J(f_\mathcal{A})(x')_{\beta-1}}{\partial \ga{l}(x')} \right\rangle \Bigg|
    \\
    \le & \frac{1}{C_{l-1}}\left\| \ho{l-1}(x) - \ha{l-1}(x) \right\| \left\| \ho{l-1}(x') \right\| \left\| \frac{\partial f_0(x)}{\partial \go{l}(x)} \right\| \left\| \frac{\partial J(f_0)(x')_{\beta-1}}{\partial \go{l}(x')} \right\|
    \\
    &+ \frac{1}{C_{l-1}} \left\| \ha{l-1}(x) \right\| \left\| \ho{l-1}(x') - \ha{l-1}(x') \right\| \left\| \frac{\partial f_0(x)}{\partial \go{l}(x)} \right\| \left\| \frac{\partial J(f_0)(x')_{\beta-1}}{\partial \go{l}(x')} \right\|
    \\
    &+ \frac{1}{C_{l-1}} \left\| \ha{l-1}(x) \right\| \left\| \ha{l-1}(x') \right\| \left\| \frac{\partial f_0(x)}{\partial \go{l}(x)} - \frac{\partial f_\mathcal{A}(x)}{\partial \ga{l}(x)} \right\| \left\| \frac{\partial J(f_0)(x')_{\beta-1}}{\partial \go{l}(x')} \right\|
    \\
    &+ \frac{1}{C_{l-1}} \left\| \ha{l-1}(x) \right\| \left\| \ha{l-1}(x') \right\| \left\| \frac{\partial f_{\mathcal{A}}(x)}{\partial \ga{l}(x)} \right\| \left\| \frac{\partial J(f_0)(x')_{\beta-1}}{\partial \go{l}(x')} - \frac{\partial J(f_{\mathcal{A}})(x')_{\beta-1}}{\partial \ga{l}(x')} \right\|,
    \\
    &\Bigg| \frac{1}{C_{l-1}} \left\langle \ho{l-1}(x), \frac{\partial \ho{l-1}(x')}{\partial x_{\beta-1}'} \right\rangle \left\langle  \frac{\partial f_0(x)}{\partial \go{l}(x)}, \frac{\partial f_0(x')}{\partial \go{l}(x')} \right\rangle
    \\
    &\quad - \frac{1}{C_{l-1}} \left\langle \ha{l-1}(x), \frac{\partial \ha{l-1}(x')}{\partial x_{\beta-1}'} \right\rangle \left\langle  \frac{\partial f_\mathcal{A}(x)}{\partial \ga{l}(x)}, \frac{\partial f_\mathcal{A}(x')}{\partial \ga{l}(x')} \right\rangle\Bigg|
    \\
    &\le \frac{1}{C_{l-1}}\left\| \ho{l-1}(x) - \ha{l-1}(x) \right\| \left\| \frac{\partial \ho{l-1}(x')}{\partial x_\beta'} \right\| \left\| \frac{\partial f_0(x)}{\partial \go{l}(x)} \right\| \left\| \frac{\partial f_0(x')}{\partial \go{l}(x')} \right\|
    \\
    &+ \frac{1}{C_{l-1}} \left\| \ha{l-1}(x) \right\| \left\| \frac{\partial \ho{l-1}(x')}{\partial x_\beta'} - \frac{\partial \ha{l-1}(x')}{\partial x_\beta'} \right\| \left\| \frac{\partial f_0(x)}{\partial \go{l}(x)} \right\| \left\| \frac{\partial f_0(x')}{\partial \go{l}(x')} \right\|
    \\
    &+ \frac{1}{C_{l-1}} \left\| \ha{l-1}(x) \right\| \left\| \frac{\partial \ha{l-1}(x')}{\partial x_\beta'} \right\| \left\| \frac{\partial f_0(x)}{\partial \go{l}(x)} - \frac{\partial f_\mathcal{A}(x)}{\partial \ga{l}(x)} \right\| \left\| \frac{\partial f_0(x')}{\partial \go{l}(x')} \right\|
    \\
    &+ \frac{1}{C_{l-1}} \left\| \ha{l-1}(x) \right\| \left\| \frac{\partial \ha{l-1}(x')}{\partial x_\beta'} \right\| \left\| \frac{\partial f_{\mathcal{A}}(x)}{\partial \ga{l}(x)} \right\| \left\| \frac{\partial f_0(x')}{\partial \go{l}(x')} - \frac{\partial f_\mathcal{A}(x')}{\partial \ga{l}(x')} \right\|,
    \\
    &\Bigg| \frac{1}{C_{l-1}} \left\langle \ho{l-1}(x), \ho{l-1}(x') \right\rangle \left\langle \frac{\partial J(f_0)(x)_{\alpha-1}}{\partial \go{l}(x)}, \frac{\partial J(f_0)(x')_{\beta-1}}{\partial \go{l}(x')} \right\rangle 
    \\
    &\quad - \frac{1}{C_{l-1}} \left\langle \ha{l-1}(x), \ha{l-1}(x') \right\rangle \left\langle \frac{\partial J(f_\mathcal{A})(x)_{\alpha-1}}{\partial \ga{l}(x)}, \frac{\partial J(f_\mathcal{A})(x')_{\beta-1}}{\partial \ga{l}(x')} \right\rangle \Bigg|
    \\
    &\le \frac{1}{C_{l-1}}\left\| \ho{l-1}(x) - \ha{l-1}(x) \right\| \left\| \ho{l-1}(x') \right\| \left\| \frac{\partial J(f_0)(x)_{\alpha-1}}{\partial \go{l}(x)} \right\| \left\| \frac{\partial J(f_0)(x')_{\beta-1}}{\partial \go{l}(x')} \right\|
    \\
    &+ \frac{1}{C_{l-1}} \left\| \ha{l-1}(x) \right\| \left\| \ho{l-1}(x') - \ha{l-1}(x') \right\| \left\| \frac{\partial J(f_0)(x)_{\alpha-1}}{\partial \go{l}(x)} \right\| \left\| \frac{\partial J(f_0)(x')_{\beta-1}}{\partial \go{l}(x')} \right\|
    \\
    &+ \frac{1}{C_{l-1}} \left\| \ha{l-1}(x) \right\| \left\| \ha{l-1}(x') \right\| \left\| \frac{\partial J(f_0)(x)_{\alpha-1}}{\partial \go{l}(x)} - \frac{\partial J(f_\mathcal{A})(x)_{\alpha-1}}{\partial \ga{l}(x)} \right\| \left\| \frac{\partial J(f_0)(x')_{\beta-1}}{\partial \go{l}(x')} \right\|
    \\
    &+ \frac{1}{C_{l-1}} \left\| \ha{l-1}(x) \right\| \left\| \ha{l-1}(x') \right\| \left\| \frac{\partial J(f_{\mathcal{A}})(x)_{\alpha-1}}{\partial \ga{l}(x)} \right\| \left\| \frac{\partial J(f_0)(x')_{\beta-1}}{\partial \go{l}(x')} - \frac{\partial J(f_\mathcal{A})(x')_{\beta-1}}{\partial \ga{l}(x')} \right\|, 
    \\
    &\Bigg| \frac{1}{C_{l-1}} \left \langle \frac{\partial \ho{l-1}(x)}{\partial x_{\alpha-1}},\ho{l-1}(x') \right \rangle \left \langle\frac{\partial f_0(x)}{\partial \go{l}(x)}, \frac{\partial J(f_0)(x')_{\beta-1}}{\partial \go{l}(x')}\right \rangle
    \\
    &\quad - \frac{1}{C_{l-1}} \left \langle \frac{\partial \ha{l-1}(x)}{\partial x_{\alpha-1}},\ha{l-1}(x') \right \rangle \left \langle\frac{\partial f_\mathcal{A}(x)}{\partial \ga{l}(x)}, \frac{\partial J(f_\mathcal{A})(x')_{\beta-1}}{\partial \ga{l}(x')}\right \rangle \Bigg|
    \\
    &\le \frac{1}{C_{l-1}}\left\| \frac{\partial \ho{l-1}(x)}{\partial x_{\alpha-1}} - \frac{\partial \ha{l-1}(x)}{\partial x_{\alpha-1}} \right\| \left\| \ho{l-1}(x') \right\| \left\| \frac{\partial f_0(x)}{\partial \go{l}(x)} \right\| \left\| \frac{\partial J(f_0)(x')_{\beta-1}}{\partial \go{l}(x')} \right\|
    \\
    &+ \frac{1}{C_{l-1}} \left\| \frac{\partial \ha{l-1}(x)}{\partial x_{\alpha-1}} \right\| \left\| \ho{l-1}(x') - \ha{l-1}(x') \right\| \left\| \frac{\partial f_0(x)}{\partial \go{l}(x)} \right\| \left\| \frac{\partial J(f_0)(x')_{\beta-1}}{\partial \go{l}(x')} \right\|
    \\
    &+ \frac{1}{C_{l-1}} \left\| \frac{\partial \ha{l-1}(x)}{\partial x_{\alpha-1}} \right\| \left\| \ha{l-1}(x') \right\| \left\| \frac{\partial f_0(x)}{\partial \go{l}(x)} - \frac{\partial f_\mathcal{A}(x)}{\partial \ga{l}(x)} \right\| \left\| \frac{\partial J(f_0)(x')_{\beta-1}}{\partial \go{l}(x')} \right\|
    \\
    &+ \frac{1}{C_{l-1}} \left\| \frac{\partial \ha{l-1}(x)}{\partial x_{\alpha-1}} \right\| \left\| \ha{l-1}(x') \right\| \left\| \frac{\partial f_{\mathcal{A}}(x)}{\partial \ga{l}(x)} \right\| \left\| \frac{\partial J(f_0)(x')_{\beta-1}}{\partial \go{l}(x')} -\frac{\partial J(f_\mathcal{A})(x')_{\beta-1}}{\partial \ga{l}(x')} \right\|, 
    \\
    &\Bigg| \frac{1}{C_{l-1}} \left \langle \ho{l-1}(x), \frac{\partial \ho{l-1}(x')}{\partial x_{\beta-1}'}\right \rangle \left \langle\frac{\partial J(f_0)(x)_{\alpha-1}}{\partial \go{l}(x)}, \frac{\partial f_0(x')}{\partial \go{l}(x')}\right \rangle
    \\
    &\quad - \frac{1}{C_{l-1}} \left \langle \ha{l-1}(x), \frac{\partial \ha{l-1}(x')}{\partial x_{\beta-1}'}\right \rangle \left \langle\frac{\partial J(f_\mathcal{A})(x)_{\alpha-1}}{\partial \ga{l}(x)}, \frac{\partial f_\mathcal{A}(x')}{\partial \ga{l}(x')}\right \rangle \Bigg|
    \\
    &\le \frac{1}{C_{l-1}}\left\| \ho{l-1}(x) - \ha{l-1}(x) \right\| \left\| \frac{\partial \ho{l-1}(x')}{\partial x_\beta'} \right\| \left\| \frac{\partial J(f_0)(x)_{\alpha-1}}{\partial \go{l}(x)} \right\| \left\| \frac{\partial f_0(x')}{\partial \go{l}(x')} \right\|
    \\
    &+ \frac{1}{C_{l-1}} \left\| \ha{l-1}(x) \right\| \left\| \frac{\partial \ho{l-1}(x')}{\partial x_\beta'} - \frac{\partial \ha{l-1}(x')}{\partial x_\beta'} \right\| \left\| \frac{\partial J(f_0)(x)_{\alpha-1}}{\partial \go{l}(x)} \right\| \left\| \frac{\partial f_0(x')}{\partial \go{l}(x')} \right\|
    \\
    &+ \frac{1}{C_{l-1}} \left\| \ha{l-1}(x) \right\| \left\| \frac{\partial \ha{l-1}(x')}{\partial x_\beta'} \right\| \left\| \frac{\partial J(f_0)(x)_{\alpha-1}}{\partial \go{l}(x)} - \frac{\partial J(f_\mathcal{A})(x)_{\alpha-1}}{\partial \ga{l}(x)} \right\| \left\| \frac{\partial f_0(x')}{\partial \go{l}(x')} \right\|
    \\
    &+ \frac{1}{C_{l-1}} \left\| \ha{l-1}(x) \right\| \left\| \frac{\partial \ha{l-1}(x')}{\partial x_\beta'} \right\| \left\| \frac{\partial J(f_\mathcal{A})(x)_{\alpha-1}}{\partial \ga{l}(x)} \right\| \left\| \frac{\partial f_0(x')}{\partial \go{l}(x')} - \frac{\partial f_\mathcal{A}(x')}{\partial \ga{l}(x')} \right\|,
    \\
    &\Bigg| \frac{1}{C_{l-1}} \left \langle\frac{\partial \ho{l-1}(x)}{\partial x_{\alpha-1}}, \frac{\partial \ho{l-1}(x')}{\partial x_{\beta-1}'}\right \rangle \left \langle\frac{\partial f_0(x)}{\partial \go{l}(x)}, \frac{\partial f_0(x')}{\partial \go{l}(x')}\right \rangle
    \\
    &\quad - \frac{1}{C_{l-1}} \left \langle\frac{\partial \ha{l-1}(x)}{\partial x_{\alpha-1}}, \frac{\partial \ha{l-1}(x')}{\partial x_{\beta-1}'}\right \rangle \left \langle\frac{\partial f_\mathcal{A}(x)}{\partial \ga{l}(x)}, \frac{\partial f_\mathcal{A}(x')}{\partial \ga{l}(x')}\right \rangle \Bigg|
    \\
    &\le \frac{1}{C_{l-1}}\left\| \frac{\partial \ho{l-1}(x)}{\partial x_{\alpha-1}} - \frac{\partial \ha{l-1}(x)}{\partial x_{\alpha-1}} \right\| \left\| \frac{\partial \ho{l-1}(x')}{\partial x_\beta'} \right\| \left\| \frac{\partial f_0(x)}{\partial \go{l}(x)} \right\| \left\| \frac{\partial f_0(x')}{\partial \go{l}(x')} \right\|
    \\
    &+ \frac{1}{C_{l-1}} \left\| \frac{\partial \ha{l-1}(x)}{\partial x_{\alpha-1}} \right\| \left\| \frac{\partial \ho{l-1}(x')}{\partial x_\beta'} - \frac{\partial \ha{l-1}(x')}{\partial x_\beta'} \right\| \left\| \frac{\partial f_0(x)}{\partial \go{l}(x)} \right\| \left\| \frac{\partial f_0(x')}{\partial \go{l}(x')} \right\|
    \\
    &+ \frac{1}{C_{l-1}} \left\| \frac{\partial \ha{l-1}(x)}{\partial x_{\alpha-1}} \right\| \left\| \frac{\partial \ha{l-1}(x')}{\partial x_\beta'} \right\| \left\| \frac{\partial f_0(x)}{\partial \go{l}(x)} - \frac{\partial f_\mathcal{A}(x)}{\partial \ga{l}(x)} \right\| \left\| \frac{\partial f_0(x')}{\partial \go{l}(x')} \right\|
    \\
    &+ \frac{1}{C_{l-1}} \left\| \frac{\partial \ha{l-1}(x)}{\partial x_{\alpha-1}} \right\| \left\| \frac{\partial \ha{l-1}(x')}{\partial x_\beta'} \right\| \left\| \frac{\partial f_{\mathcal{A}}(x)}{\partial \ga{l}(x)} \right\| \left\| \frac{\partial f_0(x')}{\partial \go{l}(x')} - \frac{\partial f_\mathcal{A}(x')}{\partial \ga{l}(x')} \right\|.
  \end{align*}
  
  We will use the norm bounds for the activations and gradients after perturbation, which is asymptotically the same as norms before perturbations with assumptions on $\omega$.
  \begin{align*}
    \left\|\ha{l}\right\| &\le 2\sqrt{d} + \omega \sqrt{d} O(\exp(L))
    \\
    &\le O(\exp(L)) \sqrt{d},
    \\
    \left\| \frac{\partial \ha{l}(x)}{\partial x_{\alpha-1}} \right\| &\le O(\exp(L)) \sqrt{d} + \omega \sqrt{d} O(\exp(L))
    \\
    &\le O(\exp(L)) \sqrt{d},
    \\
    \left\| \frac{\partial f_{\mathcal{A}}(x)}{\partial \ga{l}(x)} \right\| &\le \kappa O(\exp(L)) + \kappa \omega \exp(O(L)) \sqrt{\log d}
    \\
    &\le \kappa \exp(O(L)),
    \\
    \left\| \frac{\partial J(f_{\mathcal{A}})(x)_{\alpha-1}}{\partial \ga{l}(x)} \right\| &\le \kappa \exp(O(L)) + \kappa \omega \log d \exp(O(L)) + \kappa^2 \omega^2 \exp(O(L)) (\log d)^{3/2}
    \\
    &\le \kappa \exp(O(L)).
  \end{align*}
  
  Combining these with Equation~\eqref{eqn:initbegin}-\eqref{eqn:initend} and \eqref{eqn:perturbbegin}-\eqref{eqn:perturbend}, we obtain
  \begin{align*}
    \left|\Theta_{d,\theta_\mathcal{A}}(x, x')_{ij} - \Theta_{d,\theta_0}(x, x')_{ij} \right| &\le \kappa^2 \omega \exp(O(L)) \log d 
  \end{align*}
  for any $i, j \in \{0\} \cup [d_0]$.
  We finally compute the bound on Frobenius norm as follows:
  \[
    \left\| \Theta_{d,\theta_\mathcal{A}}(x, x') - \Theta_{d,\theta_0}(x, x') \right\|_F \le \kappa^2 \omega d_0 \exp(O(L)) \log d.
  \]
\end{proof}

\section{Proof of Theorem~\ref{thm:JacobianNTKTraining}}
We now prove that the finite JNTK stays constant during the gradient flow.
We consider the weight to evolve along the gradient flow of the objective $\mathcal{L}$,
\[
  \dot{\theta}_t = - \frac{\partial \mathcal{L}(\theta_t)}{\theta_t}
\]
where we write each layer's weights as $\theta_t = \{W^{(l)}(t)\}_{l=1}^{L+1}$.

As stated in Assumption~\ref{assm:NTK-smallest-eigenvalue}, we will assume that the smallest eigenvalue of the matrix $\Theta(x^{(1:N)}, x^{(1:N)})$ is greater than 0, with its value $\lambda_0$. 
The proof of this theorem is based on two lemmas, whose condition and the results are reversed.

\begin{lemma} \label{lem:linearconvergence}
  First, assume that the width network $d$ is large enough to satisfy 
  \[
    d \ge F_1\left( x^{(1:N)}, L, \delta, \frac{\epsilon}{\kappa} \right), d \ge F_2\left( x^{(1:N)}, L, \delta, \epsilon \right)
  \]
  where $F_1$ is defined in Remark~\ref{rmk:InitCloseToZero} and the $F_2$ is defined in Remark~\ref{rmk:JNTKInit2}.

  Further assume that the network width $d$ and $\omega$ satisfy
  \begin{align*}
    d &\ge \Omega(N d_0 \exp(L) / \delta),
    \\
    \omega &\le O \left( \frac{\lambda_0 \lambda}{N \exp(\Omega(L)) \log d} \right),
    \\
    \omega &\le (\log d)^{-2L}
  \end{align*}
  Then with probability at least $1 - \delta$ over the random initialisation, the following holds for all $t_0$. 
  If all of the matrices do not change a lot for all $0 \le t < t_0$, i.e., 
  \begin{align*}
    \left\| W^{(l)}(t) - W^{(l)}(0) \right\| &\le \omega \sqrt{d},
    \\
    \left\| W^{(l)}(t) - W^{(l)}(0) \right\|_\infty &\le \begin{cases}
      \infty & \text{ if }l = L+1,
      \\
      \omega \sqrt{d} \log d & \text{ if } 2 \le l \le L,
      \\
      \omega \log d & \text{ if } l = 1,
    \end{cases}
    \\
    \left\| W^{(l)}(t) - W^{(l)}(0) \right\|_1 &\le \begin{cases}
      \infty & \text{ if } l = 1
      \\
      \omega \sqrt{d} \log d & \text{ if } 2 \le l \le L,
      \\
      \omega \log d & \text{ if }l = L+1,
    \end{cases}
  \end{align*}
  then we have linear convergence for $0 \le t \le t_0$, i.e., 
  \[
    \mathcal{L}(\theta_t) \le \mathcal{L}(\theta_0) \exp\left( - \frac{\kappa^2 \lambda \lambda_0 t}{N} \right).
  \]
\end{lemma}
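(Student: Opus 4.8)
\emph{Approach.} The plan is to turn the gradient flow into a scalar differential inequality for the loss and then integrate it by Grönwall. Concretely, I will show that along the flow the stacked residual vector obeys a linear ODE whose coefficient is the $\lambda$-weighted finite JNTK Gram matrix, and that under the stated hypotheses the smallest eigenvalue of that matrix stays above $\kappa^2\lambda\lambda_0/2$ on $[0,t_0]$; this forces $\frac{d}{dt}\mathcal{L}(\theta_t)\le-(\kappa^2\lambda\lambda_0/N)\mathcal{L}(\theta_t)$, which gives the claimed exponential decay.

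\emph{Step 1: the residual ODE.} For $j\in[N]$ write $\Psi_j(t)\defeq\Psi(f_{d,\theta_t},x^{(j)},y^{(j)})\in\R^{d_0+1}$, stack $v_t\defeq(\Psi_1(t);\dots;\Psi_N(t))\in\R^{N(d_0+1)}$, let $\Lambda_N$ be the block-diagonal matrix consisting of $N$ copies of $\Lambda$, and put $u_t\defeq\Lambda_N v_t$. By the definition of the robust objective, $\mathcal{L}(\theta_t)=\tfrac1{2N}\|u_t\|^2$. Since only $f_{d,\theta_t}$ (not $y^{(j)}$) depends on $\theta$, the rows of $\partial_\theta v_t$ are the vectors $\partial_\theta f_{d,\theta_t}(x^{(j)})$ and $\partial_\theta J(f_{d,\theta_t})(x^{(j)})_{\alpha-1}$, so by the definition of the finite JNTK one has $(\partial_\theta v_t)(\partial_\theta v_t)^\intercal=H_t$, where $H_t$ is the $N(d_0+1)\times N(d_0+1)$ matrix whose $(i,j)$-block is $\Theta_{d,\theta_t}(x^{(i)},x^{(j)})$. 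Substituting $\dot\theta_t=-\partial_\theta\mathcal{L}(\theta_t)=-\tfrac1N(\partial_\theta v_t)^\intercal\Lambda_N^2 v_t$ into $\dot v_t=(\partial_\theta v_t)\dot\theta_t$ gives $\dot v_t=-\tfrac1N H_t\Lambda_N^2 v_t$, hence $\dot u_t=-\tfrac1N(\Lambda_N H_t\Lambda_N)u_t$, and since $\Lambda_N H_t\Lambda_N\succeq0$,
\[
  \frac{d}{dt}\mathcal{L}(\theta_t)=\frac1N u_t^\intercal\dot u_t=-\frac1{N^2}u_t^\intercal(\Lambda_N H_t\Lambda_N)u_t\le-\frac{2}{N}\lambda_{\min}(\Lambda_N H_t\Lambda_N)\,\mathcal{L}(\theta_t).
\]
This is exactly the Gram-matrix form of the evolution equations of Lemma~\ref{lem:JNTKdynamics}.

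\emph{Step 2: a uniform lower bound on $\lambda_{\min}(\Lambda_N H_t\Lambda_N)$.} I will work on the intersection of the probability-$(1-\delta)$ events furnished by Remark~\ref{rmk:InitCloseToZero}, Remark~\ref{rmk:JNTKInit2}, and Theorem~\ref{thm:JNTKCloseToInit} (the hypotheses $d\ge F_1(\cdot)$, $d\ge F_2(\cdot)$ and $d\ge\Omega(\exp(L)Nd_0/\delta)$ are precisely their requirements, and $\delta$ is split three ways by a union bound). First, $\Lambda_N\Theta(x^{(1:N)},x^{(1:N)})\Lambda_N=\Theta_\lambda(x^{(1:N)},x^{(1:N)})$, and since $\lambda\in(0,1]$ we have $\lambda_{\min}(\Lambda_N)^2=\lambda$, so by Assumption~\ref{assm:NTK-smallest-eigenvalue} its smallest eigenvalue is at least $\lambda\lambda_0$. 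Writing
\[
  \Lambda_N H_t\Lambda_N=\kappa^2\Theta_\lambda(x^{(1:N)},x^{(1:N)})+\Lambda_N\big(H_0-\kappa^2\Theta(x^{(1:N)},x^{(1:N)})\big)\Lambda_N+\Lambda_N(H_t-H_0)\Lambda_N,
\]
and using $\|\Lambda_N\|=1$, Remark~\ref{rmk:JNTKInit2} (applied with a small enough accuracy parameter, permitted by $d\ge F_2(\cdot)$) bounds the second term by $\kappa^2\lambda\lambda_0/4$ in operator norm; the weight-closeness hypothesis on $[0,t_0)$ is exactly the hypothesis of Theorem~\ref{thm:JNTKCloseToInit} (together with $\omega\le(\log d)^{-2L}$), which bounds each block of $H_t-H_0$ by $\kappa^2\omega\exp(O(L))\log d$ in Frobenius norm, so summing over the $N^2$ blocks gives $\|H_t-H_0\|\le\kappa^2 N\omega\exp(O(L))\log d\le\kappa^2\lambda\lambda_0/4$, the last inequality being exactly what $\omega\le O(\lambda_0\lambda/(N\exp(\Omega(L))\log d))$ guarantees. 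Therefore $\lambda_{\min}(\Lambda_N H_t\Lambda_N)\ge\kappa^2\lambda\lambda_0-\kappa^2\lambda\lambda_0/4-\kappa^2\lambda\lambda_0/4=\kappa^2\lambda\lambda_0/2$ for all $t\in[0,t_0]$.

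\emph{Step 3, and the main obstacle.} Plugging the bound of Step 2 into Step 1 gives $\frac{d}{dt}\mathcal{L}(\theta_t)\le-(\kappa^2\lambda\lambda_0/N)\mathcal{L}(\theta_t)$ for a.e.\ $t\in[0,t_0]$; since $t\mapsto\mathcal{L}(\theta_t)$ is absolutely continuous along the gradient flow, Grönwall's inequality yields $\mathcal{L}(\theta_t)\le\mathcal{L}(\theta_0)\exp(-\kappa^2\lambda\lambda_0t/N)$ on $[0,t_0]$, as required. The only delicate point is Step 2: one must apportion the perturbation budget between the initialisation-to-limit gap (controlled by Remark~\ref{rmk:JNTKInit2}) and the drift of the finite JNTK away from initialisation (controlled by Theorem~\ref{thm:JNTKCloseToInit} via the weight-closeness hypothesis), verify that conjugation by $\Lambda_N$ never inflates these errors because $\lambda\le1$, and check that aggregating the per-block bound of Theorem~\ref{thm:JNTKCloseToInit} over all $N^2$ blocks is absorbed exactly by the displayed condition on $\omega$. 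Everything else — the identification of the residual dynamics with the JNTK Gram matrix and the final Grönwall step — is routine.
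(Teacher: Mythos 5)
Your proposal is correct and follows essentially the same route as the paper's proof: the same eigenvalue lower bound $\lambda_{\min}(\Lambda^{\oplus N}\Theta_{d,\theta_t}\Lambda^{\oplus N})\ge\kappa^2\lambda\lambda_0/2$ obtained by combining the initialisation convergence (Remark~\ref{rmk:JNTKInit2}) with the perturbation stability (Theorem~\ref{thm:JNTKCloseToInit}) via the triangle/Weyl inequality and the $\Lambda$-conjugation bound, followed by the same residual ODE and Gr\"onwall argument. Your derivation of the residual dynamics from the chain rule (rather than citing Lemma~\ref{lem:JNTKdynamics}) is, if anything, slightly more careful about the placement of both $\Lambda$ factors, and the extra inclusion of Remark~\ref{rmk:InitCloseToZero} in the union bound is harmless.
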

\begin{proof}
  By the assumption on $d$, with Theorem~\ref{thm:JacobianNTKInitialisation}, we can assume that the finite JNTK at initialisation is close to its limit, i.e., 
  \[
    \left\|\Theta_{d,\theta_0}(x^{(1:N)}, x^{(1:N)}) - \kappa^2 \Theta(x^{(1:N)}, x^{(1:N)}) \right\|_F \le \frac{\kappa^2 \lambda \lambda_0}{4}
  \]
  with probability at least $1 - \delta/2$.

  Also from our assumption and Theorem~\ref{thm:JacobianNTKTraining}, we can show that the finite JNTK stays close to its initialisation,
  \[
    \left\|\Theta_{d,\theta_t}(x^{(1:N)}, x^{(1:N)})- \Theta_{d,\theta_0}(x^{(1:N)}, x^{(1:N)})\right\|_F \le \frac{\kappa^2 \lambda \lambda_0}{4}
  \]
  with probability at least $1 - \delta/2$.

  Combining these two, we can show that the finite JNTK during training is close to its limit,
  \[
    \left\|\Theta_{d,\theta_t}(x^{(1:N)}, x^{(1:N)})- \kappa^2 \Theta(x^{(1:N)}, x^{(1:N)})\right\|_2 \le \frac{\kappa^2 \lambda  \lambda_0}{2}
  \]
  with probability at least $1 - \delta$.
  From the assumption $\lambda < 1$, this also applies to the JNTK with multipliers,
  \[
    \left\|\Lambda^{\oplus N} \Theta_{d,\theta_t}(x^{(1:N)}, x^{(1:N)}) \Lambda^{\oplus N} - \kappa^2 \Theta_\lambda(x^{(1:N)}, x^{(1:N)})\right\|_2 \le \frac{\kappa^2 \lambda \lambda_0}{2}.
  \]
  Now using the bound on the smallest eigenvalue, the JNTK with $\lambda$ multiplier has the smallest eigenvalue bounded below as
  \[
    \lambda_{\min}\left( \kappa^2 \Theta_\lambda(x^{(1:N)}, x^{(1:N)})\right) \ge \kappa^2 \left(\min_{i \in [N(d_0+1)]} \Lambda_{i-1,i-1}^{\oplus N} \right)^2 \lambda_{\min} \left( \Theta(x^{(1:N)}, x^{(1:N)})\right) = \kappa^2 \lambda \lambda_0,
  \]
  using the fact that for two PSD matrix $P, Q \in \R^{n \times n}$, the following inequality holds: 
  \[
    \lambda_{\min} (P \odot Q) \ge \min_{i \in [n]} P_{i-1,i-1} \lambda_{\min} (Q).
  \]
  This allows the application of Weyl's inequality to show that the minimum eigenvalue during the training is also bounded from below,
  \[
    \lambda_{\min} \left( \Lambda^{\oplus N} \Theta_{d,\theta_t}(x^{(1:N)}, x^{(1:N)}) \Lambda^{\oplus N} \right) \ge \frac{\kappa^2 \lambda \lambda_0}{2}.
  \]
  Now let's define a vector $u_t \in \R^{N(d_0+1)}$ which is defined by stacking 
  \[
    \left[ f_{d,\theta_t}(x^{(i)}), \sqrt{\lambda}  J(f_{d,\theta_t})(x^{(i)})_0, \cdots, \sqrt{\lambda} J(f_{d,\theta_t})(x^{(i)})_{d_0-1} \right].
  \]
  Then, the loss function can be written in terms of $\mathcal{L}(\theta_t) = \frac{1}{2N} \|u_t - \mathbf{y}^{(1:N)}\|_2^2$.
  Now the time derivative of the objective function can be written as
  \begin{equation}
    \frac{d}{dt} \mathcal{L}(\theta_t) 
    = \frac{d}{dt} \frac{1}{2N} \left\| u_t - \mathbf{y}^{(1:N)} \right\|_2^2 
    = \frac{1}{N} \left\langle u_t - \mathbf{y}^{(1:N)}, \dot{u}_t \right\rangle \label{eqn:timederivative}
  \end{equation}
  Now from Lemma~\ref{lem:JNTKdynamics}, the time derivative of a function is written by $\dot{u}_t = - \frac{1}{N} \Lambda^{\oplus N} \Theta_{d,\theta_t}(x^{(1:N)}, x^{(1:N)}) (u_t - \mathbf{y}^{(1:N)})$, so substituting this in \eqref{eqn:timederivative} gives
  \begin{align*}
    \frac{d}{dt} \mathcal{L}(\theta_t) &= - \frac{1}{N^2} \left\langle u_t - \mathbf{y}^{(1:N)}, \Lambda^{\oplus N} \Theta_{d,\theta_t}(x^{(1:N)}, x^{(1:N)}) \Lambda^{\oplus N} (u_t - \mathbf{y}^{(1:N)}) \right\rangle
    \\
    &= - \frac{1}{N^2} (u_t - \mathbf{y}^{(1:N)})^\intercal \left( \Lambda^{\oplus N} \Theta_{d,\theta_t}(x^{(1:N)}, x^{(1:N)}) \Lambda^{\oplus N}\right) (u_t - \mathbf{y}^{(1:N)})
    \\
    &\le - \frac{ \kappa^2 \lambda \lambda_0 }{2N^2} \|u_t - \mathbf{y}^{(1:N)}\|_2^2
    \\
    &\le - \frac{\kappa^2 \lambda \lambda_0}{N} \mathcal{L}(\theta_t).
  \end{align*}
  Then by Gronwall's inequality, we prove the linear convergence,
  \[
    \mathcal{L}(\theta_t) \le \mathcal{L}(\theta_0) \exp \left( - \frac{\kappa^2 \lambda \lambda_0 t}{N} \right).
  \]
\end{proof}

\begin{lemma} \label{lem:WeightStayStill}
  Fix the fail probability $\delta > 0$ and network depth $L$. Suppose that the network width $d$ and $\omega$ satisfies
  \begin{align*}
    d &\ge \Omega(N d_0 \exp(L) / \delta),
    \\
    d &\ge \Omega(\omega^{-2}),
    \\
    \omega &\le O \left( \frac{\lambda_0 \lambda}{N \exp(\Omega(L)) (\log d)^{3L}} \right),
    \\
    \omega &\le (\log d)^{-2L}.
  \end{align*}
  Also suppose that at initialisation, all outputs and their Jacobian are bounded by 1, i.e., for all $i \in [N]$,
  \[
    \left|f_{\bd,\theta_0}(x^{(i)}) \right|, \left\| J(f_{\bd,\theta_0})(x^{(i)}) \right\|_\infty \le 1.
  \]
  With probability at least $1 - \delta$ over the random initialisation, the following holds for all $t_0$ and for any $l$,
  if for all $0 \le t \le t_0$,
  \begin{itemize}
    \item Linear convergence happens:
    \[
      \mathcal{L}(\theta_t) \le \mathcal{L}(\theta_0) \exp \left( - \frac{\kappa^2 \lambda \lambda_0 t}{N} \right).
    \]
    \item All other matrices stay close to their initialisation:
    \begin{align*}
      \left\| W^{(l')}(t) - W^{(l')}(0) \right\| &\le \omega \sqrt{d},
      \\
      \left\| W^{(l')}(t) - W^{(l')}(0) \right\|_\infty &\le \begin{cases}
        \infty & \text{ if }l' = L+1,
        \\
        \omega \sqrt{d} \log d & \text{ if }2 \le l' \le L,
        \\
        \omega \log d & \text{ if }l' = 1,
      \end{cases}
      \\
      \left\| W^{(l')}(t) - W^{(l')}(0) \right\|_1 &\le \begin{cases}
        \infty & \text{ if } l' = 1,
        \\
        \omega \sqrt{d} \log d & \text{ if }2 \le l' \le L,
        \\
        \omega \log d & \text{ if }l' = L+1,
      \end{cases}
    \end{align*}
    for all $l' \in [L+1] \setminus \{l\}$.
  \end{itemize}
  then the chosen layer's weight also stays close to its initialisation,
  \begin{align*}
    \left\| W^{(l)}(t) - W^{(l)}(0) \right\| &\le \omega \sqrt{d},
    \\
    \left\| W^{(l)}(t) - W^{(l)}(0) \right\|_\infty &\le \begin{cases}
      \infty & \text{ if }l = L+1,
      \\
      \omega \sqrt{d} \log d & \text{ if }2 \le l \le L,
      \\
      \omega \log d & \text{ if }l = 1,
    \end{cases}
    \\
    \left\| W^{(l)}(t) - W^{(l)}(0) \right\|_1 &\le \begin{cases}
      \infty & \text{ if }l = 1,
      \\
      \omega \sqrt{d} \log d & \text{ if }2 \le l \le L,
      \\
      \omega \log d & \text{ if }l = L+1.
    \end{cases}
  \end{align*}
\end{lemma}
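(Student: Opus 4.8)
The plan is to integrate the gradient flow and bound how far the chosen layer's matrix can travel, measuring the displacement in all three operator norms at once. Since $\dot W^{(l)}(t) = -\partial\mathcal{L}(\theta_t)/\partial W^{(l)}$, we have $W^{(l)}(t) - W^{(l)}(0) = -\int_0^t \partial\mathcal{L}(\theta_s)/\partial W^{(l)}\,ds$, so it suffices to bound $\int_0^\infty \lVert \partial\mathcal{L}(\theta_s)/\partial W^{(l)}\rVert_\star\,ds$ for $\star$ the spectral, $\infty$-, and $1$-operator norms. I would first expand $\partial\mathcal{L}(\theta_s)/\partial W^{(l)}$ by the chain rule together with the factorisations from Appendix~\ref{appendix:notation}: each per-layer gradient $\partial f/\partial W^{(l)}$ and $\partial J(f)(\cdot)_{\alpha-1}/\partial W^{(l)}$ is $1/\sqrt{C_{l-1}}$ times an outer product (or sum of two outer products) of a forward activation / forward-Jacobian vector with a backpropagated-gradient vector, and $\partial\mathcal{L}(\theta_s)/\partial W^{(l)}$ is $\frac1N\sum_i$ of these outer products weighted by the residual components $f_{\theta_s}(x^{(i)})-y^{(i)}$ and $\lambda J(f_{\theta_s})(x^{(i)})_{\alpha-1}$.

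I would then bound the two factors separately. The residual vector is small thanks to the hypothesised linear convergence: stacking $\big(f_{\theta_s}(x^{(i)})-y^{(i)},\,\sqrt\lambda\,J(f_{\theta_s})(x^{(i)})_0,\ldots\big)$ into $u_s$ as in the proof of Lemma~\ref{lem:linearconvergence} gives $\lVert u_s - \mathbf{y}^{(1:N)}\rVert^2 = 2N\mathcal{L}(\theta_s) \le 2N\mathcal{L}(\theta_0)\exp(-\kappa^2\lambda\lambda_0 s/N)$, and $\mathcal{L}(\theta_0) = O(d_0)$ by the assumed bound on the outputs and their Jacobians at initialisation together with $|y^{(i)}|\le 1$. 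The outer-product factors have controlled norms because every $W^{(l')}(s)$, $l'\in[L+1]$, stays within its perturbation radius on $[0,t_0]$ — hypothesised for $l'\ne l$, and for layer $l$ itself established by the continuity argument below — so the estimates of Appendices~\ref{subsec:boundinit} and~\ref{subsec:boundperturb} apply: the forward and forward-Jacobian vectors are $O(\exp(L))\sqrt{C_{l-1}}$ in $\ell_2$ and the corresponding $\ell_\infty$ bounds from Corollary~\ref{lem:supnormbound}, while the backpropagated gradients are $\kappa\exp(O(L))$-type. Putting these together, $\lVert\partial\mathcal{L}(\theta_s)/\partial W^{(l)}\rVert_\star$ is at most $\kappa\sqrt{\mathcal{L}(\theta_0)}\,\exp(O(L))(\log d)^{O(L)}\exp(-\kappa^2\lambda\lambda_0 s/(2N))$ with the $\sqrt{C_{l-1}}$ and $\log d$ powers appropriate to the norm $\star$ and the layer; integrating $s$ over $[0,\infty)$ contributes a factor $2N/(\kappa^2\lambda\lambda_0)$, so the displacement is $O\!\big(N\sqrt{\mathcal{L}(\theta_0)}\,\exp(O(L))(\log d)^{O(L)}/(\kappa\lambda\lambda_0)\big)$ and its $\ell_\infty/\ell_1$ analogues. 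The stated lower bounds on $d$ (in particular $d\ge\Omega(\omega^{-2})$), the upper bound $\omega\le O(\lambda_0\lambda/(N\exp(\Omega(L))(\log d)^{3L}))$, and a small enough $\kappa$ are what I would use to push these quantities below $\tfrac12\omega\sqrt d$, $\tfrac12\omega\log d$, and $\tfrac12\omega\sqrt d\log d$, matching the targeted radii for $l=1$, $2\le l\le L$, and $l=L+1$.

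Finally, to close the circularity (the norm estimates I invoke for layer $l$ presuppose the very bound I want to prove for $W^{(l)}$), I would run a standard continuity/bootstrap argument: let $t^\ast\le t_0$ be the first time one of the three displacement bounds for $W^{(l)}$ holds with equality; on $[0,t^\ast]$ all $L+1$ matrices satisfy their closed radii, the estimates of the previous paragraph are valid there, and they yield displacement strictly below the radius at $t^\ast$ — contradicting the definition of $t^\ast$ unless $t^\ast=t_0$. The main obstacle, and what distinguishes this from the standard NTK argument, is the bookkeeping of the $\ell_\infty$ (max-row-sum) and $\ell_1$ (max-column-sum) operator norms of the gradient and the choice of the layer-dependent radii: they must be simultaneously preserved by the flow and strong enough to feed back into the perturbation lemmas of Appendix~\ref{subsec:boundperturb} (this is why the $\ell_1$ and $\ell_\infty$ roles swap between the first layer, which multiplies the fixed $d_0$-dimensional input, and the last, which produces the scalar output). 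A secondary subtlety is the exact $\kappa$ accounting: the per-step gradient is $O(\kappa)$ but the convergence time scales like $1/\kappa^2$, so the displacement carries a $1/\kappa$ that has to be absorbed by the width/$\omega$ budget, and ultimately by the function $G$ bounding $\kappa$ in Theorem~\ref{thm:trainingdynamics}.
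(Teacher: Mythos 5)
Your proposal matches the paper's proof essentially step for step: integrate the gradient flow, factor the per-layer gradient into forward$\times$backward outer products, bound the residual via the hypothesised linear convergence (yielding the $2N/(\kappa^2\lambda\lambda_0)$ time integral), bound the per-sample factors via the norm estimates of Appendices~\ref{subsec:boundinit}--\ref{subsec:boundperturb}, track the spectral, $\ell_\infty$, and $\ell_1$ operator norms separately with the layer-dependent radii, and break the circularity for layer~$l$ by a first-violation-time contradiction argument. The only small inaccuracy is attributing part of the final budget to "a small enough $\kappa$": within this lemma the $1/\kappa$ factor in the displacement is absorbed purely by the $d \ge \Omega(\omega^{-2})$ width assumption (with its implicit dependence on $\kappa$, $N$, $L$), while the smallness of $\kappa$ is only used upstream (Remark~\ref{rmk:InitCloseToZero}) to supply the hypothesis $\mathcal{L}(\theta_0) \le 2(1+d_0)$ that you already assume here.
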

\begin{proof}
  We will first prove the bound for the Frobenius norm.

  Let $C_0, C_1, C_2$ be some constant. 
  \begin{align*}
    &\left\| W^{(l)}(t_0) - W^{(l)}(0) \right\|
    \\
    = &\left\| \int_0^{t_0} \frac{d W^{(l)}(t)}{dt} dt\right\|
    \\
    = &\left\| \int_0^{t_0} \frac{\partial \mathcal{L}(\theta_t)}{\partial W^{(l)}(t)} dt \right\|
    \\
    \le &\int_0^{t_0} \Bigg\|\frac{1}{N} \sum_{i=1}^N \Bigg((f_{d,\theta_t}(x^{(i)}) - y^{(i)}) \frac{\partial f_{d,\theta_t}(x^{(i)})}{\partial W^{(l)}(t)} 
    \\
    &\qquad \qquad \qquad + \lambda \sum_{\alpha=1}^{d_0} J(f_{d,\theta_t})(x^{(i)})_{\alpha-1} \frac{\partial J(f_{d,\theta_t})(x^{(i)})_{\alpha-1}}{\partial W^{(l)}(t)} \Bigg) \Bigg\| dt
  \end{align*}
  We can define matrix $M \in \R^{(d_{l-1} d_l) \times N(d_0+1)}$ by stacking all flattened gradients w.r.t. matrix $W^{(l)}$.
  Then the Frobenius norm can be rewritten and bounded as 
  \begin{align*}
    &\frac{1}{N}\left\| M \mathbf{y}^{(1:N)}\right\| \le \frac{1}{N} \|M\| \|\mathbf{y}^{(1:N)}\| \le \frac{\|\mathbf{y}^{(1:N)}\|}{N} \sqrt{\sum_{k=1}^{N(1+d_0)} \|M_{-, k}\|^2} 
    \\
    \le &\frac{\|\mathbf{y}^{(1:N)}\|}{N} \sqrt{N(1+d_0) \max_k \|M_{-,k}\|^2} = \frac{\sqrt{1 + d_0}}{\sqrt{N}} \max_k \|M_{-,i}\| \|\mathbf{y}^{(1:N)}\|
  \end{align*}
  and the last term has the following form when written with the original notations,
  \[
    \sqrt{\frac{1 + d_0}{N}} \max_{\substack{i \in [N] \\ \alpha \in [d_0]}} \left\{ \left\| \frac{\partial f_{d,\theta_t}(x^{(i)}, \theta_t)}{\partial W^{(l)}(t)} \right\|, \left\| \frac{\partial J(f_{d,\theta_t})(x^{(i)})_{\alpha-1}}{\partial W^{(l)}(t)} \right\| \right\} \sqrt{\mathcal{L}(\theta_t)}
  \]
  and using $\lambda < 1$, we can finally bound this by
  \[
    \sqrt{1 + d_0} \max_{\substack{i \in [N] \\ \alpha \in [d_0]}} \left\{ \left\| \frac{\partial f_{d,\theta_t}(x^{(i)})}{\partial W^{(l)}(t)} \right\|, \left\| \frac{\partial J(f_{d,\theta_t})(x^{(i)})_{\alpha-1}}{\partial W^{(l)}(t)} \right\| \right\} \sqrt{\mathcal{L}(
      \theta_t)}.
  \]
  Then we can bound the integral using this as
  \begin{align}
    \le & \sqrt{1 + d_0} \int_0^{t_0} \max_{i,\alpha} \left\{ \left\| \frac{\partial f_{d,\theta_t}(x^{(i)})}{\partial W^{(l)}(t)} \right\|, \left\| \frac{\partial J(f_{d,\theta_t})(x^{(i)})_{\alpha-1}}{\partial W^{(l)}(t)} \right\| \right\} \sqrt{\mathcal{L}(\theta_t)} dt \nonumber
    \\
    \le &\sqrt{1 + d_0} \max_{\substack{i \in [N] \\ 0 \le \tau \le t_0}} \left\{ \left\| \frac{\partial f_{d,\theta_\tau}(x^{(i)})}{\partial W^{(l)}(\tau)} \right\|, \left\| \frac{\partial J(f_{d,\theta_\tau})(x^{(i)})_{\alpha-1}}{\partial W^{(l)}(\tau)} \right\| \right\} \int_0^{t_0} \sqrt{\mathcal{L}(\theta_t)} dt \nonumber
    \\
    \le &\sqrt{1 + d_0} \max_{\substack{i \in [N] \\ 0 \le \tau \le t_0}} \left\{ \left\| \frac{\partial f_{d,\theta_\tau}(x^{(i)})}{\partial W^{(l)}(\tau)} \right\|, \left\| \frac{\partial J(f_{d,\theta_\tau})(x^{(i)})_{\alpha-1}}{\partial W^{(l)}(\tau)} \right\| \right\} \int_0^{t_0} \sqrt{\mathcal{L}(\theta_0)} \exp \left( -\frac{\kappa^2 \lambda \lambda_0 t}{2N} \right) dt \nonumber
    \\
    \le &\frac{2\sqrt{(1 + d_0) \mathcal{L}(\theta_0)}}{N \kappa^2 \lambda \lambda_0} \max_{\substack{i \in [N] \\ 0 \le \tau \le t_0}} \left\{ \left\| \frac{\partial f_{d,\theta_\tau}(x^{(i)})}{\partial W^{(l)}(\tau)} \right\|, \left\| \frac{\partial J(f_{d,\theta_\tau})(x^{(i)})_{\alpha-1}}{\partial W^{(l)}(\tau)} \right\| \right\} \label{eqn:boundformatrix}
  \end{align}
  so it is enough to bound each term at maximum.
  To use our previous analysis, let's instead bound these terms:
  \begin{align*}
    \left\| \frac{\partial f_{d,\theta_\tau}(x^{(i)})}{\partial W^{(l)}(\tau)} \right\| &\le \left\| \frac{\partial f_{d,\theta_\tau}(x^{(i)})}{\partial W^{(l)}(0)} \right\| + \left\| \frac{\partial f_{d,\theta_\tau}(x^{(i)})}{\partial W^{(l)}(\tau)} - \frac{\partial f_{d,\theta_0}(x^{(i)})}{\partial W^{(l)}(0)} \right\|,
    \\
    \left\| \frac{\partial J(f_{d,\theta_\tau})(x^{(i)})_{\alpha-1}}{\partial W^{(l)}(\tau)} \right\| &\le \left\| \frac{\partial J(f_{d,\theta_\tau})(x^{(i)})_{\alpha-1}}{\partial W^{(l)}(0)} \right\| 
    \\
    &\qquad + \left\|\frac{\partial J(f_{d,\theta_\tau})(x^{(i)})_{\alpha-1}}{\partial W^{(l)}(\tau)} - \frac{\partial J(f_{d,\theta_0})(x^{(i)})_{\alpha-1}}{\partial W^{(l)}(0)} \right\|.
  \end{align*}
  For the gradients at initialisation, we can bound them as
  \begin{align*}
    \left\|\frac{\partial f_{d,\theta_\tau}(x^{(i)})}{\partial W^{(l)}(0)}  \right\| &\le \frac{1}{\sqrt{C_{l-1}}}\left\| h_0^{(l-1)}(x^{(i)}) \right\| \left\| \frac{\partial f_{d,\theta_0}(x^{(i)})}{\partial g_0^{(l)}(x^{(i)})} \right\| \le \kappa O(\exp(L)).
  \end{align*}
  from Lemma~\ref{lem:ForwardNormInit} and \ref{lem:BackwardNormInit}.
  Similarly, we can bound the gradient of the directional derivative as
  \begin{align}
    &\left\| \frac{\partial J(f_{d,\theta_0})(x^{(i)})_{\alpha-1}}{\partial W^{(l)}(0)} \right\| \nonumber
    \\
    \le &\frac{1}{\sqrt{C_{l-1}}} \left\| \frac{\partial h_0^{(l-1)}(x^{(i)})}{\partial x_{\alpha-1}^{(i)}} \right\| \left\| \frac{\partial f_{d,\theta_0}(x^{(i)})}{\partial g_0^{(l)}(x^{(i)})} \right\| \nonumber 
    \\
    &\qquad + \frac{1}{\sqrt{C_{l-1}}} \left\| h_0^{(l-1)}(x^{(i)}) \right\| \left\| \frac{\partial J(f_{d,\theta_0})(x^{(i)})_{\alpha-1}}{\partial g_0^{(l)}(x^{(i)})} \right\| \nonumber
    \\
    &\le \kappa \exp(O(L)), \label{eqn:jacobbound}
  \end{align}
  using Lemma~\ref{lem:ForwardJacobNormInit} and \ref{lem:BackwardJacobNormInit}.

  We can do similar analysis for the $p$-norm for $p = 1, \infty$, as
  \begin{align*}
    &\left\| W^{(l)}(t_0) - W^{(l)}(0) \right\|_p
    \\
    \le &\int_0^{t_0} \Bigg\| \frac{1}{N} \sum_{i=1}^N \Bigg((f_{d,\theta_t}(x^{(i)}) - y^{(i)}) \frac{\partial f_{d,\theta_t}(x^{(i)})}{\partial W^{(l)}(t)} 
    \\
    &\qquad \qquad \qquad + \lambda \sum_{\alpha=1}^{d_0} J(f_{d,\theta_t})(x^{(i)})_{\alpha-1} \frac{\partial J(f_{d,\theta_t})(x^{(i)})_{\alpha-1}}{\partial W^{(l)}(t)} \Bigg) \Bigg\|_p dt,
  \end{align*}
  and applying the following inequality
  \begin{align*}
    &\Bigg\| \frac{1}{N} \sum_{i=1}^N \Bigg((f_{d,\theta_t}(x^{(i)}) - y^{(i)}) \frac{\partial f_{d,\theta_t}(x^{(i)})}{\partial W^{(l)}(t)} 
    \\
    &\qquad \qquad \quad + \lambda \sum_{\alpha=1}^{d_0} J(f_{d,\theta_t})(x^{(i)})_{\alpha-1} \frac{\partial J(f_{d,\theta_t})(x^{(i)})_{\alpha-1}}{\partial W^{(l)}(t)} \Bigg) \Bigg\|_p
    \\
    \le &\frac{1}{N} \sum_{i=1}^N \Bigg( | f_{d,\theta_t}(x^{(i)}) - y^{(i)} | \left\| \frac{\partial f_{d,\theta_t}(x^{(i)})}{\partial W^{(l)}(t)} \right\|_p 
    \\
    &\qquad \qquad + \lambda \sum_{\alpha=1}^{d_0} |J(f_{d,\theta_t})(x^{(i)})_{\alpha-1}| \left\| \frac{\partial J(f_{d,\theta_t})(x^{(i)})_{\alpha-1}}{\partial W^{(l)}(t)} \right\|_p \Bigg)
    \\
    \le &\sqrt{\mathcal{L}(\theta_t)} \max_{\substack{i \in [N] \\ \alpha \in [d_0]}} \left\{ \left\| \frac{\partial f_{d,\theta_t}(x^{(i)})}{\partial W^{(l)}(t)} \right\|_p, \left\| \frac{\partial J(f_{d,\theta_t})(x^{(i)})_{\alpha-1}}{\partial W^{(l)}(t)} \right\|_p \right\}.
  \end{align*}

  Again, we will decompose these matrix $p$-norms with triangle inequality, which requires us to bound the following four quantities, 
  \begin{align*}
    \left\| \frac{\partial f_{d,\theta_0}(x^{(i)})}{\partial W^{(l)}(0)} \right\|_p, & \left\| \frac{\partial f_{d,\theta_\tau}(x^{(i)})}{\partial W^{(l)}(\tau)} - \frac{\partial f_{d,\theta_0}(x^{(i)})}{\partial W^{(l)}(0)} \right\|_p,
    \\
    \left\| \frac{\partial J(f_{d,\theta_0})(x^{(i)})_{\alpha-1}}{\partial W^{(l)}(0)} \right\|_p, & \left\| \frac{\partial J(f_{d,\theta_\tau})(x^{(i)})_{\alpha-1}}{\partial W^{(l)}(\tau)} - \frac{\partial J(f_{d,\theta_0})(x^{(i)})_{\alpha-1}}{\partial W^{(l)}(0)} \right\|_p.
  \end{align*}
  This time, the bounds need to consider the corner cases, $l = L+1$ for $p = 1$ and $l = 1$ for $p = \infty$.
  We will first bound the bounds at initialisations for $p = 1$, for $1 \le l \le L$,
  \begin{align*}
    \left\| \frac{\partial f_{d,\theta_0}(x^{(i)})}{\partial W^{(l)}(0)} \right\|_1 
    &\le \frac{1}{\sqrt{C_{l-1}}} \left\| \frac{\partial f_{d,\theta_0}(x^{(i)})}{\partial \go{l}} \right\|_1 \left\| \ho{l-1}\right\|_\infty
    \\
    &\le \left\| \frac{\partial f_{d,\theta_0}(x^{(i)})}{\partial \go{l}} \right\| \left\| \ho{l-1}\right\|_\infty
    \\
    &\le \kappa \sqrt{\log d} \exp(O(L)),
    \\
    \left\| \frac{\partial f_{d,\theta_0}(x^{(i)})}{\partial W^{(L+1)}(0)} \right\|_1
    &\le \frac{1}{\sqrt{d}} \left\| \frac{\partial f_{d,\theta_0}(x^{(i)})}{\partial \go{L+1}} \right\| \left\| \ho{L} \right\|_\infty
    \\
    &\le \kappa \exp(O(L)) \sqrt{\frac{\log d}{d}}.
  \end{align*}
  The $p = \infty$ case is similar, for $2 \le l \le L+1$,
  \begin{align*}
    \left\|\frac{\partial f_{d,\theta_0}(x^{(i)})}{\partial W^{(l)}(0)} \right\|_\infty
    &= \left\| \frac{1}{\sqrt{C_{l-1}}}\frac{\partial f_{d,\theta_0}(x^{(i)})}{\partial \go{l}} \left(\ho{l-1}\right)^\intercal \right\|_\infty
    \\
    &\le \frac{1}{\sqrt{C_{l-1}}} \left\| \frac{\partial f_{d,\theta_0}(x^{(i)})}{\partial \go{l}} \right\|_\infty \left\| \ho{l-1} \right\|_1
    \\
    &\le \left\| \frac{\partial f_{d,\theta_0}(x^{(i)})}{\partial \go{l}} \right\|_\infty \left\| \ho{l-1} \right\|
    \\
    &\le 2\sqrt{d} \cdot \frac{\kappa \sqrt{\log d}}{\sqrt{d}} O(\exp(L))
    \\
    &\le \kappa \sqrt{\log d} \exp(O(L)),
    \\
    \left\| \frac{\partial f_{d,\theta_0}(x^{(i)})}{\partial W^{(1)}(0)} \right\|_\infty
    &\le \left\| \frac{\partial f_{d,\theta_0}(x^{(i)})}{\partial \go{1}} \right\|_\infty \left\| x^{(i)} \right\|_1
    \\
    &\le \kappa \sqrt{\frac{\log d}{d}} \exp(O(L)).
  \end{align*}
  and similarly, the gradient of Jacobians w.r.t weights can be bounded as
  \begin{align*}
    &\left\|\frac{\partial J(f_{d,\theta_0})(x^{(i)})_{\alpha-1}}{\partial W^{(l)}(0)} \right\|_\infty
    \\
    \le &\kappa O((\log d)^{3/2} \exp(L)).
  \end{align*}

  For the second term, we will apply a contradiction argument.
  To apply Theorem~\ref{thm:JacobianNTKTraining}, we require the $W^{(l)}$ to be close to its initialisation also, but the assumptions do not contain it.
  To address this problem, let's define the first violation time $t'$ as
  \[
    t' \defeq \argmin_{0 \le t \le t_0} \left\{ \begin{aligned}
      \left\| W^{(l)}(t) - W^{(l)}(0) \right\| &\ge \omega \sqrt{d},
      \\
      \left\|W^{(l)}(t) - W^{(l)}(0)\right\|_\infty &\ge \begin{cases} \infty & \text{ if } l = L+1,\\\omega \sqrt{d} \log d & \text{ if } 2 \le l \le L, \\ \omega \log d & \text{ if }l = 1, \end{cases}
      \\
      \left\|W^{(l)}(t) - W^{(l)}(0)\right\|_1 &\ge \begin{cases} \infty & \text{ if }l = 1,\\ \omega \sqrt{d} \log d & \text{ if } 2 \le l \le L, \\ \omega \log d & \text{ if }l = L+1, \end{cases}
    \end{aligned} \right\}.
  \]
  Let's suppose that the first condition is violated.

  Then we can see that for all times $0 \le \tau < t'$, all weights stay close to their initialisation: for all $l \in [L+1]$, 
  \begin{align*}
    \left\| W^{(l)}(t) - W^{(l)}(0) \right\| &\le \omega \sqrt{d},
    \\
    \left\|W^{(l)}(t) - W^{(l)}(0)\right\|_\infty &\le \begin{cases} \omega \sqrt{d} \log d & \text{ if } l \ge 2, \\ \omega \log d & \text{ if }l = 1, \end{cases}
    \\
    \left\|W^{(l)}(t) - W^{(l)}(0)\right\|_1 &\le \begin{cases} \omega \sqrt{d} \log d & \text{ if } l \ge L, \\ \omega \log d & \text{ if }l = L+1. \end{cases}
  \end{align*}
  By Lemma~\ref{lem:perturbact}, \ref{lem:perturbback}, \ref{lem:perturbjacobian} \ref{lem:perturbjacobback}, we can show that for all $i \in [N]$, $\alpha \in [d_0]$, and $l \in [L+1]$,
  \begin{align*}
    &\left\|\frac{\partial f_{d,\theta_{t'}}(x^{(i)})}{\partial W^{(l)}(t')} - \frac{\partial f_{d,\theta_0}(x^{(i)})}{\partial W^{(l)}(0)}\right\|
    \\
    \le &\left\|\frac{1}{\sqrt{C_{l-1}}} \left(\frac{\partial f_{d,\theta_{t'}}(x^{(i)}) }{\partial h_{t'}^{(l)}(x^{(i)})} - \frac{\partial f_{d,\theta_{0}}(x^{(i)}) }{\partial \ho{l}(x^{(i)})}\right) \left( g_{t'}^{(l-1)}(x^{(i)}) \right)^\intercal \right\| 
    \\
    &\quad + \left\|\frac{1}{\sqrt{C_{l-1}}} \frac{\partial f_{d,\theta_{0}}(x^{(i)}) }{\partial \ho{l}(x^{(i)})} \left( g_{t'}^{(l-1)}(x^{(i)}) - \go{l-1}(x^{(i)}) \right)^\intercal \right\|
    \\
    \le & \frac{1}{\sqrt{C_{l-1}}}\left\| \frac{\partial f_{d,\theta_{t'}}(x^{(i)}) }{\partial h_{t'}^{(l)}(x^{(i)})} - \frac{\partial f_{d,\theta_{0}}(x^{(i)}) }{\partial \ho{l}(x^{(i)})}\right\| \left\| g_{t'}^{(l-1)}(x^{(i)}) \right\| 
    \\
    &\quad + \frac{1}{\sqrt{C_{l-1}}}\left\| \frac{\partial f_{d,\theta_{0}}(x^{(i)}) }{\partial \ho{l}(x^{(i)})} \right\| \left\| g_{t'}^{(l-1)}(x^{(i)}) - \go{l-1}(x^{(i)})  \right\|
    \\
    \le &\kappa \omega \exp(O(L)),
    \\
    &\left\|\frac{\partial J(f_{d,\theta_{t'}})(x^{(i)})_{\alpha-1}}{\partial W^{(l)}(t')} - \frac{\partial J(f_{d,\theta_0})(x^{(i)})_{\alpha-1}}{\partial W^{(l)}(0)}\right\|
    \\
    \le &\frac{1}{\sqrt{C_{l-1}}}\left\|  \frac{\partial J(f_{d,\theta_{t'}})(x^{(i)})_{\alpha-1}}{\partial h_{t'}^{(l)}(x^{(i)})} \left( g_{t'}^{(l-1)}(x^{(i)}) \right)^\intercal - \frac{\partial J(f_{d,\theta_0})(x^{(i)})_{\alpha-1}}{\partial \ho{l}(x^{(i)})} \left(\go{l-1}(x^{(i)})\right)^\intercal \right\| \\
    &+ \frac{1}{\sqrt{C_{l-1}}} \left\| \frac{\partial f_{d,\theta_{t'}}(x^{(i)})}{\partial h_{t'}^{(l)}(x^{(i)})} \left( \frac{\partial g_{t'}^{(l-1)}(x^{(i)})}{\partial x_{\alpha-1}^{(i)}} \right)^\intercal  - \frac{\partial f_{d,\theta_0}(x^{(i)})}{\partial \ho{l}(x^{(i)})} \left(  \frac{\partial \go{l-1}(x^{(i)})}{ \partial x_{\alpha-1}^{(i)} } \right)^\intercal \right\|
    \\
    \le &\kappa \omega \exp(O(L)) \log d. % FLAG
  \end{align*}

  By our assumption on $\omega$, we can show that all these terms are also $\kappa \exp(O(L))$, thereby bounding the entire term by
  \[
    \left\| W^{(l)}(t') - W^{(l)}(0) \right\| \le O\left(\frac{ \sqrt{(1 + d_0) \mathcal{L}(\theta_0)}}{N \kappa \lambda \lambda_0} \right)\exp(O(L)).
  \]
  From our assumption that all outputs of the function and their Jacobian are in $[-1, 1]$ with high probability, we can assume that $\mathcal{L}(\theta_0) \le 2 (1 + d_0)$.

  However, from our assumption of $d \ge O(1/\omega^2)$, by increasing $d$ large enough, the entire term becomes smaller than $\omega \sqrt{d}$.

  This contradicts our assumption, which is the existence of the violation point $t'$, showing that such a violation never happens.

  Therefore for every $0 \le t < t_0$,
  \[
    \left\| W^{(l)}(t) - W^{(l)}(0) \right\| \le \sqrt{d} \omega.
  \]

  Now let's suppose that the second condition is violated. We will similarly prove this as we've done for the operator norm.

  Using a similar approach as the operator norm, we obtain the following bounds for $2 \le l \le L$,
  \begin{align*}
    &\left\| \frac{\partial f_{d,\theta_{t'}}(x^{(i)})}{\partial W^{(l)}(t')} - \frac{\partial f_{d,\theta_0}(x^{(i)})}{\partial W^{(l)}(0)} \right\|_\infty
    \\
    \le &\left\|\frac{1}{\sqrt{C_{l-1}}} \left(\frac{\partial f_{d,\theta_{t'}}(x^{(i)}) }{\partial h_{t'}^{(l)}(x^{(i)})} - \frac{\partial f_{d,\theta_{0}}(x^{(i)}) }{\partial \ho{l}(x^{(i)})}\right) \left( g_{t'}^{(l-1)}(x^{(i)}) \right)^\intercal \right\|_\infty 
    \\
    &\quad + \left\|\frac{1}{\sqrt{C_{l-1}}} \frac{\partial f_{d,\theta_{0}}(x^{(i)}) }{\partial \ho{l}(x^{(i)})} \left( g_{t'}^{(l-1)}(x^{(i)}) - \go{l-1}(x^{(i)}) \right)^\intercal \right\|_\infty
    \\
    \le &\frac{1}{\sqrt{C_{l-1}}} \left\| \frac{f_{d,\theta_{t'}}(x^{(i)})}{\partial h_{t'}^{(l)}(x^{(i)})} - \frac{f_{d,\theta_{0}}(x^{(i)})}{\partial h_{0}^{(l)}(x^{(i)})} \right\|_\infty \left\| g_{t'}^{(l-1)}(x^{(i)}) \right\|_1
    \\
    &\quad + \frac{1}{\sqrt{C_{l-1}}} \left\| \frac{\partial f_{d,\theta_0}(x^{(i)})}{\partial h_0^{(l)}(x^{(i)})} \right\|_\infty \left\| g_{t'}^{(l-1)}(x^{(i)}) - g_0^{(l-1)}(x^{(i)}) \right\|_1 
    \\
    \le & \kappa \omega \exp(O(L)) (\log d)^{2L} + \kappa \omega \exp(O(L)) \sqrt{\log d},
    \\
    &\left\| \frac{\partial f_{d,\theta_{t'}}(x^{(i)})}{\partial W^{(1)}(t')} - \frac{\partial f_{d,\theta_0}(x^{(i)})}{\partial W^{(1)}(0)} \right\|_\infty
    \\
    \le &\left\| \left( \frac{\partial f_{d,\theta_{t'}}(x^{(i)})}{\partial h_{t'}^{(1)}(x^{(i)})} - \frac{\partial f_{d,\theta_{0}}(x^{(i)})}{\partial h_{0}^{(1)}(x^{(i)})} \right) \left(x^{(i)}\right)^\intercal \right\|_\infty
    \\
    \le &\kappa \omega \exp(O(L)) \frac{(\log d)^{2L}}{\sqrt{d}},
    \\
    &\left\| \frac{\partial J(f_{d,\theta_{t'}})(x^{(i)})_{\alpha-1}}{\partial W^{(l)}(t')} - \frac{\partial J(f_{d,\theta_{0}})(x^{(i)})_{\alpha-1}}{\partial W^{(l)}(0)} \right\|_\infty
    \\
    \le &\frac{1}{\sqrt{C_{l-1}}} \left\| \frac{\partial f_{d,\theta_{t'}}(x^{(i)})}{\partial h_{t'}^{(l)}(x^{(i)})} \left( \frac{\partial g_{t'}^{(l-1)}(x^{(i)})}{\partial x_{\alpha-1}^{(i)}} \right)^\intercal - \frac{\partial f_{d,\theta_{0}}(x^{(i)})}{\partial h_{0}^{(l)}(x^{(i)})} \left( \frac{\partial g_{0}^{(l-1)}(x^{(i)})}{\partial x_{\alpha-1}^{(i)}} \right)^\intercal \right\|_\infty 
    \\
    &+ \frac{1}{\sqrt{C_{l-1}}}\left\| \frac{\partial J(f_{d,\theta_{t'}})(x^{(i)})_{\alpha-1}}{\partial h_{t'}^{(l)}(x^{(i)})} \left( g_{t'}^{(l-1)}(x^{(i)}) \right)^\intercal - \frac{\partial J(f_{d,\theta_{0}})(x^{(i)})_{\alpha-1}}{\partial h_{0}^{(l)}(x^{(i)})} \left( g_{0}^{(l-1)}(x^{(i)}) \right)^\intercal \right\|_\infty
    \\
    \le &\kappa \omega \exp(O(L)) (\log d)^{2L} + \kappa \omega \exp(O(L)) (\log d)^{3L+1},
    \\
    &\left\| \frac{\partial J(f_{d,\theta_{t'}})(x^{(i)})_{\alpha-1}}{\partial W^{(1)}(t')} - \frac{\partial J(f_{d,\theta_{0}})(x^{(i)})_{\alpha-1}}{\partial W^{(1)}(0)} \right\|_\infty
    \\
    \le &\left\| \left(\frac{\partial f_{d,\theta_{t'}}(x^{(i)})}{\partial h_{t'}^{(1)}(x^{(i)})} - \frac{\partial f_{d,\theta_{0}}(x^{(i)})}{\partial h_{0}^{(l)}(x^{(i)})}\right) e_{\alpha-1}^\intercal \right\|_\infty 
    \\
    &\quad + \left\| \left(\frac{\partial J(f_{d,\theta_{t'}})(x^{(i)})_{\alpha-1}}{\partial h_{t'}^{(l)}(x^{(i)})}  - \frac{\partial J(f_{d,\theta_{0}})(x^{(i)})_{\alpha-1}}{\partial h_{0}^{(l)}(x^{(i)})} \right) \left( x^{(i)} \right)^\intercal \right\|_\infty
    \\
    \le &\left\| \left(\frac{\partial f_{d,\theta_{t'}}(x^{(i)})}{\partial h_{t'}^{(1)}(x^{(i)})} - \frac{\partial f_{d,\theta_{0}}(x^{(i)})}{\partial h_{0}^{(l)}(x^{(i)})}\right)  \right\|_\infty 
    \\
    &\quad + \left\| \left(\frac{\partial J(f_{d,\theta_{t'}})(x^{(i)})_{\alpha-1}}{\partial h_{t'}^{(l)}(x^{(i)})}  - \frac{\partial J(f_{d,\theta_{0}})(x^{(i)})_{\alpha-1}}{\partial h_{0}^{(l)}(x^{(i)})} \right) \right\|_\infty
    \\
    \le &\kappa \omega \exp(O(L)) \frac{(\log d)^{3L+1}}{\sqrt{d}}.
  \end{align*}
  As we've done in the operator norm cases, we can set $\omega \le (\log d)^{-3L}$ so that the norms at time $t=0$ dominate the perturbed norm, showing that
  \begin{align*}
    \left\|W^{(l)}(t') - W^{(l)}(0) \right\|_\infty &\le O\left( \frac{d_0}{N \kappa \lambda \lambda_0} \right) \exp(O(L)) \log d,
    \\
    \left\|W^{(1)}(t') - W^{(1)}(0) \right\|_\infty &\le O\left( \frac{d_0}{N \kappa \lambda \lambda_0} \right) \exp(O(L)) \frac{\log d}{\sqrt{d}},
  \end{align*}
  and setting $d \ge \Omega(1 / \omega^2)$ makes these terms to be bounded by $\omega \sqrt{d}\log d, \omega \log d$, respectively. 
  This results in a contradiction, showing that the second condition can not be violated.

  Now we finally consider the 1-norm. For $2 \le l \le L$,
  \begin{align*}
    &\left\| \frac{\partial f_{d,\theta_{t'}}(x^{(i)})}{\partial W^{(l)}(t')} - \frac{\partial f_{d,\theta_0}(x^{(i)})}{\partial W^{(l)}(0)} \right\|_1
    \\
    \le &\left\|\frac{1}{\sqrt{C_{l-1}}} \left(\frac{\partial f_{d,\theta_{t'}}(x^{(i)}) }{\partial h_{t'}^{(l)}(x^{(i)})} - \frac{\partial f_{d,\theta_{0}}(x^{(i)}) }{\partial \ho{l}(x^{(i)})}\right) \left( g_{t'}^{(l-1)}(x^{(i)}) \right)^\intercal \right\|_1 
    \\
    &\quad + \left\|\frac{1}{\sqrt{C_{l-1}}} \frac{\partial f_{d,\theta_{0}}(x^{(i)}) }{\partial \ho{l}(x^{(i)})} \left( g_{t'}^{(l-1)}(x^{(i)}) - \go{l-1}(x^{(i)}) \right)^\intercal \right\|_1
    \\
    \le &\frac{1}{\sqrt{C_{l-1}}} \left\| \frac{f_{d,\theta_{t'}}(x^{(i)})}{\partial h_{t'}^{(l)}(x^{(i)})} - \frac{f_{d,\theta_{0}}(x^{(i)})}{\partial h_{0}^{(l)}(x^{(i)})} \right\|_1 \left\| g_{t'}^{(l-1)}(x^{(i)}) \right\|_\infty
    \\
    &\quad + \frac{1}{\sqrt{C_{l-1}}} \left\| \frac{\partial f_{d,\theta_0}(x^{(i)})}{\partial h_0^{(l)}(x^{(i)})} \right\|_1 \left\| g_{t'}^{(l-1)}(x^{(i)}) - g_0^{(l-1)}(x^{(i)}) \right\|_\infty
    \\
    \le &\kappa \omega \exp(O(L)) \log d + \kappa \omega \exp(O(L)) (\log d)^L,
    \\
    &\left\| \frac{\partial f_{d,\theta_{t'}}(x^{(i)})}{\partial W^{(L+1)}(t')} - \frac{\partial f_{d,\theta_0}(x^{(i)})}{\partial W^{(L+1)}(0)} \right\|_1
    \\
    \le &\frac{1}{\sqrt{C_{l-1}}} \left\| \frac{\partial f_{d,\theta_0}(x^{(i)})}{\partial h_0^{(L+1)}(x^{(i)})} \right\|_1\left\| g_{t'}^{(L)}(x^{(i)}) - g_0^{(L)}(x^{(i)}) \right\|_\infty
    \\
    \le &\kappa \exp(O(L)) \frac{(\log d)^L}{\sqrt{d}},
    \\
    &\left\|\frac{\partial J(f_{d,\theta_{t'}})(x^{(i)})_{\alpha-1}}{\partial W^{(l)}(t')} - \frac{\partial J(f_{d,\theta_{0}})(x^{(i)})_{\alpha-1}}{\partial W^{(l)}(0)}\right\|_1
    \\
    \le &\frac{1}{\sqrt{C_{l-1}}} \left\| \frac{\partial f_{d,\theta_{t'}}(x^{(i)})}{\partial h_{t'}^{(l)}(x^{(i)})} \left( \frac{\partial g_{t'}^{(l-1)}(x^{(i)})}{\partial x_{\alpha-1}^{(i)}} \right)^\intercal - \frac{\partial f_{d,\theta_{0}}(x^{(i)})}{\partial h_{0}^{(l)}(x^{(i)})} \left( \frac{\partial g_{0}^{(l-1)}(x^{(i)})}{\partial x_{\alpha-1}^{(i)}} \right)^\intercal \right\|_1
    \\
    &+ \frac{1}{\sqrt{C_{l-1}}}\left\| \frac{\partial J(f_{d,\theta_{t'}})(x^{(i)})_{\alpha-1}}{\partial h_{t'}^{(l)}(x^{(i)})} \left( g_{t'}^{(l-1)}(x^{(i)}) \right)^\intercal - \frac{\partial J(f_{d,\theta_{0}})(x^{(i)})_{\alpha-1}}{\partial h_{0}^{(l)}(x^{(i)})} \left( g_{0}^{(l-1)}(x^{(i)}) \right)^\intercal \right\|_1
    \\
    \le &\kappa \omega \exp(O(L)) (\log d)^{2L}+ \kappa \omega \exp(O(L)) (\log d)^{L},
    \\
    &\left\|\frac{\partial J(f_{d,\theta_{t'}})(x^{(i)})_{\alpha-1}}{\partial W^{(L+1)}(t')} - \frac{\partial J(f_{d,\theta_{0}})(x^{(i)})_{\alpha-1}}{\partial W^{(L+1)}(0)}\right\|_1
    \\
    \le &\frac{1}{\sqrt{d}} \left\| \frac{\partial f_{d,\theta_{t'}}(x^{(i)})}{\partial h_{t'}^{(L+1)}(x^{(i)})} \left( \frac{\partial g_{t'}^{(L)}(x^{(i)})}{\partial x_{\alpha-1}^{(i)}} \right)^\intercal - \frac{\partial f_{d,\theta_{0}}(x^{(i)})}{\partial h_{0}^{(L+1)}(x^{(i)})} \left( \frac{\partial g_{0}^{(L)}(x^{(i)})}{\partial x_{\alpha-1}^{(i)}} \right)^\intercal \right\|_1
    \\
    &+ \frac{1}{\sqrt{d}}\left\| \frac{\partial J(f_{d,\theta_{t'}})(x^{(i)})_{\alpha-1}}{\partial h_{t'}^{(L+1)}(x^{(i)})} \left( g_{t'}^{(L)}(x^{(i)}) \right)^\intercal - \frac{\partial J(f_{d,\theta_{0}})(x^{(i)})_{\alpha-1}}{\partial h_{0}^{(L+1)}(x^{(i)})} \left( g_{0}^{(L)}(x^{(i)}) \right)^\intercal \right\|_1
    \\
    \le &\kappa \omega \exp(O(L))\frac{(\log d)^{2L}}{\sqrt{d}} + \kappa \omega \exp(O(L)) \frac{(\log d)^L}{\sqrt{d}}.
  \end{align*}
  Again setting $\omega \le (\log d)^{-3L}$ allow us to bound the entires term by norm at $t=0$, showing that
  \begin{align*}
    \left\| W^{(l)}(t') - W^{(l)}(0) \right\|_1 &\le O\left( \frac{d_0}{N \kappa \lambda \lambda_0} \right) \exp(O(L)) \log d,
    \\
    \left\| W^{(L+1)}(t') - W^{(L+1)}(0) \right\|_1 &\le O\left( \frac{d_0}{N \kappa \lambda \lambda_0} \right) \exp(O(L)) \frac{\log d}{\sqrt{d}}.
  \end{align*}
  Setting $d \ge \Omega(1/\omega^2)$ makes these terms to be bounded by $\omega \sqrt{d} \log d, \omega \log d$, respectively. 

  All three conditions can not happen, and this proves our result.
\end{proof}

Now we are ready to prove Theorem~\ref{thm:JacobianNTKTraining}. 
We first give a detailed statement of this theorem.
\begin{theorem} \label{thm:JNTKCloseToInit2}
  Fix the fail probability $\delta$, and error $\epsilon$. 
  Suppose that Assumption~\ref{assm:NTK-smallest-eigenvalue} holds with the smallest eigenvalue $\lambda_0$.

  Suppose that the width $d$, the coefficient $\kappa$, and the threshold $\omega$ satisfy
  \begin{align*}
    \kappa &\le \frac{1}{2 \log (8N(1+d_0) / \delta)},
    \\
    d &\ge F_1\left( x^{(1:N)}, L, \delta, \frac{\epsilon}{\kappa} \right),
    \\
    d &\ge F_2\left( x^{(1:N)}, L, \delta, \epsilon \right),
    \\
    d &\ge \Omega(N d_0 \exp(L) / \delta),
    \\
    d &\ge \Omega(\omega^{-2}),
    \\
    \omega &\le O\left( \frac{\lambda_0 \lambda}{ N \exp(\Omega(L)) (\log d)^{3L}} \right),
    \\
    \omega &\le (\log d)^{-3L}
  \end{align*}
  
  Then with probability at least $1 - \delta$, for all $t \ge 0$,
  \[
    \left\| \Theta_{d, \theta_t}(x^{(i)}, x^{(j)}) - \kappa^2 \Theta(x^{(i)}, x^{(j)}) \right\|_F \le \kappa^2 \epsilon.
  \]
  for all $i, j \in [N]$.
  
  Moreover, linear convergence happens:
  \[
    \mathcal{L}(\theta_t) \le \exp \left( -\frac{\kappa^2 \lambda \lambda_0 t}{N} \right) \mathcal{L}(\theta_0).
  \]
\end{theorem}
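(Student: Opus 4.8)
The plan is to run a continuity (bootstrap) argument that closes the loop between Lemma~\ref{lem:linearconvergence} and Lemma~\ref{lem:WeightStayStill}: the first turns ``weights near initialisation up to $t_0$'' into ``loss decays linearly on $[0,t_0]$'', and the second turns ``loss decays linearly and all other layers near initialisation up to $t_0$'' back into ``this layer near initialisation on $[0,t_0]$''. First I would fix a single event of probability at least $1-\delta$ on which all of the following hold, obtained by a union bound (shrinking the hidden constants so each failure probability is a small fraction of $\delta$): (i) the event of Remark~\ref{rmk:InitCloseToZero} that $|f_{d,\theta_0}(x^{(i)})|,\,\|J(f_{d,\theta_0})(x^{(i)})\|_\infty\le 1$ for all $i\in[N]$ --- this uses $\kappa\le 1/(2\log(8N(1+d_0)/\delta))$ and $d\ge F_1$; (ii) the event of Remark~\ref{rmk:JNTKInit2} that $\|\Theta_{d,\theta_0}(x^{(i)},x^{(j)})-\kappa^2\Theta(x^{(i)},x^{(j)})\|_F$ is small, which uses $d\ge F_2$; and (iii) the high-probability events underlying Lemma~\ref{lem:linearconvergence}, Lemma~\ref{lem:WeightStayStill}, and Theorem~\ref{thm:JNTKCloseToInit}, whose hypotheses on $d$ and $\omega$ are exactly those listed in the statement. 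Everything below is conditioned on this event.

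\textbf{The bootstrap.} Next I would define the first violation time
\[
  t^* \defeq \inf\Big\{\, t\ge 0 \;:\; \text{some one of the three norm bounds on } W^{(l)}(t)-W^{(l)}(0) \text{ in Lemma~\ref{lem:WeightStayStill} fails for some } l\in[L+1] \,\Big\},
\]
with $\inf\emptyset=\infty$. Since $t\mapsto\theta_t$ solves the gradient-flow ODE of the smooth objective $\mathcal{L}$, it is continuous, and $W^{(l)}(0)-W^{(l)}(0)=0$ trivially meets every bound; hence on $[0,t^*)$ all weights satisfy the closeness bounds. Assume for contradiction that $t^*<\infty$. For any $t_0<t^*$, Lemma~\ref{lem:linearconvergence} (its hypothesis ``all weights close on $[0,t_0]$'' holds) gives $\mathcal{L}(\theta_t)\le\mathcal{L}(\theta_0)\exp(-\kappa^2\lambda\lambda_0 t/N)$ on $[0,t_0]$; then for each fixed $l$, Lemma~\ref{lem:WeightStayStill} applies (linear convergence holds, all $l'\ne l$ are close, and the initialisation bound (i) holds), and its proof in fact delivers the bounds on $W^{(l)}(t_0)-W^{(l)}(0)$ with a definite margin --- the integrated movement there is $O\!\left(d_0\exp(O(L))\log d/(N\kappa\lambda\lambda_0)\right)$, an expression not growing with $\omega$, which $d\ge\Omega(\omega^{-2})$ forces strictly below $\omega\sqrt d$ (resp.\ $\omega\sqrt d\log d$, $\omega\log d$). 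Letting $t_0\uparrow t^*$ and using continuity of the norms, all three bounds hold at $t=t^*$ with the same margin for every $l$; hence they continue to hold on a right-neighbourhood of $t^*$, contradicting the definition of $t^*$. Therefore $t^*=\infty$.

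\textbf{Conclusion.} With the weight-closeness bounds established for all $t\ge 0$, Lemma~\ref{lem:linearconvergence} yields the claimed linear convergence $\mathcal{L}(\theta_t)\le\exp(-\kappa^2\lambda\lambda_0 t/N)\mathcal{L}(\theta_0)$. For the kernel estimate, I would apply Theorem~\ref{thm:JNTKCloseToInit} with $\theta_{\mathcal{A}}=\theta_t$ --- the closeness bounds are precisely its hypothesis, and $\omega\le(\log d)^{-2L}$ together with the width lower bound are in force --- to get $\|\Theta_{d,\theta_t}(x^{(i)},x^{(j)})-\Theta_{d,\theta_0}(x^{(i)},x^{(j)})\|_F\le\kappa^2\omega\,d_0\exp(O(L))\log d$, and then combine this with (ii) by the triangle inequality. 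Taking Remark~\ref{rmk:JNTKInit2} with $\epsilon/2$ in place of $\epsilon$ and requiring in addition $\omega\,d_0\exp(O(L))\log d\le\epsilon/2$ (a constraint absorbable into the stated $\omega$-bounds by shrinking the hidden constant, which also inflates the $\epsilon$-dependence already present in $F_1,F_2$), the two contributions sum to at most $\kappa^2\epsilon$, which is the desired bound for all $t\ge 0$ and all $i,j\in[N]$.

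\textbf{Main obstacle.} The crux is the bootstrap: each of the two lemmas is on its own vacuous because it assumes the other's conclusion. Making the loop close rests on three points that I would be careful about: (a) the violation time must be defined by closed inequalities so that the bounds, known on $[0,t^*)$, propagate to $t^*$ itself by continuity of the gradient flow; (b) there must be genuine slack at $t^*$, which is precisely why the extra width requirement $d\ge\Omega(\omega^{-2})$ appears --- it pushes the $\omega$-independent integrated-movement bound of Lemma~\ref{lem:WeightStayStill} strictly below the $\omega\sqrt d$-type thresholds; and (c) the argument must be carried out simultaneously over all $L+1$ layers and all three matrix norms ($\|\cdot\|_2$, $\|\cdot\|_1$, $\|\cdot\|_\infty$), the $1$- and $\infty$-norm bookkeeping being the new ingredient relative to the standard NTK constancy proof. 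The analytic heavy lifting --- initialisation concentration and the perturbation estimates on activations and backpropagated gradients --- is already encapsulated in Theorem~\ref{thm:JNTKCloseToInit} and the two lemmas.
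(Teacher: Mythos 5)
Your proposal is correct and follows essentially the same route as the paper: the same bootstrap/contradiction over a first violation time that closes the loop between Lemma~\ref{lem:linearconvergence} and Lemma~\ref{lem:WeightStayStill}, followed by applying the perturbation bound of Theorem~\ref{thm:JNTKCloseToInit} together with the initialisation estimate (Remark~\ref{rmk:JNTKInit2}) via the triangle inequality. Your treatment of the margin at the violation time and the $\epsilon/2$ splitting is, if anything, slightly more explicit than the paper's own argument.
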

\begin{proof}
  From the assumption of $\kappa$, we can show that 
  \[
    P\left( \forall i \in [N]. \left\|g(x^{(i)})\right\|_\infty \le 1 \right) \ge 1 - \frac{\delta}{4}
  \]
  where $g \sim \cN(0, \kappa^2 \Sigma^{(L)})$, and together with the first assumption on $d$, the network outputs and their Jacobian are bounded by 1 at initialisation with probability at least $1 - \delta/2$.
  
  We will first prove that the linear convergence and the weights stay close to their initialisation.
  This can be proven by contradiction argument with Lemma~\ref{lem:WeightStayStill} and \ref{lem:linearconvergence}.

  Suppose that either linear convergence does not happen,
  \[
    \mathcal{L}(\theta_t) > \mathcal{L}(\theta_0) \exp \left( - \frac{\kappa^2 \lambda \lambda_0 t}{N} \right)
  \]
  or one of the weights becomes far from its initialisation,
  \begin{align*}
    \left\| W^{(l)}(t) - W^{(l)}(0) \right\| &> \omega \sqrt{d},
    \\
    \left\|W^{(l)}(t) - W^{(l)}(0)\right\|_\infty &> \begin{cases} \omega \sqrt{d} \log d & \text{ if } l \ge 2, \\ \omega \log d & \text{ if }l = 1, \end{cases}
    \\
    \left\|W^{(l)}(t) - W^{(l)}(0)\right\|_1 &> \begin{cases} \omega \sqrt{d} \log d & \text{ if } l \ge L, \\ \omega \log d & \text{ if }l = L+1. \end{cases}
  \end{align*}
  Write $\tau$ as the first violation time that this happens, 
  \[
    \tau = \argmin_{0 \le t} \left\{t : \text{Linear Convergence fail or weight is far from init.}\right\}.
  \]
  Then by definition, the linear convergence is satisfied, and the weights are close to its initialisation for all $0 \le t < \tau$.
  Now Lemma~\ref{lem:linearconvergence} shows that at time $\tau$, the linear convergence happens, and similarly, Lemma~\ref{lem:WeightStayStill} show that the weights at time $\tau$ also stay close to its initialisation.
  This is a contradiction, which shows that our assumption, of the existence of a violation point is false.

  So now we can assume that
  \begin{align*}
    \left\| W^{(l)}(t) - W^{(l)}(0) \right\| &\le \omega \sqrt{d},
    \\
    \left\|W^{(l)}(t) - W^{(l)}(0)\right\|_\infty &\le \begin{cases} \omega \sqrt{d} \log d & \text{ if } l \ge 2, \\ \omega \log d & \text{ if }l = 1, \end{cases}
    \\
    \left\|W^{(l)}(t) - W^{(l)}(0)\right\|_1 &\le \begin{cases} \omega \sqrt{d} \log d & \text{ if } l \ge L, \\ \omega \log d & \text{ if }l = L+1. \end{cases}
  \end{align*}
  for all $l \in [L+1]$ and $t \ge 0$.
  Now by Theorem~\ref{thm:JNTKCloseToInit2}, we can show that 
  \[
    \left\| \Theta_{d, \theta_t}(x^{(i)}, x^{(j)}) - \kappa^2 \Theta(x^{(i)}, x^{(j)}) \right\|_F \le \kappa^2 \epsilon
  \]
  for all time $t \ge 0$, $i, j \in [N]$.
\end{proof}

\section{Proof of Theorem~\ref{thm:trainingdynamics}}
\label{sec:ProofTrainingDynamics}

Before presenting the proof, we need several additional definitions to be used in the proof.
We define the feature function of limiting JNTK $\phi(x)$, which satisfies
\[
  \phi(x)^\intercal \phi(x') = \Theta(x, x')
\]
with $\phi_0(x), \ldots, \phi_{d_0}(t)$ the columns of this feature function.

We then define the limiting JNTK regressor as
\[
  \fntk(x; \vartheta) \defeq \kappa \Lambda \phi(x)^\intercal \vartheta \in \R^{d_0 + 1}
\]
for trainable $\vartheta$, and define $\untk(\vartheta) \in \R^{N(1+d_0)}$ by stacking $\fntk(x^{(i)}; \vartheta)$ for $i \in [N]$.

If we consider the training of $\vartheta$ under the same objective (1), which is reformulated as
\[
  \mathcal{L}(\vartheta) \defeq \frac{1}{2N} \left\| \unn(\vartheta) - \mathbf{y} \right\|_2^2
\]
when initialised with $\vartheta_0 = 0$, the solution is given by
\[
  \vartheta_\infty = \argmin_{\vartheta} \left\|\vartheta\right\|_2^2, \text{ such that } \unn(\vartheta) = \mathbf{y}.
\]
Note that this optimal parameter $\vartheta_\infty$ gives the solution $\fntk$ defined (5).

We also define $\fnn$ similarly,
\[
  \fnn(x; \theta) \defeq \left( f_{d,\theta}(x; \theta), \sqrt{\lambda} J(f_{d,\theta})(x; \theta)_0, \ldots, \sqrt{\lambda} J(f_{d,\theta})(x; \theta)_{d_0-1} \right)^\intercal \in \R^{1+d_0}
\]
and $\unn(\theta) \in \R^{N(d_0+1)}$ by stacking $\fnn(x^{(i)}; \theta)$ for $i \in [N]$.

Now to track the dynamics of the limiting JNTK regressor, we will take the time derivative.
\begin{align*}
  \frac{d \fntk(x; \vartheta_t)}{dt} &= \left( \frac{\partial \fntk(x; \vartheta_t)}{\partial \vartheta_t} \right)^\intercal \left( \frac{d \vartheta_t}{dt} \right)
  \\
  &= \left\langle \frac{\partial \fntk(x; \vartheta)}{\partial \vartheta_t}, -\frac{\partial \mathcal{L}(\vartheta_t)}{\partial \vartheta_t}\right\rangle
  \\
  &= \left\langle \kappa \phi(t), - \frac{\partial \mathcal{L}(\vartheta_t)}{\partial \vartheta_t}\right\rangle
  \\
  &= - \frac{1}{N} \left\langle \kappa \phi(t), \sum_{i=1}^N  \left(\fntk(x^{(i)}; \vartheta_t)  - \mathbf{y}^{(i)}\right)^\intercal \frac{\partial \fntk(x^{(i)}; \vartheta_t)}{\partial \vartheta_t} \right\rangle
  \\
  &= - \frac{1}{N} \left\langle\kappa \phi(t), \sum_{i=1}^N  \left(\fntk(x^{(i)}; \vartheta_t)- \mathbf{y}^{(i)}\right)^\intercal \kappa \phi(x^{(i)}) \right\rangle
  \\
  &= - \frac{\kappa^2}{N} \sum_{i=1}^N \Theta_\lambda(x, x^{(i)}) \left( \fntk(x^{(i)}; \vartheta_t) - \mathbf{y}^{(i)} \right).
\end{align*}
Then we can see that these dynamics are nearly identical to NN's dynamics, except that they use different kernels.
\begin{align*}
  \frac{d \fnn(x; \theta_t)}{dt} &= \left( \frac{\partial \fnn(x; \theta_t)}{\partial \theta_t} \right)^\intercal \left( \frac{d \theta_t}{dt} \right)
  \\
  &= \left\langle \frac{\partial \fnn(x; \theta)}{\partial \theta_t}, -\frac{\partial \mathcal{L}(\theta_t)}{\partial \theta_t}\right\rangle
  \\
  &= - \frac{1}{N} \left\langle \frac{\partial \fnn(x; \theta)}{\partial \theta_t}, \sum_{i=1}^N  \left(\fnn(x^{(i)}; \vartheta_t) - y^{(i)}\right)^\intercal \frac{\partial \fnn(x^{(i)}; \theta_t)}{\partial \theta_t} \right\rangle
  \\
  &= - \frac{1}{N} \sum_{i=1}^N \Lambda \Theta_{d, \theta_t}(x, x^{(i)}) \Lambda \left( \fnn(x^{(i)}; \theta_t)  - \mathbf{y}^{(i)} \right).
\end{align*}
So to analyse the difference between limiting JNTK regressor and NN, we can bound them using the integral form
\begin{align*}
  &\left| f_{d,\theta_\infty}(x^*) - \fntk(x^*) \right|
  \\
  \le &\left| f_{d,\theta_0}(x^*) - \fntk(x^*; \vartheta_0)_0\right| + \left| \int_0^\infty \left( \frac{d \fnn(x^*; \theta_t)_0}{dt} - \frac{d \fntk(x^*; \vartheta_t)_0}{dt} \right) \right|
\end{align*}
First, since we initialised $\vartheta_0 = 0$, the first term is $|f_{d,\theta_0}(x^*)|$ which is bounded by $\epsilon_0$ with our assumption, so it is enough to show that the integral term is smaller than $\epsilon_0$.

Using our previous expansion of time derivative, we can rewrite the integral as
\begin{align*}
  &\left| \int_0^\infty \left( \frac{d \fnn(x^*; \theta_t)_0}{dt} - \frac{d \fntk(x^*; \vartheta_t)_0}{dt} \right) \right|
  \\
  = &\bigg| - \frac{1}{N} \sum_{i=1}^N \int_0^\infty \bigg(\Lambda \Theta_{d,\theta_t}(x^*, x^{(i)}) \Lambda \left( \fnn(x^{(i)}; \theta_t) - \mathbf{y}^{(i)} \right) 
  \\
  &\qquad \qquad \quad \quad - \kappa^2 \Theta_\lambda(x^*, x^{(i)}) \left( \fntk(x^{(i)}; \vartheta_t) - \mathbf{y}^{(i)} \right)\bigg)_0 dt \bigg|
  \\
  \le &\frac{1}{N} \left| \sum_{i=1}^N \int_0^\infty \left( \left(\Lambda \Theta_{d,\theta_t}(x^*, x^{(i)}) \Lambda - \kappa^2 \Theta_\lambda(x^*, x^{(i)})\right) \left( \fnn(x^{(i)}; \theta_t)- \mathbf{y}^{(i)} \right) \right)_0 dt \right|
  \\
  &\quad + \frac{1}{N} \left| \sum_{i=1}^N \int_0^\infty \left(\kappa^2 \Theta_\lambda(x^*, x^{(i)})\left( \fnn(x^{(i)}; \theta_t) - \fntk(x^{(i)}; \vartheta_t) \right) \right)_0 dt \right|.
\end{align*}
For the first term, we can bound it through 
\begin{align*}
  &\frac{1}{N} \left| \sum_{i=1}^N \int_0^\infty \left( \left(\Lambda \Theta_{d,\theta_t}(x^*, x^{(i)}) \Lambda - \kappa^2 \Theta_\lambda(x^*, x^{(i)})\right) \left( \fnn(x^{(i)}; \theta_t) - \mathbf{y}^{(i)} \right) \right)_0 dt \right|
  \\
  \le &\frac{1}{N} \max_{\tau \ge 0} \left( \sum_{i=1}^N \left\| \Lambda \Theta_{d,\theta_\tau}(x^*, x^{(i)}) \Lambda - \kappa^2 \Theta_\lambda (x^*, x^{(i)}) \right\|_F \right) \int_0^\infty \left\| \unn(\theta_t) - \mathbf{y} \right\| dt.
\end{align*}
By Theorem~\ref{thm:JNTKCloseToInit2}, we can let $d$ large enough so that 
\[
  \left\| \Lambda \Theta_{d,\theta_\tau}(x^*, x^{(i)}) \Lambda - \kappa^2 \Theta_\lambda (x^*, x^{(i)}) \right\|_F \le \kappa^2 \epsilon_\Theta
\]
for all $\tau \ge 0$ and $i \in [N]$, for some $\epsilon_\Theta$ that will be choosen later.
And from the same Theorem, we can also show that the integral term is bounded, as follows:
\begin{align*}
  \int_0^\infty \left\| \unn(\theta_t) - \mathbf{y} \right\| dt 
  &= \int_0^\infty \sqrt{2N \mathcal{L}(\theta_t)} dt
  \\
  &\le \sqrt{2N \mathcal{L}(\theta_0)} \int_0^\infty \exp\left(- \frac{\kappa^2 \lambda \lambda_0 t}{2N}\right) dt
  \\
  &\le \frac{\sqrt{8N^3 \mathcal{L}(\theta_0)}}{\kappa^2 \lambda \lambda_0}.
\end{align*}
From Remark~\ref{rmk:InitCloseToZero}, we can assume that all function outputs and their Jacobian are in $[-1, 1]$, which allows us to bound $\sqrt{\mathcal{L}(\theta_0)} \le \sqrt{2 (1+d_0)}$.

For the second term, we will first divide this integral into two intervals $[0, t_0]$ and $[t_0, \infty]$,
\begin{align*}
  &\frac{1}{N} \left| \left(\sum_{i=1}^N \int_0^\infty \kappa^2 \Theta_\lambda(x^*, x^{(i)}) \left( \fnn(x^{(i)}; \theta_t) - \fntk(x^{(i)}; \vartheta_t) \right) \right)_0 dt \right|
  \\
  \le &\frac{1}{N} \left(\sum_{i=1}^N \left\| \kappa^2 \Theta_\lambda(x^*, x^{(i)}) \right\|_F \right) \int_0^\infty \left\| \unn(\theta_t) - \untk(\vartheta_t) \right\| dt
  \\
  = &\frac{1}{N} \left(\sum_{i=1}^N \left\| \kappa^2 \Theta_\lambda(x^*, x^{(i)}) \right\|_F \right) \left(\int_{t_0}^\infty \left\| \unn(\theta_t) - \untk(\vartheta_t) \right\| dt + \int_{0}^{t_0} \left\| \unn(\theta_t) - \untk(\vartheta_t) \right\| dt\right)
\end{align*}
for some $t_0 \ge 0$, so to prove that this term converges to zero, we should show that both integrals converge to zero.

The integral from $t_0$ to $\infty$ can be bounded with linear convergence of both predictors:
\begin{align*}
  &\int_{t_0}^\infty \left\| \unn(\theta_t) - \untk(\vartheta_t) \right\| dt
  \\
  \le &\int_{t_0}^\infty \left\| \unn(\theta_t) - \mathbf{y} \right\| + \left\|\untk(\vartheta_t) - \mathbf{y}\right\| dt
  \\
  = &\int_{t_0}^\infty \sqrt{2N \mathcal{L}(\theta_t)} + \sqrt{2N \mathcal{L}(\vartheta_t)} dt
  \\
  \le &\int_{t_0}^\infty \sqrt{2N \mathcal{L}(\theta_{t_0})} \exp\left( - \frac{\kappa^2 \lambda \lambda_0 (t - t_0)}{2N} \right) + \sqrt{2N \mathcal{L}(\vartheta_{t_0})} \exp\left( - \frac{\kappa^2 \lambda \lambda_0 (t - t_0)}{2N } \right) dt
  \\
  \le & \left( \left(\sqrt{2N \mathcal{L}(\theta_0)} + \sqrt{2N \mathcal{L}(\vartheta_0)}\right) \exp\left( - \frac{\kappa^2 \lambda \lambda_0 t_0}{2N} \right) \right)\frac{2N}{\kappa^2 \lambda \lambda_0}
  \\
  \le &\left( 4\sqrt{N(1 + d_0)} \exp\left( - \frac{\kappa^2 \lambda \lambda_0 t_0}{2N} \right) \right)\frac{2N}{\kappa^2 \lambda \lambda_0}
\end{align*}
where we used Remark~\ref{rmk:InitCloseToZero} to show that $\sqrt{\mathcal{L}(\theta_0)} \le \sqrt{2(d_0 + 1)}$ and $\vartheta_0 = 0$ to show that $\sqrt{\mathcal{L}(\vartheta_0)} \le \sqrt{2(1 + d_0)}$.
Then setting $t_0$ 
\[
  t_0 = \frac{2N}{\kappa^2\lambda \lambda_0} \log \left( \frac{8\sqrt{N^3 (1+d_0)}}{\lambda \lambda_0 \epsilon_\Theta} \right)
\]
allow us to bound this integral by $\epsilon_\Theta / \kappa^2$.
Combining with $\left\|\kappa^2 \Theta_\lambda (x^*, x^{(i)})\right\|_F \le \kappa^2 (L+1) (1 + d_0)$ gives resulting bound $\epsilon_\Theta (L+1) (1 + d_0)$.

For the rest of the integral, we can expand the difference in the integral form
\begin{align*}
  &\unn(\theta_t) - \untk(\vartheta_t)
  \\
  = &\unn(\theta_0) + \int_0^t \frac{d \unn(\theta_\tau)}{d\tau} - \frac{d \untk(\vartheta_\tau)}{d\tau } d\tau.
\end{align*}
We can continue rewriting the integral as
\begin{align*}
  &\frac{d \unn(\theta_\tau) - \untk(\vartheta_\tau)}{d\tau}
  \\
  = & - \Lambda^{\oplus N} \Theta_{d,\theta_\tau}(x^{(1:N)}, x^{(1:N)}) \Lambda^{\oplus N} (\unn(\theta_\tau) - \mathbf{y}) 
  \\
  &\quad + \kappa^2 \Theta_\lambda(x^{(1:N)}, x^{(1:N)}) (\untk(\vartheta_\tau) - \mathbf{y})
  \\
  = &\left( - \Lambda^{\oplus N} \Theta_{d,\theta_\tau}(x^{(1:N)}, x^{(1:N)}) \Lambda^{\oplus N} + \kappa^2 \Theta_\lambda (x^{(1:N)}, x^{(1:N)}) \right)(\unn(\theta_\tau) - \mathbf{y})
  \\
  &\quad - \kappa^2 \Theta_\lambda (x^{(1:N)}, x^{(1:N)}) \left(\unn(\theta_\tau) - \untk(\vartheta_\tau)\right).
\end{align*}
From Assumption~\ref{assm:NTK-smallest-eigenvalue}, $\Theta_\lambda(x^{(1:N)}, x^{(1:N)})$ is positive definite, so the second term makes the $\|\unn(\theta_\tau) - \untk(\vartheta_\tau)\|$ decrease. 
So to bound the norm we can ignore it, and focus on the first term only.
\begin{align*}
  &\left\| \unn(\theta_t) - \untk(\vartheta_t) \right\| 
  \\
  \le &\Bigg\| \unn(\theta_0) 
  \\
  &\quad + \int_0^t \left( - \Lambda^{\oplus N} \Theta_{d,\theta_\tau}(x^{(1:N)}, x^{(1:N)}) \Lambda^{\oplus N} + \kappa^2 \Theta_\lambda (x^{(1:N)}, x^{(1:N)})\right) (\unn(\theta_\tau) - \mathbf{y})  d\tau \Bigg\|
  \\
  \le &\epsilon_0 \sqrt{N(1 + d_0)} 
  \\
  &\quad + \int_0^t \left\|- \Lambda^{\oplus N} \Theta_{d,\theta_\tau}(x^{(1:N)}, x^{(1:N)}) \Lambda^{\oplus N} + \kappa^2 \Theta_\lambda (x^{(1:N)}, x^{(1:N)}) \right\|_F \|\unn(\theta_\tau) - \mathbf{y}\| d\tau 
  \\
  \le &\epsilon_0 \sqrt{N(1 + d_0)} 
  \\
  &\quad + \max_{0 \le \tau \le t} \left\|- \Lambda^{\oplus N} \Theta_{d,\theta_\tau}(x^{(1:N)}, x^{(1:N)}) \Lambda^{\oplus N}+ \kappa^2  \Theta_\lambda (x^{(1:N)}, x^{(1:N)})\right\|_F \int_0^t \|\unn(\theta_\tau) - \mathbf{y}\| d\tau
  \\
  \le &\epsilon_0 \sqrt{N(1 + d_0)} + N \kappa^2 \epsilon_\Theta \int_0^t \|\unn(\theta_\tau) - \mathbf{y}\| d\tau
  \\
  \le &\epsilon_0 \sqrt{N(1 + d_0)} + N \kappa^2 \epsilon_\Theta \int_0^\infty \sqrt{2N \mathcal{L}(\theta_\tau)} d\tau
  \\
  \le &\epsilon_0 \sqrt{N(1 + d_0)} + 2N \kappa^2 \epsilon_\Theta \sqrt{2N \mathcal{L}(\theta_0)} \int_0^\infty \exp\left( - \frac{\kappa^2 \lambda \lambda_0 \tau}{2N} \right) d\tau
  \\
  \le &\epsilon_0 \sqrt{N(1 + d_0)} + \frac{8N^2 \sqrt{N(1 + d_0)} \epsilon_\Theta}{\lambda \lambda_0} .
\end{align*}
Integrating it gives 
\begin{align*}
  &\int_0^{t_0} \left\| \unn(\theta_t) - \untk(\vartheta_t) \right\| dt
  \\
  \le &\left(\epsilon_0 \sqrt{N(1+d_0)} + \frac{8N^2 \sqrt{N(1 + d_0)} \epsilon_\Theta}{\lambda \lambda_0}\right) t_0
  \\
  = &\left(\epsilon_0 \sqrt{N(1+d_0)} + \frac{8N^2 \sqrt{N(1 + d_0)} \epsilon_\Theta}{\lambda \lambda_0}\right) \times \left( \frac{2N}{\kappa^2 \lambda \lambda_0} \log \left( \frac{8\sqrt{N^3(1+d_0)}}{\lambda \lambda_0 \epsilon_\Theta} \right) \right).
\end{align*}
We finally multiply $\kappa^2 (L+1) \cdot (1+d_0)$ as we've done in the first integral.

Summing up, the bound is
\begin{align*}
  &\left| f_{d,\theta_\infty}(x^*) - \fntk(x^*)_0 \right|
  \\
  \le & \epsilon_0 + \frac{4 \epsilon_\Theta \sqrt{N^3 (1 + d_0)}}{\lambda \lambda_0} + (L+1) \cdot (d_0 + 1) \cdot \epsilon_\Theta
  \\ 
  &+ (L+1) \cdot (d_0 + 1) \cdot \left(\epsilon_0 \sqrt{N(1+d_0)} + \frac{8N^2 \sqrt{N(1 + d_0)} \epsilon_\Theta}{\lambda \lambda_0}\right) \frac{2N}{\lambda \lambda_0} \log \frac{8\sqrt{N^3(1+d_0)}}{\lambda \lambda_0 \epsilon_\Theta} .
\end{align*}

\section{Proofs of Theorems~\ref{thm:JacobianNTKTraining} and \ref{thm:trainingdynamics} for Gradient Descent}

Consider the loss evaluation at $k+1$ time step with abusing notation of $u_{nn}(\theta_{k+1})$, then
\begin{align*}
  &\|\unn(\theta_{k+1}) - \mathbf{y}\|_2^2
  \\
  = &\|\mathbf{y} - \unn(\theta_k) + \unn(\theta_k) - \unn(\theta_{k+1})\|_2^2
  \\
  = &\|\mathbf{y} - \unn(\theta_k)\|_2^2 - 2 \langle \mathbf{y} - \unn(\theta_k), \unn(\theta_{k+1}) - \unn(\theta_k) \rangle + \|\unn(\theta_{k+1}) - \unn(\theta_k)\|_2^2
\end{align*}
so to quantify convergence, we should analyse
\[
  2 \langle \mathbf{y} - \unn(\theta_k), \unn(\theta_{k+1}) - \unn(\theta_k) \rangle,\qquad  \|\unn(\theta_{k+1}) - \unn(\theta_k)\|_2^2.
\]
Now by the Taylor expansion, we can first rewrite $\unn(\theta_{k+1}) - \unn(\theta_k)$, which is also a gradient descent analogue of Lemma~\ref{lem:JNTKdynamics},
\begin{align*}
  &\unn(\theta_{k+1}) - \unn(\theta_k)
  \\
  = &\unn(\theta_k - \eta \nabla_{\theta_k} \mathcal{L}(\theta_k)) - \unn(\theta_k)
  \\
  = &- \int_0^\eta \left\langle \nabla_{\theta_k} \mathcal{L}(\theta_k), \frac{\partial \unn(\theta_{k,s})}{\partial \theta_{k,s}} \right\rangle ds
  \\
  = &- \frac{1}{N}\int_{0}^\eta (\unn(\theta_k) - \mathbf{y})^\intercal \Lambda^{\oplus N}\Theta_{d,\theta_k,\theta_{k,s}}(x^{(1:N)}, x^{(1:N)})\Lambda^{\oplus N} ds
\end{align*}
where we introduced two extensions of notations:
\[
  \theta_{k,s} = \theta_k - s \nabla_{\theta_k} \mathcal{L}(\theta_k)
\]
and
\begin{align*}
  \Theta_{d,\theta_k,\theta_{k,s}}(x, x')_{00} &= \left\langle \frac{\partial f_{d,\theta_k}(x)}{\partial \theta_k}, \frac{\partial f_{d,\theta_{k,s}}(x')}{\partial \theta_{k,s}} \right\rangle,
  \\
  \Theta_{d,\theta_k,\theta_{k,s}}(x, x')_{\alpha 0} &= \left\langle \frac{\partial J(f_{d,\theta_k})(x)_{\alpha-1}}{\partial \theta_k}, \frac{\partial f_{d,\theta_{k,s}}(x')}{\partial \theta_{k,s}} \right\rangle,
  \\
  \Theta_{d,\theta_k,\theta_{k,s}}(x, x')_{0 \beta} &= \left\langle \frac{\partial f_{d,\theta_k}(x)}{\partial \theta_k}, \frac{\partial J(f_{d,\theta_{k,s}})(x')_{\beta-1}}{\partial \theta_{k,s}} \right\rangle,
  \\
  \Theta_{d,\theta_k,\theta_{k,s}}(x, x')_{\alpha \beta} &= \left\langle \frac{\partial J(f_{d,\theta_k})(x)_{\alpha-1}}{\partial \theta_k}, \frac{\partial J(f_{d,\theta_{k,s}})(x')_{\beta-1}}{\partial \theta_{k,s}} \right\rangle
\end{align*}
so that $\Theta_{d,\theta_k,\theta_{k,s}}(x^{(1:N)}, x^{(1:N)})$ is defined in the same manner of $\Theta_{d,\theta}$.

If we can show that this empirical JNTK also satisfies the following for all $k, s \in [0, \eta]$:
\[
  \lambda_{\min}(\Theta_{d,\theta_k,\theta_{k,s}}(x^{(1:N)}, x^{(1:N)})) \ge \frac{\kappa^2}{2} \lambda_0
\]
we can prove that
\begin{align*}
  &\langle \mathbf{y} - \unn(\theta_k), \unn(\theta_{k+1}) - \unn(\theta_k) \rangle
  \\
  = &\frac{1}{N}\int_0^\eta (\unn(\theta_k) - \mathbf{y})^\intercal \Lambda^{\oplus N} \Theta_{d,\theta_k,\theta_{k,s}}(x^{(1:N)}, x^{(1:N)}) \Lambda^{\oplus N} (\unn(\theta_k) - \mathbf{y}) ds
  \\
  \ge &\frac{1}{N}\int_0^\eta (\unn(\theta_k) - \mathbf{y})^\intercal \lambda_{\min}(\Lambda^{\oplus N} \Theta_{d,\theta_k,\theta_{k,s}}(x^{(1:N)}, x^{(1:N)}) \Lambda^{\oplus N}) (\unn(\theta_k) - \mathbf{y}) ds
  \\
  \ge & \frac{\eta \kappa^2 \lambda \lambda_0}{2N} \|\unn(\theta_k) - \mathbf{y}\|_2^2.
\end{align*}

So if we can show that the third term, $\|\unn(\theta_{k+1}) - \unn(\theta_k)\|_2^2$ is small enough so that it can't cancel out the second term, we can show that the loss decreases every step by multiplicative factor $(1 - \eta \kappa^2 \lambda \lambda_0 / 2)$.
Our proof is done by the induction on two properties, (1) the weights stay close to their initialisation, and (2) the term $\|\unn(\theta_{k+1}) - \unn(\theta_k)\|^2$ has roughly order $\eta^2 \kappa^4$, which then shows the linear convergence.

We first introduce the $Q(k)$ indexed by $k \in \mathbb{Z}_{\ge 0}$, which is defined as
\begin{align*}
  Q(k) = \frac{1}{2}(d_0 + 1) \sum_{i=0}^{k-1} \left( 1 - \frac{\eta \kappa^2 \lambda \lambda_0}{4N} \right)^{i/2}
\end{align*}
which will upper bound $\sum_{i=0}^{k-1} \sqrt{\mathcal{L}(\theta_{i})}$.
Note that this quantity itself is again bounded globally, by
\begin{align*}
  Q(k) &\le Q(\infty)
  \\
  &= \frac{d_0 + 1}{2} \sum_{i=0}^\infty \left( 1 - \frac{\eta \kappa^2 \lambda \lambda_0}{4N} \right)^{i/2}
  \\
  &= \frac{d_0 + 1}{2 (1 - \sqrt{1 - \eta \kappa^2 \lambda \lambda_0 / 4N})}.
\end{align*}
We then define $\omega(k)$ as
\[
  \omega(k) = \eta \kappa \exp(O(L)) \sqrt{N d_0} Q(k) \frac{(\log d)^{3L}}{\sqrt{d}} 
\]
which similarly bounded by 
\[
  \omega(k) \le \omega(\infty) = \eta \kappa \exp(O(L)) \sqrt{N d_0} Q(\infty) \frac{(\log d)^{3L}}{\sqrt{d}}.
\]
Then we further bound this quantity by $\omega$ which will be used as in the gradient flow case, 
\begin{align*}
  \omega(\infty) &\le \kappa \exp(O(L)) \sqrt{N d_0^3}  \frac{(\log d)^{3L}}{\sqrt{d}} \frac{\eta}{1 - \sqrt{1 - \eta \kappa^2 \lambda \lambda_0 / 4N}}
  \\
  &\le \kappa \exp(O(L)) \sqrt{N d_0^3}  \frac{(\log d)^{3L}}{\sqrt{d}} \frac{8N}{\kappa^2 \lambda \lambda_0}
  \\
  &= \underbrace{\frac{\exp(O(L)) (N d_0)^{3/2} (\log d)^{3L}}{\kappa \lambda \lambda_0 \sqrt{d}}}_{\omega \defeq}.
\end{align*}
where the inequality holds since $\eta / (1 - \sqrt{1 - t \eta})$ monotonically decreases at $\eta \ge 0$, with its maximum $2 / t$ attained at $\eta = 0$.

We first state the lemma that will be used in both induction hypotheses.
\begin{lemma} \label{lem:weightdiff}
  Fix the fail probability $\delta > 0$ and network depth $L$. 
  Suppose that the network width $d$ and $\omega(\infty)$ satisfies
  \begin{align*}
    d &\ge \Omega(N d_0 \exp(L) / \delta),
    \\
    d &\ge \Omega(\omega^{-2}),
    \\
    \omega &\le O\left( \frac{\lambda_0 \lambda}{N \exp(\Omega(L)) (\log d)^{3L}} \right),
    \\
    \omega &\le (\log d)^{-2L}.
  \end{align*}

  With probability at least $1-\delta$ over the random initialisation, the following holds for all $k_0$ and any $l$. 
  If for all $0 \le k < k_0$, all the weights stay close to their initialisation, i.e.,
  \begin{align*}
    \left\|W^{(l)}(k) - W^{(l)}(0)\right\| &\le \omega(k) \sqrt{d},
    \\
    \left\|W^{(l)}(k) - W^{(l)}(0)\right\|_\infty &\le \begin{cases}
      \infty & \text{if } l = L+1,
      \\
      \omega(k) \sqrt{d} \log d & \text{if } 2 \le l \le L,
      \\
      \omega(k) \log d & \text{if } l = 1,
    \end{cases}
    \\
    \left\|W^{(l)}(k) - W^{(l)}(0)\right\|_1 &\le \begin{cases}
      \infty & \text{if } l = 1,
      \\
      \omega(k) \sqrt{d} \log d & \text{if } 2 \le l \le L,
      \\
      \omega(k) \log d & \text{if } l = L+1,
    \end{cases}
  \end{align*}
  then the weight update is bounded by
  \begin{align*}
    \left\|W^{(1)}(k+1) - W^{(1)}(k)\right\| &\le \eta \kappa \exp(O(L)) \sqrt{N d_0 \mathcal{L}(\theta_k)} \log d,
    \\
    \left\|W^{(l)}(k+1) - W^{(l)}(k)\right\| &\le \eta \kappa \exp(O(L)) \sqrt{N d_0 \mathcal{L}(\theta_k)} \log d,
    \\
    \left\|W^{(L+1)}(k+1) - W^{(L+1)}(k)\right\| &\le \eta \kappa \exp(O(L)) \sqrt{N d_0 \mathcal{L}(\theta_k)} \log d,
    \\
    \left\|W^{(1)}(k+1) - W^{(1)}(k)\right\|_\infty &\le \eta \kappa \exp(O(L)) \sqrt{N d_0 \mathcal{L}(\theta_k)} \frac{(\log d)^{2L}}{\sqrt{d}},
    \\
    \left\|W^{(l)}(k+1) - W^{(l)}(k)\right\|_\infty &\le \eta \kappa \exp(O(L)) \sqrt{N d_0 \mathcal{L}(\theta_k)} (\log d)^{3L+1},
    \\
    \left\|W^{(l)}(k+1) - W^{(l)}(k)\right\|_1 &\le \eta \kappa \exp(O(L)) \sqrt{N d_0 \mathcal{L}(\theta_k)} (\log d)^{2L},
    \\
    \left\|W^{(L+1)}(k+1) - W^{(L+1)}(k)\right\|_1 &\le \eta \kappa \exp(O(L)) \sqrt{N d_0 \mathcal{L}(\theta_k)} \frac{(\log d)^{2L}}{\sqrt{d}}
  \end{align*}
  for $2 \le l \le L$.
\end{lemma}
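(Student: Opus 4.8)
The plan is to exploit the fact that, under gradient descent, the weight increment is \emph{exactly} the gradient step, $W^{(l)}(k+1)-W^{(l)}(k) = -\eta\,\partial\mathcal{L}(\theta_k)/\partial W^{(l)}(k)$, with no Taylor expansion required (the Taylor/integral form in the preceding discussion is needed only for the \emph{output} increment $\unn(\theta_{k+1})-\unn(\theta_k)$, because $\unn$ depends nonlinearly on $\theta$). Thus the lemma reduces to controlling a single gradient $\partial\mathcal{L}(\theta_k)/\partial W^{(l)}(k)$ at the current iterate in the three norms $\|\cdot\|$, $\|\cdot\|_1$, $\|\cdot\|_\infty$, and this is precisely the per-time integrand already estimated inside the proof of Lemma~\ref{lem:WeightStayStill}, now evaluated pointwise in the discrete iteration instead of under a time integral; in particular the factor $\sqrt{\mathcal{L}(\theta_k)}$ is kept as a free factor on the right-hand side (the summation producing $\omega(k)$ and the linear-convergence estimate are deferred to the subsequent induction).

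First I would expand $\partial\mathcal{L}(\theta_k)/\partial W^{(l)}(k) = \frac{1}{N}\sum_{i=1}^N\big((f_{d,\theta_k}(x^{(i)})-y^{(i)})\,\partial f_{d,\theta_k}(x^{(i)})/\partial W^{(l)}(k) + \lambda\sum_{\alpha=1}^{d_0} J(f_{d,\theta_k})(x^{(i)})_{\alpha-1}\,\partial J(f_{d,\theta_k})(x^{(i)})_{\alpha-1}/\partial W^{(l)}(k)\big)$, and, exactly as in Lemma~\ref{lem:WeightStayStill}, stack the $N(1+d_0)$ per-output gradients into the columns of a matrix $M^{(l)}\in\R^{(d_{l-1}d_l)\times N(1+d_0)}$, so that the gradient equals $\frac{1}{N}M^{(l)}(\unn(\theta_k)-\mathbf{y})$, where $\unn(\theta_k)$ collects the outputs together with the $\sqrt{\lambda}$-scaled Jacobian values. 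For the Frobenius norm, the operator-norm bound $\|M^{(l)}\|\le\sqrt{N(1+d_0)}\max_k\|M^{(l)}_{-,k}\|$ and Cauchy--Schwarz give a bound of order $\sqrt{N d_0\,\mathcal{L}(\theta_k)}\cdot\max_{i,\alpha}\{\|\partial f_{d,\theta_k}(x^{(i)})/\partial W^{(l)}(k)\|,\ \|\partial J(f_{d,\theta_k})(x^{(i)})_{\alpha-1}/\partial W^{(l)}(k)\|\}$; for the $1$- and $\infty$-norms I would argue entrywise (using $\lambda<1$ and $\|\unn(\theta_k)-\mathbf{y}\|_1\le\sqrt{N(1+d_0)}\,\|\unn(\theta_k)-\mathbf{y}\|_2$) to obtain the analogous bound with $\|\cdot\|$ replaced by $\|\cdot\|_1$, respectively $\|\cdot\|_\infty$.

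Next I would bound each per-example gradient norm at the current iterate by splitting it into its value at initialisation plus its change from initialisation, using the factorisations $\partial f/\partial W^{(l)}=\frac{1}{\sqrt{C_{l-1}}}(\partial f/\partial g^{(l)})(h^{(l-1)})^\intercal$ and its counterpart for $J(f)$. The value at initialisation is controlled by Lemmas~\ref{lem:ForwardNormInit}--\ref{lem:BackwardJacobNormInit} together with Corollary~\ref{lem:supnormbound}, which reproduce exactly the estimates $\kappa\exp(O(L))$, $\kappa\sqrt{\log d}\,\exp(O(L))$, $\kappa(\log d)^{3/2}\exp(O(L))$, etc.\ already recorded in the proof of Lemma~\ref{lem:WeightStayStill}, with the corner cases $l=1$ (for $\|\cdot\|_\infty$) and $l=L+1$ (for $\|\cdot\|_1$) treated separately. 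The change from initialisation is controlled by the perturbation Lemmas~\ref{lem:perturbact}, \ref{lem:perturbactinfty}, \ref{lem:perturbjacobian}, \ref{lem:perturbback}, \ref{lem:perturbjacobback} and their $1$- and $\infty$-norm companions, applied with perturbation radius $\omega(k)\le\omega$ --- legitimate because the hypothesis gives $\|W^{(l')}(k)-W^{(l')}(0)\|\le\omega(k)\sqrt{d}\le\omega\sqrt{d}$ for every $l'$, and likewise in the other two norms --- and contributes only terms of order $\kappa\,\omega\,\exp(O(L))(\log d)^{c}$ for modest constants $c$, which by the standing assumptions on $\omega$ (in particular $\omega\le(\log d)^{-2L}$ and $\omega$ polynomially small in $d$) are of lower order than the initialisation values. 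Assembling these, multiplying by $\eta$ and by the $\sqrt{N d_0\,\mathcal{L}(\theta_k)}$ factor (and by an extra $1+d_0$ from the $\lambda$-sum where it enters), and collecting the powers of $\log d$ layer by layer and norm by norm yields the seven stated inequalities; a union bound over the events of the cited lemmas (each invoked at a $\mathrm{poly}(N d_0, e^L)^{-1}$ fraction of $\delta$) and over $i\in[N]$, $\alpha\in[d_0]$ gives the $1-\delta$ guarantee, which is where the width requirement $d\ge\Omega(N d_0\exp(L)/\delta)$ is consumed.

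\textbf{The main obstacle} is purely the bookkeeping rather than any new idea: one must carry all three matrix norms simultaneously through every layer, isolate the genuinely distinct corner cases $l=1$ and $l=L+1$, and verify that each perturbation contribution is strictly lower order than the matching initialisation term under the prescribed $\omega$. This is the same finer accounting --- tracking $1$-, $2$- and $\infty$-norms rather than only the $2$-norm --- that the paper flags as the feature distinguishing this analysis from the standard NTK argument, and here it amounts to transcribing, pointwise in the discrete-time iteration, the integrand estimates already carried out inside the proof of Lemma~\ref{lem:WeightStayStill}.
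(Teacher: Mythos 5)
Your proposal is correct and follows essentially the same path as the paper's proof. Both start from the exact gradient-descent identity $W^{(l)}(k+1)-W^{(l)}(k)=-\eta\,\partial\mathcal{L}(\theta_k)/\partial W^{(l)}(k)$, expand the gradient as a sum over data points and Jacobian coordinates, apply Cauchy--Schwarz to pull out the $\sqrt{\mathcal{L}(\theta_k)}$ factor, and then bound $\max_{i,\alpha}\|\partial f/\partial W^{(l)}\|_p$ and $\max_{i,\alpha}\|\partial J(f)_{\alpha-1}/\partial W^{(l)}\|_p$ at the current iterate by citing the initialisation bounds (Lemmas~\ref{lem:ForwardNormInit}--\ref{lem:BackwardJacobNormInit}, Corollary~\ref{lem:supnormbound}) and perturbation bounds (Lemmas~\ref{lem:perturbact}--\ref{lem:perturbjacobback}) for each of the three norms and the two corner cases $l=1$ and $l=L+1$, exactly as in the proof of Lemma~\ref{lem:WeightStayStill}; the only cosmetic difference is that you route the Frobenius estimate through the explicit matrix $M^{(l)}$ from Lemma~\ref{lem:WeightStayStill} while the paper applies a uniform triangle-inequality-then-Cauchy--Schwarz argument for all $p$-norms at once, which yields the same asymptotic bound.
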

\begin{proof}
  We can unfold the weight update rule to get
  \begin{align*}
    &W^{(l)}(k+1) - W^{(l)}(k) 
    \\
    = &- \eta \frac{\partial \mathcal{L}(\theta_{k})}{\partial W^{(l)}(k)} 
    \\
    = &- \frac{\eta}{N} \sum_{i=1}^N \left( (f_{\theta_k}(x^{(i)}) - y^{(i)}) \frac{\partial f_{\theta_k}(x^{(i)})}{\partial W^{(l)}(k)} + \lambda \sum_{\alpha=1}^{d_0} J(f_{\theta_k})(x^{(i)})_{\alpha-1} \frac{\partial J(f_{\theta_k})(x^{(i)})_{\alpha-1}}{\partial W^{(l)}(k)} \right)
  \end{align*}
  which then shows
  \begin{align*}
    &\left\|W^{(l)}(k+1) - W^{(l)}(k)\right\|_p 
    \\\le &\frac{\eta}{N} \sum_{i=1}^N \left( \left| f_{\theta_k}(x^{(i)}) - y^{(i)} \right| + \sqrt{\lambda} \sum_{\alpha=1}^{d_0} \left|J(f_{\theta_k})(x^{(i)})_{\alpha-1} \right| \right) \max_{i \in [N], \alpha \in [d_0]} \left\{\left\| \frac{\partial f_{\theta_k}(x^{(i)})}{\partial W^{(l)}(k)} \right\|_p, \left\| \frac{\partial J(f_{\theta_k})(x^{(i)})_{\alpha-1}}{\partial W^{(l)}(k)} \right\|_p\right\}
    \\
    \le &\frac{\eta \sqrt{(d_0 + 1)} \|\unn(\theta_k) - \mathbf{y}\|}{\sqrt{N}} \max_{i \in [N], \alpha \in [d_0]} \left\{\left\| \frac{\partial f_{\theta_k}(x^{(i)})}{\partial W^{(l)}(k)} \right\|_p, \left\| \frac{\partial J(f_{\theta_k})(x^{(i)})_{\alpha-1}}{\partial W^{(l)}(k)} \right\|_p\right\}
  \end{align*}
  where we used assumption that $\lambda \le 1$.
  The matrix Jacobians can be further bounded as in the proof of Lemma~\ref{lem:WeightStayStill}: for $2 \le l \le L$,
  \begin{align*}
    \left\|\frac{\partial f_{d,\theta_k}(x^{(i)})}{\partial W^{(1)}(k)}\right\| &\le \kappa O(\exp(L)), 
    &
    \left\|\frac{\partial J(f_{d,\theta_k})(x^{(i)})_{\alpha-1}}{\partial W^{(1)}(k)}\right\| &\le \kappa \exp(O(L)) \log d,
    \\
    \left\|\frac{\partial f_{d,\theta_k}(x^{(i)})}{\partial W^{(l)}(k)}\right\| &\le \kappa O(\exp(L)),
    &
    \left\|\frac{\partial J(f_{d,\theta_k})(x^{(i)})_{\alpha-1}}{\partial W^{(l)}(k)}\right\| &\le \kappa \exp(O(L)) \log d,
    \\
    \left\|\frac{\partial f_{d,\theta_k}(x^{(i)})}{\partial W^{(L+1)}(k)}\right\| &\le \kappa O(\exp(L)), 
    &
    \left\|\frac{\partial J(f_{d,\theta_k})(x^{(i)})_{\alpha-1}}{\partial W^{(L+1)}(k)}\right\| &\le \kappa \exp(O(L)) \log d,
    \\
    \left\|\frac{\partial f_{d,\theta_k}(x^{(i)})}{\partial W^{(1)}(k)}\right\|_\infty &\le \kappa \exp(O(L)) \frac{(\log d)^{2L}}{\sqrt{d}}, 
    &
    \left\|\frac{\partial J(f_{d,\theta_k})(x^{(i)})_{\alpha-1}}{\partial W^{(1)}(k)}\right\|_\infty &\le \kappa \exp(O(L)) \frac{(\log d)^{3L+1}}{\sqrt{d}},
    \\
    \left\|\frac{\partial f_{d,\theta_k}(x^{(i)})}{\partial W^{(l)}(k)}\right\|_\infty &\le \kappa \exp(O(L)) (\log d)^{L},
    &
    \left\|\frac{\partial J(f_{d,\theta_k})(x^{(i)})_{\alpha-1}}{\partial W^{(l)}(k)}\right\|_\infty &\le \kappa \exp(O(L)) (\log d)^{3L+1},
    \\
    \left\|\frac{\partial f_{d,\theta_k}(x^{(i)})}{\partial W^{(l)}(k)}\right\|_1 &\le \kappa \exp(O(L)) (\log d)^{2L},
    &
    \left\|\frac{\partial J(f_{d,\theta_k})(x^{(i)})_{\alpha-1}}{\partial W^{(l)}(k)}\right\|_1 &\le \kappa \exp(O(L)) (\log d)^{2L},
    \\
    \left\|\frac{\partial f_{d,\theta_k}(x^{(i)})}{\partial W^{(L+1)}(k)}\right\|_1 &\le \kappa \exp(O(L)) \frac{(\log d)^{L}}{\sqrt{d}}, 
    &
    \left\|\frac{\partial J(f_{d,\theta_k})(x^{(i)})_{\alpha-1}}{\partial W^{(L+1)}(k)}\right\|_1 &\le \kappa \exp(O(L)) \frac{(\log d)^{2L}}{\sqrt{d}}.
  \end{align*}
  We obtain the following inequalities for $2 \le l \le L$:
  \begin{align*}
    \left\|W^{(1)}(k+1) - W^{(1)}(k)\right\| &\le \eta \kappa \exp(O(L)) \sqrt{N d_0 \mathcal{L}(\theta_k)} \log d,
    \\
    \left\|W^{(l)}(k+1) - W^{(l)}(k)\right\| &\le \eta \kappa \exp(O(L)) \sqrt{N d_0 \mathcal{L}(\theta_k)} \log d,
    \\
    \left\|W^{(L+1)}(k+1) - W^{(L+1)}(k)\right\| &\le \eta \kappa \exp(O(L)) \sqrt{N d_0 \mathcal{L}(\theta_k)} \log d,
    \\
    \left\|W^{(1)}(k+1) - W^{(1)}(k)\right\|_\infty &\le \eta \kappa \exp(O(L)) \sqrt{N d_0 \mathcal{L}(\theta_k)} \frac{(\log d)^{2L}}{\sqrt{d}},
    \\
    \left\|W^{(l)}(k+1) - W^{(l)}(k)\right\|_\infty &\le \eta \kappa \exp(O(L)) \sqrt{N d_0 \mathcal{L}(\theta_k)} (\log d)^{3L+1},
    \\
    \left\|W^{(l)}(k+1) - W^{(l)}(k)\right\|_1 &\le \eta \kappa \exp(O(L)) \sqrt{N d_0 \mathcal{L}(\theta_k)} (\log d)^{2L},
    \\
    \left\|W^{(L+1)}(k+1) - W^{(L+1)}(k)\right\|_1 &\le \eta \kappa \exp(O(L)) \sqrt{N d_0 \mathcal{L}(\theta_k)} \frac{(\log d)^{2L}}{\sqrt{d}}.
  \end{align*}
\end{proof}

\begin{lemma} \label{lem:weightinductiongd}
  Fix the fail probability $\delta > 0$ and network depth $L$. 
  Suppose that the network width $d$ and $\omega(\infty)$ satisfies
  \begin{align*}
    d &\ge \Omega(N d_0 \exp(L) / \delta),
    \\
    d &\ge \Omega(\omega^{-2}),
    \\
    \omega &\le O\left( \frac{\lambda_0 \lambda}{N \exp(\Omega(L)) (\log d)^{3L}} \right),
    \\
    \omega &\le (\log d)^{-2L}.
  \end{align*}
  Also assume that the $\kappa$ is small, and network is intialised with $\mathcal{L}(\theta_0) \le d_0 + 1$. 

  With probability at least $1-\delta$ over the random initialisation, the following holds for all $k_0$ and any $l$. 
  If for all $0 \le k < k_0$, all the weights stay close to their initialisation, i.e.,
  \begin{align*}
    \left\|W^{(l)}(k) - W^{(l)}(0)\right\| &\le \omega(k) \sqrt{d},
    \\
    \left\|W^{(l)}(k) - W^{(l)}(0)\right\|_\infty &\le \begin{cases}
      \infty & \text{if } l = L+1,
      \\
      \omega(k) \sqrt{d} \log d & \text{if } 2 \le l \le L,
      \\
      \omega(k) \log d & \text{if } l = 1,
    \end{cases}
    \\
    \left\|W^{(l)}(k) - W^{(l)}(0)\right\|_1 &\le \begin{cases}
      \infty & \text{if } l = 1,
      \\
      \omega(k) \sqrt{d} \log d & \text{if } 2 \le l \le L,
      \\
      \omega(k) \log d & \text{if } l = L+1,
    \end{cases}
  \end{align*}
  and linear convergence, i.e., 
  \[
    \mathcal{L}(\theta_{k}) \le \mathcal{L}(\theta_0) \left( 1 - \frac{\eta \kappa^2 \lambda \lambda_0}{4} \right)^{k_0}
  \]
  holds, then 
  \begin{align*}
    \left\|W^{(l)}(k_0) - W^{(l)}(0)\right\| &\le \omega(k_0) \sqrt{d},
    \\
    \left\|W^{(l)}(k_0) - W^{(l)}(0)\right\|_\infty &\le \begin{cases}
      \infty & \text{if } l = L+1,
      \\
      \omega(k_0) \sqrt{d} \log d & \text{if } 2 \le l \le L,
      \\
      \omega(k_0) \log d & \text{if } l = 1,
    \end{cases}
    \\
    \left\|W^{(l)}(k_0) - W^{(l)}(0)\right\|_1 &\le \begin{cases}
      \infty & \text{if } l = 1,
      \\
      \omega(k_0) \sqrt{d} \log d & \text{if } 2 \le l \le L,
      \\
      \omega(k_0) \log d & \text{if } l = L+1,
    \end{cases}
  \end{align*}
  holds.
\end{lemma}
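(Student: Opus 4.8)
The proof of Lemma~\ref{lem:weightinductiongd} is a telescoping estimate and does not need its own induction: the two hypotheses already supply, for every $0 \le k < k_0$, the closeness of all weights to their initialisation with the time-varying radii $\omega(k)$ — which is precisely the hypothesis of Lemma~\ref{lem:weightdiff} — together with the linear decay $\mathcal{L}(\theta_k) \le \mathcal{L}(\theta_0)(1-\eta\kappa^2\lambda\lambda_0/4)^k$. First I would write
\[
  W^{(l)}(k_0) - W^{(l)}(0) = \sum_{k=0}^{k_0-1}\bigl(W^{(l)}(k+1) - W^{(l)}(k)\bigr),
\]
apply the triangle inequality in each of the norms $\|\cdot\|$, $\|\cdot\|_\infty$, $\|\cdot\|_1$, and bound each single-step increment using Lemma~\ref{lem:weightdiff}; in the operator norm this gives $\|W^{(l)}(k+1)-W^{(l)}(k)\| \le \eta\kappa\exp(O(L))\sqrt{Nd_0\,\mathcal{L}(\theta_k)}\,\log d$, and the stated analogues (higher powers of $\log d$, an extra $1/\sqrt d$ for the corner layers $l=1$ and $l=L+1$) hold for the other two norms.

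Next I would insert the linear-convergence bound. Using $\mathcal{L}(\theta_0)\le d_0+1$ and $N\ge 1$,
\[
  \sqrt{\mathcal{L}(\theta_k)} \le \sqrt{d_0+1}\Bigl(1-\tfrac{\eta\kappa^2\lambda\lambda_0}{4}\Bigr)^{k/2} \le \sqrt{d_0+1}\Bigl(1-\tfrac{\eta\kappa^2\lambda\lambda_0}{4N}\Bigr)^{k/2},
\]
so summing over $k$ and recalling $Q(k_0)=\tfrac{d_0+1}{2}\sum_{i=0}^{k_0-1}(1-\tfrac{\eta\kappa^2\lambda\lambda_0}{4N})^{i/2}$ yields $\sum_{k=0}^{k_0-1}\sqrt{\mathcal{L}(\theta_k)}\le 2Q(k_0)$. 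Feeding this into the telescoping bound gives, for instance, $\|W^{(l)}(k_0)-W^{(l)}(0)\|\le 2\eta\kappa\exp(O(L))\sqrt{Nd_0}\,Q(k_0)\log d$; since $(\log d)^{3L}\ge 2\log d$ for $L\ge 1$ and $d$ large, this is at most $\eta\kappa\exp(O(L))\sqrt{Nd_0}\,Q(k_0)(\log d)^{3L}=\omega(k_0)\sqrt d$, the claimed radius. The $\infty$- and $1$-norm statements follow identically: the geometric sum is the same, and one only has to match the powers of $\log d$ produced by Lemma~\ref{lem:weightdiff} against those in the targets $\omega(k_0)\sqrt d\log d$ and $\omega(k_0)\log d$, exploiting the $1/\sqrt d$ gain at the first and last layers.

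The main thing to watch is the usual self-consistency of the bootstrap constants: the constant hidden in the $\exp(O(L))$ appearing in the definition of $\omega(k)$ must be fixed large enough to absorb the constant coming out of Lemma~\ref{lem:weightdiff}, the factor $2$ from the geometric sum, and the $(\log d)^{3L}$-versus-$\log d$ slack, so that the final inequalities actually close. One should also note that $\omega(k)\le\omega(\infty)\le\omega$ by monotonicity of $Q$, so the side conditions on $\omega$ required to invoke Lemma~\ref{lem:weightdiff} hold along the whole trajectory $0\le k<k_0$. With those constants fixed at the outset, every step above is a routine estimate.
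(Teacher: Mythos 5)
Your argument is exactly the paper's proof of this lemma: the paper combines the linear-convergence hypothesis with $\mathcal{L}(\theta_0)\le d_0+1$ to get geometric decay of $\sqrt{\mathcal{L}(\theta_k)}$, then telescopes $W^{(l)}(k_0)-W^{(l)}(0)$ and applies the triangle inequality together with the per-step bounds of Lemma~\ref{lem:weightdiff}, with the geometric sum absorbed into $Q(k_0)$ and hence $\omega(k_0)$. Your filled-in details (the factor-of-two slack, the $(\log d)$-power bookkeeping for the $\infty$- and $1$-norms, and the constant-matching in $\exp(O(L))$) are consistent with the paper's definitions, so the proposal is correct and takes essentially the same route.
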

\begin{proof}
  Combining the linear convergence assumption with the assumption on $\mathcal{L}(\theta_0)$, we can see that the following holds for all $0 \le k < k_0$,
  \[
    \mathcal{L}(\theta_{k}) \le \mathcal{L}(\theta_0) \left( 1 - \frac{\eta \kappa^2 \lambda \lambda_0}{4M} \right)^{k}.
  \]
  Then the result is obtained via triangle inequality with Lemma~\ref{lem:weightdiff}.
\end{proof}

\begin{lemma} \label{lem:smalltermbound}
  Fix the fail probability $\delta > 0$ and network depth $L$. 
  Suppose that the network width $d$ and $\omega(\infty)$ satisfies
  \begin{align*}
    d &\ge \Omega(N d_0 \exp(L) / \delta),
    \\
    d &\ge \Omega(\omega^{-2}),
    \\
    \omega &\le O\left( \frac{\lambda_0 \lambda}{N \exp(\Omega(L)) (\log d)^{3L}} \right),
    \\
    \omega &\le (\log d)^{-2L}.
  \end{align*}

  With probability at least $1-\delta$ over the random initialisation, if all the weights stay close to their initialisation, i.e.,
  \begin{align*}
    \left\|W^{(l)}(k) - W^{(l)}(0)\right\| &\le \omega(k) \sqrt{d},
    \\
    \left\|W^{(l)}(k) - W^{(l)}(0)\right\|_\infty &\le \begin{cases}
      \infty & \text{if } l = L+1,
      \\
      \omega(k) \sqrt{d} \log d & \text{if } 2 \le l \le L,
      \\
      \omega(k) \log d & \text{if } l = 1,
    \end{cases}
    \\
    \left\|W^{(l)}(k) - W^{(l)}(0)\right\|_1 &\le \begin{cases}
      \infty & \text{if } l = 1,
      \\
      \omega(k) \sqrt{d} \log d & \text{if } 2 \le l \le L,
      \\
      \omega(k) \log d & \text{if } l = L+1,
    \end{cases}
  \end{align*}
  then
  \[
    \left\|\unn(\theta_{k+1}) - \unn(\theta_k)\right\|^2 \le \eta^2 \kappa^4 \exp(O(L)) N d_0^2 \|\unn(\theta_k) - \mathbf{y}\|^2.
  \]
\end{lemma}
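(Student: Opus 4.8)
The plan is to combine the Taylor-expansion identity for $\unn(\theta_{k+1}) - \unn(\theta_k)$ derived just above with the gradient-norm estimates from Appendix~\ref{sec:NTKPerturbProof} that underlie Lemma~\ref{lem:WeightStayStill}. Taking Euclidean norms on both sides of that identity and using submultiplicativity,
\[
  \|\unn(\theta_{k+1}) - \unn(\theta_k)\|
  \le \frac{\eta}{N}\,\max_{0 \le s \le \eta}\big\|\Lambda^{\oplus N}\Theta_{d,\theta_k,\theta_{k,s}}(x^{(1:N)},x^{(1:N)})\Lambda^{\oplus N}\big\|_2\,\|\unn(\theta_k) - \mathbf{y}\| .
\]
So it suffices to show that the mixed JNTK $\Theta_{d,\theta_k,\theta_{k,s}}$ on the training set has operator norm at most $N(d_0+1)\kappa^2\exp(O(L))$ uniformly in $s\in[0,\eta]$; squaring and absorbing $(d_0+1)^2\le Nd_0^2$ then gives the claim.

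First I would verify that both $\theta_k$ and every interpolating iterate $\theta_{k,s} = \theta_k - s\,\nabla_{\theta_k}\mathcal{L}(\theta_k)$, $s\in[0,\eta]$, lie inside the perturbation regime of Appendix~\ref{sec:NTKPerturbProof}. The weights at time $k$ do, by hypothesis. For the interpolant, $\|s\,\nabla_{W^{(l)}}\mathcal{L}(\theta_k)\|$ in the operator, $1$- and $\infty$-norms is at most $\|W^{(l)}(k+1)-W^{(l)}(k)\|$, which is controlled by Lemma~\ref{lem:weightdiff}; using the largeness assumptions on $d$ (in particular $d\ge\Omega(\omega^{-2})$) together with the fact that near initialisation the outputs and their Jacobians, hence $\mathcal{L}(\theta_k)$, are bounded, this increment is at most $\omega(k)\sqrt d$. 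Hence $\theta_{k,s}$ stays within a constant multiple of the same $\omega$-ball, so Lemmas~\ref{lem:ForwardNormInit}--\ref{lem:BackwardJacobNormInit} and Lemmas~\ref{lem:perturbact}--\ref{lem:perturbjacobback}, and in particular the bound \eqref{eqn:jacobbound} together with its analogue for $\partial f_{d,\theta}/\partial W^{(l)}$, hold simultaneously at $\theta_k$ and at $\theta_{k,s}$ with probability $1-\delta$ (after union bounds over $i\in[N]$ and $\alpha\in[d_0]$).

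Next I would bound the mixed JNTK entrywise. Since $\lambda\le 1$ we have $\|\Lambda^{\oplus N}\|_2\le 1$, so it is enough to bound $\|\Theta_{d,\theta_k,\theta_{k,s}}(x^{(1:N)},x^{(1:N)})\|_2 \le \|\,\cdot\,\|_F \le N(d_0+1)\max_{i,j,\alpha,\beta}\big|\Theta_{d,\theta_k,\theta_{k,s}}(x^{(i)},x^{(j)})_{\alpha\beta}\big|$. Each such entry is an inner product between a gradient of $f_{d,\theta_k}$ (or of a component of its input Jacobian) at $x^{(i)}$ and the corresponding gradient at parameters $\theta_{k,s}$ and input $x^{(j)}$; by Cauchy--Schwarz it is bounded by the product of the two gradient norms, and summing the layerwise norm bounds from Appendix~\ref{sec:NTKPerturbProof} gives $\|\partial f_{d,\theta}(x)/\partial\theta\|,\ \|\partial J(f_{d,\theta})(x)_{\alpha-1}/\partial\theta\| \le \kappa\exp(O(L))$ for any $\theta$ in the ball. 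Hence every entry is $\le\kappa^2\exp(O(L))$, so $\|\Theta_{d,\theta_k,\theta_{k,s}}(x^{(1:N)},x^{(1:N)})\|_2 \le N(d_0+1)\kappa^2\exp(O(L))$. Substituting into the displayed inequality yields $\|\unn(\theta_{k+1})-\unn(\theta_k)\| \le \eta(d_0+1)\kappa^2\exp(O(L))\,\|\unn(\theta_k)-\mathbf{y}\|$, and squaring gives the stated bound $\eta^2\kappa^4\exp(O(L))\,Nd_0^2\,\|\unn(\theta_k)-\mathbf{y}\|^2$.

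The only genuinely delicate point is the second step: making sure the interpolating iterate $\theta_{k,s}$ never escapes the region where all the initialisation and perturbation norm estimates are valid, which is exactly where the width lower bounds are consumed. Everything else is Cauchy--Schwarz, the triangle inequality, and the already-established layerwise bounds, so no new calculation is required.
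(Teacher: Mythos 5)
Your argument is correct, but it is genuinely different from the paper's proof. The paper never touches the interpolating parameters $\theta_{k,s}$ or the mixed kernel in this lemma: it takes the per-step weight-update bounds from Lemma~\ref{lem:weightdiff} and propagates them through the forward pass layer by layer, obtaining $|f_{d,\theta_{k+1}}(x^{(i)})-f_{d,\theta_k}(x^{(i)})|$ and $|J(f_{d,\theta_{k+1}})(x^{(i)})_{\alpha-1}-J(f_{d,\theta_k})(x^{(i)})_{\alpha-1}| \le \eta\kappa^2\exp(O(L))\sqrt{Nd_0\mathcal{L}(\theta_k)}$, and then sums over the $N(d_0+1)$ coordinates; the loss only ever appears multiplicatively through $\sqrt{\mathcal{L}(\theta_k)}=\|\unn(\theta_k)-\mathbf{y}\|/\sqrt{2N}$, so no a priori bound on it is needed. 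You instead bound the operator norm of the mixed JNTK Gram matrix by its Frobenius norm and each entry by Cauchy--Schwarz against the full-parameter gradient norms $\kappa\exp(O(L))$; this is shorter, reuses the Appendix~\ref{sec:NTKPerturbProof} machinery wholesale, and your constant $(d_0+1)^2$ is even slightly sharper than the stated $Nd_0^2$ (constants absorbed into $\exp(O(L))$). The price is the point you yourself flag: you must keep $\theta_{k,s}$ inside the perturbation regime, and for that you need $\mathcal{L}(\theta_k)$ bounded by an absolute constant, which ultimately rests on the boundedness of outputs and Jacobians at initialisation --- an assumption stated explicitly in the neighbouring Lemmas~\ref{lem:WeightStayStill} and~\ref{lem:weightinductiongd} but not in Lemma~\ref{lem:smalltermbound}; the paper instead defers the containment of the segment $\theta_{k,s}$ to the proof of Lemma~\ref{lem:linearconvergencegd}, where it follows from Lemma~\ref{lem:weightinductiongd} and convexity of the norms. (To be fair, the paper's own recursion silently uses norms of quantities at $\theta_{k+1}$, so comparable bookkeeping is implicit there too.) So your route goes through provided you fold the initialisation-boundedness event into the probability-$(1-\delta)$ event of the lemma, or add $\mathcal{L}(\theta_k)\le O(d_0+1)$ as a hypothesis, which the induction in Theorem~\ref{thm:JacobianNTKTrainingGD} in fact supplies.
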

\begin{proof}
  Most of the technical computations are done in the proof of Lemma~\ref{lem:weightdiff}, and the remaining job is using those bounds on matrix differences to bound the output differences.
  The followings hold for $1 \le l \le L$:
  \begin{align*}
    &\left\| g_{k+1}^{(1)}(x^{(i)}) - g_k^{(1)}(x^{(i)}) \right\| 
    \\
    \le &\left\| W^{(1)}(k+1) - W^{(1)}(k) \right\| \left\|x^{(i)} \right\|
    \\
    \le &\eta \kappa \exp(O(L)) \sqrt{N d_0 \mathcal{L}(\theta_k)},
    \\
    &\left\| h_{k+1}^{(l)}(x^{(i)}) - h_k^{(l)}(x^{(i)}) \right\|
    \\
    \le &M_1 \left\| g_{k+1}^{(l)}(x^{(i)}) - g_k^{(l)}(x^{(i)}) \right\|,
    \\
    &\left\| g_{k+1}^{(l+1)}(x^{(i)}) - g_k^{(l+1)}(x^{(i)}) \right\|
    \\
    \le &\frac{1}{\sqrt{d}} \left\| W^{(l+1)}(k+1) - W^{(l+1)}(k) \right\| \left\| h_{k+1}^{(l)}(x^{(i)}) \right\| + \frac{1}{\sqrt{d}} \left\|W^{(l+1)}(k)\right\| \left\| h_{k+1}^{(l)}(x^{(i)}) - h_k^{(l)}(x^{(i)}) \right\|
    \\
    \le &\eta \kappa \exp(O(L)) \sqrt{N d_0 \mathcal{L}(\theta_k)} + \exp(O(L))\left\| h_{k+1}^{(l)}(x^{(i)}) - h_k^{(l)}(x^{(i)}) \right\|,
    \\
    &\left|f_{d,\theta_{k+1}}(x^{(i)}) - f_{d,\theta_k}(x^{(i)})\right|
    \\
    \le &\kappa \left| g_{k+1}^{(L+1)}(x^{(i)}) - g_k^{(L+1)}(x^{(i)}) \right|
    \\
    \le &\eta \kappa^2 \exp(O(L)) \sqrt{N d_0 \mathcal{L}(\theta_k)}.
  \end{align*}
  Similarly, the following hold for $1 \le l \le L$:
  \begin{align*}
    &\left\| \frac{\partial g_{k+1}^{(1)}(x^{(i)})}{\partial x_{\alpha-1}^{(i)}} - \frac{\partial g_{k}^{(1)}(x^{(i)})}{\partial x_{\alpha-1}^{(i)}} \right\| 
    \\
    \le &\left\| W^{(1)}(k+1) - W^{(1)}(k) \right\| \left\| e_{\alpha-1} \right\|
    \\
    \le &\eta \kappa \exp(O(L)) \sqrt{N d_0 \mathcal{L}(\theta_k)},
    \\
    &\left\| \frac{\partial h_{k+1}^{(l)}(x^{(i)})}{\partial x_{\alpha-1}^{(i)}} - \frac{\partial h_{k}^{(l)}(x^{(i)})}{\partial x_{\alpha-1}^{(i)}} \right\|
    \\
    \le &M_2 \left\| g_{k+1}^{(l)}(x^{(i)}) - g_k^{(l)}(x^{(i)}) \right\| \left\| \frac{\partial g_{k+1}^{(l)}(x^{(i)})}{\partial x_{\alpha-1}^{(i)}} \right\|_\infty + M_1 \left\| \frac{\partial g_{k+1}^{(l)}(x^{(i)})}{\partial x_{\alpha-1}^{(i)}} - \frac{\partial g_{k}^{(l)}(x^{(i)})}{\partial x_{\alpha-1}^{(i)}} \right\|
    \\
    \le & \eta \kappa \exp(O(L)) \sqrt{N d_0 \mathcal{L}(\theta_k)} + M_1 \left\| \frac{\partial g_{k+1}^{(l)}(x^{(i)})}{\partial x_{\alpha-1}^{(i)}} - \frac{\partial g_{k}^{(l)}(x^{(i)})}{\partial x_{\alpha-1}^{(i)}} \right\|,
    \\
    &\left\| \frac{\partial g_{k+1}^{(l+1)}(x^{(i)})}{\partial x_{\alpha-1}^{(i)}} - \frac{\partial g_{k}^{(l+1)}(x^{(i)})}{\partial x_{\alpha-1}^{(i)}} \right\| 
    \\
    \le &\frac{1}{\sqrt{d}} \left\| W^{(l+1)}(k+1) - W^{(l+1)}(k) \right\| \left\| \frac{\partial h_{k+1}^{(l)}(x^{(i)})}{\partial x_{\alpha-1}^{(i)}} \right\| + \frac{1}{\sqrt{d}} \left\| W^{(l+1)}(k) \right\| \left\| \frac{\partial h_{k+1}^{(l)}(x^{(i)})}{\partial x_{\alpha-1}^{(i)}} - \frac{\partial h_{k}^{(l)}(x^{(i)})}{\partial x_{\alpha-1}^{(i)}} \right\|
    \\
    \le &\eta \kappa \exp(O(L)) \sqrt{N d_0 \mathcal{L}(\theta_k)} + \exp(O(L))\left\| \frac{\partial h_{k+1}^{(l)}(x^{(i)})}{\partial x_{\alpha-1}^{(i)}} - \frac{\partial h_{k}^{(l)}(x^{(i)})}{\partial x_{\alpha-1}^{(i)}} \right\|,
    \\
    &\left| J(f_{d,\theta_{k+1}})(x^{(i)})_{\alpha-1} - J(f_{d,\theta_{k}})(x^{(i)})_{\alpha-1} \right|
    \\
    \le &\kappa \left| \frac{\partial g_{k+1}^{(L+1)}(x^{(i)})}{\partial x_{\alpha-1}^{(i)}} - \frac{\partial g_k^{(L+1)}(x^{(i)})}{\partial x_{\alpha-1}^{(i)}} \right|
    \\
    \le &\eta \kappa^2 \exp(O(L)) \sqrt{N d_0 \mathcal{L}(\theta_k)}.
  \end{align*}
  Combining these, we obtain
  \[
    \left\|\unn(\theta_{k+1}) - \unn(\theta_k)\right\|^2 \le \eta^2 \kappa^4 \exp(O(L)) N d_0^2 \|\unn(\theta_k) - \mathbf{y}\|^2.
  \]
\end{proof}

\begin{lemma} \label{lem:linearconvergencegd}
  Fix the fail probability $\delta > 0$ and network depth $L$. 
  Suppose that the network width $d$, learning rate $\eta$, and $\omega(\infty)$ satisfies
  \begin{align*}
    d &\ge \Omega(N d_0 \exp(L) / \delta),
    \\
    d &\ge \Omega(\omega^{-2}),
    \\
    \omega &\le O\left( \frac{\lambda_0 \lambda}{N \exp(\Omega(L)) (\log d)^{3L}} \right),
    \\
    \omega &\le (\log d)^{-2L},
    \\
    \eta &\le \frac{\lambda \lambda_0}{4 N^2 d_0^2} \exp(-\Omega(L)).
  \end{align*}

  With probability at least $1-\delta$ over the random initialisation, if all the weights stay close to their initialisation, i.e.,
  \begin{align*}
    \left\|W^{(l)}(k) - W^{(l)}(0)\right\| &\le \omega(k) \sqrt{d},
    \\
    \left\|W^{(l)}(k) - W^{(l)}(0)\right\|_\infty &\le \begin{cases}
      \infty & \text{if } l = L+1,
      \\
      \omega(k) \sqrt{d} \log d & \text{if } 2 \le l \le L,
      \\
      \omega(k) \log d & \text{if } l = 1,
    \end{cases}
    \\
    \left\|W^{(l)}(k) - W^{(l)}(0)\right\|_1 &\le \begin{cases}
      \infty & \text{if } l = 1,
      \\
      \omega(k) \sqrt{d} \log d & \text{if } 2 \le l \le L,
      \\
      \omega(k) \log d & \text{if } l = L+1,
    \end{cases}
  \end{align*}
  then
  \[
    \mathcal{L}(\theta_{k+1}) \le \left(1 - \frac{\eta \kappa^2 \lambda \lambda_0}{4}\right) \mathcal{L}(\theta_k).
  \]
\end{lemma}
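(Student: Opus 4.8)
The plan is to follow the gradient-flow argument of Lemma~\ref{lem:linearconvergence}, with the infinitesimal kernel step replaced by the one-step Taylor expansion recorded above. Writing $r_k \defeq \unn(\theta_k) - \mathbf{y}$ so that $\mathcal{L}(\theta_k) = \tfrac{1}{2N}\|r_k\|_2^2$, I would start from the exact identity
\[
  \|r_{k+1}\|_2^2 = \|r_k\|_2^2 - 2\,\langle \mathbf{y} - \unn(\theta_k),\ \unn(\theta_{k+1}) - \unn(\theta_k)\rangle + \|\unn(\theta_{k+1}) - \unn(\theta_k)\|_2^2,
\]
and bound the middle term from below and the last term from above.

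The first step is to obtain a uniform lower bound $\lambda_{\min}\big(\Lambda^{\oplus N}\,\Theta_{d,\theta_k,\theta_{k,s}}(x^{(1:N)},x^{(1:N)})\,\Lambda^{\oplus N}\big) \ge \tfrac{\kappa^2\lambda\lambda_0}{2}$ that holds simultaneously for every $s \in [0,\eta]$, where $\theta_{k,s} = \theta_k - s\,\nabla_{\theta_k}\mathcal{L}(\theta_k)$. By Lemma~\ref{lem:weightdiff} (applicable because the hypothesis already places each $W^{(l)}(k)$ in the perturbation ball around $W^{(l)}(0)$) a single gradient step has operator-, $1$- and $\infty$-norm at most $\eta\kappa\exp(O(L))\sqrt{N d_0\,\mathcal{L}(\theta_k)}\,(\log d)^{O(L)}$, and since $\mathcal{L}(\theta_k)$ is itself bounded (forward values and their Jacobians stay bounded while the weights stay near initialisation), the whole segment $\{\theta_{k,s}\}_{s\in[0,\eta]}$ still lies in a ball of the same radius up to a constant. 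I would then run the layer-wise telescoping estimates of Appendix~\ref{sec:NTKPerturbProof} with the two sides of each inner product evaluated at two \emph{different} points of that ball --- equivalently, inserting the intermediate kernel $\Theta_{d,\theta_k,\theta_0}$ and applying Lemmas~\ref{lem:perturbact}--\ref{lem:perturbjacobback} once for each side --- to get $\|\Theta_{d,\theta_k,\theta_{k,s}}(x^{(i)},x^{(j)}) - \kappa^2\Theta(x^{(i)},x^{(j)})\|_F \le \kappa^2\omega\exp(O(L))\log d$ for all $i,j\in[N]$, uniformly in $s$. Combining this with Theorem~\ref{thm:JacobianNTKInitialisation}, Weyl's inequality, and the PSD--Hadamard bound $\lambda_{\min}(P\odot Q) \ge \min_i P_{ii}\,\lambda_{\min}(Q)$ with $\min_i\Lambda_{ii}^2 = \lambda$, exactly as in Lemma~\ref{lem:linearconvergence}, the stated smallness of $\omega$ delivers the eigenvalue bound.

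Given this, the rest is routine. Substituting the Taylor formula $\unn(\theta_{k+1}) - \unn(\theta_k) = -\tfrac{1}{N}\int_0^\eta \Lambda^{\oplus N}\Theta_{d,\theta_k,\theta_{k,s}}(x^{(1:N)},x^{(1:N)})\Lambda^{\oplus N}\,r_k\,ds$ into the middle term gives $\langle \mathbf{y} - \unn(\theta_k),\ \unn(\theta_{k+1}) - \unn(\theta_k)\rangle = \tfrac{1}{N}\int_0^\eta r_k^\intercal \Lambda^{\oplus N}\Theta_{d,\theta_k,\theta_{k,s}}\Lambda^{\oplus N} r_k\,ds \ge \tfrac{\eta\kappa^2\lambda\lambda_0}{2N}\|r_k\|_2^2$, while Lemma~\ref{lem:smalltermbound} bounds the last term by $\eta^2\kappa^4\exp(O(L))\,N d_0^2\,\|r_k\|_2^2$. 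The hypothesis $\eta \le \tfrac{\lambda\lambda_0}{4N^2 d_0^2}\exp(-\Omega(L))$ (together with $\kappa \le 1$, which holds whenever this lemma is invoked) makes the last term at most half of the gain from the middle term, so collecting terms yields a geometric contraction of $\|r_k\|_2^2$, which after dividing by $2N$ and absorbing factors gives the claimed $\mathcal{L}(\theta_{k+1}) \le (1 - \tfrac{\eta\kappa^2\lambda\lambda_0}{4})\mathcal{L}(\theta_k)$.

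The main obstacle is the uniform eigenvalue bound for the \emph{two-parameter} kernel $\Theta_{d,\theta_k,\theta_{k,s}}$: the perturbation lemmas of Appendix~\ref{sec:NTKPerturbProof} are written for a single perturbed network, so one must re-derive their layer-wise bounds --- including the delicate choice of which matrix norm ($2$, $1$, or $\infty$) is used at which layer, which is precisely what makes Theorem~\ref{thm:JNTKCloseToInit} work --- with the two weight configurations appearing on the two sides of each inner product, or control the difference by inserting $\Theta_{d,\theta_k,\theta_0}$ and applying a single-side perturbation estimate twice. A secondary point is checking that the step-size bound of Lemma~\ref{lem:weightdiff} really keeps the whole integration segment $\{\theta_{k,s}\}$ inside the perturbation ball; this uses the a priori boundedness of $\mathcal{L}(\theta_k)$, the same control supplied by the outer induction in which this lemma is the inductive step.
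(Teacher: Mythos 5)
Your proposal is correct and tracks the paper's own proof essentially step for step: show the whole interpolation segment $\{\theta_{k,s}\}_{s\in[0,\eta]}$ stays inside the perturbation ball (paper: convexity of the matrix norms plus Lemma~\ref{lem:weightinductiongd}; you: Lemma~\ref{lem:weightdiff} plus convexity), deduce the uniform eigenvalue bound $\lambda_{\min}\ge \kappa^2\lambda\lambda_0/2$ for $\Lambda^{\oplus N}\Theta_{d,\theta_k,\theta_{k,s}}\Lambda^{\oplus N}$, lower-bound the cross term by $\tfrac{\eta\kappa^2\lambda\lambda_0}{2N}\|r_k\|^2$, upper-bound the quadratic remainder via Lemma~\ref{lem:smalltermbound}, and close with the step-size restriction.

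Two points where you are actually more careful than the text: you explicitly flag that $\Theta_{d,\theta_k,\theta_{k,s}}$ is an asymmetric kernel whose two sides are evaluated at different weight configurations, whereas Theorem~\ref{thm:JNTKCloseToInit} as stated perturbs a single network, so one does need to run the telescoping (or insert-the-centre) argument you describe; the paper invokes Theorem~\ref{thm:JNTKCloseToInit} without commenting on this. And you note that $\kappa\le 1$ is implicitly required to absorb the $\kappa^4$ from Lemma~\ref{lem:smalltermbound} against the $\kappa^2$ gain under the stated $\eta$ bound (which has no $1/\kappa^2$ factor); this is true and worth recording, since the paper does not state it inside the lemma.
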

\begin{proof}
  From the Lemma~\ref{lem:weightinductiongd}, we can see that $W^{(l)}(k+1)$ also satisfies similar bound, but with $\omega(k+1)$.
  Using the convexity of matrix norms, we can see that
  \begin{align*}
    \left\|W_{k,s}^{(l)} - W^{(l)}(0)\right\| &\le \omega(k+1) \sqrt{d},
    \\
    \left\|W_{k,s}^{(l)} - W^{(l)}(0)\right\|_\infty &\le \begin{cases}
      \infty & \text{if } l = L+1,
      \\
      \omega(k+1) \sqrt{d} \log d & \text{if } 2 \le l \le L,
      \\
      \omega(k+1) \log d & \text{if } l = 1,
    \end{cases}
    \\
    \left\|W_{k,s}^{(l)} - W^{(l)}(0)\right\|_1 &\le \begin{cases}
      \infty & \text{if } l = 1,
      \\
      \omega(k+1) \sqrt{d} \log d & \text{if } 2 \le l \le L,
      \\
      \omega(k+1) \log d & \text{if } l = L+1,
    \end{cases}
  \end{align*}
  for all $s \in [0, \eta]$, since $W_{k,s}^{(l)}$ linearly interpolates $W^{(l)}(k)$ and $W^{(l)}(k+1)$.

  Using Theorem~\ref{thm:JNTKCloseToInit}, we can show that
  \[
    \lambda_{\min} \left( \Theta_{d,\theta_k,\theta_{k,s}}(x^{(1:N)}, x^{(1:N)}) \right) \ge \frac{\kappa^2 \lambda \lambda_0}{2}
  \]
  which shows that 
  \begin{align*}
    &\left\langle \mathbf{y} - \unn(\theta_k), \unn(\theta_{k+1}) - \unn(\theta_k) \right\rangle
    \\
    = &\frac{1}{N} \int_0^s (\unn(\theta_k) - \mathbf{y})^\intercal \Lambda^{\oplus N} \Theta_{d,\theta_k,\theta_{k,s}}(x^{(1:N)}, x^{(1:N)}) \Lambda^{\oplus N} (\unn(\theta_k) - \mathbf{y}) ds
    \\
    \ge &\frac{1}{N} \int_0^s (\unn(\theta_k) - \mathbf{y})^\intercal \lambda_{\min} \left(\Lambda^{\oplus N} \Theta_{d,\theta_k,\theta_{k,s}}(x^{(1:N)}, x^{(1:N)}) \Lambda^{\oplus N}\right) (\unn(\theta_k) - \mathbf{y}) ds
    \\
    \ge &\frac{\eta \kappa^2 \lambda \lambda_0}{2N} \left\|\unn(\theta_k) - \mathbf{y}\right\|^2.
  \end{align*}
  Also from Lemma~\ref{lem:smalltermbound}, we have
  \[
    \left\|\unn(\theta_{k+1}) - \unn(\theta_k)\right\|^2 \le \eta^2 \kappa^4 \exp(O(L)) N d_0^2 \|\unn(\theta_k) - \mathbf{y}\|^2.
  \]
  From the assumption on $\eta$, this term can be further bounded by
  \[
    \left\|\unn(\theta_{k+1}) - \unn(\theta_k)\right\|^2 \le \frac{\eta \kappa^2 \lambda \lambda_0}{4N} \left\|\unn(\theta_k) - \mathbf{y}\right\|^2.
  \]

  Combining these two inequalities, we obtain
  \[
    \|\unn(\theta_{k+1}) - \mathbf{y}\|^2 \le \left(1 - \frac{\eta \kappa^2 \lambda \lambda_0}{4N}\right) \| \unn(\theta_k) - \mathbf{y} \|^2.
  \]
\end{proof}

Now we have all the ingredients to prove Theorem~\ref{thm:JacobianNTKTraining}.

\begin{theorem} \label{thm:JacobianNTKTrainingGD}
  Fix the fail probability $\delta > 0$ and network depth $L$. 
  Suppose that the network width $d$, learning rate $\eta$, and $\omega(\infty)$ satisfies
  \begin{align*}
    d &\ge \Omega(N d_0 \exp(L) / \delta),
    \\
    d &\ge \Omega(\omega^{-2}),
    \\
    \omega &\le O\left( \frac{\lambda_0 \lambda}{N \exp(\Omega(L)) (\log d)^{3L}} \right),
    \\
    \omega &\le (\log d)^{-2L},
    \\
    \eta &\le \frac{\lambda \lambda_0}{4 N^2 d_0^2} \exp(-\Omega(L)).
  \end{align*}
  Then, with probability at least $1-\delta$ over the random initialisation, the following holds for all $k \ge 0$:
  \begin{itemize}
    \item The weights stay close to their initialisation:
    \begin{align*}
      \left\|W^{(l)}(k) - W^{(l)}(0)\right\| &\le \omega(k) \sqrt{d},
      \\
      \left\|W^{(l)}(k) - W^{(l)}(0)\right\|_\infty &\le \begin{cases}
        \infty & \text{if } l = L+1,
        \\
        \omega(k) \sqrt{d} \log d & \text{if } 2 \le l \le L,
        \\
        \omega(k) \log d & \text{if } l = 1,
      \end{cases}
      \\
      \left\|W^{(l)}(k) - W^{(l)}(0)\right\|_1 &\le \begin{cases}
        \infty & \text{if } l = 1,
        \\
        \omega(k) \sqrt{d} \log d & \text{if } 2 \le l \le L,
        \\
        \omega(k) \log d & \text{if } l = L+1,
      \end{cases}
    \end{align*}
    \item Linear convergence happens:
    \[
      \mathcal{L}(\theta_k) \le \left( 1 - \frac{\eta \kappa^2 \lambda \lambda_0}{4N} \right)^k \mathcal{L}(\theta_0).
    \]
  \end{itemize}
\end{theorem}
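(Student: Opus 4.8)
I would prove Theorem~\ref{thm:JacobianNTKTrainingGD} by a bootstrap (first–violation) argument that propagates two invariants \emph{simultaneously} along the gradient–descent iterates $\{\theta_k\}_{k\ge 0}$: (P1) every layer weight stays $\omega(k)$–close to its initialisation in the $2$, $1$, and $\infty$ norms with the layer-dependent scalings in the statement, and (P2) the loss contracts geometrically, $\mathcal{L}(\theta_k)\le(1-\eta\kappa^2\lambda\lambda_0/(4N))^k\,\mathcal{L}(\theta_0)$. Before starting I would pin down the randomness: by Remark~\ref{rmk:InitCloseToZero}, taking $\kappa$ small and $d$ large forces all outputs and Jacobians at initialisation into $[-1,1]$, hence $\mathcal{L}(\theta_0)\le d_0+1$, on an event of probability at least $1-\delta/2$; intersecting this with the finitely many high-probability events behind Lemmas~\ref{lem:weightdiff}, \ref{lem:weightinductiongd}, \ref{lem:smalltermbound}, and \ref{lem:linearconvergencegd} still leaves probability at least $1-\delta$, and everything afterwards is deterministic on that event.

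Next I would let $\tau$ be the first step at which (P1) or (P2) fails and suppose $\tau<\infty$ for contradiction, so both invariants hold for all $k<\tau$. First I re-establish (P2) at $\tau$: (P1) at step $\tau-1$ lets Lemma~\ref{lem:linearconvergencegd} supply the one-step contraction $\mathcal{L}(\theta_\tau)\le(1-\eta\kappa^2\lambda\lambda_0/(4N))\,\mathcal{L}(\theta_{\tau-1})$, and composing with (P2) at $\tau-1$ gives (P2) at $\tau$. Then I re-establish (P1) at $\tau$: (P1) for $k<\tau$ together with the geometric decay for $k\le\tau$ just obtained feeds into Lemma~\ref{lem:weightinductiongd} with $k_0=\tau$, which—after summing the per-step increments of Lemma~\ref{lem:weightdiff} against $\sqrt{\mathcal{L}(\theta_k)}$, using $\sqrt{\mathcal{L}(\theta_0)}\le\sqrt{d_0+1}$ so that $\sum_{k<\tau}\sqrt{\mathcal{L}(\theta_k)}\le Q(\tau)$—produces exactly the $\omega(\tau)$–bounds. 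Hence neither invariant is violated at $\tau$, contradicting the definition of $\tau$; so $\tau=\infty$, both invariants hold for all $k$, and since $\omega(k)\le\omega(\infty)\le\omega$ the claimed uniform weight-closeness and the linear-convergence statement follow.

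The part I expect to require the most care—and where the hypotheses on $d$, $\eta$, and $\omega$ are spent—is the interlock between the two invariants inside Lemma~\ref{lem:linearconvergencegd}: closeness (P1) is what lets Theorem~\ref{thm:JNTKCloseToInit} certify $\lambda_{\min}\!\big(\Theta_{d,\theta_k,\theta_{k,s}}(x^{(1:N)},x^{(1:N)})\big)\ge\kappa^2\lambda\lambda_0/2$ along the update, giving the descent term $\langle\mathbf{y}-\unn(\theta_k),\,\unn(\theta_{k+1})-\unn(\theta_k)\rangle\ge\tfrac{\eta\kappa^2\lambda\lambda_0}{2N}\|\unn(\theta_k)-\mathbf{y}\|^2$, while the second-order overshoot $\|\unn(\theta_{k+1})-\unn(\theta_k)\|^2$ of Lemma~\ref{lem:smalltermbound} is of size $\eta^2\kappa^4\exp(O(L))\,N d_0^2\,\|\unn(\theta_k)-\mathbf{y}\|^2$, so the stepsize bound $\eta\le\tfrac{\lambda\lambda_0}{4N^2 d_0^2}\exp(-\Omega(L))$ is precisely what demotes it to a lower-order correction and keeps the contraction intact. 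Conversely, (P2)'s geometric decay is what keeps the cumulative weight drift within $\omega(\infty)$, and the design of $\omega(k)$ (monotone but saturating at $\omega$), together with $\omega\le(\log d)^{-2L}$ and $d\ge\Omega(\omega^{-2})$, is exactly calibrated so that the hypotheses of Theorem~\ref{thm:JNTKCloseToInit} and of the perturbation lemmas stay valid at every step; absorbing the stray $\log d$ and $\exp(O(L))$ factors into the $(\log d)^{3L}$ budget inside $\omega(k)$ is routine. No genuinely new phenomenon arises beyond avoiding circularity in the simultaneous induction—handled by the first-violation formulation—so the argument is the discrete-time mirror of the gradient-flow proof in Theorem~\ref{thm:JNTKCloseToInit2}.
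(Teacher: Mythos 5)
Your proposal is correct and takes essentially the same route as the paper: the paper's proof is exactly an induction on $k$ applying Lemma~\ref{lem:weightinductiongd} and Lemma~\ref{lem:linearconvergencegd}, and your first-violation formulation is just that induction phrased by contradiction, mirroring the gradient-flow argument for Theorem~\ref{thm:JNTKCloseToInit2}. The extra bookkeeping you spell out (the intersected high-probability event with $\mathcal{L}(\theta_0)\le d_0+1$ from Remark~\ref{rmk:InitCloseToZero}, the $Q(k)$ summation via Lemma~\ref{lem:weightdiff}, and the stepsize condition absorbing the overshoot of Lemma~\ref{lem:smalltermbound}) coincides with what the paper has already placed inside those lemmas.
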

\begin{proof}
  The proof is immediate with induction on $k$, and the application of Lemma~\ref{lem:linearconvergencegd} and Lemma~\ref{lem:weightinductiongd}.
\end{proof}

\end{document}